\documentclass{article}
\PassOptionsToPackage{numbers,sort&compress}{natbib}
\ifdefined\anonymousversion
  \usepackage[main]{neurips_2026}
\else
  \usepackage[main,final]{neurips_2026}
\fi
\usepackage[T1]{fontenc}
\usepackage[utf8]{inputenc}
\usepackage{amsmath,amssymb,amsthm,mathrsfs}
\usepackage{graphicx,booktabs,array,tabularx}
\usepackage{tikz}
\usetikzlibrary{arrows.meta,positioning,fit,calc,backgrounds}
\usepackage{algorithm,float}
\usepackage{enumitem,microtype,xcolor}
\usepackage{xurl} 
\usepackage{hyperref}
\hypersetup{colorlinks=true,linkcolor=blue,citecolor=blue,urlcolor=blue,
 pdftitle={Nonparametric In-Context Learning under Growing Geometric Complexity: Minimax Optimality and Local Geometry-Adaptivity of Transformers},
 pdfauthor={Jaehee Seo and Jisu Kim}}
\ifdefined\anonymousversion\hypersetup{pdfauthor={}}\fi

\newtheorem{theorem}{Theorem}
\newtheorem{lemma}{Lemma}
\newtheorem{corollary}{Corollary}

\newtheorem{assumption}{Assumption}
\newtheorem{assumption*}{Assumption}[section]
\theoremstyle{definition}
\newtheorem{definition}{Definition}
\newtheorem{remark}{Remark}

\title{Nonparametric In-Context Learning under Growing Geometric Complexity:
Minimax Optimality and Local Geometry-Adaptivity of Transformers}

\author{Jaehee Seo\\
Department of Statistics\\
Seoul National University, Korea\\
\texttt{seojaehee02@snu.ac.kr}
\And
Jisu Kim\thanks{Corresponding author.}\\
Department of Statistics\\
Seoul National University, Korea\\
\texttt{jkim82133@snu.ac.kr}}

\begin{document}
\maketitle
\begin{abstract}
Transformers have become a central architecture for in-context learning (ICL),
particularly through their state-of-the-art performance in large language
models. This success motivates understanding how transformers exploit
task-relevant structure in geometrically heterogeneous data. However, existing
nonparametric ICL theory has largely focused on Euclidean domains or
single-manifold models.
To address this gap, we study the prediction problem under unknown local geometry, modeled by sample size-dependent mixtures of manifolds
with heterogeneous dimensions, smoothness, and sampling masses. Under local separation
and small-perturbation conditions, we establish a minimax lower bound capturing
the aggregate difficulty of the components and construct an oracle tangent
local-polynomial estimator with a matching upper bound. This estimator is connected to a structure-informed, two-stage softmax transformer with
a geometric preconditioner and chartwise reduced local-polynomial solvers.
The transformer achieves negligible approximation error relative to the minimax
rate with logarithmic depth and polynomial size. Finally, we derive an in-context generalization bound for
near empirical risk minimizers over this class. Together, these results identify
conditions under which the resulting predictor exploits local geometry and
attains the aggregate minimax rate.
\end{abstract}

\section{Introduction}\label{sec:introduction}
Transformers were introduced as attention-based architectures that process a
sequence by extracting and recombining task-relevant information across tokens
\citep{vaswani2017attention}. Unlike architectures that rely on a fixed
sequential state, self-attention allows each token to interact directly with all
other tokens in a data-dependent way. This mechanism is particularly well-suited
to prompt-based prediction, where instructions, demonstrations, contextual
information, and a query are presented in a single sequence. It has become a
cornerstone of large language models, whose strong performance
in few-shot and ICL suggests that a model can adapt its
prediction from examples already contained in the prompt, without
task-specific parameter updates \citep{brown2020language}. In this sense, the
prompt is not merely an input string but also a source of task information from
which the model must infer how to predict at the query point.

Recent works have studied a statistical explanation of this phenomenon by viewing
transformers as prompt-dependent statistical estimators implementing learning procedures
inside their forward pass
\citep{garg2022can,akyurek2023what,von2023transformers,bai2023transformers}.
In this paper, we focus on the nonparametric ICL problem: given a prompt
$\mathfrak s=((X_i,Y_i)_{i=1}^n;X_{n+1})$ with
$X_i=X_i^\star+\xi_i$ and $Y_i=f(X_i^\star)+\epsilon_i$, the predictor
estimates $f(X_{n+1}^\star)$. Here $X_i^\star$ is the latent covariate,
$\xi_i$ and $\epsilon_i$ are observation perturbations, and $f$ is an unknown
task function. The query response is not an input, and prediction uses the
examples in the prompt without task-specific parameter updates. Existing nonparametric ICL results have primarily been developed for Euclidean
domains or a single manifold
\citep{kim2024transformers,ching2026efficient,shen2026understanding}.
Such settings provide a useful starting point. However, a central question for transformers, demonstrating the state-of-the-art performances in many complex tasks, still remains unanswered:

\begin{center}
\emph{Can transformers still attain optimality for nonparametric ICL \\
when local geometry is unknown and heterogeneous?}
\end{center}

Data for contemporary machine learning, such as images and language representations, motivate this question.
Image data have been linked to low intrinsic dimension and a union-of-manifolds
structure \citep{pope2021the,brown2023verifying}; a related account uses
CW complexes \citep{wang2024cw}. Region-dependent intrinsic dimension
\citep{allegra2020data}, stratified language-model embeddings
\citep{li2025unraveling}, and intrinsic-dimension-based text analysis
\citep{tulchinskii2023intrinsic} further motivate local heterogeneity.
Transformer representations also exhibit training-dependent variation in
intrinsic dimension \citep{razzhigaev2024shape}. An embedding layer therefore
need not remove low-dimensional or heterogeneous geometry.

To reflect this geometric heterogeneity, we construct the statistical model through a sample-size-dependent mixture of
manifolds. Components can have different dimensions, smoothness levels, and
masses, while their number, reach, separation, and perturbation scales may
vary with $n$. The geometry remains locally resolvable at the regression
scale. We ask what risk is unavoidable, which estimator attains it, and
whether that estimator can be realized by a transformer that generalizes
across tasks.

\subsection{Contribution}\label{subsec:contribution}
\begin{itemize}[leftmargin=*,itemsep=3pt,topsep=3pt,parsep=0pt]
\item \textbf{Geometric setup.}
We formulate a growing heterogeneous manifold-mixture model using sample size-dependent mixture numbers and weights, and specify its
locally resolvable regression regime.

\item \textbf{Aggregate minimax lower bound.}
We establish a minimax lower bound of order
$\mathfrak R_n$ in \eqref{eq:rate-Rn}, retaining each component's dimension,
smoothness, and effective sample size (Theorem~\ref{thm:minimax-lower}). 

\item \textbf{Oracle tangent local-polynomial estimator and upper bound.}
We fit a polynomial graph and then regress in the estimated tangent coordinates (Algorithm~\ref{alg:localpoly}). The oracle supplies dimension, smoothness,
and bandwidth, not the tangent projector or task function. The estimator
attains the matching upper bound $\mathfrak R_n$ (Theorem~\ref{thm:oracle-upper}).

\item \textbf{Transformer approximation.}
We construct a structure-informed transformer using softmax attention and ReLU FFNs that performs
geometric preconditioning and chartwise reduced regression in one forward pass (Figure~\ref{fig:end-to-end}). Stable charts and ReLU-to-transformer
compilation give mean squared oracle-approximation error $O(n^{-A_0})$ for each
fixed $A_0>0$, with logarithmic depth and polynomial embedding dimension,
feed-forward width, and parameter bounds (Theorem~\ref{thm:transformer-approx}). 
\item \textbf{Generalization error bound.}
We separate near-ERM risk into in-context,
approximation, finite-task, and optimization terms, yielding fresh-task risk
$O(\mathfrak R_n)$ with sufficient pretraining and small optimization error (Theorem~\ref{thm:icl-erm}).
A matching $\mathfrak R_n$ lower bound holds for every pretraining budget,
worst-case over task distributions.
\end{itemize}

\paragraph{Relation to prior geometric results.}\label{subsec:related-works}
\citet{shen2025transformers} analyze noisy and task-level manifolds, and
\citet{shen2026understanding} connect structured-manifold ICL to kernel
prediction. Their noisy-manifold regression and structured-manifold ICL
results use a single manifold and H\"older exponents in $(0,1]$. Higher-order
local-polynomial based ICL is already available in Euclidean domains
\citep{ching2026efficient}; the issue here is to control regression in
estimated coordinates uniformly across a growing mixture. The perturbation
models and structural information also differ, so our results do not uniformly extend all prior guarantees.

Appendix~\ref{app:numerical} reports empirical performance gains with increasing
pretraining meta-sample size and comparisons between the transformer and a
local-polynomial estimator using the oracle tangent.

\section{Problem setup}\label{subsec:assumption}
\paragraph{Conventions.}
For scalars, $\land$ and $\lor$ denote minimum and maximum, respectively.
Write $[m]=\{1,\ldots,m\}$ for $m\in\mathbb N$ and $[m]_0=\{0,\ldots,m\}$ for integers $m\ge0$. For $a\in\mathbb R$,
$\lceil a\rceil$ denotes the smallest integer not less than $a$.
Let $\Pi_{T_x\mathcal M}$ be the orthogonal projector onto the tangent space. The reach of a closed manifold is the largest
tubular radius with unique nearest-point projection \citep{federer1959curvature}.
For $\tau>0$, $\mathcal C^\beta_{d,L_{\mathcal M},\tau}$ consists of closed, connected,
embedded $d$-manifolds without boundary satisfying
$\operatorname{reach}(\mathcal M)\ge\tau$, with uniformly regular tangent-normal
charts of fixed radius independent of $\tau$. The H\"older ball $\mathcal H^\alpha(\mathcal M;L_{\mathcal F})$
is defined by uniform chart pullbacks; at integer $\alpha$ we use the
$C^{\alpha-1,1}$ convention. Precise definitions are in
Appendix~\ref{app:preliminaries}; the transformer class is defined in
Section~\ref{subsec:transformers}.

\paragraph{Mixture of manifolds.}
Independently of $n$, fix $D\ge2$, $d_{\max}\in[D-1]$, $0<\alpha_{\min}\le\alpha_{\max}$,
$B_X,L_{\mathcal M}\ge1$, $c_\mu\in(0,1]$, and
$L_{\mathcal F},B_\epsilon,\sigma_Y>0$.
Set $\beta_{\max}=\lceil\alpha_{\max}\rceil+1$.
Component geometry and smoothness may depend on $n$; for convenience, we abbreviate
$(\mathcal M_{k,n},d_{k,n},\alpha_{k,n},\beta_{k,n})$ by
$(\mathcal M_k,d_k,\alpha_k,\beta_k)$. Reach, separation, masses, and
perturbation bounds retain their $n$ subscripts.

\begin{assumption}[Growing heterogeneous geometry]\label{ass:geometry}
For each $n$, let $K_n\in\mathbb N$ and $\tau_{0,n},\delta_{0,n}>0$. There are $K_n$ manifolds
$\mathscr M_n=(\mathcal M_k)_{k=1}^{K_n}$, with
$\mathcal M_k\in\mathcal C^{\beta_k}_{d_k,L_{\mathcal M},\tau_{0,n}}$,
$\mathcal M_k\subset[-B_X,B_X]^D$, $d_k\in[d_{\max}]$, and
$2\le\beta_k\le\beta_{\max}$, such that
\[
 \min_{k\ne\ell}d(\mathcal M_k,\mathcal M_\ell)\ge\delta_{0,n}>0.
\]
Here $d(A,B)=\inf_{a\in A,b\in B}\|a-b\|$; the separation condition is vacuous
when $K_n=1$.
\end{assumption}
The local regularity constant stays fixed, whereas $\tau_{0,n}$ and $\delta_{0,n}$ may
shrink. The local chart derivatives remain uniformly controlled along the
component sequence. Figure~\ref{fig:growing-mixture-geometry} illustrates these distinct roles
of local regularity, component count, and separation.

\begin{assumption}[Latent design and bounded perturbations]\label{ass:latent-model}
The label $Z\in[K_n]$ has masses $\pi_{k,n}>0$ with
$\sum_{k\in[K_n]}\pi_{k,n}=1$. For each $k\in[K_n]$, $X^\star\mid(Z=k)$ has law
$\mu_{k,n}(dx)=\varrho_{k,n}(x)d\operatorname{vol}_{\mathcal M_k}(x)$ with
$c_\mu\le\varrho_{k,n}\le c_\mu^{-1}$
$\operatorname{vol}_{\mathcal M_k}$-a.e. For $\sigma_{k,n}\ge0$,
\begin{equation*}
 \mathbb P(\|\xi\|\le\sigma_{k,n}\mid X^\star,Z=k)=1.
\end{equation*}
The response noise satisfies, almost surely,
\begin{equation}\label{eq:response-condition}
 \mathbb E[\epsilon\mid X^\star,\xi,Z]=0,\qquad
 |\epsilon|\le B_\epsilon,\qquad
 \mathbb E[\epsilon^2\mid X^\star,\xi,Z]\le\sigma_Y^2.
\end{equation}
\end{assumption}
The perturbation magnitude is restricted at the regression resolution below.

\paragraph{Tasks and observations.}
Write $\mathscr M=\mathscr M_n$, $\boldsymbol\pi=(\pi_{k,n})_k$, and
$\boldsymbol\sigma_X=(\sigma_{k,n})_k$.
For $\alpha_k\in[\alpha_{\min},\alpha_{\max}]$ assume
$\beta_k\ge\lceil\alpha_k\rceil+1$, and set
\begin{equation*}
 \mathcal H=\mathcal H(\boldsymbol\alpha,\mathscr M;L_{\mathcal F})
 :=\bigg\{f:\bigsqcup_k\mathcal M_k\to\mathbb R:
         f|_{\mathcal M_k}\in\mathcal H^{\alpha_k}(\mathcal M_k;L_{\mathcal F})\bigg\}.
\end{equation*}
Let $\mathcal P_\star=\mathcal P_\star(\mathscr M,\boldsymbol\pi,
\boldsymbol\sigma_X;\sigma_Y^2,B_\epsilon,c_\mu)$ be the nonempty class of laws
satisfying Assumption~\ref{ass:latent-model}. For $P\in\mathcal P_\star$ and
$f\in\mathcal H$, draw i.i.d. latent/noise tuples and observe
$X_i=X_i^\star+\xi_i$, $Y_i=f(X_i^\star)+\epsilon_i$, for $i\le n$, together
with $X_{n+1}$. The target is $f(X_{n+1}^\star)$. In particular, $|Y_i|\le B_Y:=L_{\mathcal F}+B_\epsilon$.
For a prompt predictor $g$, abbreviate its risk by
\begin{equation*}
 \mathcal E_{P,f}(g)
 :=\mathbb E_{P,f}\left[\{g(\mathfrak s)-f(X_{n+1}^\star)\}^2\right].
\end{equation*}
When $g$ is written as an estimate at $X_{n+1}$, its dependence on the
in-context sample is implicit.

\begin{assumption}[Feasible local regime]\label{ass:feasible-local}
For a fixed $\kappa\in(0,1)$, put
$N_k=n\pi_{k,n}$, $h_k=h_{k,n}=N_k^{-1/(2\alpha_k+d_k)}$, and
$r_n=\max_k h_k$. For all sufficiently large $n$, uniformly over $k$,
\begin{equation*}
 N_k\ge n^\kappa,\qquad
 \tau_{0,n}\wedge\delta_{0,n}\ge a_{\rm sc}^{-1}r_n,\qquad
 \sigma_{k,n}\le a_{\rm sc}h_k^{\alpha_k\vee1},
\end{equation*}
where $a_{\rm sc}>0$ is a sufficiently small constant specified in
Appendix~\ref{app:feasible-local-formal}.
\end{assumption}
The three requirements in Assumption~\ref{ass:feasible-local} provide enough local samples, prevent different
components from entering the same regression window, and keep covariate
perturbations below the target bias scale. They imply $K_n\le n^{1-\kappa}$
and polynomially increasing local counts $N_kh_k^{d_k}$.
Labels index the mixture components; dimensions and smoothness levels may
coincide across components. $X^\star$ is the generating latent point;
it can differ from the nearest-point projection of $X$. The risk and tangent
comparison refer to this same generating point.
We treat $X\in\mathbb R^D$ as a numerical
covariate, possibly obtained from a fixed learned representation map. This map
is distinct from $\mathrm{Emb}_n$, which packs examples into tokens; learning
the representation is outside our analysis.

As \cite{divol2022measure} pointed out, the tubular noise model may not be identifiable, since there are several admissible couples $(X^\star,\xi)$ such that $X=X^\star+\xi$ follows the same distribution. Our model is specified by the full latent law $P\in\mathcal P_\star$;
the target and risk are defined under this law, and our bounds hold
uniformly over admissible $P$. 

\paragraph{Reading the scales.}
The number of observations relevant to a query is smaller than $N_k$:
within a bandwidth-$h_k$ neighborhood it is of order $N_kh_k^{d_k}$.
The local volume and density bounds give this scale; the chosen bandwidth
balances the squared approximation bias and response-noise variance:
\[
 h_k^{2\alpha_k}=\frac1{N_kh_k^{d_k}},\qquad
 N_kh_k^{d_k}=N_k^{2\alpha_k/(2\alpha_k+d_k)}.
\]
The effective-sample condition makes this local count grow polynomially,
so the logarithmic cost of comparing tangent candidates remains smaller.
The geometric condition has a different role: reach permits a stable local
graph, while separation prevents observations from another component from
entering that graph fit. Finally, the perturbation condition ensures
$\sigma_{k,n}\lesssim h_k^{\alpha_k}$ when $\alpha_k>1$ and
$\sigma_{k,n}\lesssim h_k$ otherwise. These bounds keep the perturbation at the regression resolution.

\begin{figure}[t]
\centering
\includegraphics[width=0.8\linewidth]{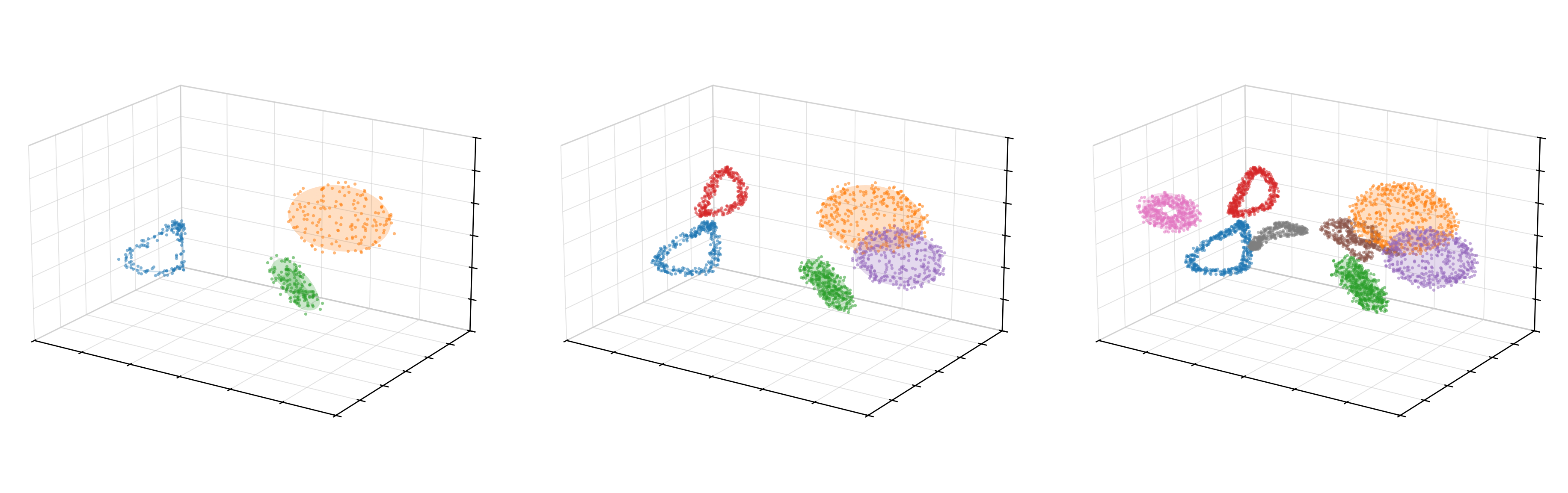}
\caption{Schematic growing mixture geometry. Different pieces represent
components with potentially different intrinsic dimensions and sampling
masses; additional or closer pieces illustrate variation in $K_n$ and
$\delta_{0,n}$. The point clouds indicate bounded covariate perturbations.
Local chart regularity remains uniformly controlled, and the regression
bandwidth stays below the reach and separation scales.}
\label{fig:growing-mixture-geometry}
\end{figure}

\paragraph{Uniformity of constants.}
The asymptotic notations $\lesssim,\gtrsim$ hide constants depending only on the fixed
parameters above, $\kappa$, and fixed construction margins. They are uniform
in $n,K_n,k,P,f$ and in admissible component sequences. Shrinking masses,
reaches, and separations enter through the displayed scales. We distinguish statistical and implementation
constants below.

\section{Minimax rate and oracle estimation}
\subsection{Minimax lower bound}\label{sec:minimax-lower}
If the query belongs to component $k$, its intrinsic dimension is $d_k$, its
smoothness is $\alpha_k$, and its effective sample size is $N_k=n\pi_{k,n}$.
Accordingly, define the target minimax rate as
\begin{equation}\label{eq:rate-Rn}
 \mathfrak R_n=\mathfrak R_n(\boldsymbol\alpha,\mathbf d,\boldsymbol\pi)
 :=\sum_{k\in[K_n]}\pi_{k,n}(n\pi_{k,n})^{-2\alpha_k/(2\alpha_k+d_k)}.
\end{equation}
This averages the component-specific statistical difficulties. In the balanced homogeneous
case it equals $(n/K_n)^{-2\alpha/(2\alpha+d)}$; for $K_n=1$ it is the
classical nonparametric rate
\citep{stone1982optimal,tsybakov2009introduction,gyorfi2002distribution,bickel2007local}.

The aggregate also makes the effect of growing mixture complexity explicit.
For any admissible balanced sequence with $K_n\asymp n^\eta$ and common
$(d,\alpha)$, it becomes $n^{-(1-\eta)2\alpha/(2\alpha+d)}$; the effective
sample condition requires $\eta\le1-\kappa$. This calculation applies to sequences satisfying the geometric assumptions.

More generally, writing
$a_k=2\alpha_k/(2\alpha_k+d_k)$, component $k$ contributes
$n^{-a_k}\pi_{k,n}^{1-a_k}$. Small mass worsens the conditional estimation problem
through $n\pi_{k,n}$, but simultaneously reduces how often that difficulty is
encountered at the query. The benchmark keeps both effects. When masses or
component counts vary with $n$, replacing this average by a single largest
dimension can therefore discard the relevant distribution of difficulty.

\begin{samepage}
\begin{theorem}[Minimax lower bound]\label{thm:minimax-lower}
Under Assumptions~\ref{ass:geometry}--\ref{ass:feasible-local}, for all
sufficiently large $n$,
\[
 \inf_g\sup_{f\in\mathcal H,\,P\in\mathcal P_\star}\mathcal E_{P,f}(g)
 \gtrsim\mathfrak R_n,
\]
where the infimum ranges over measurable predictors of the observed prompt.
\end{theorem}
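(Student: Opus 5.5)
Since the supremum is over both $f\in\mathcal H$ and $P\in\mathcal P_\star$, it suffices to lower bound the risk for one well-chosen law $P_0\in\mathcal P_\star$ and a finite family of functions in $\mathcal H$. I will take $P_0$ with no covariate perturbation ($\xi\equiv0$, allowed since $\sigma_{k,n}\ge0$), uniform densities $\varrho_{k,n}$ equal to the normalized volume (within $[c_\mu,c_\mu^{-1}]$ after a constant adjustment), and Gaussian-like bounded response noise: say $\epsilon$ taking values $\pm B'$ with a bias depending on $f$, or more conveniently $Y\in\{-B,B\}$ Bernoulli with mean $f(X^\star)$, which satisfies \eqref{eq:response-condition} when $\|f\|_\infty$ is small. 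Because $Z$ is identified from $X^\star$ and the components are disjoint, the risk decomposes as $\mathcal E_{P_0,f}(g)=\sum_k\pi_{k,n}\,\mathbb E[\{g(\mathfrak s)-f(X_{n+1}^\star)\}^2\mid Z_{n+1}=k]$. The goal is to show the $k$-th conditional term is $\gtrsim N_k^{-2\alpha_k/(2\alpha_k+d_k)}$ simultaneously for all $k$, for a single hypothesis family.

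The construction is a product Assouad cube across components. On each $\mathcal M_k$, pick a maximal $h_k$-separated set of $m_k\asymp h_k^{-d_k}\operatorname{vol}(\mathcal M_k)$ points (volume is bounded below by reach and chart regularity; the count may be small but at least one bump fits since $h_k\ll\tau_{0,n}$). Place bumps $\psi_{k,j}(x)=L h_k^{\alpha_k}\phi(\text{chart}_{k,j}(x)/h_k)$ with disjoint supports, and set $f_\omega=\sum_{k,j}\omega_{k,j}\psi_{k,j}$ for $\omega\in\{0,1\}^{\sum_k m_k}$ (or $\pm1$). The uniform chart regularity in $\mathcal C^{\beta_k}_{d_k,L_{\mathcal M},\tau_{0,n}}$ with $\beta_k\ge\lceil\alpha_k\rceil+1$ ensures $f_\omega|_{\mathcal M_k}\in\mathcal H^{\alpha_k}(\mathcal M_k;L_{\mathcal F})$ for a small constant $L$. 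Since functions live on disjoint components, no interaction between components arises. Then, since the prediction target is pointwise, I would use a pointwise Assouad reduction: for the query in the support of $\psi_{k,j}$, a predictor must distinguish $\omega_{k,j}=0$ from $1$, incurring loss $\gtrsim h_k^{2\alpha_k}$ on a region of the cell of $\mu_{k,n}$-mass $\asymp1/m_k$, times $(1-\mathrm{TV})$ between the two prompt laws differing in one coordinate.

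The key estimate is the KL/TV bound for flipping one bit: only samples landing in $\operatorname{supp}\psi_{k,j}$ matter, the expected count is $n\pi_{k,n}\cdot c h_k^{d_k}=N_kh_k^{d_k}$, and per-sample KL is $\lesssim\psi_{k,j}^2\lesssim h_k^{2\alpha_k}$, so the total KL is $\lesssim N_kh_k^{d_k+2\alpha_k}=O(1)$ by the bandwidth choice; shrinking $L$ makes it below a fixed constant, so Pinsker gives TV $\le1/2$. The query $X_{n+1}$ is independent and its law does not depend on $\omega$. Averaging over $\omega$ uniform and summing over $j$ gives conditional risk $\gtrsim\sum_j m_k^{-1}h_k^{2\alpha_k}\cdot\tfrac12\asymp h_k^{2\alpha_k}=N_k^{-2\alpha_k/(2\alpha_k+d_k)}$ (restricting to the central fraction of each bump where $\phi\ge1/2$), and weighting by $\pi_{k,n}$ and summing yields $\mathfrak R_n$.

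The main obstacle is bookkeeping uniformity: making the bump construction and its Hölder norm uniform in $k,n$ despite shrinking reach (handled because $h_k\le r_n\le a_{\rm sc}\tau_{0,n}$ keeps bumps inside fixed-regularity charts) and verifying the volume lower bound on each cell via the density bounds. Note that Assumption~\ref{ass:feasible-local}'s $N_k\ge n^\kappa$ guarantees $h_k\to0$, so cells are genuinely local. The Bernoulli noise must respect $\mathbb E[\epsilon^2]\le\sigma_Y^2$; if $\sigma_Y$ is small, I would instead use a symmetric two-point noise scaled to variance $\sigma_Y^2$ with mean shift, the KL per sample still being $\lesssim\psi^2/\sigma_Y^2$.
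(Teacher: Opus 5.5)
Your proposal is correct and follows essentially the paper's proof. The shared steps are the $\xi\equiv0$ submodel; a single Assouad cube with bumps of height $\asymp h_k^{\alpha_k}$ on every component at its own scale, where $m_k\gtrsim h_k^{-d_k}$ holds because the density bound forces $\operatorname{vol}(\mathcal M_k)\ge c_\mu$, not because of the possibly shrinking reach; a single-flip divergence of order $N_kh_k^{2\alpha_k+d_k}=O(1)$; and the reduction of prediction risk to $L^2(\nu)$ risk through the independent query.

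The only real difference is the noise submodel. Your Bernoulli response $Y\in\{-B,B\}$ with mean $f(X^\star)$ lets the noise law depend on $f$. That is legitimate here because the supremum is joint over $(f,P)$, whereas a two-point $\epsilon$ independent of $f$ would make the flipped prompt laws nearly mutually singular. The paper instead fixes a smooth compactly supported noise density and uses a Hellinger translation bound, so one $P^\circ$ serves all hypotheses; this also gives the pretraining lower bound in Corollary~\ref{cor:pretraining-minimax-lower}.
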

\end{samepage}
Here, the hidden lower bound constant depends only on fixed statistical model parameters
and scale margins. Its proof constructs one Assouad family with perturbations
on every component at its own resolution. A single testing argument therefore
makes all component contributions simultaneous. A bounded, smooth response-noise
submodel and a Hellinger translation bound respect
\eqref{eq:response-condition}. Formal details and the proof are given in Appendix~\ref{app:minimax-lower}.

\subsection{Oracle tangent local-polynomial estimator}\label{sec:oracle-estimator}
Now we construct the statistical estimator that the transformer will approximate in the next section. For a given query $x=X_{n+1}$ from component $k_x$, we suppose the oracle knows
$d_x=d_{k_x}$, $\alpha_x=\alpha_{k_x}$,
$p_x=\lceil\alpha_x\rceil-1$, $s_x=p_x+1$, and $h_x=h_{k_x}$.
Hence the inputs are the observed prompt and these structural quantities. The
tangent projector and local regression coefficients are fitted from that
prompt: a polynomial graph fit estimates the tangent, followed by regression
in the projected coordinates. Higher-order graph fitting controls geometric bias at the regression resolution
\citep{Aamari2019Nonasymptotic,cheng2013local}.

\paragraph{Why fit the geometry before regression?}
At the latent query $x^\star$, a tangent-normal chart writes a nearby point
as $x^\star+u+G_{x^\star}(u)$, with $G_{x^\star}(0)=DG_{x^\star}(0)=0$.
The manifold regularity supplies a $s_x$-order Taylor expansion of $G$.
Substituting $u=h_xt$ and dividing ambient displacements by $h_x$,
its remainder becomes $O(h_x^{s_x})$. This explains the choice
$s_x=\lceil\alpha_x\rceil$ for geometry and $p_x=s_x-1$ for regression:
the two polynomial fits approximate different objects. Moreover, the
observed query can be off the manifold. In the proof, the candidate offset
$a^\circ=(I-\Pi^\star)\xi_{n+1}/h_x$ compensates its normal displacement,
where $\Pi^\star=\Pi_{T_{x^\star}\mathcal M_{k_x}}$. The remaining coordinate perturbation is controlled at order $\sigma_{k_x,n}/h_x$.
These latent quantities define the comparison candidate used in the proof.
A nearby grid point provides the reference graph loss against which the
observed-data fit is analyzed.

\paragraph{Finite grid and its cost.}
A candidate $q=(a,\Pi,T_2,\ldots,T_s)$ has $a\in\mathbb R^D$ with $\|a\|\le1$, a rank-$d$
orthogonal projector $\Pi$, and symmetric multilinear maps
$T_\ell:(\mathbb R^D)^\ell\to\mathbb R^D$ with
$\|T_\ell\|_{\rm op}\le2L_{\mathcal M}$. Let $\Theta_{d,s}$ be this compact
parameter space. Use the product $n^{-1}$-net $\mathcal Q_{d,s,n}$ of
Lemma~\ref{lem:tangent-product-net}, under the sum of the offset norm,
projector Frobenius norm, and tensor operator norms, and fix
$\Delta_{{\rm tan},n}=n^{-3}$. This grid satisfies
\begin{equation*}
 |\mathcal Q_{d,s,n}|\le Cn^{m_\Theta(d,s,D)},\qquad
 m_\Theta(d,s,D)=D+d(D-d)+D\sum_{\ell=2}^{s}\binom{D+\ell-1}{\ell}.
\end{equation*}
Its logarithmic cardinality enters tangent concentration, while its
cardinality determines computational cost, with an exponent depending on
$D,s$. The deterministic grid and its loss coefficients are encoded in the
comparator weights (Appendix~\ref{app:geometric-preconditional}). The polynomial
bound treats $D,s$ as fixed; its exponent records the ambient-dimensional
cost of the geometric search in Algorithm~\ref{alg:localpoly}.

\paragraph{Localized graph fit.}
For the scaled ambient displacement $z_i=(X_i-x)/h_x$, use the fixed cutoff
$\mathsf A:[0,\infty)\to[0,1]$, where $\mathsf A(t)=1$ for $0\le t\le1/2$,
$\mathsf A(t)=2-2t$ for $1/2<t<1$, and $\mathsf A(t)=0$ for $t\ge1$.
Define
\begin{equation}\label{eq:graph-residual}
 R_q(z)=a+(I-\Pi)z-\sum_{\ell=2}^{s_x}T_\ell\{(\Pi z)^{\otimes\ell}\},
 \qquad
 L_q(x)=\frac{h_x^2}{n}\sum_{i\in[n]}\mathsf A(\|z_i\|_1)\|R_q(z_i)\|^2.
\end{equation}
The tensor sum is empty for $s_x=1$. Loss-thresholded weights average
$\Delta_{{\rm tan},n}$-near-minimizing projectors, with normalizing denominator
at least $\Delta_{{\rm tan},n}>0$. This convex average can have fractional
eigenvalues. We spectrally round it to $\widehat\Pi_x=\Pi_{(d_x)}(\bar P_x)$, where $\Pi_{(d)}$
selects the top-$d$ eigenspace with fixed measurable tie-breaking rule. With high probability, a fixed spectral gap makes this step stable (Lemma~\ref{lem:tangent-projector-good}).

\paragraph{Ambient localization and projected regression weights.}
Let $\widehat v_i=\widehat\Pi_xz_i\in\mathbb R^D$, the scaled displacement
projected onto the estimated tangent subspace. For the fixed kernel
$\mathsf K(v)=(1-\|v\|_1)_+$, where $(t)_+=t\lor 0$, the regression
weight is $\widehat W_i=h_x^{-d_x}\mathsf A(\|z_i\|_1)\mathsf K(\widehat v_i)$.
The cutoff excludes observations with $\|X_i-x\|_1\ge h_x$, yielding a
single-component window under the feasible local regime. Within that window,
the kernel tapers weights according to $\|\widehat v_i\|_1$ and is zero
when $\|\widehat v_i\|_1\ge1$. Projection can hide a large ambient displacement, which motivates the
ambient cutoff. The resulting nonnegative weights define the local
least-squares objective. Its fitted solution is invariant to the common
scale $h_x^{-d_x}$. When the weight sum is positive, dividing by it also
preserves the fitted coefficient (Remark~\ref{rem:regression-normalization}).
Let $\widehat\phi_i=\Phi_{D,p_x}(\widehat v_i)$, where
$\Phi_{D,p}(v)=(v^\nu/\nu!)_{|\nu|\le p}$ and $e_0$ selects the constant
coordinate. Algorithm~\ref{alg:localpoly} summarizes the estimator.

\begin{algorithm}[!htbp]
\caption{Oracle tangent local-polynomial estimation}
\label{alg:localpoly}
\begingroup
\renewcommand{\arraystretch}{1.12}
\begin{tabularx}{\linewidth}{@{}r@{\hspace{0.7em}}>{\raggedright\arraybackslash}X@{}}
1: & \textbf{Input:} $(X_i,Y_i)_{i=1}^n$, query $x$, $d_x,p_x,s_x,h_x$,
     $\mathcal Q_{d_x,s_x,n}$, $\Delta_{{\rm tan},n}$, and $L_{\mathcal F}$.\\
2: & Set $z_i=(X_i-x)/h_x$. Evaluate $L_q(x)$ in
     \eqref{eq:graph-residual} for every $q\in\mathcal Q_{d_x,s_x,n}$.\\
3: & Set $L_{\min}=\min_q L_q(x)$ and
     $t_q=\{\Delta_{{\rm tan},n}-(L_q(x)-L_{\min})\}_+$.
     Normalize $\omega_q=t_q/\sum_{q'}t_{q'}$.\\
4: & Form $\bar P_x=\sum_q\omega_q\Pi_q$ and
     $\widehat\Pi_x=\Pi_{(d_x)}(\bar P_x)$ with fixed measurable tie-breaking.\\
5: & For each $i\in[n]$, set $\widehat v_i=\widehat\Pi_xz_i$,
     $\widehat\phi_i=\Phi_{D,p_x}(\widehat v_i)$, and
     $\widehat W_i=h_x^{-d_x}\mathsf A(\|z_i\|_1)\mathsf K(\widehat v_i)$.\\
6: & Form $\widehat G_x^\Pi=n^{-1}\sum_{i\in[n]}\widehat W_i\widehat\phi_i\widehat\phi_i^\top$
     and $\widehat b_x^\Pi=n^{-1}\sum_{i\in[n]}\widehat W_i\widehat\phi_iY_i$.\\
7: & Compute $\widehat w_x=(\widehat G_x^\Pi)^\dagger\widehat b_x^\Pi$.\\
8: & \textbf{Return:}
     $\widehat f_{\rm plug}^{\Pi}(x)=(-L_{\mathcal F})\lor e_0^\top\widehat w_x\land L_{\mathcal F}$.\\
\end{tabularx}
\endgroup
\end{algorithm}

The final prediction is
\begin{equation}\label{eq:tangent-local}
 \widehat f_{\rm plug}^{\Pi}(x)
 =(-L_{\mathcal F})\lor e_0^\top(\widehat G_x^\Pi)^\dagger\widehat b_x^\Pi
     \land L_{\mathcal F}.
\end{equation}
Ambient monomials become redundant after projection, so $\widehat G_x^\Pi$
may be singular even on a good design. The pseudoinverse acts on the
identifiable polynomial subspace, where stability follows from
\[
 \lambda_{\min}^+(\widehat G_x^\Pi)\ge c\pi_{k_x,n},\qquad
 e_0^\top(\widehat G_x^\Pi)^\dagger\widehat G_x^\Pi=e_0^\top
\]
on the regression good event. These properties identify the fitted intercept.

Indeed, $\Phi_{D,p}(Uu)=B_{U,p}\Phi_{d,p}(u)$ for an orthonormal frame $U$
and a full-column-rank $B_{U,p}$. A positive definite reduced Gram $H$
therefore induces the ambient Gram $B_{U,p}HB_{U,p}^\top$, whose zero
eigenvalues reflect redundant coordinates. Since $e_{0,D,p}=B_{U,p}e_{0,d,p}$,
the intercept remains identifiable. The transformer solves in the reduced
dictionary; Lemma~\ref{lem:chart-equivalence} proves that, on that event, all
positive-weight charts return the same intercept as \eqref{eq:tangent-local}.

\paragraph{Local averaging as a special case.}
When $0<\alpha_x\le1$, the regression degree is $p_x=0$, so its dictionary
contains only the constant. On a nonempty weighted window,
\[
 \widehat f_{\rm plug}^{\Pi}(x)
 =(-L_{\mathcal F})\lor\frac{\sum_{i\in[n]}\widehat W_iY_i}{\sum_{i\in[n]}\widehat W_i}
       \land L_{\mathcal F}.
\]
For $p_x\ge1$, higher-order terms fit local variation while the intercept
estimates the value at the query origin. Thus the same matrix formulation
extends local averaging to exploit higher smoothness.

\begin{theorem}[Oracle minimax upper bound]\label{thm:oracle-upper}
Under Assumptions~\ref{ass:geometry}--\ref{ass:feasible-local},
Algorithm~\ref{alg:localpoly} satisfies, for all sufficiently large $n$,
\[
 \sup_{f\in\mathcal H,\,P\in\mathcal P_\star}
 \mathcal E_{P,f}(\widehat f_{\rm plug}^{\Pi})\lesssim\mathfrak R_n.
\]
\end{theorem}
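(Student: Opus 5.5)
We will reduce the statement to a conditional bound per component and then average over the query label. Write $k=k_x$ for the label $Z_{n+1}$ of the query. Since $\widehat f_{\rm plug}^{\Pi}$ is clipped to $[-L_{\mathcal F},L_{\mathcal F}]$ and $|f|\le L_{\mathcal F}$, the squared error is at most $4L_{\mathcal F}^2$. Hence for each $k$ it suffices to show
\[
\mathbb E\bigl[\{\widehat f_{\rm plug}^{\Pi}(X_{n+1})-f(X_{n+1}^\star)\}^2\mid Z_{n+1}=k\bigr]\lesssim h_k^{2\alpha_k}+\mathbb P(\mathcal G_k^c\mid Z_{n+1}=k),
\]
where $\mathcal G_k$ is a good event whose failure probability is $O(n^{-A})$ for a large $A$. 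Summing with weights $\pi_{k,n}$ then gives $\sum_k\pi_{k,n}h_k^{2\alpha_k}=\mathfrak R_n$. The remainder is at most $n^{-A}$, which is negligible because $\mathfrak R_n\ge n^{-1}$. All constants must be uniform in $k$ and $K_n$. Uniformity comes from the fixed chart regularity $L_{\mathcal M}$, the density bounds $c_\mu$, and the fact that Assumption~\ref{ass:feasible-local} places every window $\{\|X_i-x\|_1<h_k\}$ inside a single component. The last point uses $h_k\le r_n\le a_{\rm sc}(\tau_{0,n}\wedge\delta_{0,n})$ together with $\sigma_{j,n}\ll h_j\le r_n$, so perturbed points from other components stay outside the window.

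The first step is the tangent step. Conditional on the query, the window contains a binomial count with mean $\asymp N_kh_k^{d_k}\ge n^{c\kappa}$. This holds because $\mu_k$ of an $h_k$-ball on a manifold with reach $\gtrsim h_k$ is $\asymp h_k^{d_k}$, given the density bounds. Bernstein's inequality plus a union bound over $\mathcal Q_{d,s,n}$, whose cardinality is polynomial, gives uniform concentration of $L_q(x)$ around its conditional mean. The fluctuation is $O(h_k^2\pi_{k,n}h_k^{d_k}\sqrt{\log n/(N_kh_k^{d_k})})$ after scaling. Next, take the comparison candidate $q^\circ$: the grid point nearest to $(a^\circ,\Pi^\star,T^\star)$, where the $T^\star_\ell$ are the Taylor tensors of $G_{x^\star}$. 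On the window, $R_{q^\circ}(z_i)=O(h_k^{s_k-1}+\sigma_{k,n}/h_k)$. A population curvature argument then bounds the projector error of every $\Delta_{{\rm tan},n}$-near-minimizer, and so of the average $\bar P_x$, by
\[
\|\widehat\Pi_x-\Pi^\star\|_F\lesssim h_k^{s_k-1}+\sigma_{k,n}/h_k^{2}\cdot h_k+\sqrt{\log n/(N_kh_k^{d_k})}.
\]
The curvature argument shows that the expected loss grows quadratically in $\|\Pi-\Pi^\star\|_F$, with a constant set by the isotropy of the local design. Spectral rounding preserves this bound through the eigengap (Lemma~\ref{lem:tangent-projector-good}). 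We need the resulting coordinate distortion $\|\widehat v_i-\Pi^\star z_i\|$ to be $o(1)$, and in fact $\lesssim h_k^{\alpha_k-1}$ where that is needed for bias. I expect this to be the main obstacle. When $\alpha_k>1$, the intercept bias created by the tangent error enters multiplied by derivatives of $f$. Getting order $h_k^{\alpha_k}$ exactly requires the geometric bias $h_k^{s_k}$ and the perturbation condition $\sigma_{k,n}\le a_{\rm sc}h_k^{\alpha_k}$. It also requires showing that the statistical tangent error only inflates the variance-type term. My first attempt would be to bound the intercept error by a perturbation expansion of the weighted least-squares solution in $\widehat\Pi_x-\Pi^\star$.

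The second step is the regression step. Work in reduced coordinates $u=U^\top\widehat v$, using the equivalence in Lemma~\ref{lem:chart-equivalence}. Matrix Bernstein shows that the reduced Gram $\widehat H$ satisfies $\lambda_{\min}\gtrsim\pi_{k,n}$, which gives $\lambda^+_{\min}(\widehat G_x^\Pi)\ge c\pi_{k,n}$ and the identity $e_0^\top(\widehat G_x^\Pi)^\dagger\widehat G_x^\Pi=e_0^\top$ stated above. This uses that the pushforward of $\mu_k$ under chart coordinates has density bounded above and below on the kernel support. Now decompose $Y_i=f(X_i^\star)+\epsilon_i$ and expand $f\circ$chart in a degree-$p_k$ Taylor polynomial in $u$ at the query origin. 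Reproduction of polynomials then gives the bias bound
\[
|{\rm bias}|\lesssim\max_{i\in\text{window}}|f(X_i^\star)-\text{poly}(u_i)|\lesssim h_k^{\alpha_k}.
\]
Here $u_i$ differs from the true chart coordinate by the tangent error and by $\sigma/h_k$, and these differences are absorbed by Hölder continuity and by the previous step. The noise term has conditional variance $\lesssim\sigma_Y^2/(N_kh_k^{d_k})=h_k^{2\alpha_k}$, using the Gram lower bound and the weight normalization. One subtlety is that $\widehat\Pi_x$ depends on the same $\epsilon$-free covariates but not on the $Y_i$. The graph loss uses only the $X_i$, so conditional on all covariates the weights are fixed and $\mathbb E[\epsilon_i\mid X]=0$ gives an exact variance bound. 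This removes the need for sample splitting. Combining both steps on $\mathcal G_k$ and using clipping on its complement completes the plan.
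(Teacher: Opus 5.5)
Your outline follows the paper's proof closely: condition on the query label, use local purity of the $\ell_1$ window, take a comparison candidate with normal offset $a^\circ$, union-bound over the polynomial grid, apply identifiability and Davis--Kahan rounding, reproduce a comparison polynomial through $e_0^\top G^\dagger G=e_0^\top$, bound the variance conditionally on all covariates, and clip on the bad event. Two steps, however, fail as you state them.

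\textbf{The tangent concentration gives the wrong rate.} You bound the uniform deviation of each $L_q(x)$ from its mean by $\mathfrak m\zeta$, with $\mathfrak m=\pi_{k,n}h_k^{d_k+2}$ and $\zeta=\{\log n/(N_kh_k^{d_k})\}^{1/2}$. Combine this with quadratic growth $L(q)-L(q^\circ)\gtrsim\mathfrak m D_q^2-C\mathfrak m b_{\rm tan}^2$ and empirical near-minimality. The result is only $D_q^2\lesssim b_{\rm tan}^2+\zeta$, so the projector error is $\zeta^{1/2}=h_k^{\alpha_k/2}(\log n)^{1/4}$, not the $\zeta$ you display.

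This is genuinely too weak. The intercept error contains $h_k\delta_x$, and $h_k^2\zeta\asymp h_k^{2+\alpha_k}\sqrt{\log n}$ exceeds $h_k^{2\alpha_k}$ by an unbounded factor whenever $\alpha_k\ge2$. Equivalently, $\zeta^{1/2}$ is not $O(h_k^{\alpha_k-1})$, which is exactly the requirement you yourself identify.

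The missing idea is localization. Apply Bernstein to the differences $L_n(q;x)-L_n(q^\circ;x)$, not to each $L_q(x)$. Both residuals are bounded on the window, so $|\|R_q\|^2-\|R_{q^\circ}\|^2|\lesssim\|R_q-R_{q^\circ}\|$. The summands therefore have variance $\lesssim\pi_{k,n}h_k^{d_k+4}\{d_h(q,q^\circ)+b_{\rm tan}\}^2$. This yields fluctuation $\mathfrak m\{(D_q+b_{\rm tan})\zeta+\zeta^2\}$, and Young's inequality then gives $D_q\lesssim b_{\rm tan}+\zeta$ (the paper's \eqref{eq:tangent-loss-concentration}).

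The discrepancy here must be the loss-induced $d_h(q,q^\circ)$, which also controls offset and tensor mismatch, not $\|\Pi-\Pi^\star\|_F$ alone. Only afterwards is it converted to projector error via Lemma~\ref{lem:projector-identifiability}.

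With the correct rate, your ``main obstacle'' disappears and no perturbation expansion of the least-squares solution is needed. For $\alpha_k>1$, on the good event $\delta_x\lesssim h_k^{s_k}+\sigma_{k,n}/h_k+h_k^{\alpha_k}\sqrt{\log n}\lesssim h_k^{\alpha_k-1}$. The $O(h_k)$-Lipschitz comparison polynomial then turns this into residuals $O(h_k^{\alpha_k}+h_k\delta_x+\sigma_{k,n})$. For $\alpha_k\le1$ the comparison polynomial is constant, so $\delta_x$ does not enter the bias.

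\textbf{Matrix Bernstein does not apply to $\widehat H$.} The statement ``matrix Bernstein shows that the reduced Gram $\widehat H$ satisfies $\lambda_{\min}\gtrsim\pi_{k,n}$'' is not valid as written. $\widehat H$ is built from $\widehat\Pi_x$, which depends on the same covariates, so its summands are not independent. You noticed this dependence for the noise term but not for the design event.

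The paper proves the Gram bounds at the fixed projector $\Pi^\star$. It then extends them deterministically to every rank-$d_k$ projector within a fixed radius $\delta_\star$ of $\Pi^\star$. The extension uses $|W_i^\Pi-W_i^\circ|\lesssim h_k^{-d_k}\mathsf A_i\|\Pi-\Pi^\star\|_{\rm op}$ and weight-count bounds that do not depend on the projector (Lemma~\ref{lem:regression-design-stability}). The tangent event then places $\widehat\Pi_x$ inside that neighborhood.

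\textbf{Minor points.} The graph residual is $O(h_k^{s_k}+\sigma_{k,n}/h_k)$, not $O(h_k^{s_k-1}+\sigma_{k,n}/h_k)$. The perturbation condition is $\sigma_{k,n}\le a_{\rm sc}h_k^{\alpha_k\vee1}$, not $a_{\rm sc}h_k^{\alpha_k}$. Neither changes the conclusion.
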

The constant may additionally depend on the fixed kernel, cutoff, grid
resolution exponent, and near-minimizer rule, uniformly in $n,K_n,k$ and the
growing grid cardinality. Together with Theorem~\ref{thm:minimax-lower}, this
attains the benchmark $\mathfrak R_n$.

\paragraph{Geometric error and the regression rate.}
Condition on the query variables with component $k$, and put
$s_k=\lceil\alpha_k\rceil$ and
$\delta_x=\|\widehat\Pi_x-\Pi_{T_{x^\star}\mathcal M_k}\|_{\rm op}$.
Appendix~\ref{app:tangent-local-poly-upper} gives
\begin{align*}
 \mathbb E[\delta_x^2\mid k,x^\star,\xi_{n+1}]
 &\lesssim h_k^{2s_k}+\sigma_{k,n}^2/h_k^2
       +\frac{\log(en)}{N_kh_k^{d_k}}+h_k^{2\alpha_k},\\
 \mathbb E[(\widehat f_{\rm plug}^{\Pi}(x)-f(x^\star))^2\mid k,x^\star,\xi_{n+1}]
 &\lesssim h_k^{2\alpha_k}+h_k^2\mathbb E[\delta_x^2\mid k,x^\star,\xi_{n+1}]
        +\sigma_{k,n}^2+\frac1{N_kh_k^{d_k}}.
\end{align*}
The tail and bad-design terms are absorbed at order $h_k^{2\alpha_k}$.
The comparison polynomial is $O(h_k)$-Lipschitz in normalized coordinates;
thus coordinate error $\delta_x+\sigma_{k,n}/h_k$ produces regression error
$O(h_k\delta_x+\sigma_{k,n})$. Consequently the tangent stochastic term is
multiplied by $h_k^2$, and $h_k^2\log(en)$ stays bounded. Since
$(N_kh_k^{d_k})^{-1}=h_k^{2\alpha_k}$, the component-wise bias and variance
balance. The tangent estimator uses only covariates, so the response-noise
variance calculation remains valid conditional on the estimated geometry.

Two uniformity steps are essential in this argument. The localized covariate
window must contain only observations from the query component, even when
the separation scale changes with $n$. Next, the regression design event
must hold for every rank-$d_k$ projector in a neighborhood of the true
projector. This uniform event accommodates the dependence of
$\widehat\Pi_x$ on the regression covariates. The positive-spectrum
bound and the weight-sum bound then give variance
$O((N_kh_k^{d_k})^{-1})$ after conditioning on all covariates. Finally, the component-wise
failure probabilities and risk bounds are uniform, so averaging over the
query label preserves the component-wise constants.

\section{Transformer realization}\label{sec:approximation}\label{subsec:transformers}
\paragraph{Transformers.}
For $N$ tokens $Z\in\mathbb R^{N\times d_E}$ of dimension $d_E$ and $M$ heads
with $Q_m,K_m,V_m\in\mathbb R^{d_E\times d_E}$, define
\[
 \mathrm{MHA}(Z)=\sum_{m\in[M]}\operatorname{softmax}_{\rm row}
 \left(\frac{ZQ_m(ZK_m)^\top}{\sqrt{d_E}}\right)ZV_m.
\]
The row-wise FFN of width $d_{\rm FFN}$ is
$\mathrm{FFN}(Z)=\mathrm{ReLU}(ZW_1+\mathbf 1_Nb_1^\top)W_2+\mathbf 1_Nb_2^\top$,
where $W_1,W_2^\top\in\mathbb R^{d_E\times d_{\rm FFN}}$,
$b_1\in\mathbb R^{d_{\rm FFN}}$, $b_2\in\mathbb R^{d_E}$, and
$\mathbf 1_N$ is the all-ones vector. Let $\theta^{(\ell)}$ collect these
matrices and biases in block $\ell\in[L]$, and put
$\theta=(\theta^{(1)},\ldots,\theta^{(L)})$. Define
\[
 H=Z+\mathrm{MHA}_{\theta^{(\ell)}}(Z),\quad \mathrm{Block}_{\theta^{(\ell)}}(Z)=H+\mathrm{FFN}_{\theta^{(\ell)}}(H),\quad \mathrm{TF}_\theta=\mathrm{Block}_{\theta^{(L)}}\circ\cdots\circ \mathrm{Block}_{\theta^{(1)}}.
\]
For $B>0$, $\mathcal T_M(N,d_E,d_{\rm FFN},L,B)$ comprises these maps
$\mathbb R^{N\times d_E}\to\mathbb R^{N\times d_E}$ with every parameter entry
bounded in absolute value by $B$; write $\mathcal T(d_E,d_{\rm FFN},L,B)$
when $M,N$ are fixed.

\paragraph{Input and output interface.}
The scalar predictor is
\begin{equation}\label{eq:transformer-predictor}
 \widehat f_{\theta_n}^{\rm TF}(\mathfrak s)
 =\mathrm{Read}_n\circ\mathrm{TF}_{\theta_n}\circ\mathrm{Emb}_n(\mathfrak s).
\end{equation}
The fixed affine embedding packs each $(X_i,Y_i)$ into one sample row and
$(X_{n+1},0)$ into one query row, giving $N_n^{\rm end}=n+1$.
The construction preserves raw data and row-type markers; extra coordinates
store intermediate geometric and regression computations. The fixed readout
uses ReLU multiplication to approximate the chart-weighted sum and clips the
resulting scalar to $[-L_{\mathcal F},L_{\mathcal F}]$
(Appendix~\ref{app:emb-and-read}). The query response is used in the
pretraining loss.

\paragraph{From geometric summaries to an intrinsic regression system.}
A selector encoded in the comparator parameters evaluates $x=X_{n+1}$ to
obtain $d_x,p_x,s_x,h_x$ and $\pi_{k_x,n}^{-1/2}$. Its network is fixed by
the deterministic structural model and $n$; the query determines the
component state (Appendix~\ref{app:geometric-preconditional}). Using this
state, the geometric block aggregates localized polynomial moments of the
observed covariates, approximates the finite-grid projector fit, and produces
tangent and chart states. The regression block combines the resulting
coordinates and local weights with the observed responses to form chartwise
normal equations. A globally defined bounded ReLU decoder approximates their
inverse on well-conditioned inputs. These are internal stages of one
forward pass (Figure~\ref{fig:end-to-end}, Appendix~\ref{app:roadmap}).
The reduced regression dictionary has size
\[
 q_{\rm reg,\max}=\binom{d_{\max}+p_{\max}}{p_{\max}},
 \qquad p_{\max}=\lceil\alpha_{\max}\rceil-1,
\]
compared with the ambient dictionary size $\binom{D+p_{\max}}{p_{\max}}$.
Geometric calculations and network-size exponents retain their dependence on $D$.

\paragraph{Fixed-dimensional summaries.}
With $s_{\max}=\lceil\alpha_{\max}\rceil$, the geometric loss depends on the
in-context covariates through
\[
 m_{\rm tan}(x)=\frac1n\sum_{i\in[n]}
 \mathsf A(\|z_i\|_1)\Phi_{D,2s_{\max}}(z_i).
\]
Indeed, each $\|R_q(z)\|^2$ is a polynomial of degree at most $2s_{\max}$,
so $L_q(x)=h_x^2c_q^\top m_{\rm tan}(x)$ for a deterministic coefficient
$c_q$. All tangent candidates can therefore be evaluated using the same
summary vector. After charting,
the analogous regression summaries are
\[
 G_j=\frac1n\sum_{i\in[n]}\pi_{k_x,n}^{-1}W_{i,j}\phi_{i,j}\phi_{i,j}^\top,
 \qquad g_j=\frac1n\sum_{i\in[n]}\pi_{k_x,n}^{-1}W_{i,j}\phi_{i,j}Y_i.
\]
The factor $\pi_{k_x,n}^{-1}$ keeps the good-event Gram spectrum bounded below
by a fixed positive constant, and cancels in $G_j^{-1}g_j$.
Row-wise FFNs compute local weighted contributions in value coordinates.
Uniform softmax with binary marker gates and deterministic value scaling
performs exact query broadcast and sample averaging
(Lemma~\ref{lem:softmax-communication}). Compact localization therefore acts
through values. The summary dimensions are fixed with respect to $n$, although their
numerical ranges and approximation widths can grow. Each arithmetic
approximator acts on a single row or a fixed-dimensional summary,
yielding polynomial resource bounds as the prompt grows.

\paragraph{Why several charts represent one estimator.}
A rank-$d$ projector specifies a subspace; continuous frame choices are local
to coordinate charts. Using all charts avoids discontinuities from choosing
one frame globally. Let $P_T$ be an exact
rank-$d$ tangent projector, let $\mathfrak C_d$ be the collection of all size-$d$ subsets
of $[D]$, and let $E_J$ select the coordinates in $J$. Define
\[
 A_J=E_J^\top P_TE_J,\qquad
 \sum_{J\in\mathfrak C_d}\det A_J=1.
\]
The second identity is Cauchy--Binet: at least one coordinate choice sees
all tangent directions. The chart rule assigns positive weight only when
$\det A_J>\tau_d$, with $\tau_d=(4|\mathfrak C_d|)^{-1}$. Since the
eigenvalues of $A_J$ lie in $[0,1]$, positivity gives
$\lambda_{\min}(A_J)>\tau_d$. On such a chart,
\[
 U_J=P_TE_JA_J^{-1/2},\qquad U_J^\top U_J=I_d,
 \qquad U_JU_J^\top=P_T.
\]
Thus the chart supplies stable orthonormal coordinates, and
$U_JU_J^\top(X_i-x)/h_x$ is the same projected covariate for every
positive-weight chart. The reduced polynomial dictionaries are different
coordinate descriptions of the same polynomial space. On the regression
good event, their Gram matrices are positive definite, so their fitted
intercepts agree with the ambient pseudoinverse intercept. Their weighted
average therefore represents the same estimator on that event. Safe chart
maps extend the coordinate formulas to other inputs.

Only positive-weight charts require inverse approximation. A zero-weight
chart may have a singular Gram matrix, but its globally defined decoder
still returns a clipped value. If approximate weights place small mass on
such a chart, the resulting error is bounded by $L_{\mathcal F}$ times
that weight error. The decoder is evaluated on every branch, including those with zero weight.
Appendix~\ref{app:approximation-error} gives the precise comparison and
controls failures of the spectral and regression good events.

\begin{theorem}[Transformer approximation]\label{thm:transformer-approx}
Fix $A_0>0$ and a deterministic structural model satisfying
Assumptions~\ref{ass:geometry}--\ref{ass:feasible-local}. For all sufficiently
large $n$, there are deterministic dimensions and a parameter vector
$\widehat\theta_n$ defining a transformer
\[
 \mathrm{TF}_{\widehat\theta_n}\in
 \mathcal T_{M_0}(n+1,d_{E,n},d_{{\rm FFN},n},L_n,B_n)
\]
whose fixed-interface predictor satisfies
\[
 \sup_{f\in\mathcal H,\,P\in\mathcal P_\star}
 \mathbb E_{P,f}\bigl[
   \{\widehat f_{\widehat\theta_n}^{\rm TF}(\mathfrak s)
      -\widehat f_{\rm plug}^{\Pi}(X_{n+1})\}^2\bigr]
 \lesssim n^{-A_0}.
\]
Here $M_0=O(1)$, $L_n\le C_{\rm depth}\log(en)$, and
$d_{E,n}+d_{{\rm FFN},n}+B_n\le n^{C_{\rm size}}$, with
$C_{\rm depth},C_{\rm size}$ and implicit constants depending only on $A_0$
and fixed model and construction bounds (Appendix~\ref{app:accuracy-resource}).
The parameter vector is fixed by the structural model and $n$, and the same
vector applies uniformly over $f\in\mathcal H$ and $P\in\mathcal P_\star$.
\end{theorem}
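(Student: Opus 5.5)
The plan is to build the transformer as a composition of explicitly specified blocks, each emulating one stage of Algorithm~\ref{alg:localpoly}. The error is controlled on a high-probability good event $\mathcal G_n$, and the bad event is charged at $4L_{\mathcal F}^2\,\mathbb P(\mathcal G_n^c)$. This uses the fact that both predictors are clipped to $[-L_{\mathcal F},L_{\mathcal F}]$.

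First, define $\mathcal G_n$ as the intersection of three events: the tangent good event of Lemma~\ref{lem:tangent-projector-good}, which gives a fixed spectral gap of $\bar P_x$ and $\delta_x$ small; the uniform regression design event, on which every rank-$d_k$ projector near $\Pi^\star$ gives $\lambda_{\min}^+(\widehat G^\Pi_x)\gtrsim\pi_{k_x,n}$; and the single-component window event. Since $N_kh_k^{d_k}\ge n^{c}$ polynomially by Assumption~\ref{ass:feasible-local}, Bernstein bounds combined with a union bound over the grid $\mathcal Q_{d,s,n}$ of size $n^{m_\Theta}$ and over $K_n\le n$ components give $\mathbb P(\mathcal G_n^c)\le n^{-A_0}$ after enlarging constants. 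One caveat applies: a few steps, such as the thresholded weights $t_q$ and the chart indicators, are discontinuous. They therefore also need a margin event. On that event the loss gaps $L_q-L_{\min}$ avoid an $n^{-A'}$-neighbourhood of $\Delta_{{\rm tan},n}$, or, more simply, the spectral rounding absorbs any perturbation of $\bar P_x$ of size $o(1)$ because of the fixed gap. So I will only require $\bar P_x$ to be approximated in operator norm, not the individual weights.

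Second, the stages are realized as follows.
\begin{enumerate}
\item \textbf{Selector.} A ReLU FFN on the query row outputs $(d_x,p_x,s_x,h_x,\pi_{k_x,n}^{-1/2})$. This is possible because the components are $\delta_{0,n}$-separated and the query lies within $\sigma_{k,n}\ll\delta_{0,n}$ of its component, so a piecewise-constant map with polynomially many pieces, encoded in the comparator weights, suffices.
\item \textbf{Broadcast and averaging.} Using Lemma~\ref{lem:softmax-communication}, the query is broadcast to every sample row. Each row then computes $\mathsf A(\|z_i\|_1)\Phi_{D,2s_{\max}}(z_i)$ by ReLU monomial approximation with error $n^{-A}$, using depth $O(\log n)$ via Yarotsky-type squaring. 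Uniform attention then averages these rows to form $m_{\rm tan}(x)$.
\item \textbf{Geometric block.} On the query row, compute all $L_q=h_x^2c_q^\top m_{\rm tan}$ linearly, take the minimum with a log-depth max-tree, form $t_q$ with a ReLU, and compute the normalized average $\bar P_x$ using an approximate division. The denominator is at least $\Delta_{{\rm tan},n}=n^{-3}$, so polynomial precision suffices. The width is $|\mathcal Q|\le n^{m_\Theta}$, which is polynomial.
\item \textbf{Chart step.} Approximate $\Pi_{(d)}$ on the gap region, for instance by a polynomial spectral filter or Newton–Schulz iteration of depth $O(\log n)$. Then compute $\det A_J$, soft chart weights that are Lipschitz around the threshold $\tau_d$, and $U_J=P_TE_JA_J^{-1/2}$, taking the inverse square root by Newton iteration on $\lambda\in[\tau_d,1]$.
\end{enumerate}

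Third, the regression block. The frames are broadcast back to the sample rows, and each row computes $W_{i,j}$, $\phi_{i,j}$ and the products with $Y_i$. Averaging gives $G_j$ and $g_j$, including the factor $\pi_{k_x,n}^{-1}$. On $\mathcal G_n$ each positive-weight $G_j$ has spectrum in $[c,C']$, so a bounded Newton–Schulz decoder computes $e_0^\top G_j^{-1}g_j$ to accuracy $n^{-A}$ in $O(\log n)$ layers. Lemma~\ref{lem:chart-equivalence} then identifies the weighted chart average with $\widehat f^\Pi_{\rm plug}(X_{n+1})$. The readout multiplies by the chart weights and clips. A zero-weight chart contributes at most $L_{\mathcal F}$ times the weight error.

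Finally, errors are propagated through the stages. Each stage is Lipschitz with constants at most $n^{C}$ on the good-event region, so per-stage accuracy $n^{-A}$ with $A$ large enough yields a total error of order $n^{-A_0/2}$ pointwise on $\mathcal G_n$. The parameter magnitudes, widths and embedding dimension are polynomial, the depth is a fixed number of $O(\log n)$ subroutines, and the number of heads is $O(1)$.

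I expect the main obstacle to be the discontinuities: near-minimizer thresholding, tie-breaking in the spectral rounding, and chart selection. On rare configurations a small arithmetic error could flip a discrete decision, giving an $O(1)$ output difference. I would first try to show that every such decision is stable on $\mathcal G_n$:
\begin{enumerate}
\item for $\bar P_x$, by the fixed spectral gap;
\item for the charts, by replacing hard indicators with continuous weights and noting that the intercept is chart-independent;
\item for the threshold, by noting that the output depends on the weights only through $\bar P_x$, which is then rounded.
\end{enumerate}
Any residual boundary events would need an anti-concentration bound of order $n^{-A_0}$, and I suspect this would be the most delicate step.
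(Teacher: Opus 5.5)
Your proposal is correct, and its decomposition matches the paper's essentially stage for stage. The shared steps are: a good event of probability $1-O(n^{-A_0})$, obtained with tangent tail parameter $\asymp(A_0+1)\log(en)$ plus the uniform regression event; an exact selector on the separated offsets; uniform-softmax broadcast and averaging; the moment-to-$\bar P$ map and gap-stable rounding; continuous chart weights with Lemma~\ref{lem:chart-equivalence}; decoder accuracy required only on well-conditioned positive-weight charts; leakage bounded by $L_{\mathcal F}\sum_j|\mathsf\omega_{j,x}-\omega_{j,x}^{\rm ora}|$; and the bad event charged at $4L_{\mathcal F}^2\,\mathbb P(\cdot)$. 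Two things differ. First, the discontinuity you single out as the main obstacle does not arise, so no margin or anti-concentration event is needed. The weights $t_q=\{\Delta_{{\rm tan},n}-(L_q-L_{\min})\}_+$ are already continuous, and their normalizer is at least $\Delta_{{\rm tan},n}$, which gives the polynomial Lipschitz bound \eqref{eq:tangent-summary-lipschitz}. The chart weights \eqref{eq:safe-chart-extension} are a normalized ramp $\vartheta_d$ that vanishes unless $\det A_J>\tau_d$. Rounding is Lipschitz on the gap set $\mathcal K_d$; note that it transmits perturbations of $\bar P_x$ with a bounded constant rather than absorbing them. Tie-breaking therefore matters only off the good event, where clipping handles it. Second, the paper uses a single arithmetic device, Lemma~\ref{lem:scaled-arithmetic}: a clipped maximum of cones over a polynomial net, which is globally defined, bounded, and has the same Lipschitz constant as its target. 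This makes error propagation automatic for inputs that leave the target domain (the approximate $\bar P_x$, generated Gram matrices, zero-weight branches), at the cost of polynomial width. Your unrolled Newton--Schulz, Newton inverse-square-root and Yarotsky squaring are leaner in width, but they need three extra checks. You must verify that each iteration is stable on a neighborhood of the good set. You must clip iterates and communicated registers to polynomial envelopes. You must also account for depth: reaching accuracy $n^{-A}$ takes $\Theta(\log\log n)$ Newton--Schulz steps, and at depth $\Theta(\log n)$ each this exceeds $C_{\rm depth}\log(en)$. To stay within budget, either grade the per-step precision (the iteration is self-correcting) or use polynomial-width constant-depth products.
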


Combining the two comparisons by the squared triangle inequality gives
\[
 \sup_{f\in\mathcal H,\,P\in\mathcal P_\star}
 \mathcal E_{P,f}(\widehat f_{\widehat\theta_n}^{\rm TF})
 \lesssim \mathfrak R_n+n^{-A_0}.
\]
Thus a sufficiently accurate implementation preserves the statistical rate
of the oracle target. This establishes a good comparator in the transformer
class before any pretraining analysis. Choosing a predictor from that class
using finitely many tasks is a separate statistical step, addressed next.

\paragraph{Accuracy, workspace, and structural information.}
The proof compiles ReLU modules into FFNs with residual connections using two polynomial-width
scratch banks, with raw observations and markers protected
(Lemma~\ref{lem:relu-transformer-compilation}). These banks store the intermediate module states. One communication head copies or averages an entire stored vector,
so the head count stays fixed.
Corollary~\ref{cor:accuracy-resource} makes the accuracy dependence explicit:
\begin{equation}\label{eq:accuracy-resource-main}
 L_n\le(c_{L,0}+c_{L,1}A_0)\log(en),\qquad
 d_{E,n}+d_{{\rm FFN},n}+B_n\le n^{c_{S,0}+c_{S,1}A_0},
\end{equation}
with coefficients fixed by the structural bounds and module choices;
the threshold in $n$ can depend on $A_0$.
The grid and selector are encoded in the comparator's weights. Their
construction uses the specified geometry and component parameters, and is
fixed before sampling any prompt. The same construction covers the admissible
within-component density and perturbation laws. Thus the comparator is shared
across tasks within that structural environment, while its encoded
construction may vary with the environment. Its resource bounds are given in
\eqref{eq:accuracy-resource-main}, and its guarantee is in mean square, with
clipping controlling rare spectral or design failures.

\section{In-context generalization}\label{sec:icl}
\paragraph{Training and target tasks.}\label{subsubsec:generation}
Fix $P\in\mathcal P_\star$ and a distribution $\rho_f$ on $\mathcal H$.
Each of $\Gamma$ independent training tasks draws $f^{(\gamma)}\sim\rho_f$
and generates $n+1$ independent latent/noise tuples from $P$ as in
Section~\ref{subsec:assumption}. Training observes
$\mathcal D_\Gamma^{\rm tr}
=\{(\mathfrak s^{(\gamma)},Y_{n+1}^{(\gamma)})\}_{\gamma=1}^{\Gamma}$.
An independent target task is generated in the same way from a fresh
$f\sim\rho_f$. Here $n$ measures information within a task; $\Gamma$ measures
the number of training tasks. Tasks share the structural model and sampling
law.

Let $\mathcal G_n^{\rm end}$ be the class of all scalar predictors
\eqref{eq:transformer-predictor} over the deterministic architecture envelope
in Theorem~\ref{thm:transformer-approx}, with fixed embedding and readout.
All parameters, including the query and key matrices, vary freely within
the prescribed bound. Define
\begin{equation*}
 R_{P,\rho_f}^\star(g)=\mathbb E_{f\sim\rho_f}\mathcal E_{P,f}(g),\qquad
 \widehat R_\Gamma(g)=\frac1\Gamma\sum_{\gamma\in[\Gamma]}
       \{Y_{n+1}^{(\gamma)}-g(\mathfrak s^{(\gamma)})\}^2.
\end{equation*}
Fix a deterministic tolerance $\varepsilon_{{\rm opt},\Gamma}\ge0$.
A measurable estimator $\widehat g_\Gamma\in\mathcal G_n^{\rm end}$ is a near
ERM if, almost surely,
\begin{equation}\label{eq:empirical-risk-minimizer}
 \widehat R_\Gamma(\widehat g_\Gamma)
 \le\inf_{g\in\mathcal G_n^{\rm end}}\widehat R_\Gamma(g)
       +\varepsilon_{{\rm opt},\Gamma}.
\end{equation}
The tolerance bounds the achieved empirical-risk gap and enters the
generalization bound explicitly, quantifying how imperfect optimization
affects the final statistical guarantee.

\begin{theorem}[Near-ERM generalization]\label{thm:icl-erm}
Under Assumptions~\ref{ass:geometry}--\ref{ass:feasible-local}, fix $A_0>0$
and the architecture of Theorem~\ref{thm:transformer-approx}. Every measurable
near ERM in \eqref{eq:empirical-risk-minimizer} satisfies
\begin{equation*}
 \mathbb E_{\mathcal D_\Gamma^{\rm tr}}R_{P,\rho_f}^\star(\widehat g_\Gamma)
 \lesssim \mathfrak R_n+n^{-A_0}
       +\frac{\mathfrak H_{n,\Gamma}^{\rm end}+1}{\Gamma}
       +\varepsilon_{{\rm opt},\Gamma},
\end{equation*}
where the entropy proxy defined in Appendix~\ref{app:end-to-end-entropy}
obeys $\mathfrak H_{n,\Gamma}^{\rm end}
\le C_{\rm ent}n^{c_{\rm ent}}\{1+\log(e\Gamma)\}$.
Constants are uniform over $P,\rho_f$ and admissible component sequences;
they may depend on $A_0$ and fixed statistical and architectural bounds.
\end{theorem}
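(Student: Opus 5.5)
The plan is a standard bounded-loss oracle inequality for least squares over a finite-entropy class, applied to the prompt-to-target regression problem. Write the pair $(\mathfrak s,Y_{n+1})$ for a task. After integrating over $f\sim\rho_f$ it is a single draw from a fixed joint law, and the $\Gamma$ training pairs are i.i.d. copies of it. Put $\bar f(\mathfrak s):=\mathbb E[f(X_{n+1}^\star)\mid\mathfrak s]$. Because $\mathbb E[\epsilon_{n+1}\mid X^\star,\xi,Z]=0$ and $\epsilon_{n+1}$ is independent of the in-context sample given the task, two decompositions hold. First,
\[
 \mathbb E\{Y_{n+1}-g(\mathfrak s)\}^2=\mathbb E\{Y_{n+1}-\bar f(\mathfrak s)\}^2+\mathbb E\{g(\mathfrak s)-\bar f(\mathfrak s)\}^2 .
\]
Second, $R^\star_{P,\rho_f}(g)=\mathbb E\{g-\bar f\}^2+\mathbb E\{f(X^\star_{n+1})-\bar f\}^2$. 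The excess prediction risk therefore equals the excess target risk, and it suffices to bound $\mathbb E\{\widehat g_\Gamma-\bar f\}^2$ against $\inf_g\mathbb E\{g-\bar f\}^2$.

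All members of $\mathcal G_n^{\rm end}$ are clipped to $[-L_{\mathcal F},L_{\mathcal F}]$ by the fixed readout, and $|Y_{n+1}|\le B_Y$. The loss is therefore bounded, and the usual Bernstein-type argument applies. This is either the localized uniform deviation bound for bounded squared loss over a $\Gamma^{-1}$-cover in sup norm, or the Györfi et al.\ style inequality. It gives
\[
 \mathbb E\{\widehat g_\Gamma-\bar f\}^2\le 2\inf_{g\in\mathcal G_n^{\rm end}}\mathbb E\{g-\bar f\}^2+C\,\frac{\log N(\Gamma^{-1},\mathcal G_n^{\rm end},\|\cdot\|_\infty)+1}{\Gamma}+C\varepsilon_{{\rm opt},\Gamma}.
\]
The optimization tolerance enters additively, because the near-ERM inequality is used in place of exact minimization in the basic inequality $\widehat R_\Gamma(\widehat g_\Gamma)\le\widehat R_\Gamma(g^\circ)+\varepsilon_{{\rm opt},\Gamma}$ for a fixed comparator $g^\circ$. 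I would define $\mathfrak H^{\rm end}_{n,\Gamma}$ as this log covering number.

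For the approximation term, take $g^\circ=\widehat f^{\rm TF}_{\widehat\theta_n}$ from Theorem~\ref{thm:transformer-approx}. It lies in $\mathcal G_n^{\rm end}$, and it does not depend on $f$ or $P$. Then
\[
 \mathbb E\{g^\circ-\bar f\}^2\le R^\star_{P,\rho_f}(g^\circ)\le 2\sup_{f,P}\mathcal E_{P,f}(\widehat f^{\Pi}_{\rm plug})+2\sup_{f,P}\mathbb E\{g^\circ-\widehat f^\Pi_{\rm plug}\}^2\lesssim\mathfrak R_n+n^{-A_0}.
\]
This combines Theorem~\ref{thm:oracle-upper} with Theorem~\ref{thm:transformer-approx}. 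Adding back the irreducible term $\mathbb E\{f(X^\star_{n+1})-\bar f\}^2\le R^\star(g^\circ)$ gives the stated bound. The factor $2$ in front of the infimum is harmless, since everything is absorbed into $\lesssim$.

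The main obstacle is the entropy bound $\mathfrak H^{\rm end}_{n,\Gamma}\le C_{\rm ent}n^{c_{\rm ent}}\{1+\log(e\Gamma)\}$. I would obtain it from a parameter-Lipschitz estimate for $\theta\mapsto\mathrm{Read}_n\circ\mathrm{TF}_\theta\circ\mathrm{Emb}_n(\mathfrak s)$, uniform over bounded prompts, in sup norm. Each block is Lipschitz in both its input and its parameters. For softmax attention with entries bounded by $B_n$ and tokens bounded by some $R_\ell$, the Lipschitz constant is polynomial in $d_E,B_n,R_\ell$; note that softmax is $1$-Lipschitz in its logits and the logits are bilinear. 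The ReLU FFN is similar. Token norms grow at most by a factor $\mathrm{poly}(n)$ per layer. With $L_n\lesssim\log(en)$ layers, the total Lipschitz constant is $\mathrm{poly}(n)^{L_n}=n^{O(\log n)}$. This is not polynomial as stated, so care is needed. I would first try to use the fact that the output passes through clipping and that the compiled construction keeps stored values in $\mathrm{poly}(n)$ ranges. Failing that, I would note that the logarithm of the Lipschitz constant is only $O(\log^2 n)\le n^{o(1)}$, which suffices. A grid of spacing $\Gamma^{-1}/\mathrm{Lip}$ on the $P_n\le n^{C}$ parameters then yields $\log N\le P_n\log(\mathrm{Lip}\cdot B_n\Gamma)\lesssim n^{c_{\rm ent}}(1+\log(e\Gamma))$. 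The remaining check is that all constants depend only on the architecture envelope and fixed bounds, so the bound is uniform over $P$, $\rho_f$ and the component sequences.
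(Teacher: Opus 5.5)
Your proposal is correct and follows essentially the paper's route. The paper likewise proves a Bernstein/union-bound oracle inequality for bounded squared loss over a sup-norm net at scale $\Gamma^{-1}$, centered at $\mathbb E[Y_{n+1}\mid\mathfrak s]$. Your excess-risk identity via $\bar f$ is equivalent to the paper's use of $R_Y(h)=R^\star_{P,\rho_f}(h)+\mathbb E\epsilon_{n+1}^2$ together with $R_Y(g_0)\ge\mathbb E\epsilon_{n+1}^2$. The comparator is taken from Theorems~\ref{thm:oracle-upper} and~\ref{thm:transformer-approx}, and the entropy comes from parameter covering, using row-stochastic radius control and an $N$-free Lipschitz bound for softmax.

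One small correction to your entropy count. The per-layer sensitivity factor depends polynomially on the hidden radius $R_\ell\le n^{O(\ell)}$, not just on $n$. The logarithm of the parameter-Lipschitz constant is therefore $O(\log^3(en))$, not $O(\log^2 n)$, as in Lemma~\ref{lem:transformer-entropy}. This is equally harmless, since any polylogarithmic factor is absorbed into $n^{c_{\rm ent}}$.
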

The four terms separate intrinsic in-context error, implementation error,
finite-task complexity, and optimization tolerance. The proof uses a bounded
square-loss oracle inequality and parameter covering of the full softmax
class. Row-stochastic attention controls hidden-state growth, and the global
Lipschitz bound for softmax controls parameter sensitivity. Discretizing the
parameter cube then yields the entropy term, including the polynomial
embedding dimension (Lemma~\ref{lem:transformer-entropy}).
Conditional mean-zero query noise allows training against
$Y_{n+1}$ while evaluating against the latent target. See
Appendix~\ref{app:generalization-error}.

In particular, for a fixed predictor $g$ the noisy and latent risks satisfy
\[
 R_Y(g):=\mathbb E\{Y_{n+1}-g(\mathfrak s)\}^2
 =R_{P,\rho_f}^\star(g)+\mathbb E\epsilon_{n+1}^2.
\]
The same identity holds for the learned predictor after conditioning on the
independent training tasks. The empirical-process argument is centered at
$g_0(\mathfrak s)=\mathbb E[Y_{n+1}\mid\mathfrak s]$, with the oracle
inequality allowing a misspecified predictor class. The good comparator provided by
Theorem~\ref{thm:transformer-approx} controls approximation to the latent
regression value, while the covering term controls choosing a predictor from
the entire parameter-bounded class. A training procedure may select any
admissible parameter vector, so its generalization cost depends on the
capacity of that whole class.
Following the covering-based risk analysis of \citet{ching2026efficient},
we use a finite sup-norm net to reduce uniform control to finitely many
predictors. Since the variance of squared-loss differences relative to $g_0$ is
controlled by excess risk, Bernstein's inequality and a union bound over a net at resolution
$\asymp\Gamma^{-1}$ convert its log-cardinality into the
$(\mathfrak H_{n,\Gamma}^{\rm end}+1)/\Gamma$ near-ERM remainder
(Appendix~\ref{app:end-to-end-generalization}).

\begin{corollary}[Sufficient meta-sample regime]\label{cor:main-polynomial-meta-sample}
Choose $A_0\ge2\alpha_{\max}/(2\alpha_{\max}+1)$. If
\[
 \Gamma\gtrsim n^{c_{\rm ent}}\log(en)\mathfrak R_n^{-1},\qquad
 \varepsilon_{{\rm opt},\Gamma}\lesssim\mathfrak R_n,
\]
then $\mathbb E_{\mathcal D_\Gamma^{\rm tr}}
R_{P,\rho_f}^\star(\widehat g_\Gamma)\lesssim\mathfrak R_n$.
\end{corollary}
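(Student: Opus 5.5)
The plan is to plug the stated choices into Theorem~\ref{thm:icl-erm} and show that each of the three non-oracle terms is $\lesssim\mathfrak R_n$. Theorem~\ref{thm:icl-erm} gives
\[
 \mathbb E_{\mathcal D_\Gamma^{\rm tr}}R_{P,\rho_f}^\star(\widehat g_\Gamma)
 \lesssim \mathfrak R_n+n^{-A_0}+\frac{C_{\rm ent}n^{c_{\rm ent}}\{1+\log(e\Gamma)\}+1}{\Gamma}+\varepsilon_{{\rm opt},\Gamma}.
\]
The optimization term is at most $\mathfrak R_n$ by hypothesis. The implementation and entropy terms need separate arguments.

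\emph{Implementation term.} We need a polynomial lower bound on $\mathfrak R_n$. Write $a_k=2\alpha_k/(2\alpha_k+d_k)$. Since $d_k\ge1$ and $\alpha_k\le\alpha_{\max}$, we have $a_k\le a_\star:=2\alpha_{\max}/(2\alpha_{\max}+1)$. Because $\pi_{k,n}\le1$ and $1-a_k\ge0$, each summand satisfies $\pi_{k,n}(n\pi_{k,n})^{-a_k}=n^{-a_k}\pi_{k,n}^{1-a_k}\ge n^{-a_\star}\pi_{k,n}$. Summing over $k$ and using $\sum_k\pi_{k,n}=1$ gives $\mathfrak R_n\ge n^{-a_\star}$. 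The choice $A_0\ge a_\star$ then yields $n^{-A_0}\le n^{-a_\star}\le\mathfrak R_n$.

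\emph{Entropy term.} This is the step that needs care, because $\log(e\Gamma)$ grows with $\Gamma$, so a linear lower bound on $\Gamma$ must be shown to absorb it. The map $\Gamma\mapsto(1+\log(e\Gamma))/\Gamma$ is decreasing on $\Gamma\ge1$, so it suffices to evaluate it at the threshold $\Gamma_0=c\,n^{c_{\rm ent}}\log(en)\mathfrak R_n^{-1}$. Using $\mathfrak R_n\ge n^{-a_\star}\ge n^{-1}$ and $\mathfrak R_n\lesssim1$ (each $(n\pi_{k,n})^{-a_k}\le1$ because $N_k\ge1$), we get $\log(e\Gamma_0)\lesssim\log(en)$, with constants depending only on $c_{\rm ent}$ and $c$. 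Hence
\[
 \frac{C_{\rm ent}n^{c_{\rm ent}}\{1+\log(e\Gamma_0)\}+1}{\Gamma_0}\lesssim\frac{n^{c_{\rm ent}}\log(en)}{c\,n^{c_{\rm ent}}\log(en)}\,\mathfrak R_n\lesssim\mathfrak R_n .
\]
By monotonicity, the same bound holds for every $\Gamma\ge\Gamma_0$.

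Adding the four terms gives the claim, with constants uniform as in Theorem~\ref{thm:icl-erm}. The corollary's implicit constant may depend on $A_0$, and through $A_0$ on $\alpha_{\max}$, which is fixed.
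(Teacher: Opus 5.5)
Your proposal is correct and follows essentially the same route as the paper. Like the paper, you invoke the near-ERM bound and use $\mathfrak R_n\ge n^{-2\alpha_{\max}/(2\alpha_{\max}+1)}$ to absorb $n^{-A_0}$. You then evaluate the decreasing map $\Gamma\mapsto\{1+\log(e\Gamma)\}/\Gamma$ at the threshold $\Gamma_0$, where $\log(e\Gamma_0)\lesssim\log(en)$. You also supply the summand-wise justification for the lower bound on $\mathfrak R_n$, which the paper compresses into ``summing over $k$.''
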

This gives a sufficient pretraining sample-complexity bound.
It also separates two uses of data. More independent training tasks can
reduce the cost of selecting an in-context procedure, whereas the new
function is still observed through only $n$ examples in its own prompt.
Corollary~\ref{cor:pretraining-minimax-lower} formalizes the distinction: it takes the
uniform prior on the heterogeneous Assouad family, draws a fresh function
index for each task, and conditions on all pretraining observations.
The target index and its prompt retain their original joint law, so the
same prior-average lower bound applies to every learned prediction rule.
The assertion allows knowledge of the hard task law while retaining
uncertainty about the new function. Optimality is therefore worst-case
over task distributions. This quantifier matters for interpreting the
benchmark: a fixed task distribution may be easier, as illustrated by a
point mass at the zero function, for which the identically zero predictor
has zero latent risk.

\section{Discussion}\label{sec:conclusion}
The benchmark in \eqref{eq:rate-Rn} connects heterogeneous local difficulty
to one geometry-first in-context estimator. A higher-order tangent fit,
stable reduced regression, and an architecture-compatible realization attain
this benchmark through a single forward pass. The resulting
near-ERM guarantee separates representation from statistical learning.

Our statistical formulation motivates extending transformer analysis to other
inference problems. \citet{martinez2026deep} establish convergence guarantees
for generative learning on heterogeneous stratified spaces. A further direction
is to derive sharp minimax rates for growing mixtures under suitable
distributional losses, through matching aggregate upper and lower bounds that
retain component dimensions, smoothness, masses, and effective sample sizes,
and to investigate whether
diffusion transformers attain these rates.

\label{main-text-end}

\section*{Acknowledgement}
Jaehee Seo was supported by the Next Generation Scholarship for Basic Studies (Type C) from
Seoul National University.

\clearpage

\bibliographystyle{plainnat}
\bibliography{reference_cam_ready}

\clearpage

\appendix
\numberwithin{theorem}{section}
\numberwithin{lemma}{section}
\numberwithin{corollary}{section}
\numberwithin{definition}{section}
\numberwithin{proposition}{section}
\numberwithin{remark}{section}
\section*{Appendix}
\section{Construction overview}\label{app:roadmap}
\label{subsec:construction-overview}
Figure~\ref{fig:end-to-end} separates the fixed input/output interface from
computations inside the transformer. Its notation follows
Appendices~\ref{app:emb-and-read}--\ref{app:regression-solver}.
Displayed summaries are the targets of the approximate modules at query
$x=X_{n+1}$. Uniform softmax attention copies and averages numerical
registers; row-wise ReLU networks with residual connections compute the statistical modules.

\begin{figure}[H]
\centering
\begingroup
\setlength{\abovedisplayskip}{3pt}
\setlength{\belowdisplayskip}{3pt}
\setlength{\abovedisplayshortskip}{2pt}
\setlength{\belowdisplayshortskip}{2pt}
\begin{tikzpicture}[
  x=1pt,y=-1pt,
  >=Latex,
  every node/.style={font=\small},
  iface/.style={draw=black!70,line width=.65pt,rounded corners=3pt,
    fill=black!3,align=left,inner sep=6pt,outer sep=.5pt,
    text width=154pt,anchor=north},
  fixed/.style={iface,fill=black!8},
  inside/.style={iface,text width=174pt,fill=white},
  flow/.style={->,line width=.8pt,shorten <=2pt,shorten >=2pt},
  detail/.style={flow,dashed,draw=black!65},
  heading/.style={font=\small\bfseries,align=center,anchor=north,
    inner sep=0pt,text width=166pt}
]
\node[heading] at (83,0) {End-to-end interface};
\node[heading] at (294,0) {Inside $\mathrm{TF}_{\theta_n}$};

\node[fixed,minimum height=72pt] (emb) at (83,22) {
  \textbf{Fixed embedding $\mathrm{Emb}_n$}\\[4pt]
  $\mathfrak s=((X_i,Y_i)_{i=1}^n;X_{n+1})$\\[3pt]
  $H^{(0)}=\mathrm{Emb}_n(\mathfrak s)$\\[4pt]
  The observed query is $X_{n+1}$.
};
\node[iface,minimum height=106pt] (tokens) at (83,116) {
  \textbf{Initial token sequence}\\[4pt]
  $H^{(0)}\in\mathbb R^{(n+1)\times d_{E,n}}$\\[4pt]
  Sample row $i\le n$: $(X_i,Y_i)$.\\[2pt]
  Query row $n+1$: $(X_{n+1},0)$.\\[4pt]
  Row markers and zero-initialized workspace \emph{coordinates}.\\[2pt]
  Workspace lies within these $n+1$ rows.
};
\node[iface,minimum height=79pt] (final) at (83,282) {
  \textbf{Final transformer state}\\[4pt]
  $Z=\mathrm{TF}_{\theta_n}(H^{(0)})$\\[3pt]
  $Z_q:=Z_{n+1,\cdot}$\\[4pt]
  Reserved query coordinates contain chart weights $w_j$ and branch values $v_j$.
};
\node[fixed,minimum height=102pt] (read) at (83,383) {
  \textbf{Fixed scalar readout}\\[4pt]
  $\widehat f_{\theta_n}^{\rm TF}(\mathfrak s)=\mathrm{Read}_n(Z)$\\[3pt]
  $\displaystyle S_n=\sum_{j\in[N_{\rm ch}]}M_n(\bar w_j,\bar v_j)$\\[3pt]
  $\mathrm{Read}_n(Z)=(-L_{\mathcal F})\lor S_n\land L_{\mathcal F}$\\[4pt]
  Bars denote clipped coordinates. Fixed ReLU $M_n(w,v)$ approximates $wv$.
};

\node[inside,fill=black!3,minimum height=72pt] (struct) at (294,22) {
  \textbf{Encoded structural information}\\[4pt]
  Selector, tangent grid, chart atlas, and localization templates are fixed in the comparator parameters.\\[3pt]
  $r_\pi(x)=\pi_{k_x,n}^{-1/2}$.
};
\node[inside,minimum height=127pt] (geo) at (294,116) {
  \textbf{Geometric preconditioner}\\[4pt]
  From the query and covariate prompt:\\[3pt]
  $\displaystyle b_n^\circ(x)=
     \bigl(e_{d_x},e_{p_x},e_{s_x},h_x,r_\pi(x)\bigr)$\\[5pt]
  Localized moments and tangent fit:\\[2pt]
  $\bar P_x\ \longrightarrow\ P_{T,x}
       \ \longrightarrow\ \{\mathsf\omega_{j,x},U_{j,x}\}_j$\\[5pt]
  Reduced sample states:
  $\{\mathsf u_{i,j}(x),\mathsf W_{i,j}(x)\}_{i,j}$.\\[4pt]
  Response coordinates $Y_i$ are preserved.
};
\node[inside,minimum height=161pt] (reg) at (294,266) {
  \textbf{Chartwise regression solver}\\[4pt]
  For branch $j$, form the reduced features $\phi_{i,j}$ and normalized summaries:\\[4pt]
  $\displaystyle G_j=\frac1n\sum_{i\in[n]} r_\pi(x)^2\,
     \mathsf W_{i,j}\phi_{i,j}\phi_{i,j}^{\top}$\\[6pt]
  $\displaystyle g_j=\frac1n\sum_{i\in[n]} r_\pi(x)^2\,
     \mathsf W_{i,j}Y_i\phi_{i,j}$\\[6pt]
  $v_j=V_n(s_{x,j}^{\rm gen})
      \in[-L_{\mathcal F},L_{\mathcal F}]$.\\[5pt]
  A bounded ReLU decoder approximates the inverse-based intercept only on well-conditioned, positive-weight charts.
};
\node[align=left,inner sep=0pt,text width=178pt,anchor=north,
      font=\small] (scope) at (294,443) {
  \emph{Internal decomposition only.}\\[3pt]
  Tangent, chart, and regression states are computed within the transformer.
};

\draw[flow] (emb.south)--(tokens.north);
\draw[flow] (tokens.south)--node[left=3pt,font=\small,align=right]
  {$\mathrm{TF}_{\theta_n}$\\[-1pt]one forward pass}(final.north);
\draw[flow] (final.south)--(read.north);
\draw[flow] (struct.south)--(geo.north);
\draw[flow] (geo.south)--(reg.north);
\draw[detail] (tokens.east)--(geo.west);
\draw[detail] (reg.west)--(final.east);
\begin{scope}[on background layer]
  \node[draw=black!55,dashed,line width=.65pt,rounded corners=4pt,
        fit=(struct)(geo)(reg)(scope),inner sep=4pt] {};
\end{scope}
\end{tikzpicture}
\endgroup

\caption[End-to-end interface and internal computation]{End-to-end interface
and internal computation of the predictor in
Theorem~\ref{thm:transformer-approx}.
Solid arrows show successive transformations; dashed arrows associate the
initial and final token states with the internal modules.
The sequence has $n+1$ rows and polynomially many workspace coordinates.
The readout is a clipped scalar.}
\label{fig:end-to-end}
\label{fig:appendix-roadmap}
\end{figure}
\clearpage

\section{Extended related work}\label{app:related}
\paragraph{In-context statistical procedures.}
Viewing the prompt as a sample from a new task separates a transformer's
capacity to represent a statistical procedure from whether training finds
that procedure. Empirical work on simple function classes
\citep{garg2022can}, investigations of linear-model learning algorithms
\citep{akyurek2023what}, and constructive analyses of gradient-descent-like
updates and algorithm selection
\citep{von2023transformers,bai2023transformers} develop this perspective.
Our theorems use it for nonparametric tasks, with the optimization tolerance
kept separate from approximation and statistical error.

\paragraph{Euclidean and manifold nonparametric ICL.}
\citet{kim2024transformers} analyze nonparametric in-context estimation through
learned basis representations and distinguish pretraining from in-context
generalization. \citet{ching2026efficient} show that efficient transformer
implementations of local-polynomial regression can exploit arbitrary positive
H\"older smoothness in Euclidean covariates. Our analysis supplies usable intrinsic coordinates in an admissible growing
geometric environment and controls the interaction between coordinate
estimation and higher-order regression.

For related supervised regression on noisy manifolds,
\citet{shen2025transformers} analyze targets of the form
$g\circ\pi_{\mathcal M}$ with a single manifold and H\"older-continuous $g$
with exponent in $(0,1]$. Their discussion identifies projection approximation
as an obstacle to exploiting higher regularity. The structured-manifold ICL
model of \citet{shen2026understanding} assumes a fixed manifold, a uniform
covariate distribution, and H\"older exponents in $(0,1]$, and connects
attention to kernel prediction. In contrast, our benchmark explicitly retains
$(d_k,\alpha_k,n\pi_{k,n})$ for each component. A tangent fit through degree
$\lceil\alpha_k\rceil$ and a regression polynomial of degree
$\lceil\alpha_k\rceil-1$ yield the geometric propagation term
$h_k^2\mathbb E\delta_x^2$. These comparisons concern the stated regression settings, each with its own
perturbation model and structural assumptions. For Euclidean complexity,
\citet{ching2026efficient} provide sharper guarantees than our conservative
polynomial resource bounds.

\paragraph{Nonparametric regression and geometric inference.}
Local polynomial regression on manifold-supported data achieves rates driven
by intrinsic dimension \citep{bickel2007local}; local linear regression connects
this principle to tangent-space estimation \citep{cheng2013local}.
Deep estimators have also been analyzed on approximate manifolds
\citep{jiao2023deep}. Higher-order tangent and curvature estimation
\citep{Aamari2019Nonasymptotic}, and estimation of the reach
\citep{aamari2019estimating,aamari2023optimal}, motivate the separation between
local graph regularity and the global localization scale in our model.
The present construction uses these geometric ideas as part of a
prompt-dependent estimator.

\paragraph{Representation geometry and the modeling assumption.}
Image-dimension studies \citep{pope2021the} and evidence for a union-of-manifolds
viewpoint \citep{brown2023verifying} motivate heterogeneous low-dimensional
structure. Analyses of transformer embeddings document changes in anisotropy
and intrinsic dimension during training \citep{razzhigaev2024shape}.
The mixture represents this local heterogeneity. Bounded tubular
perturbations describe observations concentrated near an idealized latent
manifold, connecting the model to the approximate-manifold viewpoint
\citep{jiao2023deep,shen2025transformers}.
A fixed feature map producing $X$ and the affine token-packing map
$\mathrm{Emb}_n$ have different roles: only the latter is part of the formal
input interface. The covariate dimension $D$ is fixed, whereas the workspace
dimension $d_{E,n}$ of the constructed transformer may grow.

\paragraph{Quantitative implementation and generalization.}
ReLU approximation theory provides general tools for constructive networks
\citep{yarotsky2017error,petersen2018optimal}; program-style attention analyses
explain selection and aggregation \citep{weiss2021thinking}.
We give the required Lipschitz-preserving approximation and residual-block
compilation directly in Appendix~\ref{app:geo-primitive}, using polynomial
workspace for intermediate ReLU states. Existing covering analyses
\citep{edelman2022inductive,trauger2024sequence,havrilla2024understanding}
provide relevant context, but our entropy calculation is for the specific
bounded-parameter softmax class and polynomial workspace used here.
Lemma~\ref{lem:softmax-communication} implements exact copying and averaging
with uniform softmax weights and bounded value gates.

\clearpage

\section{Numerical illustration}\label{app:numerical}
We compare a trained softmax workspace predictor with a geometry-informed
local-polynomial benchmark at fixed context size and heterogeneous mixture
geometry, varying the number of pretraining prompts
$\Gamma\in\{100{,}000,200{,}000,500{,}000,1{,}000{,}000\}$. The experiment combines
heterogeneity in dimension, smoothness, sampling mass, shape, and design
density. We specify the data generator, empirical architecture, optimization,
bandwidth tuning, and evaluation protocol below.

\subsection{Data and task generation}
\paragraph{Heterogeneous mixture and geometry.}
We fix ambient dimension $D=64$, context size $n=40{,}000$, and $K=12$
components, one for each pair
$(d_k,\alpha_k)\in\{2,4,6,8\}\times\{0.75,1.5,3\}$.
The component probabilities and expected context counts are
\[
\begin{array}{c|ccc}
 &\alpha=0.75&\alpha=1.5&\alpha=3\\ \hline
 d=2&0.02\;(800)&0.03\;(1{,}200)&0.05\;(2{,}000)\\
 d=4&0.04\;(1{,}600)&0.06\;(2{,}400)&0.10\;(4{,}000)\\
 d=6&0.06\;(2{,}400)&0.09\;(3{,}600)&0.15\;(6{,}000)\\
 d=8&0.08\;(3{,}200)&0.12\;(4{,}800)&0.20\;(8{,}000).
\end{array}
\]
For indices $i=0,\ldots,3$ and $j=0,1,2$, write $k=3i+j+1$,
and assign anisotropy and density tilt by
$\eta_k=(0,0.15,0.30)_{(i+j)\bmod3}$ and
$\lambda_k=(0,0.4,0.7)_{(i+2j)\bmod3}$, with tuple entries indexed from zero.
Let $a_d=\operatorname{vol}(\mathbb S^d)$ denote the area of
$\mathbb S^d$, and define
\[
 R_d=(4/a_d)^{1/d},\qquad
 A_k=\operatorname{diag}\bigl(e^{-\eta_k+2\eta_k\ell/d_k}\bigr)_{\ell=0}^{d_k},
 \qquad
 \mathcal M_k=c_k+R_{d_k}Q_kA_k\mathbb S^{d_k}.
\]
The frames $Q_k\in\mathbb R^{64\times(d_k+1)}$ have orthonormal columns.
With $r_{\max}=\max_k R_{d_k}e^{\eta_k}$ and $s=2r_{\max}+1$, the
centers are the rows of
$(s/\sqrt2)(I_{12}-\boldsymbol1\boldsymbol1^\top/12)V^\top$,
where $V\in\mathbb R^{64\times12}$ also has orthonormal columns.
Thus the centers form a regular simplex of side $s$.
Gaussian reduced QR factorizations, with signs fixed by the diagonal of
the triangular factor, generate $V$ and $Q_k$ using NumPy
\texttt{SeedSequence} inputs $(2026,0)$ and $(2026,1,k-1)$, respectively.
This geometry is fixed across tasks and training budgets.

For component $k$, sample $U\in\mathbb S^{d_k}$ with density
$1+\lambda_k u_1$ relative to the uniform probability law, using
rejection sampling from normalized Gaussian directions with envelope
$1+\lambda_k$, and set $X^\star=c_k+R_{d_k}Q_kA_kU$.
Since $\det A_k=1$ and $R_{d_k}^{d_k}a_{d_k}=4$, the density relative
to manifold volume is
\[
 \varrho_k(c_k+R_{d_k}Q_kA_ku)
 =\frac{1+\lambda_k u_1}{4\|A_k^{-T}u\|},\qquad
 4e^{-\eta_k}\le\operatorname{vol}(\mathcal M_k)\le4e^{\eta_k}.
\]
Across the twelve components, $0.05556<\varrho_k<0.57370$, so
$c_\mu=1/20$ is a valid common density constant.
The reach is $R_{d_k}e^{-3\eta_k}$, with minimum $0.22938$;
the conservative separation lower bound is $1.10143$, and every latent
point has Euclidean norm at most $3.15035<4$.

\paragraph{Task functions.}
Each task independently draws one function per component and shares it
across all context and query points on that component. In sphere
coordinates, it has the form
\[
 \begin{aligned}
 f_k(c_k+R_{d_k}Q_kA_ku)
 &=b_k+\sum_{\ell\in[32]}a_{k\ell}
   \phi_{\alpha_k}\!\left(4(v_{k\ell}^\top u-t_{k\ell})\right)\\
 &\quad+\sum_{\ell\in[8]}c_{k\ell}\psi_{k\ell}(u),\qquad
 \phi_\alpha(z)=\frac{(z_+)^\alpha}{1+(z_+)^\alpha}.
 \end{aligned}
\]
Here $b_k\sim\operatorname{Unif}[-0.15,0.15]$,
$t_{k\ell}\sim\operatorname{Unif}[-0.35,0.35]$, and the ridge
directions $v_{k\ell}$ are uniform on $\mathbb S^{d_k}$.
Independently draw bump centers $w_{k\ell}$ uniformly on
$\mathbb S^{d_k}$ and widths $q_{k\ell}\sim\operatorname{Unif}[1,1.6]$;
with $z_{k\ell}(u)=\|u-w_{k\ell}\|^2/q_{k\ell}^2$, set
\[
 \psi_{k\ell}(u)=
 \begin{cases}
 \exp\{1-(1-z_{k\ell}(u))^{-1}\},&z_{k\ell}(u)<1,\\
 0,&z_{k\ell}(u)\ge1.
 \end{cases}
\]
The ridge and bump coefficient vectors have absolute sums $0.4$ and
$0.2$, respectively. For each vector, the first two coefficients receive
$45\%$ and $25\%$ of its absolute mass; the remaining $30\%$ is
allocated in proportion to independent $\operatorname{Unif}[0.5,1.5]$
draws. All coefficient signs are independent and equiprobable.
Consequently $|f_k|\le0.15+0.4+0.2=0.75$ deterministically.
The ridge knots yield the smoothness classes $C^{0,0.75}$,
$C^{1,0.5}$, and $C^{2,1}$ for the three values of $\alpha_k$,
while the bumps are smooth. On the fixed ellipsoid family these
functions admit a common finite H\"older bound, independent of the task
draw and training budget; the value bound $0.75$ is distinct from that
H\"older norm bound.

\paragraph{Observations and reproducibility.}
Each prompt draws $n+1$ independent labels from the mixture and latent
points from their corresponding component laws. Write $N_k=n\pi_{k,n}$
and set
\[
 h_k=N_k^{-1/(2\alpha_k+d_k)},\qquad
 \sigma_{k,n}=\frac{h_k^{\alpha_k\vee1}}{n},\qquad
 X=X^\star+\xi,\qquad Y=f_k(X^\star)+\epsilon.
\]
Conditionally on the component, $\xi$ is uniform on the ambient
$64$-dimensional Euclidean ball of radius $\sigma_{k,n}$:
$\xi=\sigma_{k,n}B^{1/64}G/\|G\|$ for independent
$B\sim\operatorname{Unif}[0,1]$ and $G\sim N(0,I_{64})$.
The independent response noise is
$\epsilon\sim\operatorname{Unif}[-0.1\sqrt3,0.1\sqrt3]$, with
conditional mean zero and variance $0.01$.
Evaluation targets the latent query value $f_k(X^\star)$.

Task randomness uses NumPy's \texttt{SeedSequence} with the ordered input
\[
 (23619,\text{split},\text{task index},\text{stream},\text{component}).
\]
Split identifiers $0,1,2$ denote training, validation, and testing;
stream identifiers $0,1,2,3,4$ generate labels, latent points, task
functions, covariate noise, and response noise, respectively. The
component entry is $k-1$ for component-specific streams and zero for
the label and response streams.
The $2{,}000$ test prompts have indices $3{,}000{,}017$ through
$3{,}002{,}016$ and are shared across methods and budgets.
The fixed bounded geometry and shrinking perturbation rule reflect the
asymptotic model; the full finite-sample constant conditions of
Assumption~\ref{ass:feasible-local} have not been certified at
$n=40{,}000$.

\subsection{Empirical architecture and optimization}
\paragraph{Softmax workspace predictor.}
The learned predictor has width $64$, four workspace blocks, four attention
heads per module, $32$ learned workspace tokens, and FFN width $256$.
Each block applies workspace-to-context cross-attention, workspace
self-attention, and a GELU FFN, with residual connections. Layer
normalization and dropout are disabled. For the observed query
$x=X_{n+1}$, the implemented preprocessing is
\[
 \widetilde X_i=X_i/6,\qquad \widetilde x=x/6,\qquad
 r_i=\widetilde X_i-\widetilde x,\qquad t_i=\|r_i\|_2^2.
\]
A learned affine--GELU--affine encoder maps the $130$-dimensional vector
$(\widetilde X_i,r_i,Y_i,t_i)$ through width $64$ to a context token.
A separate encoder of the same form maps $\widetilde x\in\mathbb R^{64}$
to a width-$64$ vector added to every learned workspace token.
Each attention head uses scaled dot-product softmax weights
$\operatorname{softmax}(q_a^\top k_j/\sqrt{16})$.
The context tokens remain fixed across the four blocks; attention from
the $32$ workspace tokens accesses all $40{,}000$ context tokens through
a $32\times40{,}000$ score matrix per head.
An affine readout of the first final workspace token followed by $\tanh$
produces a prediction in $[-1,1]$.
The model has $286{,}401$ trainable parameters. Its learned encoders and
GELU blocks define the empirical model studied here. The theoretical
construction uses the residual-ReLU architecture in
Section~\ref{subsec:transformers}.

\paragraph{Training and checkpoint selection.}
For each of the four budgets, we train the model from scratch with
training seed $0$.
Training traverses the indexed task pool
$\{0,\ldots,\Gamma-1\}$ exactly once, using a permutation with seed $101$.
Prompts are generated on demand. Gradient accumulation over $25$
microbatches of one prompt gives an effective batch size of $25$; at most
one additional microbatch is prefetched. The completed runs have
\[
\begin{array}{c|rrrr}
 \Gamma&100{,}000&200{,}000&500{,}000&1{,}000{,}000\\
 \text{optimizer updates}&4{,}000&8{,}000&20{,}000&40{,}000\\
 \text{final effective batch size}&25&25&25&25
\end{array}
\]
AdamW uses initial learning rate $3\times10^{-4}$ and weight decay
$10^{-4}$. The learning rate follows cosine annealing to zero over each
run's prescribed update budget, and the accumulated gradient norm is
clipped at $1$. Training minimizes squared error against the noisy query
response $Y_{n+1}$, and validation evaluates the same loss using the $96$ fixed
validation-split tasks with indices $0$--$95$, evaluated after the first
update, every $250$ updates, and at the final update. All four runs
complete the full schedule. Test evaluation uses the final \texttt{last.pt}
checkpoint after all prescribed updates. Training and transformer inference
use float32 on the PyTorch MPS backend. Increasing $\Gamma$
therefore changes both the number of distinct training prompts and the
amount of optimization, with the cosine schedule stretched accordingly.

\subsection{Geometry-informed local-polynomial benchmark}

LP-CV receives the query component's dimension $d_k$, smoothness
$\alpha_k$, mixture mass $\pi_{k,n}$, and true tangent basis $U_\star$ at the
latent query $X_{n+1}^\star$. For a candidate multiplier $c$, set
$b=c(n\pi_{k,n})^{-1/(2\alpha_k+d_k)}$ and
$u_i=U_\star^\top(X_i-X_{n+1})/b$.
The total polynomial degree is
$p_k=\lceil\alpha_k\rceil-1\in\{0,1,2\}$, with
$q_k=\binom{d_k+p_k}{p_k}$ features
$\phi_\nu(u)=u^\nu/\nu!$, $|\nu|\le p_k$.
The weighted fit uses all observed context responses, with
\[
 w_i=\mathsf A\!\left(\frac{\|X_i-X_{n+1}\|_1}{b}\right)
          (1-\|u_i\|_1)_+,
 \qquad
 \mathsf A(t)=1\land\{0\lor(2-2t)\}.
\]
Thus both ambient and tangent-coordinate cutoffs use the $\ell_1$ norm,
and the ambient cutoff scale is $1$. Common positive factors such as
$b^{-d_k}/n$ cancel from the normal equations. Before solving, positive
weights are divided by their maximum.
The implementation uses float64 normal equations and an eigendecomposition,
retaining Gram eigenvalues greater than $10^{-8}$ times the largest
eigenvalue. The resulting pseudoinverse fit supplies the intercept,
clipped to $[-1,1]$. If no point has positive weight, the fallback is the
clipped response of the nearest observed context point in Euclidean
distance. Context component labels supply contamination diagnostics;
all context points enter the weight rule and the nearest-point fallback.
This benchmark uses the true latent-query tangent while centering at the
observed query; it differs from the estimated-tangent construction in
Algorithm~\ref{alg:localpoly}.

Bandwidth selection uses the common grid
$\{2,4,6,8,10,12,16,24\}$ separately for all $12$ $(d_k,\alpha_k)$
strata. For stratum $j=0,\ldots,11$, ordered by dimension and then
smoothness, $100$ validation prompts have task indices
$2{,}000{,}003+100j$ through $2{,}000{,}102+100j$, with the query
component fixed to that stratum. These $1{,}200$ tuning prompts are
disjoint from the $96$ training-validation prompts and the test split.
A multiplier is admissible if its mean positive-weight count is at least
$2q_k$ and its Gram matrix has full numerical rank on at least $90\%$
of the stratum's tuning prompts. Among admissible candidates, LP-CV
minimizes mean squared error against the independent noisy query
response, breaking ties toward the smaller multiplier.
Every stratum has an admissible candidate;
the selected multipliers, fixed for all four model comparisons, are
\[
\begin{array}{c|rrr}
 &\alpha=0.75&\alpha=1.5&\alpha=3\\\hline
 d=2&16&10&6\\
 d=4&10&8&6\\
 d=6&10&12&10\\
 d=8&12&10&16
\end{array}
\]

\subsection{Test protocol and uncertainty}

All four final checkpoints and LP-CV use the same $2{,}000$ test-split
prompts, indexed $3{,}000{,}017$ through $3{,}002{,}016$.
Test query components follow the
specified mixture probabilities. Transformer inference processes one
prompt at a time. The test target is the latent regression value
$f(X_{n+1}^\star)$; training and tuning use the noisy query response.
The saved per-prompt
input/target hashes agree across all four evaluations and the LP-CV
reference; the error arithmetic and intervals have been checked by
recomputing them from the saved predictions.

Reported $95\%$ intervals are nonparametric percentile-bootstrap
intervals using $5{,}000$ replicates. Each replicate samples $2{,}000$
test-prompt indices with replacement; its mean squared error gives the
bootstrap risk, and the $2.5\%$ and $97.5\%$ quantiles give the interval.
The transformer and LP-CV risk intervals use seeds $3{,}000{,}017$ and
$3{,}000{,}018$, respectively. For the transformer/LP-CV risk ratio,
seed $3{,}000{,}019$ generates paired resamples: the same indices are
used for the numerator and denominator, and each replicate takes the
ratio of their mean squared errors. These intervals describe test-task
variation conditional on the fitted models and selected LP bandwidths;
they do not measure training-seed or bandwidth-selection uncertainty.

\subsection{Results and interpretation}

\begin{table}[ht]
\centering
\caption{Latent-target test MSE in units of $10^{-3}$ and risk relative to
geometry-informed LP-CV on the same $2{,}000$ prompts. Brackets give $95\%$
percentile-bootstrap intervals; risk-ratio intervals use paired resampling.
LP-CV has MSE $0.690\,[0.645,0.739]$ in the same units at every budget.}
\label{tab:numerical-budget}
\begin{tabular}{rcc}
\toprule
Pretraining budget $\Gamma$ & Softmax workspace MSE & Transformer / LP-CV\\
\midrule
$100{,}000$ & $14.455$ & $20.948$\\
 & $[13.619,15.342]$ & $[19.177,22.776]$\\
$200{,}000$ & $0.919$ & $1.332$\\
 & $[0.849,0.995]$ & $[1.222,1.461]$\\
$500{,}000$ & $0.538$ & $0.780$\\
 & $[0.503,0.575]$ & $[0.727,0.837]$\\
$1{,}000{,}000$ & $0.505$ & $0.733$\\
 & $[0.472,0.540]$ & $[0.686,0.782]$\\
\bottomrule
\end{tabular}
\end{table}

The test MSE decreases across the four budgets. At $\Gamma=500{,}000$,
the transformer has $21.98\%$ lower MSE than LP-CV and $41.45\%$ lower MSE
than the $\Gamma=200{,}000$ model. Its paired risk-ratio interval
$[0.727,0.837]$ lies below one. At $\Gamma=1{,}000{,}000$, the MSE is
$0.505\times10^{-3}$, which is $26.75\%$ lower than LP-CV and $6.10\%$
lower than the $\Gamma=500{,}000$ model. The paired risk ratio is $0.733$
with interval $[0.686,0.782]$, again below one.
These comparisons quantify test-prompt
uncertainty for the fitted seed-$0$ models.

\begin{table}[ht]
\centering
\caption{Component-wise point estimates at $\Gamma=1{,}000{,}000$. MSEs are in units
of $10^{-3}$; $n_{\rm test}$ is the number of queries from that component
among the $2{,}000$ shared test prompts. Ratios use unrounded MSEs.}
\label{tab:numerical-components}
\begin{tabular}{rrrrrr}
\toprule
$d_k$ & $\alpha_k$ & $n_{\rm test}$ & LP-CV MSE & Transformer MSE & Ratio\\
\midrule
2 & 0.75 & 41 & 0.606 & 0.448 & 0.739\\
2 & 1.5 & 60 & 0.376 & 0.310 & 0.825\\
2 & 3 & 97 & 0.545 & 0.480 & 0.880\\
\addlinespace[2pt]
4 & 0.75 & 72 & 0.435 & 0.371 & 0.853\\
4 & 1.5 & 106 & 0.632 & 0.538 & 0.851\\
4 & 3 & 217 & 0.845 & 0.418 & 0.495\\
\addlinespace[2pt]
6 & 0.75 & 117 & 0.692 & 0.474 & 0.684\\
6 & 1.5 & 175 & 0.882 & 0.404 & 0.458\\
6 & 3 & 317 & 0.605 & 0.546 & 0.903\\
\addlinespace[2pt]
8 & 0.75 & 159 & 0.667 & 0.515 & 0.772\\
8 & 1.5 & 252 & 0.648 & 0.433 & 0.667\\
8 & 3 & 387 & 0.780 & 0.679 & 0.871\\
\bottomrule
\end{tabular}
\end{table}

Table~\ref{tab:numerical-components} gives the component means.
At $\Gamma=1{,}000{,}000$, the transformer has a lower MSE point estimate
than LP-CV in all twelve components. These component-wise comparisons are
descriptive: query counts range from $41$ to $387$, and no component-wise
confidence intervals are reported.

This comparison measures the combined effect of increasing the number of
distinct pretraining prompts and the optimizer update budget. It illustrates
learned prediction at a fixed context size and heterogeneous geometry.
The theoretical guarantees concern the constructed residual-ReLU
architecture and its asymptotic scaling; the empirical results concern
the specified workspace architecture and task distribution.

Code is available at \url{https://github.com/seojaehee02/nonparametric_ICL}.

\clearpage

\section{Mathematical preliminaries}
\label{sec:preliminaries}\label{app:preliminaries}

This section fixes the notation and basic objects used throughout the paper.
We collect conventions for asymptotic notation, polynomial features, matrix norms, localization, and the H\"older and manifold regularity classes. The transformer class is defined in Section~\ref{subsec:transformers}.

\subsection{Notations}\label{subsec:notations}
For an event \(A\), \(\mathbb{I}\{A\}\), also written \(\mathbb{I}_A\),
denotes its indicator. The vector \(\mathbf 1_N\in\mathbb R^N\) has all entries equal to one.
Let \(\mathbb N_0:=\mathbb N\cup\{0\}\). Write \([m]:=\{1,\ldots,m\}\)
for \(m\in\mathbb N\), and \([m]_0:=\{0,\ldots,m\}\) for \(m\in\mathbb N_0\).
For \(a\in\mathbb R\), let
\(\lfloor a\rfloor\) and \(\lceil a\rceil\) denote the floor and ceiling of
\(a\). For \(a,b\in\mathbb R\), write
\(a\wedge b:=\min\{a,b\}\) and \(a\vee b:=\max\{a,b\}\). For non-negative
sequences \(a_n,b_n\), write \(a_n\lesssim b_n\) if
\(a_n\le Cb_n\) for a constant \(C>0\) independent of \(n\). Write
\(a_n\gtrsim b_n\) if \(b_n\lesssim a_n\), and
\(a_n\asymp b_n\) if both \(a_n\lesssim b_n\) and \(a_n\gtrsim b_n\) hold. For a metric space \((\mathcal X,d)\), \(x\in\mathcal X\), and \(r>0\), let \(B_{\mathcal X}(x,r):=\{x'\in\mathcal X:d(x,x')<r\}\). For a set \(A\subset\mathcal X\), write
\(d(x,A):=\inf_{a\in A}d(x,a)\) and \(d(A,B):=\inf_{x\in A}d(x,B)\). Here \(d(A,B)\) denotes the separation between \(A\) and \(B\).

\paragraph{Constants.}\label{app:constant-convention}
Throughout the Appendix, \(c,C>0\) denote generic constants whose values may
change between occurrences. Unless a different dependence is specified,
they depend only on the fixed model bounds in Section~\ref{subsec:assumption},
\(\kappa\), and fixed kernel, cutoff, and construction choices. All bounds
are uniform in \(n,K_n,k,P,f\) and admissible component sequences; varying
masses, reach, and separation enter through their displayed scales.
Additional dependence on fixed parameters of auxiliary statements or on
approximation and architecture parameters is stated locally. Named thresholds
and exponents retain their fixed values throughout each argument.

\smallskip
For a multi-index \(\nu=(\nu_1,\ldots,\nu_q)\in\mathbb N_0^q\), define
\(
|\nu|:=\sum_{j\in[q]}\nu_j\),
\(\nu!:=\prod_{j\in[q]}\nu_j!\), and \(u^\nu:=\prod_{j\in[q]}u_j^{\nu_j}\).
For \(p\ge0\), set
\(
\mathcal I_{q,p}:=\{\nu\in\mathbb N_0^q:|\nu|\le p\}\), \(q_{q,p}:=|\mathcal I_{q,p}|,
\)
and
\(
\Phi_{q,p}(u):=\left({u^\nu\over\nu!}\right)_{\nu\in\mathcal I_{q,p}}\).
The zero-multi-index coordinate vector is denoted by \(e_{0,q,p}\), or simply
\(e_0\).

\smallskip
For a linear map \(A\), define
\(
\|A\|_{\rm op}:=\sup_{\|v\|=1}\|Av\|.
\)
For a matrix \(A\), \(\|A\|_F\) denotes its Frobenius norm and 
\(\|A\|_{\max}:=\max_{i,j}|A_{ij}|\). For a symmetric
positive semidefinite matrix \(G\), let \(\lambda_{\min}^{+}(G)\) and
\(\lambda_{\max}^{+}(G)\) denote its smallest and largest positive eigenvalues.
We write \(G^\dagger\) for the Moore--Penrose pseudoinverse and
\(\operatorname{Im}(G)\) for the image of \(G\). For symmetric matrices,
\(A\preceq B\) means that \(B-A\) is positive semidefinite. 

\smallskip
We use the kernel \(\mathsf K(v):=(1-\|v\|_1)_+\). The cutoff \(\mathsf A\) is the
piecewise-linear function given by \(\mathsf A(t)=1\) for \(0\le t<1/2\),
\(\mathsf A(t)=2-2t\) for \(1/2\le t<1\), and \(\mathsf A(t)=0\) for \(t\ge1\).

\subsection{H\"older classes and manifolds}

\paragraph{H\"older class on Euclidean domains.}
Let \(U\subset\mathbb R^D\) be open. For \(\gamma\in(0,1]\), a function
\(g:U\to\mathbb R\) is called \emph{\(\gamma\)-H\"older continuous} if
\[
    [g]_{C^{0,\gamma}(U)}
    :=
    \sup_{\substack{u,v\in U\\u\ne v}}
    { |g(u)-g(v)|\over \|u-v\|^\gamma}
    <\infty .
\]
Let \(s\in\mathbb N_0\). A scalar-valued map \(g:U\to\mathbb R\) is called
\(C^{s,\gamma}\) if it has continuous derivatives up to order \(s\), all
derivatives of order at most \(s\) are bounded, and every derivative of order
\(s\) is \(\gamma\)-H\"older continuous. Equivalently, its
\(C^{s,\gamma}\)-norm
\[
\|g\|_{C^{s,\gamma}(U)}
:=
\max_{|\mu|\le s}\|D^\mu g\|_\infty
+
\max_{|\mu|=s}
[D^\mu g]_{C^{0,\gamma}(U)}
\]
is finite, where
\(
[D^\mu g]_{C^{0,\gamma}(U)}
:=
\sup_{\substack{u,v\in U\\u\ne v}}
\frac{|D^\mu g(u)-D^\mu g(v)|}{\|u-v\|^\gamma}.
\)
For vector-valued maps, the same definition is used component-wise, equivalently
with Euclidean norms and multilinear operator norms. For \(\alpha>0\), put
\(
s_\alpha:=\lceil \alpha\rceil-1\) and \(\gamma_\alpha:=\alpha-s_\alpha\in(0,1].
\)
We define the \(\alpha\)-H\"older class by
\(\mathcal H^{\alpha}(U):=C^{s_{\alpha},\gamma_{\alpha}}(U)\) with \(\|g\|_{\mathcal H^{\alpha}(U)}:=\|g\|_{C^{s_{\alpha},\gamma_{\alpha}}(U)}\).

\paragraph{Manifolds.}

A \(d\)-dimensional embedded \(C^{s_\beta,\gamma_\beta}\) submanifold of
\(\mathbb R^D\) is a set that can be represented locally, after a rigid change
of coordinates, as the graph of a \(C^{s_\beta,\gamma_\beta}\) map from
\(\mathbb R^d\) to \(\mathbb R^{D-d}\). We denote the tangent space and the normal space at \(x\in\mathcal M\) as \(T_{x}\mathcal M\) and \(T_{x}\mathcal M^\bot\). \(\Pi_{T_x\mathcal M}\) is the nearest-point projection map onto \(T_x\mathcal M\). Also, let \(d_{\mathcal M}\) denote the intrinsic length metric on \(\mathcal M\). Standard terminology about reach is in the sense of
\cite{federer1959curvature}.

We recall the reach of a closed set \(\mathcal M\), first introduced in \citet{federer1959curvature}. The medial axis of \(\mathcal M\) is defined as \[Med(\mathcal M):=\left\{y\in\mathbb R^D:\exists x_1\neq x_2\in \mathcal M \quad  \text{s.t.} \quad \|x_1-y\|=\|x_2-y\|=d(y,\mathcal M)\right\}.\]
Thereby, the medial axis of \(A\) is the set of points whose nearest-point projection onto \(\mathcal M\) is not uniquely defined. The \emph{reach} of \(\mathcal M\) is defined as
\[
\mathrm{reach}(\mathcal M)
:=\tau_\mathcal M=
\inf_{x\in \mathcal M,\ y\in Med(\mathcal M)}\|x-y\|.
\]
Figure~\ref{fig:app-reach-illustration} illustrates the medial axis and the reach of the closed set.

We now define the class of regular manifolds.

\begin{figure}[t]
    \centering
    \includegraphics[width=0.5\linewidth]{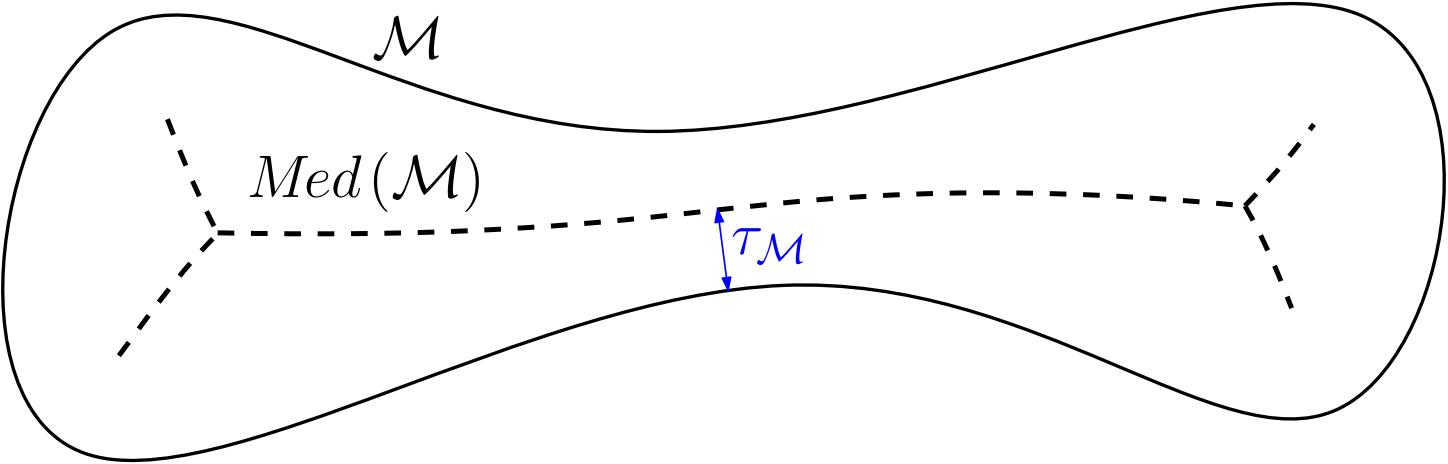}
    \caption{The medial axis $Med(\mathcal M)$ consists of points
    having multiple nearest points on $\mathcal M$.  The shortest distance from
    $\mathcal M$ to this axis is $\operatorname{reach}(\mathcal M)$.  A narrow bottleneck brings the
    medial axis closer to $\mathcal M$ and therefore reduces the reach.}
    \label{fig:app-reach-illustration}
\end{figure}

\begin{definition}[Regular embedded manifold class]
\label{def:regular-manifold-class}
Fix \(D\ge 2\). For \(d\in[D-1]\), \(\beta\ge 2\), \(L\ge 1\), and \(\tau>0\), let
\(r_L:=1/(4L)\). We write
\(\mathcal M\in\mathcal C^\beta_{d,L,\tau}(\mathbb R^D)\) if
\(\mathcal M\subset\mathbb R^D\) is a closed, connected, embedded \(d\)-dimensional
\(C^{s_\beta,\gamma_\beta}\) submanifold without boundary satisfying
\[
\operatorname{reach}(\mathcal M)\ge\tau,
\]
such that, for every
\(x\in \mathcal M\), there is a tangent-normal chart
\[
\Psi_x(v)=x+v+N_x(v),
\qquad
v\in B_{T_x\mathcal M}(0,r_L),
\]
parametrizing a relatively open neighborhood of \(x\) in \(\mathcal M\). Here
\[
N_x:B_{T_x\mathcal M}(0,r_L)\to T_x\mathcal M^\perp,
\qquad
N_x(0)=0,\qquad D_0N_x=0.
\]
The graph maps have uniformly bounded higher-order regularity:
\[
\max_{2\le j\le s_\beta}
\sup_{v\in B_{T_x\mathcal M}(0,r_L)}
\|D^jN_x(v)\|_{\rm op}
\le L,
\qquad
[D^{s_\beta}N_x]_{C^{0,\gamma_\beta}(B_{T_x\mathcal M}(0,r_L))}
\le L,
\]
where the first maximum is omitted when \(s_\beta=1\).
The chart radius \(r_L\) and regularity bound \(L\) do not depend on \(\tau\).
\end{definition}
Note that for a given \(\mathcal M\in\mathcal C_{d,L,\tau}^2(\mathbb R^D)\), 
\(\mathrm{reach}(\mathcal M)=\inf_{\substack{x,x'\in\mathcal M\\x\ne x'}}\frac{\|x-x'\|^2}{2d(x', x+T_x\mathcal M)}\) holds (see \citep[Theorem 4.18]{federer1959curvature}). Positive reach is a standard regularity condition in geometric inference \citep{aamari2019estimating, Aamari2019Nonasymptotic, aamari2023optimal}: Classical formulations often impose a fixed reach lower bound. The following lemma characterizes the local Euclidean behavior with positive reach. The proof is given in Appendix~\ref{app:manifold}.

\begin{lemma}
\label{lem:reach-local-facts-main}
Let \(\mathcal M\in\mathcal C^\beta_{d,L,\tau}(\mathbb R^D)\), where \(\tau>0\).
There exists \(c_{\rm geo}\in(0,1)\), depending only on \(D,d,L,\beta\), such that
the following holds.  For every \(x\in\mathcal M\), set
\(\rho:=c_{\rm geo}(\tau\wedge r_L)\).
Then the following local estimates hold.

\begin{enumerate}
\item[(i)]
For every \(r\in(0,\rho]\),
\(\operatorname{vol}_{\mathcal M}
    \bigl(\mathcal M\cap B_{\mathbb R^D}(x,r)\bigr)\asymp r^d\),
with comparison constants depending only on \(d\).

\item[(ii)]
For all \(y,z\in \mathcal M\cap B_{\mathbb R^D}(x,\rho)\),
\(\|y-z\|
    \le
    d_{\mathcal M}(y,z)
    \le 2\|y-z\|\).

\item[(iii)]
For all \(y\in \mathcal M\cap B_{\mathbb R^D}(x,\rho)\),
\(\|\Pi_{T_y\mathcal M}-\Pi_{T_x\mathcal M}\|_{\rm op}
    \le
    L\|y-x\|\).
\end{enumerate}
\end{lemma}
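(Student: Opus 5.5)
The plan is to work entirely in the tangent-normal chart $\Psi_x(v)=x+v+N_x(v)$ of Definition~\ref{def:regular-manifold-class}. The reach enters only to show that, on a ball of radius $\rho=c_{\rm geo}(\tau\wedge r_L)$, the chart is a bi-Lipschitz parametrization of the whole set $\mathcal M\cap B_{\mathbb R^D}(x,\rho)$, with no other sheets of $\mathcal M$ entering.

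\textbf{Step 1: graph bounds.} From $N_x(0)=0$ and $D_0N_x=0$, and the second-derivative bound $L$ (when $s_\beta=1$, use the Lipschitz bound $[DN_x]_{C^{0,\gamma}}\le L$ with $\gamma=1$, since $\beta\ge2$), I will get
\[
\|DN_x(v)\|_{\rm op}\le L\|v\|,\qquad \|N_x(v)\|\le \tfrac L2\|v\|^2 .
\]
So for $\|v\|\le c\,r_L$ with $c$ small, $\|DN_x\|\le 1/4$. Consequently $\|v-v'\|\le\|\Psi_x(v)-\Psi_x(v')\|\le \tfrac54\|v-v'\|$.

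\textbf{Step 2: covering the ball.} Here I use the reach. By Federer's formula quoted after the definition, every $y\in\mathcal M$ satisfies $d(y,x+T_x\mathcal M)\le\|y-x\|^2/(2\tau)$. Hence for $\|y-x\|<\rho$ the normal offset is at most $(c_{\rm geo}/2)\|y-x\|$, and the tangential component $u=\Pi_{T_x\mathcal M}(y-x)$ satisfies $\|u\|\ge(1-c_{\rm geo})\|y-x\|$ and $\|u\|<\rho\le c_{\rm geo}r_L$.

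I then argue that $y=\Psi_x(u)$. The point $\Psi_x(u)\in\mathcal M$ shares the same tangential coordinate as $y$, and both points are within $O(\rho)$ of $x$. Apply the Federer inequality at base point $\Psi_x(u)$, using the tangent space there; by Step 1 that tangent space is tilted from $T_x\mathcal M$ by at most $L\|u\|$. If $y\neq\Psi_x(u)$, their difference is nearly normal to $T_x\mathcal M$, of length $\le C\rho^2/\tau$. This contradicts $\|y-\Psi_x(u)\|^2/(2\tau)\ge d(y,\Psi_x(u)+T)\approx\|y-\Psi_x(u)\|$ once $c_{\rm geo}$ is small. I expect this step to be the main obstacle, since it is where small reach (possibly $\tau\ll r_L$) must be excluded from folding other sheets into the ball. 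It forces the choice $\rho\propto\tau\wedge r_L$, and the constants must be tracked so that they depend only on $D,d,L,\beta$.

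\textbf{Step 3: conclusions.} With $\mathcal M\cap B(x,\rho)\subset\Psi_x(B_{T_x}(0,\rho))$ and $\Psi_x(B_{T_x}(0,r/2))\subset B(x,r)$, part (i) follows from the area formula. The Jacobian $\sqrt{\det(I+DN_x^\top DN_x)}\in[1,2]$, so the volume lies between that of the $d$-ball of radius $r/2$ and that of radius $r$.

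For (ii), the lower bound $\|y-z\|\le d_{\mathcal M}(y,z)$ is trivial. For the upper bound, take the image under $\Psi_x$ of the straight segment between the preimages. It stays in the chart domain because the preimage ball is convex, and its length is $\le\tfrac54\|v-v'\|\le\tfrac54\|y-z\|$.

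For (iii), note $T_y\mathcal M=\mathrm{Im}(I+DN_x(v))$ with $\|DN_x(v)\|\le L\|v\|\le L\|y-x\|$. Writing the projector onto the graph of a linear map $A$ with $\|A\|=\epsilon$ and comparing it with $\Pi_{T_x}$ gives an error of $\epsilon/\sqrt{1+\epsilon^2}\le\epsilon$, since the sine of the principal angle is bounded by $\|A\|$. This yields $\|\Pi_{T_y}-\Pi_{T_x}\|_{\rm op}\le L\|y-x\|$.
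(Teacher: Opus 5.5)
Your proposal is correct and follows essentially the same route as the paper. The reach enters only through Federer's inequality at the base point $\Psi_x(u)$, applied to the residual $w=y-\Psi_x(u)\in T_x\mathcal M^\perp$, to show $\mathcal M\cap B_{\mathbb R^D}(x,\rho)\subset\Psi_x\bigl(B_{T_x\mathcal M}(0,\rho)\bigr)$; the paper bounds $\|w\|$ more crudely by $3\rho<\tau$, which already suffices. After that, (i)--(iii) follow from the Jacobian bound, the lifted line segment, and the graph-projector estimate, exactly as you describe; the paper's Jacobian constant is $(17/16)^{d/2}$ rather than $2$, which is harmless because the comparison constants may depend on $d$.
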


\paragraph{H\"older class on manifolds.}

We next recall H\"older class defined on regular embedded manifolds by pulling them back to tangent-normal charts. Since the admissible
charts have uniform \(C^\beta\) bounds and fixed radius, this gives the same smoothness class, up to equivalent norms, as any other
atlas with comparable uniform regularity constants. Thus the definition is stable under admissible choices of uniformly regular
charts.

\begin{definition}[H\"older functions on regular manifolds]
\label{def:holder-manifold}
Let \(\mathcal M\in\mathcal C^\beta_{d,L,\tau}(\mathbb R^D)\), and let
\(\Psi_x:B_{T_x\mathcal M}(0,r_L)\to \mathcal M\) be the tangent-normal chart from
Definition~\ref{def:regular-manifold-class}. For \(\alpha>0\) and
\(L_{\mathcal F}>0\) with \(\beta\ge \lceil \alpha\rceil+1\), define
\[
\mathcal H^\alpha(\mathcal M;L_{\mathcal F})
:=
\bigg\{
f:\mathcal M\to\mathbb R:
\sup_{x\in \mathcal M}
\|f\circ\Psi_x\|_{\mathcal H^\alpha(B_{T_x\mathcal M}(0,r_L))}
\le L_{\mathcal F}
\bigg\}.
\]
Here \(T_x\mathcal M\) is regarded as a Euclidean space with the inherited inner product.
\end{definition}

\section{Auxiliary details}
\label{app:geometric-assumption-details}

This appendix collects technical details for the geometric assumptions used in
the main text. We first prove the local consequences of positive reach stated
in Lemma~\ref{lem:reach-local-facts-main}. We then give a constant-explicit
version of Assumption~\ref{ass:feasible-local} and record its elementary
consequences for effective sample sizes, local bandwidths, and the growth of
\(K_n\).

\subsection{Proof of Lemma~\ref{lem:reach-local-facts-main}}
\label{app:manifold}

\begin{proof}
Class membership gives \(\operatorname{reach}(\mathcal M)\ge\tau\).
Fix \(x\in\mathcal M\), and let \(r_L:=1/(4L)\). Let
\[
    \Psi_x(u)=x+u+N_x(u),
    \qquad
    u\in B_{T_x\mathcal M}(0,r_L),
\]
be the tangent-normal chart from
Definition~\ref{def:regular-manifold-class}. Choose
\(c_{\rm geo}\in(0,1/4]\) sufficiently small and put
\(\rho=c_{\rm geo}(\tau\wedge r_L)\). We first verify
\[
    \mathcal M\cap B_{\mathbb R^D}(x,\rho)
    \subset
    \Psi_x\bigl(B_{T_x\mathcal M}(0,\rho)\bigr).
\]
For \(y\) in the left-hand side, set
\(u=\Pi_{T_x\mathcal M}(y-x)\), \(q=\Psi_x(u)\), and \(w=y-q\).
Chart regularity gives \(\|DN_x(u)\|_{\rm op}\le L\|u\|\le1/4\) and
\(\|N_x(u)\|\le L\|u\|^2/2\), so \(\|w\|<3\rho<\tau\).
Moreover, \(w\in T_x\mathcal M^\perp\), whereas \(T_q\mathcal M\) is
the graph of \(DN_x(u)\). Hence
\(d(w,T_q\mathcal M)\ge\|w\|/2\). The reach inequality
\citep[Theorem~4.18]{federer1959curvature} yields
\[
    \frac{\|w\|}{2}
    \le d(w,T_q\mathcal M)
    \le \frac{\|w\|^2}{2\tau}.
\]
Since \(\|w\|<\tau\), this forces \(w=0\), proving the inclusion.

In the rest of the proof, write
\[
    G_x:=N_x\big|_{B_{T_x\mathcal M}(0,\rho)},
    \qquad
    F_x(u):=x+u+G_x(u).
\]
Boundary values of \(G_x\) and its derivatives are inherited from \(N_x\),
which is defined on the larger ball \(B_{T_x\mathcal M}(0,r_L)\).
By the chart regularity and \(DG_x(0)=0\),
\[
    \|DG_x(u)\|_{\rm op}
    =
    \|DN_x(u)-DN_x(0)\|_{\rm op}
    \le
    L\|u\|,
    \qquad
    \|u\|\le \rho.
\]
Since \(\rho\le c_{\rm geo}r_L\) and \(r_L=(4L)^{-1}\),
\begin{equation}\label{eq:A.2}
    \sup_{\|u\|\le \rho}\|DG_x(u)\|_{\rm op}
    \le
    L\rho
    \le
    \frac{c_{\rm geo}}{4}
    <
    \frac14.
\end{equation}
We first prove the volume estimate. Write
\(\omega_d:=\operatorname{vol}_{\mathbb R^d}(B_{\mathbb R^d}(0,1))\).
The Jacobian of \(F_x\) is
\[
    J_x(u)
    =
    \sqrt{\det\{I+DG_x(u)^\top DG_x(u)\}}.
\]
By \eqref{eq:A.2},
\begin{equation}\label{eq:Jacobian-bound}
    1
    \le
    J_x(u)
    \le
    \left(1+\frac1{16}\right)^{d/2}
    =
    \left(\frac{17}{16}\right)^{d/2}.
\end{equation}
If \(F_x(u)\in \mathcal M\cap B_{\mathbb R^D}(x,r)\), then
\[
    \|u\|
    =
    \|\Pi_{T_x\mathcal M}(F_x(u)-x)\|
    \le
    \|F_x(u)-x\|
    <r.
\]
Thus
\[
    \mathcal M\cap B_{\mathbb R^D}(x,r)
    \subset
    F_x\bigl(B_{T_x\mathcal M}(0,r)\bigr).
\]
Using \eqref{eq:Jacobian-bound},
\[
    \operatorname{vol}_{\mathcal M}
    \bigl(\mathcal M\cap B_{\mathbb R^D}(x,r)\bigr)
    \le
    \omega_d\left(\frac{17}{16}\right)^{d/2}r^d.
\]

For the lower bound, if \(\|u\|<r/2\), then
\[
    \|G_x(u)\|
    \le
    \int_0^1 \|DG_x(tu)\|_{\rm op}\|u\|\,dt
    \le
    \frac L2\|u\|^2.
\]
Hence, since \(r\le \rho\) and \(L\rho\le 1/4\),
\[
    \|F_x(u)-x\|
    \le
    \|u\|+\|G_x(u)\|
    \le
    \frac r2+\frac L2\left(\frac r2\right)^2
    \le
    \frac r2+\frac r{32}
    <r.
\]
Therefore
\[
    F_x\bigl(B_{T_x\mathcal M}(0,r/2)\bigr)
    \subset
    \mathcal M\cap B_{\mathbb R^D}(x,r).
\]
Again using \eqref{eq:Jacobian-bound},
\[
    \operatorname{vol}_{\mathcal M}
    \bigl(\mathcal M\cap B_{\mathbb R^D}(x,r)\bigr)
    \ge
    \omega_d\left(\frac r2\right)^d.
\]
This proves (i). Let
\[
    y=F_x(u),
    \qquad
    z=F_x(v),
    \qquad
    u,v\in B_{T_x\mathcal M}(0,\rho).
\]
The ambient distance is always bounded above by the intrinsic distance on the manifold \(\mathcal M\):
\[
    \|y-z\|\le d_{\mathcal M}(y,z).
\]
For the reverse inequality, consider the lifted line segment
\[
    \gamma(t):=F_x((1-t)u+tv),
    \qquad
    t\in[0,1].
\]
By \eqref{eq:A.2},
\[
\operatorname{length}(\gamma)
    \le
    \int_0^1
    \sqrt{1+\|DG_x((1-t)u+tv)\|_{\rm op}^2}\,
    \|u-v\|\,dt  \le
    \left(\frac{17}{16}\right)^{1/2}\|u-v\|.
\]
Moreover,
\(u-v=\Pi_{T_x\mathcal M}(y-z)\) implies \(\|u-v\|\le\|y-z\|\). Hence
\[
    d_{\mathcal M}(y,z)
    \le
    \operatorname{length}(\gamma)
    \le
    \left(\frac{17}{16}\right)^{1/2}\|y-z\|
    \le
    2\|y-z\|.
\]
This proves (ii). Write \(y=F_x(u)\). Then \(T_y\mathcal M\) is the graph of
\[
    DG_x(u):T_x\mathcal M\to T_x\mathcal M^\perp.
\]
For a linear map \(A:T_x\mathcal M\to T_x\mathcal M^\perp\) with
\(\|A\|_{\rm op}\le1\), let
\(\operatorname{graph}(A)
    :=
    \{v+Av:v\in T_x\mathcal M\}
    \subset \mathbb R^D\) and \(\theta_{\max}\) is the largest canonical angle between \(\mathrm{graph}(A)\) and \(T_x\mathcal M\). Then we have
\[
    \|\Pi_{\operatorname{graph}(A)}-\Pi_{T_x\mathcal M}\|_{\rm op}
    =\sin \theta_{\max} = \sup_{v\in T_x\mathcal M\setminus \{0\}}\frac{\|Av\|}{\|v+Av\|} =\frac{\|A\|_{\rm op}}{\sqrt{1+\|A\|_{\rm op}^2}} \le \|A\|_{\rm op},
\]
where the first equality comes from linear algebra; see~\citep{golub2013matrix}.
Using this with \(A=DG_x(u)\), and then using the chart regularity,
\[
    \|\Pi_{T_y\mathcal M}-\Pi_{T_x\mathcal M}\|_{\rm op}
    \le
    \|DG_x(u)\|_{\rm op}  
    \le
    L\|u\| 
    \le
    L\|y-x\|.
\]
This proves (iii).
\end{proof}

\subsection{Constant-explicit version of Assumption~\ref{ass:feasible-local}}
\label{app:feasible-local-formal}
We use the fixed-$n$ aliases of the main text and retain explicit $n$
indices in $\tau_{0,n}$, $\delta_{0,n}$, $\pi_{k,n}$, $\sigma_{k,n}$,
$\mu_{k,n}$, and $\varrho_{k,n}$. Write
$\sigma_{\max,n}:=\max_{k\in[K_n]}\sigma_{k,n}$ and
$R_{\rm loc}=R_{{\rm loc},n}$.
Recall \(N_k=n\pi_{k,n}\), \(h_k=h_{k,n}=N_k^{-1/(2\alpha_k+d_k)}\), and
\(r_n=\max_k h_k\). Fix the exponent \(\kappa\in(0,1)\) from
Assumption~\ref{ass:feasible-local}. Set
\[
 r_{L_{\mathcal M}}=(4L_{\mathcal M})^{-1},\qquad
 R_{\rm loc}=c_{\rm geo}(\tau_{0,n}\wedge r_{L_{\mathcal M}}\wedge1),\qquad
 a_{\rm sc}=\frac{c_{\rm geo}}{64\sqrt D}.
\]
Choose \(c_{\rm geo}\in(0,1)\) sufficiently small that the local estimates
of Lemma~\ref{lem:reach-local-facts-main} hold up to radius \(2R_{\rm loc}\)
and the fixed small-perturbation requirements in
Appendix~\ref{app:tangent-local-poly} hold. In particular, the requirements
involving \(2a_{\rm sc}\) in the regression design proof must hold.
Such a choice is possible by reducing the uniform localization radius and
then imposing finitely many smallness conditions, all depending only on
fixed model bounds.

\begin{assumption*}[Formal restatement of Assumption~\ref{ass:feasible-local}]
\label{ass:feasible-local-formal}
For all sufficiently large \(n\), uniformly over \(k\in[K_n]\),
\[
 N_k\ge n^\kappa,\qquad
 \tau_{0,n}\wedge\delta_{0,n}\ge a_{\rm sc}^{-1}r_n,\qquad
 \sigma_{k,n}\le a_{\rm sc}h_k^{\alpha_k\vee1}.
\]
\end{assumption*}
Define
\[
 \rho_h:=\frac{\kappa}{2\alpha_{\max}+d_{\max}},\qquad
 \bar\rho_h:=\frac{1}{2\alpha_{\min}+1},\qquad
 \rho_{\rm tan}:=\frac{2\kappa\alpha_{\min}}{2\alpha_{\min}+d_{\max}}.
\]

\begin{lemma}
\label{lem:ass3-consequences}
Under Assumption~\ref{ass:feasible-local-formal}, for all sufficiently large
\(n\), uniformly over \(k\),
\begin{gather*}
 N_k\ge n^{\kappa},\qquad K_n\le n^{1-\kappa},\qquad
 n^{-\bar\rho_h}\le h_k\le r_n\le n^{-\rho_h},\\
 n\pi_{k,n}h_k^{d_k}\ge n^{\rho_{\rm tan}},\qquad
 (n\pi_{k,n}h_k^{d_k})^{-1}=h_k^{2\alpha_k},\\
 h_k\le R_{\rm loc},\qquad r_n+2\sigma_{\max,n}\le\delta_{0,n}/2,\qquad
 \frac{2\sigma_{k,n}}{h_k}\le2a_{\rm sc}\le\frac{1}{4\sqrt D}.
\end{gather*}
\end{lemma}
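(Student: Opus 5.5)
Each consequence in Lemma~\ref{lem:ass3-consequences} comes from elementary exponent arithmetic applied to the three inequalities of Assumption~\ref{ass:feasible-local-formal}. I will verify them in order. The first, $N_k\ge n^\kappa$, is assumed directly.

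\textbf{Bound on $K_n$.} Since $\pi_{k,n}=N_k/n\ge n^{\kappa-1}$ and the masses sum to one, we get $1\ge K_nn^{\kappa-1}$, that is, $K_n\le n^{1-\kappa}$.

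\textbf{Bandwidth bounds.} Because $N_k\le n$ and $2\alpha_k+d_k\ge 2\alpha_{\min}+1$, we have $h_k=N_k^{-1/(2\alpha_k+d_k)}\ge n^{-1/(2\alpha_{\min}+1)}=n^{-\bar\rho_h}$. Because $N_k\ge n^\kappa$ and $2\alpha_k+d_k\le2\alpha_{\max}+d_{\max}$, we have $h_k\le n^{-\kappa/(2\alpha_{\max}+d_{\max})}=n^{-\rho_h}$. This bound is uniform in $k$, so it also holds for $r_n=\max_kh_k$, and $h_k\le r_n$ holds by definition.

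\textbf{Local counts.} The identity $N_kh_k^{d_k}=N_k^{2\alpha_k/(2\alpha_k+d_k)}=h_k^{-2\alpha_k}$ follows by substitution. Combining it with $N_k\ge n^\kappa$ and the fact that $2\alpha/(2\alpha+d)$ is increasing in $\alpha$ and decreasing in $d$ gives $N_kh_k^{d_k}\ge n^{\kappa\cdot2\alpha_{\min}/(2\alpha_{\min}+d_{\max})}=n^{\rho_{\rm tan}}$.

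\textbf{Localization and separation.} We have $h_k\le r_n\le a_{\rm sc}\tau_{0,n}$. Since $a_{\rm sc}=c_{\rm geo}/(64\sqrt D)$, the bound $h_k\le c_{\rm geo}\tau_{0,n}$ follows at once. The terms $r_{L_{\mathcal M}}$ and $1$ in $R_{\rm loc}$ are fixed, while $r_n\le n^{-\rho_h}\to0$. So for large $n$ we also have $h_k\le c_{\rm geo}(r_{L_{\mathcal M}}\wedge1)$, and hence $h_k\le R_{\rm loc}$.

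For the perturbation bound, $h_k<1$ for large $n$, so $h_k^{\alpha_k\vee1}\le h_k$. This gives $\sigma_{k,n}\le a_{\rm sc}h_k$, which yields $2\sigma_{k,n}/h_k\le2a_{\rm sc}=c_{\rm geo}/(32\sqrt D)\le1/(4\sqrt D)$, using $c_{\rm geo}<1$.

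Separation then follows from $r_n+2\sigma_{\max,n}\le(1+2a_{\rm sc})r_n\le(1+2a_{\rm sc})a_{\rm sc}\delta_{0,n}\le\delta_{0,n}/2$, since $a_{\rm sc}\le1/64$.

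No step is a real obstacle. The only point needing care is the order of quantifiers: the threshold in ``sufficiently large $n$'' must depend only on fixed constants. It has to ensure $n^{-\rho_h}\le c_{\rm geo}(r_{L_{\mathcal M}}\wedge1)$ and $h_k<1$ uniformly in $k$, and the uniform bounds above provide exactly this.
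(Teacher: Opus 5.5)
Your proposal is correct and follows the paper's proof essentially step by step. You use the same summation of the masses for $K_n$, the same exponent monotonicity for the bandwidth and local-count bounds, and the same combination of $r_n\le a_{\rm sc}(\tau_{0,n}\wedge\delta_{0,n})$, $h_k\le 1$, and $r_n\to0$ for the localization, perturbation, and separation claims, with the large-$n$ threshold depending only on fixed constants through $n^{-\rho_h}$.
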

\begin{proof}
Since \(\sum_{k\in[K_n]}N_k=n\) and \(n^\kappa\le N_k\le n\),
\[
 K_n n^\kappa\le n,\qquad
 n^{-1/(2\alpha_{\min}+1)}
 \le N_k^{-1/(2\alpha_k+d_k)}
 \le n^{-\kappa/(2\alpha_{\max}+d_{\max})}.
\]
Furthermore,
\[
 N_kh_k^{d_k}=N_k^{2\alpha_k/(2\alpha_k+d_k)}
 \ge n^{\rho_{\rm tan}},\qquad
 (N_kh_k^{d_k})^{-1}=h_k^{2\alpha_k}.
\]
For large \(n\), \(h_k\le r_n\le1\). Hence
\[
 \sigma_{k,n}/h_k\le a_{\rm sc},\qquad
 \sigma_{\max,n}\le a_{\rm sc}r_n,\qquad
 r_n+2\sigma_{\max,n}\le(1+2a_{\rm sc})a_{\rm sc}\delta_{0,n}
 \le\delta_{0,n}/2.
\]
Finally, \(h_k\le r_n\le a_{\rm sc}\tau_{0,n}\le c_{\rm geo}\tau_{0,n}\), while
\(r_n\to0\) ensures \(r_n\le c_{\rm geo}(r_{L_{\mathcal M}}\wedge1)\).
Together these give \(h_k\le R_{\rm loc}\).
\end{proof}

\section{Minimax lower bound}
\label{app:minimax-lower}

\subsection{Submodel for lower bound}\label{subsec:submodel}
It suffices to prove the lower bound on a submodel, so we take
\(\xi\equiv0\). Fix a symmetric nonnegative
\(r_0\in C_c^\infty((-1,1))\) with \(\int r_0^2=1\), and set
\(q_0:=r_0^2\). Then \(q_0\) is a symmetric density supported on
\((-1,1)\), and
\[
    \int_{\mathbb R}
    \left\{
        {d\over du}\sqrt{q_0(u)}
    \right\}^2du
    =
    \int_{\mathbb R}r_0'(u)^2du
    <\infty .
\]
Let 
\[
    \sigma_{\rm lb}:=(B_\epsilon\wedge\sigma_Y)/2,\qquad
    q_{\rm lb}(u):=\sigma_{\rm lb}^{-1}q_0(u/\sigma_{\rm lb}).
\] 
In the submodel, \(\epsilon\) has density \(q_{\rm lb}\) and is
independent of \(X^\star\). Since \(q_0\) is symmetric and supported on
\((-1,1)\), we have 
\[
    \mathbb E[\epsilon]=0,\qquad
    |\epsilon|\le\sigma_{\rm lb}\le B_\epsilon,\ \mathrm{a.s.},\qquad
    \mathbb E[\epsilon^2]\le\sigma_{\rm lb}^2\le\sigma_Y^2.
\]
Thus this submodel satisfies the noise conditions. Under Assumption~\ref{ass:geometry} and non-empty condition of \(\mathcal P_\star\),
\[
    \inf_k\operatorname{reach}(\mathcal M_k)\ge\tau_{0,n},\qquad
    \inf_{k\ne \ell}d(\mathcal M_k,\mathcal M_\ell)\ge\delta_{0,n}.
\]
Throughout this section, the constants \(c,C\) follow the convention in
Section~\ref{subsec:notations}, with dependence restricted to the fixed
statistical model bounds and scale margins, \(q_0\), and the fixed bump
functions used below.

\subsection{Auxiliary lemmas}

We use the following standard \(L^2\)-form of Assouad's lemma; see
\citep[Lemma 2.12]{tsybakov2009introduction}.

\begin{lemma}
\label{lem:assouad-l2}
Let \(J\) be finite, \(\Theta=\{0,1\}^J\), and
\(f_\theta\in L^2(\nu)\). Let \(P_\theta\) be the corresponding observation
law and \(\theta^{(r)}\) denote a flip in coordinate \(r\). Suppose
\begin{equation}\label{eq:flip-bound}
 \|f_\theta-f_{\theta'}\|_{L^2(\nu)}^2
 \ge\sum_{r\in J}\rho_r^2\mathbb{I}\{\theta_r\ne\theta'_r\}.
\end{equation}
Then
\begin{align*}
 \inf_{\widetilde f}\sup_\theta
 \mathbb E_\theta\|\widetilde f-f_\theta\|_{L^2(\nu)}^2
 &\ge \inf_{\widetilde f}2^{-|J|}\sum_\theta
 \mathbb E_\theta\|\widetilde f-f_\theta\|_{L^2(\nu)}^2\\
 &\ge\frac18\sum_{r\in J}\rho_r^2
 \left\{1-\sup_{\theta:\theta_r=0}
 d_{\rm TV}(P_\theta,P_{\theta^{(r)}})\right\}.
\end{align*}
The infima include randomized measurable estimators with independent
auxiliary randomness.
\end{lemma}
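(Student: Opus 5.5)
The first inequality needs no work: a maximum over the finite cube $\Theta=\{0,1\}^J$ is at least the uniform average. For the second, I would use the standard reduction of estimation to coordinatewise testing via a minimum-distance projection, followed by pairing of hypotheses that differ in a single coordinate.

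\emph{Step 1 (projection onto the cube).} Fix an estimator $\widetilde f$, possibly randomized with independent auxiliary randomness $U$.

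If $\widetilde f\notin L^2(\nu)$ on a set of positive probability, the risk is infinite and there is nothing to prove. Otherwise define
\[
\widehat\theta\in\arg\min_{\theta'\in\Theta}\|\widetilde f-f_{\theta'}\|_{L^2(\nu)},
\]
breaking ties by a fixed ordering of the finite set $\Theta$. Then $\widehat\theta$ is a measurable function of the data and $U$.

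The triangle inequality and minimality give $\|f_{\widehat\theta}-f_\theta\|\le\|f_{\widehat\theta}-\widetilde f\|+\|\widetilde f-f_\theta\|\le 2\|\widetilde f-f_\theta\|$. Combining this with \eqref{eq:flip-bound},
\[
\|\widetilde f-f_\theta\|_{L^2(\nu)}^2\ge\tfrac14\sum_{r\in J}\rho_r^2\,\mathbb I\{\widehat\theta_r\ne\theta_r\}.
\]

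\emph{Step 2 (averaging and pairing).} Take $\mathbb E_\theta$ of the Step 1 bound and average over $\theta$ with weight $2^{-|J|}$. It then suffices to show, for each $r\in J$,
\[
2^{-|J|}\sum_{\theta\in\Theta}P_\theta(\widehat\theta_r\ne\theta_r)\ge\tfrac12\Bigl\{1-\sup_{\theta:\theta_r=0}d_{\rm TV}(P_\theta,P_{\theta^{(r)}})\Bigr\}.
\]
The flip $\theta\mapsto\theta^{(r)}$ is a bijection from $\{\theta_r=0\}$ onto $\{\theta_r=1\}$, so the left side equals
\[
2^{-|J|}\sum_{\theta:\theta_r=0}\bigl\{P_\theta(\widehat\theta_r=1)+P_{\theta^{(r)}}(\widehat\theta_r=0)\bigr\}.
\]

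For a randomized test $\phi=\mathbb P(\widehat\theta_r=1\mid\text{data})\in[0,1]$, obtained by integrating out $U$, each bracket is
\[
\mathbb E_\theta\phi+\mathbb E_{\theta^{(r)}}(1-\phi)\ge1-d_{\rm TV}(P_\theta,P_{\theta^{(r)}}),
\]
using $|\mathbb E_P\phi-\mathbb E_Q\phi|\le d_{\rm TV}(P,Q)$ for $[0,1]$-valued $\phi$. The set $\{\theta:\theta_r=0\}$ has $2^{|J|-1}$ elements, so we obtain the factor $1/2$ times $1$ minus the supremum of the total variation distances.

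\emph{Step 3 (conclusion).} Multiplying the factor $1/4$ from Step 1 by the factor $1/2$ from Step 2 gives $1/8$. Since the resulting lower bound does not depend on $\widetilde f$, taking the infimum over $\widetilde f$ proves the claim.

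The only points requiring care are measurability of the argmin, which the fixed tie-breaking rule handles, and the inclusion of randomized estimators, which Step 2 handles by conditioning on the data and working with $\phi$. I do not expect a genuine obstacle here.
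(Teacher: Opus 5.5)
Your proof is correct and follows the paper's argument: projection onto the nearest $f_{\widehat\theta}$ with deterministic tie-breaking, the factor $\tfrac14$ from the triangle inequality, and pairing $\theta$ with $\theta^{(r)}$ to obtain the factor $\tfrac12\{1-\sup d_{\rm TV}\}$. You handle the auxiliary randomness by integrating it out into a $[0,1]$-valued test $\phi$, whereas the paper conditions on it or appends it to both laws; these are equivalent.
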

\begin{proof}
The bound is immediate for estimators with infinite prior-average risk.
For any estimator with finite prior-average risk, choose a nearest \(f_\vartheta\) in the finite
family, with deterministic tie-breaking, and call its index
\(\widehat\theta\). The triangle inequality and \eqref{eq:flip-bound} give
\[
 \|\widetilde f-f_\theta\|_{L^2(\nu)}^2
 \ge\frac14\sum_r\rho_r^2\mathbb{I}\{\widehat\theta_r\ne\theta_r\}.
\]
For each \(r\), pairing \(\theta\) and \(\theta^{(r)}\) yields
\[
 2^{-|J|}\sum_\theta P_\theta(\widehat\theta_r\ne\theta_r)
 \ge\frac12\left\{1-\sup_{\theta:\theta_r=0}
 d_{\rm TV}(P_\theta,P_{\theta^{(r)}})\right\}.
\]
Sum these inequalities and take the infimum. Independent auxiliary
randomness can be conditioned on, or appended to both laws while preserving
their total variation distance.
\end{proof}

Next, we give a lemma bounding Hellinger distance by \(L^2\) distance. Let
\[
    \mathsf H^2(P,Q)
    :=
    \int
    \left(
        \sqrt{dP/d\lambda}-\sqrt{dQ/d\lambda}
    \right)^2d\lambda
\]
for two probability measures \(P,Q\ll\lambda\). 

\begin{lemma}
\label{lem:bounded-noise-hellinger}
Let \(Q_a\) have density \(q_{\rm lb}(y-a)\).
Then, for all \(a,b\in\mathbb R\),
\[
    \mathsf H^2(Q_a,Q_b)
    \le
    C(a-b)^2.
\]
Here \(C\) depends only on \(q_0\) and \(\sigma_{\rm lb}\).
Consequently, if \(X\sim\nu\), \(Y_f=f(X)+\epsilon\), and \(Y_g=g(X)+\epsilon\)
with \(\epsilon\sim q_{\rm lb}\) independent of \(X\), let
\(P_f^{(n)}\) and \(P_g^{(n)}\) denote the laws of \(n\) i.i.d.
covariate--response pairs. Then
\[
    \mathsf H^2(P_f^{(1)},P_g^{(1)})
    \le
    C\|f-g\|_{L^2(\nu)}^2,
    \qquad
    \mathsf H^2(P_f^{(n)},P_g^{(n)})
    \le
    Cn\|f-g\|_{L^2(\nu)}^2.
\]
\end{lemma}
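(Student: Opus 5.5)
The proof splits into a one-sample translation bound and a tensorization step.

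\emph{Translation bound.} Because the noise density is a rescaled translate of \(q_0=r_0^2\), we can write \(\sqrt{q_{\rm lb}(y)}=\sigma_{\rm lb}^{-1/2}r_0(y/\sigma_{\rm lb})\). A change of variables then gives
\[
\mathsf H^2(Q_a,Q_b)=\int\bigl\{r_0(u)-r_0(u-t)\bigr\}^2du,\qquad t=(a-b)/\sigma_{\rm lb}.
\]
For the integrand, write \(r_0(u)-r_0(u-t)=\int_0^1 t\,r_0'(u-st)\,ds\) and apply Jensen (or Cauchy--Schwarz) in \(s\). Integrating over \(u\), Fubini and translation invariance of Lebesgue measure give
\[
\mathsf H^2(Q_a,Q_b)\le t^2\int r_0'(u)^2du=\sigma_{\rm lb}^{-2}\|r_0'\|_{L^2}^2(a-b)^2 .
\]
The right-hand side is finite because \(r_0\in C_c^\infty\). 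This is the first claim, with \(C=\sigma_{\rm lb}^{-2}\|r_0'\|_{L^2}^2\). A uniform bound \(\mathsf H^2\le 2\) is also available, though it is not needed.

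\emph{One sample.} Take the dominating measure \(\nu\otimes\mathrm{Leb}\). The joint density of \((X,Y_f)\) is \(q_{\rm lb}(y-f(x))\) relative to it, and similarly for \(g\). Since the covariate marginal is shared, the Hellinger integral factorizes pointwise in \(x\):
\[
\mathsf H^2(P_f^{(1)},P_g^{(1)})=\int \mathsf H^2\bigl(Q_{f(x)},Q_{g(x)}\bigr)\,\nu(dx)\le C\int (f-g)^2d\nu .
\]
Measurability of the inner integral follows from Tonelli.

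\emph{\(n\) samples.} Under the convention \(\mathsf H^2=\int(\sqrt p-\sqrt q)^2\) used here, the Hellinger affinity is \(1-\mathsf H^2/2\), and affinity is multiplicative over product measures. So
\[
\mathsf H^2(P^{(n)}_f,P^{(n)}_g)=2\bigl[1-(1-\mathsf H^2(P^{(1)}_f,P^{(1)}_g)/2)^n\bigr].
\]
Bernoulli's inequality \((1-x)^n\ge 1-nx\) for \(x\in[0,1]\) then gives \(\mathsf H^2(P^{(n)}_f,P^{(n)}_g)\le n\,\mathsf H^2(P^{(1)}_f,P^{(1)}_g)\), which yields the stated bound with the same constant. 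None of these steps is a serious obstacle; the only care needed is to keep track of the factor-\(2\) convention in the definition of \(\mathsf H^2\).
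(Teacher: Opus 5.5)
Your proposal is correct and follows essentially the same route as the paper: you write the translated square-root density difference as an integral of its derivative and use Jensen after rescaling to $r_0$, while the paper applies Minkowski's integral inequality to $r_{\rm lb}$ and rescales afterward; both give the constant $\sigma_{\rm lb}^{-2}\|r_0'\|_2^2$. The one-sample step, factorizing pointwise in $x$, is the same as the paper's. The only difference is that you justify $\mathsf H^2(P_f^{(n)},P_g^{(n)})\le n\,\mathsf H^2(P_f^{(1)},P_g^{(1)})$ via multiplicativity of the Hellinger affinity and Bernoulli's inequality, whereas the paper simply asserts this tensorization bound.
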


\begin{proof}
Write \(r_{\rm lb}:=\sqrt{q_{\rm lb}}\). Since \(Q_a\) has density
\(q_{\rm lb}(y-a)\), its square-root density is \(r_{\rm lb}(y-a)\).
Hence
\[
\begin{aligned}
    \mathsf H^2(Q_a,Q_b)
    &=
    \int_{\mathbb R}
    \{r_{\rm lb}(y-a)-r_{\rm lb}(y-b)\}^2\,dy
    =
    \|r_{\rm lb}(\cdot-a)-r_{\rm lb}(\cdot-b)\|_2^2 .
\end{aligned}
\]
Set \(t:=a-b\) and note that
\[
    \|r_{\rm lb}(\cdot-a)-r_{\rm lb}(\cdot-b)\|_2
    =
    \|r_{\rm lb}(\cdot-t)-r_{\rm lb}\|_2.
\]
Since \(r_{\rm lb}\) is absolutely continuous and \(r'_{\rm lb}\in L^2\),
\[
    r_{\rm lb}(y-t)-r_{\rm lb}(y)
    =
    -\int_0^t r'_{\rm lb}(y-s)\,ds
    =
    -\operatorname{sgn}(t)
    \int_0^{|t|}
    r'_{\rm lb}(y-\operatorname{sgn}(t)s)\,ds .
\]
Minkowski's integral inequality gives
\[
    \|r_{\rm lb}(\cdot-t)-r_{\rm lb}(\cdot)\|_2
    \le
    \int_0^{|t|}
    \|r'_{\rm lb}(\cdot-\operatorname{sgn}(t)s)\|_2\,ds
    =
    |t|\|r'_{\rm lb}\|_2,
\]
so
\[
    \mathsf H^2(Q_a,Q_b)
    \le
    (a-b)^2\|r_{\rm lb}'\|_2^2 .
\]
Since \(q_{\rm lb}(u)=\sigma_{\rm lb}^{-1}q_0(u/\sigma_{\rm lb})\), we have
\[
    r_{\rm lb}(u)=\sigma_{\rm lb}^{-1/2}r_0(u/\sigma_{\rm lb})
\]
and
\[
    r_{\rm lb}'(u)
    =
    \sigma_{\rm lb}^{-3/2}r_0'(u/\sigma_{\rm lb}).
\]
Hence, by the change of variables \(v=u/\sigma_{\rm lb}\),
\[
\begin{aligned}
    \|r_{\rm lb}'\|_2^2
    &=
    \int_{\mathbb R}
    \sigma_{\rm lb}^{-3}
    \{r_0'(u/\sigma_{\rm lb})\}^2\,du
    =
    \sigma_{\rm lb}^{-2}
    \int_{\mathbb R}\{r_0'(v)\}^2\,dv
    \le C.
\end{aligned}
\]
This proves \(\mathsf H^2(Q_a,Q_b)\le C(a-b)^2\).

\smallskip
Now consider the one-sample joint laws. Let \(P_f^{(1)}\) and \(P_g^{(1)}\)
denote the laws of \((X,Y_f)\) and \((X,Y_g)\), respectively. Since
\(X\sim\nu\) and, conditionally on \(X=x\), \(Y_f=f(x)+\epsilon\) has density
\(q_{\rm lb}(y-f(x))\), the law \(P_f^{(1)}\) is absolutely continuous
with respect to \(\nu(dx)\,dy\), with density
\[
    p_f(x,y)=q_{\rm lb}(y-f(x)).
\]
Similarly,
\[
    p_g(x,y)=q_{\rm lb}(y-g(x)).
\]
Therefore
\[
\begin{aligned}
    \mathsf H^2(P_f^{(1)},P_g^{(1)})
    &=
    \int\!\!\int
    \{r_{\rm lb}(y-f(x))-r_{\rm lb}(y-g(x))\}^2
    \,dy\,\nu(dx) \\
    &=
    \int
    \mathsf H^2(Q_{f(x)},Q_{g(x)})\,\nu(dx)
    \le
    C\|f-g\|_{L^2(\nu)}^2 .
\end{aligned}
\]
Then we have
\[
    \mathsf H^2(P_f^{(n)},P_g^{(n)})
    \le
    n\mathsf H^2(P_f^{(1)},P_g^{(1)})
    \le
    Cn\|f-g\|_{L^2(\nu)}^2.
\]
\end{proof}

\begin{lemma}
\label{lem:geometric-bump-packing}
Let \(\mathcal M\subset\mathbb R^D\) be compact with
\(\mathcal M\in\mathcal C_{d,L_{\mathcal M},\tau}^{\beta}(\mathbb R^D)\), where
\(\tau>0\). Suppose
\(d\in [d_{\max}]\), \(\alpha\in[\alpha_{\min},\alpha_{\max}]\),
\(\beta\in[2,\beta_{\max}]\), and \(\beta\ge\lceil\alpha\rceil+1\). There
exists \(a_{\rm geo}>0\), depending only on
\(D,d_{\max},\beta_{\max},L_{\mathcal M}\), such that, whenever
\(h\in (0,a_{\rm geo}(\tau\wedge1))\), there are points
\(\mathcal X_h=\{x_1,\ldots,x_m\}\subset\mathcal M\) satisfying
\begin{equation}
    m
    \ge
    c\operatorname{vol}_{\mathcal M}(\mathcal M)h^{-d},
    \label{eq:geometric-bump-packing-m-bound}
\end{equation}
and there are functions
\(\varphi_1,\ldots,\varphi_m:\mathcal M\to\mathbb R\) with pairwise
disjoint supports, i.e.,
\[
    \operatorname{supp}(\varphi_i)
    \cap
    \operatorname{supp}(\varphi_j)
    =
    \emptyset
    \qquad
    \text{for }i\neq j,
\]
satisfying
\begin{equation*}
    \operatorname{supp}(\varphi_j)
    \subset
    \mathcal M\cap B_{\mathbb R^D}(x_j,h),
\end{equation*}
and further satisfy
\begin{equation}
    \left\|
    \sum_{j\in[m]} \vartheta_j h^\alpha\varphi_j
    \right\|_{\mathcal H^\alpha(\mathcal M)}
    \le
    C
    \quad
    \text{for all }\vartheta\in\{0,1\}^m,
    \label{eq:geometric-bump-packing-norm-bound}
\end{equation}
and
\begin{equation*}
    ch^d
    \le
    \int_{\mathcal M}\varphi_j^2\,d\operatorname{vol}_{\mathcal M}
    \le
    Ch^d .
\end{equation*}
The constants \(c,C\) depend only on
\(D,d_{\max},\alpha_{\min},\alpha_{\max},\beta_{\max},L_{\mathcal M}\) and on the
fixed bump functions.
\end{lemma}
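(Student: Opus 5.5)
We construct the bumps as follows: take a maximal $2h$-separated set on $\mathcal M$, attach a fixed radial bump in tangent-normal chart coordinates at each point, scale it by $h$, and then check the three properties one after another.

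\emph{Centers.} Set $\rho:=c_{\rm geo}(\tau\wedge r_{L_{\mathcal M}})$ from Lemma~\ref{lem:reach-local-facts-main}. Choose $a_{\rm geo}$ small enough that $h<a_{\rm geo}(\tau\wedge1)$ implies $4h\le\rho$; this is possible since $r_{L_{\mathcal M}}=(4L_{\mathcal M})^{-1}$ is fixed. Let $\mathcal X_h=\{x_1,\dots,x_m\}$ be a maximal $4h$-separated subset of $\mathcal M$ in the ambient norm. It exists and is finite because $\mathcal M$ is compact. By maximality the balls $B_{\mathbb R^D}(x_j,4h)$ cover $\mathcal M$. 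Lemma~\ref{lem:reach-local-facts-main}(i) then gives
\[
\operatorname{vol}_{\mathcal M}(\mathcal M)\le\sum_j\operatorname{vol}_{\mathcal M}(\mathcal M\cap B(x_j,4h))\le Cm h^d,
\]
which is \eqref{eq:geometric-bump-packing-m-bound}.

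\emph{Bumps.} Fix $\psi\in C_c^\infty(B_{\mathbb R^d}(0,1/2))$ with $\psi\not\equiv0$, and set $\varphi_j(y):=\psi\bigl(\Pi_{T_{x_j}\mathcal M}(y-x_j)/h\bigr)$ for $y\in\mathcal M\cap B(x_j,\rho)$, with $\varphi_j:=0$ elsewhere. On $B(x_j,\rho)$ the manifold is the graph $\Psi_{x_j}(u)$ with $u$ equal to the tangent projection, by the inclusion established in the proof of Lemma~\ref{lem:reach-local-facts-main}. Hence the support of $\varphi_j$ is $\Psi_{x_j}(\operatorname{supp}\psi(\cdot/h))$. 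Its points satisfy $\|y-x_j\|\le\|u\|+L\|u\|^2/2<h$, so $\operatorname{supp}\varphi_j\subset B(x_j,h)$. Since the centers are $4h$-separated, the supports are disjoint. The function is smooth across the boundary of the chart domain because it vanishes near $\|u\|=h/2$. For the $L^2$ mass, the Jacobian bound \eqref{eq:Jacobian-bound} gives
\[
\int\varphi_j^2\,d\operatorname{vol}\asymp\int\psi(u/h)^2\,du=h^d\|\psi\|_2^2.
\]

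\emph{H\"older norm (main obstacle).} We must bound $\|g\circ\Psi_x\|_{\mathcal H^\alpha(B(0,r_L))}$ uniformly in $x\in\mathcal M$, where $g=\sum_j\vartheta_jh^\alpha\varphi_j$. Pulled back to its own chart, $h^\alpha\varphi_j\circ\Psi_{x_j}=h^\alpha\psi(\cdot/h)$, which has $C^{s_\alpha,\gamma_\alpha}$ norm at most $C$ by the usual scaling: derivatives of order $k\le s_\alpha$ are $O(h^{\alpha-k})\le O(1)$, and the top-order H\"older seminorm is $O(1)$. In another chart $\Psi_x$, we write $\varphi_j\circ\Psi_x=\psi(\cdot/h)\circ\Pi_{T_{x_j}}(\Psi_x(\cdot)-x_j)$. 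This is a composition with a transition map whose derivatives up to order $s_\beta\ge\lceil\alpha\rceil$ are bounded by constants depending on $L_{\mathcal M}$, since the $N_x$ have bounded derivatives and $\beta\ge\lceil\alpha\rceil+1$. The transition's differential is bounded below on the relevant region because $\rho$ is small, so the chain rule (Fa\`a di Bruno) preserves the scaling estimates up to constants.

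For the sum we use disjointness of supports. For $|\mu|\le s_\alpha$, the sup bound is immediate, since at any point at most one term is nonzero. For the H\"older seminorm of the top derivative at two points $u,v$, there are two cases. If $\|u-v\|\le h$, then at most a bounded number of bumps meet the segment between them; the $4h$ separation and the volume bound of Lemma~\ref{lem:reach-local-facts-main}(i) control this count. Each such bump contributes $O(\|u-v\|^{\gamma_\alpha})$. If $\|u-v\|>h$, we use the crude bound $|D^\mu g(u)|+|D^\mu g(v)|\le Ch^{\alpha-s_\alpha}=Ch^{\gamma_\alpha}\le C\|u-v\|^{\gamma_\alpha}$. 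Together these give \eqref{eq:geometric-bump-packing-norm-bound}, with constants depending only on the listed parameters and $\psi$.
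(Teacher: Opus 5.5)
Your proposal is correct and follows essentially the same route as the paper's proof: a maximal $4h$-separated net with the covering and volume count for $m$, and tangent-chart bumps rescaled by $h$ with the Jacobian bound for the $L^2$ mass. It also uses chain-rule control through the transition maps $\Pi_{T_{x_j}\mathcal M}(\Psi_x(\cdot)-x_j)$, and disjointness of supports to pass from one bump to the sum.

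The only difference is cosmetic and concerns the top-order H\"older seminorm. The paper bounds $D^{s_\alpha+1}$ of the whole sum by $Ch^{\alpha-s_\alpha-1}$ and uses $\min\{h^{\gamma_\alpha},h^{\gamma_\alpha-1}\|u-v\|\}\le\|u-v\|^{\gamma_\alpha}$ on the convex chart ball. Your split into $\|u-v\|\le h$ and $\|u-v\|>h$ is the same interpolation. In your first case, disjoint supports already leave at most two nonzero terms, so counting bumps via the volume bound is unnecessary.
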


\begin{proof}
Let \(c_{\rm geo}\in(0,1)\) be the localization constant in
Lemma~\ref{lem:reach-local-facts-main}, chosen uniformly over
\(d\in [d_{\max}]\), \(\beta\in[2,\beta_{\max}]\), and \(L=L_{\mathcal M}\). Put
\[
    r_{L_{\mathcal M}}:=(4L_{\mathcal M})^{-1},\qquad
    a_{\rm geo}:=c_{\rm geo}(r_{L_{\mathcal M}}\wedge1)/16.
\]
If
\(h\le a_{\rm geo}(\tau\wedge1)\), then
\(4h\le c_{\rm geo}(\tau\wedge r_{L_{\mathcal M}})\) and \(h\le r_{L_{\mathcal M}}\).

\smallskip
Let \(\mathcal X_h=\{x_1,\ldots,x_m\}\) be a maximal \(4h\)-separated subset
of \(\mathcal M\), with separation measured in ambient Euclidean distance,
that is, the following is satisfied:
\begin{enumerate}
    \item[(a)] for any \(j\neq j'\in[m]\),
    \(\|x_j-x_{j'}\|\ge4h\).
    \item[(b)] for any \(x\in\mathcal M\), there is \(j\in[m]\) such that
    \(\|x-x_j\|<4h\).
\end{enumerate}
Note that (b) implies
\[
    \mathcal M
    \subset
    \bigcup_{j\in[m]}B_{\mathbb R^D}(x_j,4h).
\]
Since
\(4h\le c_{\rm geo}(\tau\wedge r_{L_{\mathcal M}})\),
Lemma~\ref{lem:reach-local-facts-main}(i)
gives
\[
    \operatorname{vol}_{\mathcal M}(\mathcal M)
    \le
    \sum_{j\in[m]}
    \operatorname{vol}_{\mathcal M}
    \bigl(\mathcal M\cap B_{\mathbb R^D}(x_j,4h)\bigr)
    \le
    Cmh^d.
\]
Hence,
\[
    m
    \ge
    c\operatorname{vol}_{\mathcal M}(\mathcal M)h^{-d}.
\]

\smallskip
Fix a nonzero bump
\(\eta_d\in C_c^\infty(B_{\mathbb R^d}(0,1/4))\). For each \(j\), let
\[
    \Psi_j(u)=x_j+u+N_j(u),
    \qquad
    u\in B_{T_{x_j}\mathcal M}(0,r_{L_{\mathcal M}}),
\]
be the tangent-normal chart at \(x_j\). Identifying
\(T_{x_j}\mathcal M\) with
\(\mathbb R^d\) by an arbitrary orthonormal basis, define
\[
    \varphi_j(\Psi_j(u)):=\eta_d(u/h)
    \qquad
    \text{for }\|u\|<h/4,
\]
and set \(\varphi_j=0\) outside.

\smallskip
If \(\varphi_j(\Psi_j(u))\ne0\), then \(\|u\|\le h/4\). Since
\(DN_j(0)=0\)
and \(\|DN_j(v)\|_{\rm op}\le L_{\mathcal M}\|v\|\),
\[
    \|N_j(u)\|
    \le
    \int_0^1\|DN_j(tu)\|_{\rm op}\|u\|\,dt
    \le
    {L_{\mathcal M}\over2}\|u\|^2.
\]
Thus
\[
    \|\Psi_j(u)-x_j\|
    \le
    \|u\|+\|N_j(u)\|
    \le
    {h\over4}
    +
    {L_{\mathcal M}\over2}\left({h\over4}\right)^2
    <h,
\]
because \(h\le r_{L_{\mathcal M}}\). So
\[
    \operatorname{supp}(\varphi_j)
    \subset
    \mathcal M\cap B_{\mathbb R^D}(x_j,h).
\]
The centers are \(4h\)-separated, so these supports are pairwise disjoint. Let
\[
    J_j(u)
    :=
    \sqrt{\det\{I+DN_j(u)^\top DN_j(u)\}}
\]
be the Jacobian in \(j\)th chart. On the support of \(\varphi_j\),
\(\|DN_j(u)\|_{\rm op}\le L_{\mathcal M}\|u\|\le1/16\), and hence
\[
    J_j(u)\in
    \left[
        1,\left(\frac{17}{16}\right)^{d/2}
    \right]
\]
by \eqref{eq:Jacobian-bound}.
Therefore, by the change of variables \(u=hv\),
\[
    \int_{\mathcal M}\varphi_j^2\,d\operatorname{vol}_{\mathcal M}
    =
    \int
    \eta_d(u/h)^2J_j(u)\,du
    =
    h^d
    \int
    \eta_d(v)^2J_j(hv)\,dv .
\]
Since \(\eta_d\) is fixed and nonzero,
\[
    ch^d
    \le
    \int_{\mathcal M}\varphi_j^2\,d\operatorname{vol}_{\mathcal M}
    \le
    Ch^d .
\]
Let $F_\vartheta
    :=
    \sum_{j\in[m]}\vartheta_j h^\alpha\varphi_j$ for 
$\vartheta\in\{0,1\}^m$. In the chart centered at \(x_j\),
\[
    h^\alpha\varphi_j(\Psi_j(u))
    =
    h^\alpha\eta_d(u/h).
\]
For every multi-index \(\mu\) with \(|\mu|=r\le s_\alpha\), the chain rule
gives
\[
    D^\mu\{h^\alpha\eta_d(u/h)\}
    =
    h^{\alpha-r}(D^\mu\eta_d)(u/h).
\]
Hence all derivatives of order \(r\le s_\alpha\) are uniformly bounded for
\(0<h\le1\), since \(r<\alpha\). For the top H\"older seminorm, if
\(|\mu|=s_\alpha\), then
\[
[D^\mu\{h^\alpha\eta_d(\cdot/h)\}]_{C^{0,\gamma_\alpha}}
\le
h^{\alpha-s_\alpha}h^{-\gamma_\alpha}
[D^\mu\eta_d]_{C^{0,\gamma_\alpha}}
=
[D^\mu\eta_d]_{C^{0,\gamma_\alpha}},
\]
because \(\alpha=s_\alpha+\gamma_\alpha\). Thus the factor \(h^\alpha\)
exactly compensates for the derivative scaling \(h^{-s_\alpha}\) and the
H\"older scaling \(h^{-\gamma_\alpha}\).

In any admissible chart \(\Psi_x\), the transition to the \(j\)-th chart
on their overlap is
\[
    \Psi_j^{-1}(\Psi_x(v))
    =\Pi_{T_{x_j}\mathcal M}(\Psi_x(v)-x_j).
\]
Its derivatives through order \(s_\alpha+1\) are uniformly bounded because
\(\beta\ge s_\alpha+2\) and the chart radius and derivative bounds are
fixed independently of \(\tau\). Each bump vanishes near the boundary of
its defining chart, so its extension by zero has the same regularity.
The chain rule therefore gives derivative bounds
\(Ch^{\alpha-r}\) for each pullback \(h^\alpha\varphi_j\circ\Psi_x\),
for \(0\le r\le s_\alpha+1\). Since the bump supports are disjoint,
these bounds also hold for \(g_x:=F_\vartheta\circ\Psi_x\):
\[
    \|D^r g_x\|_\infty\le Ch^{\alpha-r},
    \qquad 0\le r\le s_\alpha+1.
\]
The chart domain is convex. Thus the bounds for orders \(s_\alpha\) and
\(s_\alpha+1\) imply, for any two points \(u,v\) in that domain,
\[
    \|D^{s_\alpha}g_x(u)-D^{s_\alpha}g_x(v)\|
    \le C\min\{h^{\gamma_\alpha},
                  h^{\gamma_\alpha-1}\|u-v\|\}
    \le C\|u-v\|^{\gamma_\alpha}.
\]
Together with \(h\le1\), this proves
\[
    \|F_\vartheta\|_{\mathcal H^\alpha(\mathcal M)}\le C
\]
uniformly over \(\vartheta\in\{0,1\}^m\), all admissible charts, and
\(h\), with constants independent of \(\tau\).
\end{proof}

\subsection{Proof of Theorem~\ref{thm:minimax-lower}}

\begin{proof}
It suffices to prove the lower bound on a submodel fixed in
Section~\ref{subsec:submodel}. Let
\[
    \nu^\circ:=\sum_{k\in[K_n]}\pi_{k,n}\mu_{k,n}^\circ,
\]
where $d\mu_{k,n}^\circ=\varrho_{k,n}^\circ\,d\operatorname{vol}_{\mathcal M_k}$
and $c_\mu\le\varrho_{k,n}^\circ\le c_\mu^{-1}$. Since
\[
    1
    =
    \int_{\mathcal M_k}
    \varrho_{k,n}^\circ\,d\operatorname{vol}_{\mathcal M_k}
    \le
    c_\mu^{-1}\operatorname{vol}_{\mathcal M_k}(\mathcal M_k),
\]
we have
$\operatorname{vol}_{\mathcal M_k}(\mathcal M_k)\ge c_\mu$.
Recall
\[
    N_k:=n\pi_{k,n},\qquad
    h_{k,n}:=N_k^{-1/(2\alpha_k+d_k)},\qquad
    r_n:=\max_{k\in[K_n]}h_{k,n}.
\]
By Lemma~\ref{lem:ass3-consequences},
\(r_n\to0\), and by Assumption~\ref{ass:feasible-local-formal},
\[
    r_n\le a_{\rm sc}(\tau_{0,n}\wedge\delta_{0,n}).
\]

\smallskip
Let \(a_{\rm geo}\) be the constant in
Lemma~\ref{lem:geometric-bump-packing}. Define
\[
    a_0:=1\wedge a_{\rm geo},\qquad
    \bar h_k:=a_0h_{k,n}.
\]
Since \(a_{\rm sc}\le1\), for all sufficiently large \(n\),
\[
    \bar h_k
    \le
    a_0r_n
    \le
    a_0a_{\rm sc}\tau_{0,n}
    \le
    a_{\rm geo}\tau_{0,n}.
\]
Also, since \(r_n\to0\), \(\bar h_k\le a_{\rm geo}\) for all sufficiently
large \(n\). Hence
\[
    \bar h_k
    \le
    a_{\rm geo}(\tau_{0,n}\wedge1)
    \le
    a_{\rm geo}
    \{\operatorname{reach}(\mathcal M_k)\wedge1\}.
\]
Thus Lemma~\ref{lem:geometric-bump-packing} applies to every component
\(\mathcal M_k\in\mathcal C_{d_k,L_{\mathcal M},\tau_{0,n}}^{\beta_k}
(\mathbb R^D)\) with \(\tau=\tau_{0,n}\) and \(h=\bar h_k\).

For each \(k\), let
\[
    \{(x_{k,j},\varphi_{k,j}):j\in[m_k]\}
\]
be the corresponding bump packing. Since
\(\operatorname{vol}_{\mathcal M_k}(\mathcal M_k)\ge c_\mu\), the packing
bound in \eqref{eq:geometric-bump-packing-m-bound} gives
\begin{equation}
    m_k
    \ge
    c
    \operatorname{vol}_{\mathcal M_k}(\mathcal M_k)
    \bar h_k^{-d_k}
    \ge
    c\bar h_k^{-d_k}.
    \label{eq:minimax-lower-m-bound}
\end{equation}
The supports are disjoint within each component by construction. Across
different components, they are also disjoint because
\[
    d(\mathcal M_k,\mathcal M_\ell)\ge\delta_{0,n}>0
    \qquad
    \text{for }k\ne\ell.
\]
Choose a fixed \(\lambda_{\rm lb}\in(0,1]\), depending only on the fixed
model constants, \(q_0\), and the bump functions, sufficiently small for
the H\"older and testing bounds below; their constants are independent of
\(\lambda_{\rm lb}\).
For
\[
    \mathcal{J}_n
    :=
    \{(k,j):k\in[K_n],\,j\in[m_k]\}
\]
and
\(\theta\in\{0,1\}^{|\mathcal{J}_n|}\), define, for
\(x\in\mathcal M_k\),
\[
    f_\theta(x)
    :=
    \sum_{j\in[m_k]}
    \theta_{k,j}
    \lambda_{\rm lb}
    \bar h_k^{\alpha_k}
    \varphi_{k,j}(x).
\]
By \eqref{eq:geometric-bump-packing-norm-bound} in
Lemma~\ref{lem:geometric-bump-packing},
\[
    \left\|
    \sum_{j\in[m_k]}
    \theta_{k,j}\bar h_k^{\alpha_k}\varphi_{k,j}
    \right\|_{\mathcal H^{\alpha_k}(\mathcal M_k)}
    \le
    C.
\]
The first requirement, \(C\lambda_{\rm lb}\le L_{\mathcal F}\), therefore ensures
$f_\theta
    \in
    \mathcal H(\boldsymbol\alpha,\mathscr M;L_{\mathcal F})$ for every
\(\theta\in\{0,1\}^{|\mathcal{J}_n|}\).

\smallskip
Let \(P_\theta^{(n)}\) be the distribution of the training sample under
\[
    X_i^\star\stackrel{\rm i.i.d.}{\sim}\nu^\circ,\qquad
    X_i=X_i^\star,\qquad
    Y_i=f_\theta(X_i^\star)+\epsilon_i.
\]
For
\(\theta,\theta'\in\{0,1\}^{|\mathcal{J}_n|}\), the pairwise disjointness
of the supports implies that, pointwise on \(\mathcal M_k\),
\[
\begin{aligned}
    |f_\theta-f_{\theta'}|^2
    &=
    \lambda_{\rm lb}^2\bar h_k^{2\alpha_k}
    \left|
        \sum_{j\in[m_k]}
        (\theta_{k,j}-\theta'_{k,j})
        \varphi_{k,j}
    \right|^2 \\
    &=
    \lambda_{\rm lb}^2\bar h_k^{2\alpha_k}
    \sum_{j\in[m_k]}
    (\theta_{k,j}-\theta'_{k,j})^2
    \varphi_{k,j}^2 \\
    &=
    \lambda_{\rm lb}^2\bar h_k^{2\alpha_k}
    \sum_{j\in[m_k]}
    \mathbb{I}\{\theta_{k,j}\ne\theta'_{k,j}\}
    \varphi_{k,j}^2.
\end{aligned}
\]
Moreover, the density bounds
\(c_\mu\le\varrho_{k,n}^\circ\le c_\mu^{-1}\) and
Lemma~\ref{lem:geometric-bump-packing} give
\[
    c\bar h_k^{d_k}
    \le
    \int_{\mathcal M_k}
    \varphi_{k,j}(x)^2
    \varrho_{k,n}^\circ(x)
    \,d\operatorname{vol}_{\mathcal M_k}(x) 
    \le
    C\bar h_k^{d_k}.
\]
Consequently,
\[
\begin{aligned}
    \|f_\theta-f_{\theta'}\|_{L^2(\nu^\circ)}^2
    &=
    \sum_{k\in[K_n]}\pi_{k,n}
    \int_{\mathcal M_k}
    |f_\theta-f_{\theta'}|^2
    \varrho_{k,n}^\circ\,d\operatorname{vol}_{\mathcal M_k} \\
    &=
    \sum_{k\in[K_n]}
    \sum_{j\in[m_k]}
    \lambda_{\rm lb}^2
    \pi_{k,n}
    \bar h_k^{2\alpha_k}
    \mathbb{I}\{\theta_{k,j}\ne\theta'_{k,j}\}
    \int_{\mathcal M_k}
    \varphi_{k,j}^2
    \varrho_{k,n}^\circ
    \,d\operatorname{vol}_{\mathcal M_k} \\
    &\ge
    \sum_{(k,j)\in\mathcal J_n}
    c\lambda_{\rm lb}^2
    \pi_{k,n}
    \bar h_k^{2\alpha_k+d_k}
    \mathbb{I}\{\theta_{k,j}\ne\theta'_{k,j}\}.
\end{aligned}
\]
Fix a uniform lower constant \(c>0\) in the last bound. Then
Lemma~\ref{lem:assouad-l2} applies with
\[
    \rho_{k,j}^2
    :=
    c\lambda_{\rm lb}^2
    \pi_{k,n}
    \bar h_k^{2\alpha_k+d_k}.
\]
Let \(\theta^{(k,j)}\) be obtained from \(\theta\) by flipping only the
coordinate \((k,j)\). Then
\[
    f_\theta-f_{\theta^{(k,j)}}
    =
    \pm
    \lambda_{\rm lb}
    \bar h_k^{\alpha_k}
    \varphi_{k,j}
\]
on the corresponding bump support, and the difference is zero elsewhere.
By Lemma~\ref{lem:bounded-noise-hellinger} and the density upper bound,
\[
\begin{aligned}
    \mathsf H^2(P_\theta^{(n)},P_{\theta^{(k,j)}}^{(n)})
    &\le
    Cn
    \|f_\theta-f_{\theta^{(k,j)}}\|_{L^2(\nu^\circ)}^2 \\
    &\le
    Cn\lambda_{\rm lb}^2
    \pi_{k,n}
    \bar h_k^{2\alpha_k+d_k} \\
    &=
    C\lambda_{\rm lb}^2
    a_0^{2\alpha_k+d_k}.
\end{aligned}
\]
Since \(a_0\le1\), the second requirement, \(C\lambda_{\rm lb}^2\le1/16\),
gives
\[
    \mathsf H^2(P_\theta^{(n)},P_{\theta^{(k,j)}}^{(n)})
    \le
    {1\over16}.
\]
Using \(d_{\rm TV}(P,Q)\le\mathsf H(P,Q)\), we get
\[
    d_{\rm TV}(P_\theta^{(n)},P_{\theta^{(k,j)}}^{(n)})
    \le
    {1\over4}.
\]
Therefore Lemma~\ref{lem:assouad-l2} gives
\[
\begin{aligned}
    \inf_{\widetilde f}
    \sup_{\theta\in\{0,1\}^{|\mathcal{J}_n|}}
    \mathbb E_\theta
    \|\widetilde f-f_\theta\|_{L^2(\nu^\circ)}^2
    &\ge
    {1\over8}
    \sum_{(k,j)\in\mathcal J_n}
    \rho_{k,j}^2
    \left(1-{1\over4}\right) \\
    &=
    {3\over32}
    \sum_{(k,j)\in\mathcal J_n}
    \rho_{k,j}^2 \\
    &=
    {3\over32}
    c\lambda_{\rm lb}^2
    \sum_{k\in[K_n]}
    m_k\pi_{k,n}
    \bar h_k^{2\alpha_k+d_k}.
\end{aligned}
\]
By \eqref{eq:minimax-lower-m-bound},
\[
    m_k
    \ge
    c\bar h_k^{-d_k},
\]
and hence
\[
    m_k\pi_{k,n}\bar h_k^{2\alpha_k+d_k}
    \ge
    c\pi_{k,n}
    \bar h_k^{-d_k}
    \bar h_k^{2\alpha_k+d_k}
    =
    c\pi_{k,n}\bar h_k^{2\alpha_k}.
\]
It follows that
\[
\begin{aligned}
    \inf_{\widetilde f}
    \sup_{\theta\in\{0,1\}^{|\mathcal{J}_n|}}
    \mathbb E_\theta
    \|\widetilde f-f_\theta\|_{L^2(\nu^\circ)}^2
    &\ge
    c
    \sum_{k\in[K_n]}
    \pi_{k,n}\bar h_k^{2\alpha_k}.
\end{aligned}
\]
Since \(\bar h_k=a_0N_k^{-1/(2\alpha_k+d_k)}\),
\[
\begin{aligned}
    \sum_{k\in[K_n]}\pi_{k,n}\bar h_k^{2\alpha_k}
    &=
    \sum_{k\in[K_n]}
    \pi_{k,n}
    a_0^{2\alpha_k}
    N_k^{-2\alpha_k/(2\alpha_k+d_k)} \\
    &\ge
    a_0^{2\alpha_{\max}}
    \sum_{k\in[K_n]}
    \pi_{k,n}
    (n\pi_{k,n})^{-2\alpha_k/(2\alpha_k+d_k)} \\
    &=
    a_0^{2\alpha_{\max}}
    \mathfrak R_n(\boldsymbol\alpha,\mathbf d,\boldsymbol\pi).
\end{aligned}
\]

The same calculation uses the middle (uniform-prior) bound in
Lemma~\ref{lem:assouad-l2} and therefore also proves
\begin{equation}\label{eq:assouad-prior-risk}
 \inf_g 2^{-|\mathcal J_n|}\sum_\theta
 \mathbb E_{P^\circ,f_\theta}
 \{g(\mathfrak s)-f_\theta(X_{n+1}^\star)\}^2
 \ge c\mathfrak R_n.
\end{equation}
Here \(P^\circ\) is the fixed latent design and bounded-noise law of the
submodel.

Finally, on this submodel,
\[
    X_{n+1}=X_{n+1}^\star\sim\nu^\circ
\]
independently of the training sample. Therefore, conditional on
\((X_i,Y_i)_{i\in[n]}\), the prediction risk against
\(f_\theta(X_{n+1}^\star)\) is exactly the \(L^2(\nu^\circ)\)-risk of the
function
\[
    x
    \mapsto
    \widehat f_n((X_i,Y_i)_{i\in[n]},x).
\]
Thus the Assouad lower bound above applies to the original prediction
problem. Since the finite family
\[
    \{f_\theta:\theta\in\{0,1\}^{|\mathcal{J}_n|}\},
\]
together with the fixed design and noise law constructed above, is a
submodel of the full model class, the proof is complete.
\end{proof}

\subsection{Lower bound with independent pretraining tasks}
\label{app:pretraining-lower}

\begin{corollary}
\label{cor:pretraining-minimax-lower}
Under the hypotheses of Theorem~\ref{thm:minimax-lower}, for every sufficiently
large \(n\) there are \(P^\circ\in\mathcal P_\star\) and a finite-support
task distribution \(\rho_{f,n}^\circ\) such that, for every number of
independent training tasks \(\Gamma\ge0\),
\[
 \inf_{\mathcal A}\mathbb E_{P^\circ,\rho_{f,n}^\circ}
 \left[\left\{\mathcal A(\mathcal D_\Gamma^{\rm tr},\mathfrak s)
             -f(X_{n+1}^\star)\right\}^2\right]
 \ge c\mathfrak R_n.
\]
The tasks are generated as in Section~\ref{subsubsec:generation}; the
infimum ranges over all measurable learning rules, also allowing independent
randomization. The rule may be given \(P^\circ\) and \(\rho_{f,n}^\circ\).
Consequently the same lower bound holds for
\(\inf_{\mathcal A}\sup_{P,\rho_f}\) in this pretraining experiment.
\end{corollary}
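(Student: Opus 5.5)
The plan is to reuse the heterogeneous Assouad family built in the proof of Theorem~\ref{thm:minimax-lower}. Take \(P^\circ\) to be the submodel law fixed there: \(\xi\equiv0\), latent design \(\nu^\circ\), and response noise \(q_{\rm lb}\) independent of \(X^\star\). Take \(\rho_{f,n}^\circ\) to be the uniform distribution over \(\{f_\theta:\theta\in\{0,1\}^{|\mathcal J_n|}\}\). This law has finite support in \(\mathcal H\), and \(P^\circ\in\mathcal P_\star\), as already verified in that proof. Under task generation, each training task \(\gamma\) draws its own index \(\theta^{(\gamma)}\) uniformly. The target task draws an index \(\theta\) that is independent of all \(\theta^{(\gamma)}\) and of all training observations, and it has a fresh prompt.

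The key step is a conditioning argument. Fix any measurable, possibly randomized, rule \(\mathcal A\) and condition on \(\mathcal D_\Gamma^{\rm tr}\) together with the auxiliary randomness \(U\). By independence across tasks, the joint law of \((\theta,\mathfrak s,X_{n+1}^\star)\) given \((\mathcal D_\Gamma^{\rm tr},U)\) equals its unconditional law: \(\theta\) is uniform, and given \(\theta\) the prompt follows \(P_\theta^{(n)}\otimes\nu^\circ\). For fixed \((\mathcal D_\Gamma^{\rm tr},U)=(\mathsf d,u)\), the map \(\mathfrak s\mapsto\mathcal A(\mathsf d,\mathfrak s)\) (with \(U=u\)) is a measurable predictor of the prompt alone. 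The prior-average bound \eqref{eq:assouad-prior-risk} therefore applies to it and gives conditional risk at least \(c\mathfrak R_n\). Here \(c\) is the constant from that display and does not depend on \(\mathsf d\), \(u\), or \(\Gamma\). Integrating over \((\mathcal D_\Gamma^{\rm tr},U)\) via Fubini yields the claimed bound for every \(\Gamma\ge0\); the case \(\Gamma=0\) is just \eqref{eq:assouad-prior-risk}.

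Two points need care. The first is measurability of the section \(\mathfrak s\mapsto\mathcal A(\mathsf d,\mathfrak s)\), which is standard for jointly measurable maps. The second is that \eqref{eq:assouad-prior-risk} must cover every measurable predictor, which holds because the infimum in Lemma~\ref{lem:assouad-l2} is over all such predictors. Since the rule is allowed to know \(P^\circ\) and \(\rho^\circ_{f,n}\), nothing changes. The final statement for \(\inf_{\mathcal A}\sup_{P,\rho_f}\) follows because the supremum dominates the value at \((P^\circ,\rho_{f,n}^\circ)\).

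I expect the main obstacle to be stating the independence cleanly. The difficulty is that pretraining tasks, although they carry information about the family, carry none about the fresh index \(\theta\): conditioning must leave the target index and prompt with their original joint law. Independence of the task draws under the generation protocol gives this directly.
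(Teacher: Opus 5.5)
Your proposal is correct and follows essentially the same route as the paper. Take the $\xi\equiv0$ submodel $P^\circ$ with the uniform prior over the Assouad family $\{f_\theta\}$. Use the independence of the target index and prompt from $\mathcal D_\Gamma^{\rm tr}$ and the auxiliary randomization to treat $\mathfrak s\mapsto\mathcal A(D,\mathfrak s)$ as a fixed prompt predictor, apply \eqref{eq:assouad-prior-risk} conditionally, and average over $D$.
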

\begin{proof}
Use the submodel and finite family from the proof above and set
\[
 \rho_{f,n}^\circ=2^{-|\mathcal J_n|}\sum_{\theta\in\{0,1\}^{\mathcal J_n}}
 \delta_{f_\theta}.
\]
The index \(\theta_0\) of the target task is independent of all training-task
indices and observations. Thus \(\mathcal D_\Gamma^{\rm tr}\) is independent
of the pair \((\theta_0,\mathfrak s)\). Conditional on almost every realized
training dataset \(D\), the map \(g_D(s)=\mathcal A(D,s)\) is a measurable
prompt predictor, and the conditional target law is unchanged. Applying
\eqref{eq:assouad-prior-risk} gives
\[
 \mathbb E[\{g_D(\mathfrak s)-f_{\theta_0}(X_{n+1}^\star)\}^2
          \mid\mathcal D_\Gamma^{\rm tr}=D]\ge c\mathfrak R_n.
\]
Average over \(D\) and any independent randomization. The hard prior is
allowed to depend on \(n\) and is fixed independently of the realized training data.
\end{proof}
This is a worst-case statement over task distributions. For a fixed
\(\rho_f\), the lower bound depends on that distribution; for example a
point mass at the zero function has zero latent prediction risk.

\section{Tangent local polynomial estimator}
\label{app:tangent-local-poly}

The upper-bound proof proceeds from local geometry to regression risk.
Local purity restricts each window to one component; polynomial coercivity
and concentration over the finite candidate grid then control the tangent
fit. Lemma~\ref{lem:tangent-projector-good} converts this control into a
rounded projector estimate. Lemma~\ref{lem:regression-design-stability}
establishes a single Gram event uniformly over nearby projectors, so it
also applies to the estimated projector. The final bias--variance argument
combines these two events to obtain the componentwise and aggregate rates.

\subsection{Standing notation and scale consequences}
\label{app:tangent-local-poly-estimator}

We use the notation from the main text and
Appendix~\ref{app:feasible-local-formal}. All spectral projectors use a
fixed measurable tie-breaking rule, including outside the good events. Let \(\Pi_{(d)}(B)\) denote the projector onto
the top \(d\)-dimensional eigenspace of a symmetric matrix \(B\). For
\(k\in[K_n]\), recall
\[
\begin{gathered}
 p_k:=\lceil\alpha_k\rceil-1,\qquad s_k:=\lceil\alpha_k\rceil,\qquad
 h_k:=h_{k,n}=N_k^{-1/(2\alpha_k+d_k)},\\
 p_{\max}:=\lceil\alpha_{\max}\rceil-1,\qquad
 s_{\max}:=\lceil\alpha_{\max}\rceil.
\end{gathered}
\]
The regression polynomial degree is \(p_k\), whereas the tangent graph fit uses
one additional degree \(s_k=p_k+1\). The degree \(s_k\) tangent fit is used
because the \(C^{s_k,1}\) regularity supplied by \(\beta_k\ge s_k+1\) yields,
after the rescaling \(u=h_kt\) and division by \(h_k\), a graph Taylor
remainder of order \(h_k^{s_k}\) on bounded \(t\)-sets.

For \(d\in [d_{\max}]\) and \(s\in [s_{\max}]\), let
\(\mathscr P_d\) be the set of rank-\(d\) orthogonal projectors in
\(\mathbb R^D\). The tensors below are understood as bounded symmetric multilinear maps
\(T_\ell:(\mathbb R^D)^\ell\to\mathbb R^D\). Define
\[
\Theta_{d,s}
:=
\left\{
(a,\Pi,T_2,\ldots,T_s):
\|a\|\le 1,
\ \Pi\in\mathscr P_d,
\ \|T_\ell\|_{\rm op}\le 2L_{\mathcal M},
\ 2\le \ell\le s
\right\}.
\]
Here, \(a\in\mathbb R^D\). The tensors are symmetric in their arguments.
This restriction preserves the diagonal evaluations
\(T_\ell(v,\ldots,v)\) entering \(R_q\) and the operator-norm bound, and
the Taylor tensors used below are symmetric. If \(s=1\), the tensor part is absent. For
\(q=(a,\Pi,T_2,\ldots,T_s)\in\Theta_{d,s}\), recall
\[
    R_q(z)
    :=
    a+(I-\Pi)z-\sum_{\ell=2}^s
    T_\ell\{(\Pi z)^{\otimes\ell}\}.
\]
The metric on \(\Theta_{d,s}\) is
\[
d_\Theta(q,q')
:=
\|a-a'\|+\|\Pi-\Pi'\|_F+
\sum_{\ell=2}^s\|T_\ell-T'_\ell\|_{\rm op}.
\]
\begin{lemma}[Product nets for tangent candidates]
\label{lem:tangent-product-net}
For \(d\in[D-1]\), \(s\in\mathbb N\), \(L_{\mathcal M}\ge1\), and
\(\eta\in(0,1]\), the space \((\Theta_{d,s},d_\Theta)\) has a finite
\(\eta\)-net \(\mathcal Q_{d,s}(\eta)\subset\Theta_{d,s}\) such that
\begin{equation}\label{eq:tangent-product-covering}
\begin{aligned}
 |\mathcal Q_{d,s}(\eta)|
 &\le C\eta^{-m_\Theta(d,s,D)},\\
 m_\Theta(d,s,D)
 &:=D+d(D-d)+D\sum_{\ell=2}^s\binom{D+\ell-1}{\ell}.
\end{aligned}
\end{equation}
The constant \(C\) depends only on \(D,d,s,L_{\mathcal M}\). All net centers belong
to their respective candidate factors, and the nets can be fixed
deterministically.
\end{lemma}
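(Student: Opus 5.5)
The plan is to build the net as a product of nets for the three factors of $\Theta_{d,s}$: the offset ball, the Grassmannian of rank-$d$ projectors, and the tensor balls. Each factor gets resolution $\eta/(s+1)$, so that the sum metric $d_\Theta$ of a product point is at most $\eta$. The cardinality bound then follows by multiplying the three factor bounds. Each factor net consists of points of that factor, so every net center lies in $\Theta_{d,s}$.

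\textbf{Offset and tensor factors.} For the offset factor, use the standard volumetric argument. A maximal $\epsilon$-separated subset of the unit ball in $\mathbb R^D$ is an $\epsilon$-net inside the ball, and it has cardinality at most $(1+2/\epsilon)^D\le(3/\epsilon)^D$. For each tensor factor $\ell\in\{2,\ldots,s\}$, the symmetric multilinear maps $(\mathbb R^D)^\ell\to\mathbb R^D$ form a real vector space of dimension $D\binom{D+\ell-1}{\ell}$. The operator norm is a norm on this space, and the set $\{\|T_\ell\|_{\rm op}\le2L_{\mathcal M}\}$ is a ball of radius $2L_{\mathcal M}$ in that norm. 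The same volumetric bound, which holds for balls of any norm on a finite-dimensional space, gives a net inside the ball of size at most $(1+4L_{\mathcal M}/\epsilon)^{D\binom{D+\ell-1}{\ell}}$. Since $\eta\le1$, this is at most $C\eta^{-D\binom{D+\ell-1}{\ell}}$.

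\textbf{Projector factor.} For $\mathscr P_d$ under the Frobenius norm, I would use the local parametrization of the Grassmannian. Fix $\Pi_0$ and write nearby projectors as graphs of linear maps $A:\operatorname{Im}\Pi_0\to\operatorname{Im}\Pi_0^\perp$; these maps form a space of dimension $d(D-d)$. The map $A\mapsto\Pi_{\operatorname{graph}(A)}$ is smooth, and it is Lipschitz on $\{\|A\|_{\rm op}\le1\}$ with a constant depending only on $D$. The set $\mathscr P_d$ is compact, being closed and contained in the Frobenius ball of radius $\sqrt d$. Cover it by finitely many such graph charts, with the number of charts depending only on $D$ and $d$. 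A chart at $\Pi_0$ covers every $\Pi$ with $\|\Pi-\Pi_0\|_{\rm op}<1$, so a fixed finite cover exists. Net each chart's $A$-ball at scale $\epsilon/C$ and push the net forward through the chart map. The pushed points are genuine projectors, and the count is $C\epsilon^{-d(D-d)}$.

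\textbf{Alternative route and determinism.} If the chart bookkeeping becomes awkward, I would instead use the fact that $\mathscr P_d$ is a compact smooth $d(D-d)$-dimensional submanifold. Its $\epsilon$-packing numbers are then $O(\epsilon^{-d(D-d)})$ by a volume comparison. A maximal $\epsilon$-separated subset of $\mathscr P_d$ is automatically an $\epsilon$-net consisting of projectors. Determinism is immediate: choose each factor net once, for instance by a fixed enumeration or by grid points followed by a measurable projection onto the factor set. The step I expect to need the most care is obtaining the correct exponent $d(D-d)$ for the projector factor, rather than the cruder ambient exponent $D^2$. The intrinsic covering argument settles this, and only the constant depends on $D$ and $d$.
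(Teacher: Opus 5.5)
Your proposal is correct and follows the paper's proof. Both take a product of factor nets at radius $\eta/(s+1)$, use volumetric bounds for the offset ball and the symmetric-tensor balls (of dimension $D\binom{D+\ell-1}{\ell}$), and handle the projector factor by pushing nets of $d(D-d)$-dimensional graph-parameter balls forward through finitely many Lipschitz charts. The only difference is the atlas. The paper uses the $\binom{D}{d}$ coordinate charts $A=U_{J^c}U_J^{-1}$, with Cauchy--Binet guaranteeing some $J$ for which $\|A\|_F\le\sqrt{d\binom{D}{d}}$, whereas you use graph charts over base projectors chosen by compactness. Note that restricting to $\|A\|_{\rm op}\le1$ only reaches $\|\Pi-\Pi_0\|_{\rm op}\le1/\sqrt2$, not $<1$; this is harmless because any finite cover suffices.
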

\begin{proof}
We first record a covering bound for a ball of radius \(R>0\) in an
\(m\)-dimensional normed vector space. A set of centers in that ball whose
pairwise distances exceed \(t>0\) has disjoint open balls of radius
\(t/2\). These balls lie in the ball of radius \(R+t/2\). Comparing
Lebesgue volumes in any linear coordinates bounds the number of centers by
\((1+2R/t)^m\). A greedy selection therefore terminates in a maximal such
set, which is a \(t\)-net with centers in the original ball. In particular,
the offset ball \(\{a\in\mathbb R^D:\|a\|\le1\}\) has a \(t\)-net of
size at most \((1+2/t)^D\).

For the projector factor, put \(M_{D,d}:=\binom Dd\). Given
\(\Pi\in\mathscr P_d\), choose \(U\in\mathbb R^{D\times d}\) with
\(U^\top U=I_d\) and \(\Pi=UU^\top\). For each size-\(d\) subset
\(J\subset[D]\), let \(U_J\) be the square matrix of rows indexed by
\(J\), in increasing order. Cauchy--Binet gives
\[
 \sum_{|J|=d}\det(U_J)^2=\det(U^\top U)=1.
\]
Thus some \(J\) satisfies \(|\det U_J|\ge M_{D,d}^{-1/2}\).
All singular values of \(U_J\) are at most one, so
\(\sigma_{\min}(U_J)\ge|\det U_J|\) and
\(\|U_J^{-1}\|_{\rm op}\le M_{D,d}^{1/2}\). Define
\[
 A:=U_{J^c}U_J^{-1}\in\mathbb R^{(D-d)\times d},
 \qquad
 R_{D,d}:=\sqrt{dM_{D,d}}.
\]
Since \(\|U_{J^c}\|_{\rm op}\le1\), we have
\(\|A\|_F\le\sqrt d\|A\|_{\rm op}\le R_{D,d}\).
Let \(S_J\) be the permutation matrix restoring the original coordinate
order from the order \((J,J^c)\), and set
\[
 X_A:=\begin{pmatrix}I_d\\A\end{pmatrix},\qquad
 H_A:=(I_d+A^\top A)^{-1},\qquad
 P_J(A):=S_JX_AH_AX_A^\top S_J^\top.
\]
The columns of \(U\) and \(S_JX_A\) have the same span, so
\(\Pi=P_J(A)\). For every matrix \(A\), \(P_J(A)\) is itself a
rank-\(d\) orthogonal projector.

On the Frobenius ball \(\|A\|_F,\|B\|_F\le R:=R_{D,d}\), put
\(S:=\sqrt{1+R^2}\). Then
\(\|X_A\|_{\rm op},\|X_B\|_{\rm op}\le S\),
\(\|H_A\|_{\rm op},\|H_B\|_{\rm op}\le1\), and the inverse identity gives
\[
 \|H_A-H_B\|_F
 =\|H_A(B^\top B-A^\top A)H_B\|_F
 \le2R\|A-B\|_F.
\]
Expanding \(X_AH_AX_A^\top-X_BH_BX_B^\top\) by changing one factor
at a time therefore yields
\[
 \|P_J(A)-P_J(B)\|_F
 \le\Lambda_{D,d}\|A-B\|_F,
 \qquad
 \Lambda_{D,d}:=2S+2RS^2.
\]
Apply the ball-covering bound to this \(d(D-d)\)-dimensional Frobenius
ball at radius \(t/\Lambda_{D,d}\), and map its centers through each
\(P_J\). Taking the union over the \(M_{D,d}\) coordinate choices gives
a \(t\)-net in \(\mathscr P_d\), with centers in \(\mathscr P_d\), of
size at most
\[
 M_{D,d}\left(1+\frac{2R_{D,d}\Lambda_{D,d}}{t}\right)^{d(D-d)}.
\]

For each \(\ell\ge2\), a symmetric multilinear map
\(T:(\mathbb R^D)^\ell\to\mathbb R^D\) is determined by its values on
unordered \(\ell\)-tuples of coordinate vectors. There are
\(\binom{D+\ell-1}{\ell}\) such tuples and \(D\) output coordinates,
so this vector space has dimension
\(m_\ell:=D\binom{D+\ell-1}{\ell}\).
The multilinear operator norm is a norm on this space: its vanishing
forces all these coordinate values to vanish. Its ball
\(\{T:\|T\|_{\rm op}\le2L_{\mathcal M}\}\) therefore has a
\(t\)-net, in that same norm and with centers in the ball, of size at most
\((1+4L_{\mathcal M}/t)^{m_\ell}\) by the preceding volume argument.

There are \(s+1\) factors: the offset, the projector, and \(s-1\)
tensor factors. Choose the factor nets above at radius
\(t:=\eta/(s+1)\) and take their Cartesian product. The sum metric gives
total approximation error at most \((s+1)t=\eta\), and all product
centers belong to \(\Theta_{d,s}\). Its cardinality is at most
\[
\begin{aligned}
 &
 (1+2/t)^D\,
 M_{D,d}\left(1+\frac{2R_{D,d}\Lambda_{D,d}}t\right)^{d(D-d)}\times
 \prod_{\ell=2}^s
 \left(1+\frac{4L_{\mathcal M}}t\right)^{m_\ell}.
\end{aligned}
\]
Substituting \(t=\eta/(s+1)\) and using \(\eta\le1\) proves
\eqref{eq:tangent-product-covering}. For \(s=1\), the tensor product is
empty and the same argument uses the two remaining factors. Fixing the
factor nets once for each prescribed radius makes the construction
deterministic.
\end{proof}

We fix the tangent grid and near-minimizer tolerance by
\[
 \eta_{{\rm grid},n}:=n^{-1},\qquad
 \Delta_{{\rm tan},n}:=n^{-3},\qquad
 \mathcal Q_{d,s,n}:=\mathcal Q_{d,s}(n^{-1}),
\]
where the deterministic product net is the one fixed in
Lemma~\ref{lem:tangent-product-net}. Consequently, every
\(q\in\Theta_{d,s}\) has \(q^\sharp\in\mathcal Q_{d,s,n}\) with
\(d_\Theta(q,q^\sharp)\le\eta_{{\rm grid},n}\), and
\begin{equation}\label{eq:N-grid}
 N_{{\rm grid},d,s,n}:=|\mathcal Q_{d,s,n}|
 \le Cn^{m_\Theta(d,s,D)}.
\end{equation}
For \(q\in\mathcal Q_{d,s,n}\), let \(\Pi_q\) denote its projector component.

\smallskip
Let \(c_{\rm Ber}\in(0,1]\) be the fixed Bernstein exponent used in the
concentration bounds below. Define
\[
    \bar\rho_h:=\frac{1}{2\alpha_{\min}+1},
    \qquad
    a_{{\rm tan},n}^{\rm or}
    :=
    \frac{2\alpha_{\max}\bar\rho_h+3}{c_{\rm Ber}}\log(en).
\]
Unless explicitly stated otherwise, Appendix~\ref{app:tangent-local-poly}
uses the proof-level tail parameter
\[
    a_{{\rm tan},n}:=a_{{\rm tan},n}^{\rm or}.
\]
This quantity is used only in concentration bounds.

Also recall
\[
    r_{L_{\mathcal M}}:=(4L_{\mathcal M})^{-1},
    \qquad
    R_{\rm loc}:=c_{\rm geo}(\tau_{0,n}\wedge r_{L_{\mathcal M}}\wedge1),
\]
where \(c_{\rm geo}\) is the constant fixed in
Appendix~\ref{app:feasible-local-formal}. By Lemma~\ref{lem:ass3-consequences}, uniformly over \(k\in[K_n]\), for all
sufficiently large \(n\),
\begin{equation}\label{eq:tlp-basic-scale-consequences}
    n\pi_{k,n}h_k^{d_k}\ge c n^{\rho_{\rm tan}},
    \qquad
    h_k\le R_{\rm loc},
    \qquad
    \sup_{\ell\in[K_n]}h_\ell+2\sigma_{\max,n}
    \le {\delta_{0,n}\over2},
\end{equation}
and
\begin{equation}\label{eq:tlp-rate-consequences}
    \sigma_{k,n}\le a_{\rm sc}h_k^{\alpha_k\vee1},
    \qquad
    \{n\pi_{k,n}h_k^{d_k}\}^{-1}=h_k^{2\alpha_k}.
\end{equation}
We also use, when converting polynomial or exponential remainders into powers
of \(h_k\), the bounds
\[
    n^{-\bar\rho_h}\le h_k\le r_n\le n^{-\rho_h}.
\]

\paragraph{Local notation after conditioning on the query.}
Whenever we condition on
\[
    (Z_{n+1},X^\star_{n+1},\xi_{n+1})=(k,x^\star,\xi_{\rm qry}),
\]
we write
\(\mathbb P_{k,x^\star,\xi_{\rm qry}}\),
\(\mathbb E_{k,x^\star,\xi_{\rm qry}}\),
and use the query-level notation
\begin{equation*}
x:=x^\star+\xi_{\rm qry},\quad
d_x:=d_k,\quad
p_x:=p_k,\quad
s_x:=s_k,\quad
h_x:=h_k,\quad
\Pi^\star:=\Pi_{T_{x^\star}\mathcal M_k}.
\end{equation*}
Let \(U_\star\in\mathbb R^{D\times d_x}\) be a fixed measurable orthonormal
basis of \(T_{x^\star}\mathcal M_k\), so that
\[
    U_\star^\top U_\star=I_{d_x},
    \qquad
    U_\star U_\star^\top=\Pi^\star.
\]
Let \(N_{x^\star}\) be the tangent-normal graph map at \(x^\star\), and define
its coordinate version by
\[
    G_{x^\star}(u):=N_{x^\star}(U_\star u),
    \qquad u\in B_{\mathbb R^{d_x}}(0,r_{L_{\mathcal M}}).
\]
Put $N_{{\rm grid},k,n}:=N_{{\rm grid},d_k,s_k,n}$. For \(i\in[n]\), define
\[
    z_i:=\frac{X_i-x}{h_x},
    \qquad
    \mathsf A_i:=\mathsf A\!\left({\|X_i-x\|_1\over h_x}\right).
\]
Also define
\begin{equation}\label{eq:tangent-error-scales}
b_{\rm tan}:=h_x^{s_x}+{\sigma_{k,n}\over h_x},
\qquad
\zeta_{\rm tan}:=
\left({\log N_{{\rm grid},k,n}+a_{{\rm tan},n}
\over n\pi_{k,n}h_x^{d_x}}\right)^{1/2},
\qquad
\epsilon_{\rm tan}:=b_{\rm tan}+\zeta_{\rm tan}.
\end{equation}
The symbol \(\zeta_{\rm tan}\) is used for the stochastic tangent fluctuation
scale, to avoid confusion with the integer polynomial degree \(s_x\).

\medskip
For \(q\in\Theta_{d_x,s_x}\), define the population and empirical tangent losses
\begin{equation*}
L(q;x):=
\mathbb E_{k,x^\star,\xi_{\rm qry}}
\left[
 h_x^2\mathsf A_i\|R_q(z_i)\|^2
\right],
\qquad
L_n(q;x):={h_x^2\over n}\sum_{i\in[n]}
\mathsf A_i\|R_q(z_i)\|^2.
\end{equation*}
For \(q\in\mathcal Q_{d_x,s_x,n}\), set
\[
    L_q(x):=L_n(q;x),
    \qquad
    L_{\min}(x):=\min_{q\in\mathcal Q_{d_x,s_x,n}}L_q(x),
\]
\[
    \omega_q(x)
    :=
    {\{\Delta_{{\rm tan},n}-(L_q(x)-L_{\min}(x))\}_+
    \over
    \sum_{q'\in\mathcal Q_{d_x,s_x,n}}
    \{\Delta_{{\rm tan},n}-(L_{q'}(x)-L_{\min}(x))\}_+}.
\]
The denominator is at least \(\Delta_{{\rm tan},n}\), because any minimizer of
\(L_q(x)\) over the finite grid contributes exactly \(\Delta_{{\rm tan},n}\). The
corresponding averaged projector and tangent estimator are
\[
    \bar P_x:=\sum_{q\in\mathcal Q_{d_x,s_x,n}}\omega_q(x)\Pi_q,
    \qquad
    \widehat\Pi_x:=\Pi_{(d_x)}(\bar P_x).
\]
For a rank-\(d_x\) projector \(\Pi\), define the regression coordinates,
weights, features, and Gram matrix by
\begin{equation*}
v_i^\Pi:=\Pi z_i,
\quad
W_i^\Pi:=h_x^{-d_x}\mathsf A_i\mathsf K(v_i^\Pi),
\quad
\phi_i^\Pi:=\Phi_{D,p_x}(v_i^\Pi),
\quad
G^\Pi:=\frac{1}{n}\sum_{i\in[n]}W_i^\Pi\phi_i^\Pi(\phi_i^\Pi)^\top.
\end{equation*}
Write
\[
    v_i^\circ:=v_i^{\Pi^\star},
    \qquad
    W_i^\circ:=W_i^{\Pi^\star},
    \qquad
    e_0:=e_{0,D,p_x}.
\]
When an orthonormal basis \(U\) of a rank-\(d\) subspace is used and
\(v=Uu\), the kernel remains the ambient kernel \(\mathsf K(v)\). Thus, for
example, \(\mathsf K(v_i^\circ)=\mathsf K(U_\star U_\star^\top z_i)\).

We use the generic-constant convention in Section~\ref{subsec:notations}.
Any additional dependence on fixed local parameters is stated in the
auxiliary lemmas below.

\subsection{Auxiliary lemmas}

We use the following convention. A probability kernel
\(\{\Lambda_t:t\in T\}\) from a measurable space \(T\) to \(\mathbb R^m\)
means that \(\Lambda_t\) is a probability measure on
\(\mathbb R^m\) for every \(t\in T\), and that
\(t\mapsto \Lambda_t(A)\) is measurable for every Borel set
\(A\subset\mathbb R^m\). If \(B\subset\mathbb R^m\) is Borel and we say that
the kernel is supported on \(B\), we mean \(\Lambda_t(B)=1\) for every
\(t\in T\).
Conditional laws are chosen in versions supported on their stipulated
closed perturbation balls for every conditioning value: an arbitrary version
may be replaced by a point mass at zero on the measurable null set where the
support restriction fails. Integrals against the design law are unchanged.
Pushforward kernels below inherit the stated pointwise support bounds.

\begin{lemma}
\label{lem:poly-feature-facts}
Uniformly over \(d\in[d_{\max}]\) and
\(p\in\{0,\ldots,p_{\max}\}\), the following hold.

\begin{enumerate}
\item[(i)] If \(U\in\mathbb R^{D\times d}\) has orthonormal columns, then there
exists \(B_{U,p}\in\mathbb R^{q_{D,p}\times q_{d,p}}\) such that
\[
\Phi_{D,p}(Uu)=B_{U,p}\Phi_{d,p}(u),
\qquad
 e_{0,D,p}=B_{U,p}e_{0,d,p}.
\]
Moreover, all singular values of \(B_{U,p}\) are bounded above and
below away from zero by fixed constants. Let
\[
    \psi_i:=\Phi_{d,p}(u_i),
    \qquad
    H:=\frac{1}{n}\sum_{i\in[n]}W_i\psi_i\psi_i^\top .
\]
If \(H\) satisfies \(c\pi I\preceq H\preceq c^{-1}\pi I\) for some
\(\pi>0\) and fixed \(c\in(0,1)\), and if
\(\phi_i=\Phi_{D,p}(Uu_i)\), then
\[
G:=\frac{1}{n}\sum_{i\in[n]}W_i\phi_i\phi_i^\top=B_{U,p}HB_{U,p}^\top
\]
satisfies
\[
\lambda_{\min}^+(G)\ge c'\pi,
\qquad
\|G\|_{\rm op}\le {1\over c'}\pi,
\qquad
 e_{0,D,p}\in\operatorname{Im}(G),
\qquad
 e_{0,D,p}^\top G^\dagger G=e_{0,D,p}^\top
\]
for some \(c'\in(0,1)\) depending only on \(c,D,d_{\max},p_{\max}\).

\item[(ii)] Fix \(R>0\). There exist constants
\(c,C,\eta_{\rm poly}>0\), depending only on
\(R,d_{\max},p_{\max}\), such that for every measurable map
\(e:\overline{B_{\mathbb R^d}(0,R)}\to\mathbb R^d\) satisfying
\[
    \|e\|_\infty
    :=
    \sup_{\|t\|\le R}\|e(t)\|
    \le\eta_{\rm poly},
\]
and every \(a\in\mathbb R^{q_{d,p}}\),
\[
c\|a\|^2
\le
\int_{\overline{B_{\mathbb R^d}(0,R)}}
\{a^\top\Phi_{d,p}(t+e(t))\}^2dt
\le
C\|a\|^2 .
\]
The same bounds hold in the following averaged form. Let
\[
    \Lambda:
    \overline{B_{\mathbb R^d}(0,R)}
    \times \mathcal B(\mathbb R^d)
    \to [0,1]
\]
be a probability kernel satisfying
\[
    \Lambda_t\!\left(
    \overline{B_{\mathbb R^d}(0,\eta_{\rm poly})}
    \right)=1
    \qquad
    \text{for every }t\in \overline{B_{\mathbb R^d}(0,R)}.
\]
Then, for every \(a\in\mathbb R^{q_{d,p}}\),
\[
c\|a\|^2
\le
\int_{\overline{B_{\mathbb R^d}(0,R)}}\int_{\mathbb R^d}
\{a^\top\Phi_{d,p}(t+u)\}^2\,d\Lambda_t(u)\,dt
\le
C\|a\|^2 .
\]

\item[(iii)] Fix \(c>0\). If \(U\in\mathbb R^{D\times d}\) has orthonormal columns,
\(m\in\mathcal H^\alpha(B_{\mathbb R^d}(0,ch);L_{\mathcal F})\),
\(p=\lceil\alpha\rceil-1\), and \(0<h\le1\), identify \(m\) with its unique
continuous extension to \(\overline{B_{\mathbb R^d}(0,ch)}\). Then there is
\(w_h\in\mathbb R^{q_{D,p}}\) such that, for \(\|u\|\le ch\),
\[
|m(u)-\Phi_{D,p}(Uu/h)^\top w_h|\le Ch^\alpha,
\qquad
 e_{0,D,p}^\top w_h=m(0).
\]
Moreover, for every fixed \(R_0<\infty\),
\(z\mapsto\Phi_{D,p}(z)^\top w_h\) is \(Ch\)-Lipschitz on
\(B_{\mathbb R^D}(0,R_0)\). The constants in this part may additionally depend
on \(c\), and the Lipschitz constant may also depend on \(R_0\).
\end{enumerate}
\end{lemma}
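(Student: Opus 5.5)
For part (i), I would work degree by degree. For each multi-index $\nu\in\mathcal I_{D,p}$, the function $u\mapsto (Uu)^\nu/\nu!$ is a homogeneous polynomial of degree $|\nu|$ in $u\in\mathbb R^d$. It is therefore a unique linear combination of $u^\mu/\mu!$ with $|\mu|=|\nu|$. This defines a block-diagonal (by degree) matrix $B_{U,p}$ with $\Phi_{D,p}(Uu)=B_{U,p}\Phi_{d,p}(u)$. The degree-zero block is the scalar $1$, so $e_{0,D,p}=B_{U,p}e_{0,d,p}$.

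For the singular values, I would construct a left inverse in the same way. Since $U^\top U=I_d$, we have $\Phi_{d,p}(U^\top z)=C_{U,p}\Phi_{D,p}(z)$ for a matrix $C_{U,p}$ whose entries are polynomials in the entries of $U$. Substituting $z=Uu$ gives $\Phi_{d,p}(u)=C_{U,p}B_{U,p}\Phi_{d,p}(u)$ for all $u$. Linear independence of monomials then forces $C_{U,p}B_{U,p}=I$. The entries of $B_{U,p}$ and $C_{U,p}$ depend continuously on $U$, which ranges over a compact Stiefel manifold, and there are only finitely many $(d,p)$. Hence $\|B_{U,p}\|_{\rm op}$ and $\|C_{U,p}\|_{\rm op}$ are uniformly bounded, and $\sigma_{\min}(B_{U,p})\ge\|C_{U,p}\|_{\rm op}^{-1}$.

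Next, $G=B_{U,p}HB_{U,p}^\top$ is immediate from $\phi_i=B_{U,p}\psi_i$. Because $H$ is positive definite, $\operatorname{Im}(G)=\operatorname{Im}(B_{U,p})$. For $v\in\operatorname{Im}(B_{U,p})$,
\[
v^\top Gv\ge c\pi\|B_{U,p}^\top v\|^2\ge c\pi\,\sigma_{\min}(B_{U,p})^2\|v\|^2,
\]
which gives the bound on $\lambda^+_{\min}(G)$. The upper bound $\|G\|_{\rm op}\le c'^{-1}\pi$ follows similarly from $\|B_{U,p}\|_{\rm op}$. Finally, $e_{0,D,p}=B_{U,p}e_{0,d,p}\in\operatorname{Im}(G)$, and $G^\dagger G$ is the orthogonal projector onto $\operatorname{Im}(G)$. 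This yields $e_0^\top G^\dagger G=e_0^\top$.

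For part (ii), I would first treat the unperturbed Gram matrix
\[
M=\int_{\|t\|\le R}\Phi_{d,p}(t)\Phi_{d,p}(t)^\top\,dt .
\]
It is positive definite, because a polynomial vanishing on a set of positive measure is identically zero. This gives $c_0\|a\|^2\le a^\top Ma\le C_0\|a\|^2$, uniformly over the finitely many $(d,p)$.

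For the perturbation, $\Phi_{d,p}$ is $L_R$-Lipschitz on $\overline{B(0,R+1)}$. Hence, for $\|e(t)\|\le\eta_{\rm poly}\le1$,
\[
|a^\top\Phi_{d,p}(t+e(t))-a^\top\Phi_{d,p}(t)|\le L_R\eta_{\rm poly}\|a\| .
\]
Minkowski's inequality in $L^2(dt)$ then gives
\[
\bigl\|a^\top\Phi_{d,p}(\cdot+e)\bigr\|_{L^2}\ge\bigl(\sqrt{c_0}-L_R\eta_{\rm poly}\operatorname{vol}(B_R)^{1/2}\bigr)\|a\|.
\]
Choosing $\eta_{\rm poly}$ small makes this constant positive, and the upper bound follows the same way. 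The averaged form is identical: apply Minkowski in $L^2(dt\,d\Lambda_t)$, using the pointwise bound $\|u\|\le\eta_{\rm poly}$ on the support of $\Lambda_t$ and the fact that each $\Lambda_t$ is a probability measure.

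For part (iii), I would take the degree-$p$ Taylor polynomial of $m$ at $0$. Because the ball is convex and $D^pm$ is $\gamma_\alpha$-Hölder with constant at most $L_{\mathcal F}$, the remainder is at most $C|u|^\alpha\le Ch^\alpha$ for $\|u\|\le ch$. Rescaling, the Taylor polynomial equals
\[
\sum_{|\mu|\le p}\frac{a_{h,\mu}(u/h)^\mu}{\mu!},\qquad a_{h,\mu}:=h^{|\mu|}D^\mu m(0),
\]
that is, $a_h^\top\Phi_{d,p}(u/h)$. Since $u/h=U^\top(Uu/h)$, the identity from part (i) gives $\Phi_{d,p}(u/h)=C_{U,p}\Phi_{D,p}(Uu/h)$. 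So I set $w_h:=C_{U,p}^\top a_h$.

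The degree-block structure of $C_{U,p}$ gives $e_0^\top w_h=a_{h,0}=m(0)$. For the Lipschitz claim, note that
\[
z\mapsto\Phi_{D,p}(z)^\top w_h=\sum_{1\le|\mu|\le p}\frac{a_{h,\mu}(U^\top z)^\mu}{\mu!}+m(0).
\]
Each nonconstant coefficient satisfies $|a_{h,\mu}|\le L_{\mathcal F}h^{|\mu|}\le L_{\mathcal F}h$ for $h\le1$, and the monomial gradients are bounded on $B(0,R_0)$. This gives a $Ch$-Lipschitz bound, and it is trivially true when $p=0$.

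The main obstacle I anticipate is only the uniformity of constants in part (i). I would resolve it through compactness of the Stiefel manifold together with the explicit left inverse $C_{U,p}$.
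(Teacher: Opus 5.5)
Your proposal is correct and follows essentially the same route as the paper's proof: for (i), the degree-wise restriction matrix $B_{U,p}$, its left inverse $C_{U,p}$, and compactness of the Stiefel manifold; for (ii), positive definiteness of the unperturbed Gram matrix followed by a Lipschitz perturbation; for (iii), the rescaled Taylor polynomial $z\mapsto T_m(hU^\top z)$. The remaining differences are cosmetic: you bound $\sigma_{\min}(B_{U,p})$ below by $\|C_{U,p}\|_{\rm op}^{-1}$ rather than appealing to continuity and full rank, and you apply Minkowski's inequality to the functions rather than perturbing the Gram matrix in operator norm.
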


\begin{proof}
For part \((i)\), restricting each ambient monomial \(z^\nu/\nu!\) to the
subspace \(z=Uu\) gives a polynomial in \(u\) of degree at most \(p\). Collecting
its coefficients in the intrinsic monomial basis defines \(B_{U,p}\) and gives
\(\Phi_{D,p}(Uu)=B_{U,p}\Phi_{d,p}(u)\). Also note that
\(e_{0,D,p}=B_{U,p}e_{0,d,p}\).

The map \(U\mapsto B_{U,p}\) is continuous on the compact Stiefel manifold
\(\{A\in\mathbb R^{D\times d}:A^\top A=I_d\}\). Conversely, since
\(u=U^\top(Uu)\), each intrinsic coordinate \(u_j\) is a linear function of the
ambient coordinate \(z=Uu\). Hence every intrinsic monomial \(u^\mu/\mu!\) can be
written as a linear combination of ambient monomials \(z^\nu/\nu!\),
\(|\nu|\le p\), restricted to \(z=Uu\). Thus there exists \(C_{U,p}\) such that
\[
    \Phi_{d,p}(u)=C_{U,p}\Phi_{D,p}(Uu).
\]
Consequently \(C_{U,p}B_{U,p}=I_{q_{d,p}}\), so \(B_{U,p}\) has full column
rank. Since only finitely many \((d,p)\) are allowed, and by the compactness of Stiefel manifolds, all singular values of
\(B_{U,p}\) are uniformly bounded above and below.

Now let \(\phi_i=B_{U,p}\psi_i\), so
\[
G=B_{U,p}HB_{U,p}^\top .
\]
The bounds
\(\lambda_{\min}^+(G)\ge c'\pi\) and
\(\|G\|_{\rm op}\le {1\over c'}\pi\) directly follow from the spectral bounds
on \(H\) and the uniform singular-value bounds for \(B_{U,p}\). 

 Since \(H\) is
positive definite, \(\operatorname{Im}(G)=\operatorname{Im}(B_{U,p})\), and the
identity \(e_{0,D,p}=B_{U,p}e_{0,d,p}\) gives
\(e_{0,D,p}\in\operatorname{Im}(G)\). For a symmetric positive
semidefinite matrix \(G\), \(GG^\dagger\) is the orthogonal projector onto
\(\operatorname{Im}(G)\), so
\(e_{0,D,p}^\top G^\dagger G=e_{0,D,p}^\top\). By the symmetric property of the Moore-Penrose inverse, we obtain the desired result.

For part \((ii)\), fix \((d,p)\) and set
\[
G_{0,d,p}:=
\int_{B_{\mathbb R^d}(0,R)}
\Phi_{d,p}(t)\Phi_{d,p}(t)^\top\,dt .
\]
For \(a\ne0\), \(t\mapsto a^\top\Phi_{d,p}(t)\) is a nonzero polynomial and
cannot vanish on an open ball. Hence \(G_{0,d,p}\) is positive definite. Since
\((d,p)\) ranges over a finite set, its smallest eigenvalue is uniformly bounded
below, and its largest eigenvalue is uniformly bounded above.

If \(\|e\|_\infty\le1\), then for \(t \in B_{\mathbb R^d}(0,R)\), \(t+e(t)\in B_{\mathbb R^d}(0,R+1)\). On this fixed
ball, the entries of \(\Phi_{d,p}\Phi_{d,p}^\top\) are uniformly Lipschitz over
the allowed finite collection of \((d,p)\). Thus
\[
\left\|
\int \Phi_{d,p}(t+e(t))\Phi_{d,p}(t+e(t))^\top dt
-G_{0,d,p}
\right\|_{\rm op}
\le C\|e\|_\infty .
\]
Choosing \(\eta_{\rm poly}\) small enough gives the lower bound; the upper bound
follows from boundedness of the dictionary on \(B_{\mathbb R^d}(0,R+1)\).

For the averaged assertion, the same perturbation estimate gives
\[
\begin{aligned}
&\left\|
\int_{\overline{B_{\mathbb R^d}(0,R)}}\int_{\mathbb R^d}
\Bigl\{
\Phi_{d,p}(t+u)\Phi_{d,p}(t+u)^\top
-
\Phi_{d,p}(t)\Phi_{d,p}(t)^\top
\Bigr\}
\,d\Lambda_t(u)\,dt
\right\|_{\rm op}  \\
&\qquad\le C\eta_{\rm poly},
\end{aligned}
\]
because \(\Lambda_t\) is supported on
\(\overline{B_{\mathbb R^d}(0,\eta_{\rm poly})}\). After decreasing
\(\eta_{\rm poly}\) if necessary, the same lower and upper eigenvalue bounds
follow.

For part \((iii)\), the H\"older bound makes \(m\) uniformly continuous on
its open ball, giving the asserted unique continuous extension. Let \(T_m\)
be the Taylor polynomial of \(m\) at zero of degree \(p=\lceil\alpha\rceil-1\).
The H\"older remainder bound, extended to the boundary by continuity, gives
\[
|m(u)-T_m(u)|\le Ch^\alpha,
\qquad \|u\|\le ch.
\]
Define the ambient polynomial \(P_h(z):=T_m(hU^\top z)\). Then
\(P_h(z)=\Phi_{D,p}(z)^\top w_h\) for some \(w_h\). For \(z=Uu/h\),
\[
    \Phi_{D,p}(Uu/h)^\top w_h=T_m(u).
\]
Also \(e_{0,D,p}^\top w_h=P_h(0)=m(0)\). Finally, on any fixed ball
\(\|z\|\le R_0\), each nonconstant term in \(P_h\) contains a factor
\(h^{|\mu|}\) with \(|\mu|\ge1\), and the derivatives of \(m\) are bounded by
\(L_{\mathcal F}\). Hence \(\|\nabla_zP_h(z)\|\lesssim h\), with the constant depending on \(R_0\), proving the
Lipschitz claim.
\end{proof}

The next lemma identifies the tangent projector from a polynomial graph loss,
even after a small deterministic shift and a small random perturbation.

\begin{lemma}[Projector identifiability]
\label{lem:projector-identifiability}
Fix \(R>0\) and \(B_\Gamma>0\). There are constants
\(\eta_{\rm id},C>0\), depending on
\(D,d_{\max},s_{\max},R,B_\Gamma,L_{\mathcal M}\), such that the following holds uniformly
over \(d\in[d_{\max}]\) and \(s\in[s_{\max}]\). Let
\(\Pi^\circ=U_\circ U_\circ^\top\), where
\(U_\circ\in\mathbb R^{D\times d}\) has orthonormal columns, and let
\[
\Gamma_0(t):=U_\circ t+\sum_{\ell=2}^sV_\ell\{t^{\otimes \ell}\},
\qquad
\|V_\ell\|_{\rm op}\le B_\Gamma ,
\]
with the sum interpreted as absent when \(s=1\). For
\(\eta\in[0,\eta_{\rm id}]\) and \(\|b\|\le\eta\), let
\[
    \Lambda:
    B_{\mathbb R^d}(0,R)\times\mathcal B(\mathbb R^D)\to[0,1]
\]
be a probability kernel such that
\[
    \Lambda_t\bigl(\{e\in\mathbb R^D:\|e\|\le\eta\}\bigr)=1
    \qquad
    \text{for every }t\in B_{\mathbb R^d}(0,R).
\]
Then, for all \(q,q^\circ\in\Theta_{d,s}\) whose \(q^\circ\)-projector is
\(\Pi^\circ\),
\[
\begin{aligned}
 \|\Pi-\Pi^\circ\|_{\rm op}
 &\le C\{\mathcal D(q,q^\circ)+\eta\},\\
 \mathcal D(q,q^\circ)^2
 &:=\int_{B_{\mathbb R^d}(0,R)}\!\!\int_{\mathbb R^D}
 \|R_q(\Gamma_0(t)+b+e)-R_{q^\circ}(\Gamma_0(t)+b+e)\|^2
 \,d\Lambda_t(e)\,dt.
\end{aligned}
\]
where \(\Pi\) denotes the projector component of \(q\).
\end{lemma}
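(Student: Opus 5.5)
The plan is to expand the residual difference as an explicit vector-valued polynomial in the chart variable $t$ and read off its linear coefficient. That coefficient is exactly $(I-\Pi)U_\circ$, and it controls $\|\Pi-\Pi^\circ\|_{\rm op}$. The perturbations $b,e$ are treated as an $O(\eta)$ Lipschitz error on a fixed bounded set.

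\emph{Step 1 (unperturbed difference).} For $q=(a,\Pi,T_2,\ldots,T_s)$ and $q^\circ=(a^\circ,\Pi^\circ,T_2^\circ,\ldots,T_s^\circ)$, set $\Delta(z):=R_q(z)-R_{q^\circ}(z)$. Then
\[
\Delta(z)=(a-a^\circ)+(\Pi^\circ-\Pi)z-\sum_{\ell=2}^s\bigl[T_\ell\{(\Pi z)^{\otimes\ell}\}-T_\ell^\circ\{(\Pi^\circ z)^{\otimes\ell}\}\bigr].
\]
Substituting $z=\Gamma_0(t)$ gives a polynomial $p_{q}(t):=\Delta(\Gamma_0(t))$ in $t\in\mathbb R^d$. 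Its degree is at most $s^2$, and its coefficients are bounded by a constant depending on $D,s_{\max},B_\Gamma,L_{\mathcal M}$, because $\|a\|\le1$, $\|\Pi\|_{\rm op}\le1$ and $\|T_\ell\|_{\rm op}\le 2L_{\mathcal M}$. The tensor terms have degree at least $2$ in $\Gamma_0(t)$, and $\Gamma_0(t)=O(\|t\|)$ with no constant term, so they contribute nothing linear in $t$. The part of $(\Pi^\circ-\Pi)\Gamma_0(t)$ that is linear in $t$ is $(\Pi^\circ-\Pi)U_\circ t=(I-\Pi)U_\circ t$, using $\Pi^\circ U_\circ=U_\circ$. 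Hence the linear coefficient of $p_q$ is the matrix $(I-\Pi)U_\circ$.

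\emph{Step 2 (coefficient extraction).} Consider the space $\mathcal V$ of $\mathbb R^D$-valued polynomials on $\mathbb R^d$ of degree at most $s_{\max}^2$; it is finite-dimensional. Every nonzero element is nonzero in $L^2(B_{\mathbb R^d}(0,R))$, as in Lemma~\ref{lem:poly-feature-facts}(ii). So the $L^2(B(0,R))$ norm and any coefficient norm are equivalent on $\mathcal V$. The constants are uniform because only finitely many $(d,s)$ occur. Therefore
\[
\|(I-\Pi)U_\circ\|_F\le C\|p_q\|_{L^2(B(0,R))}.
\]
For rank-$d$ projectors,
\[
\|\Pi-\Pi^\circ\|_{\rm op}=\|(I-\Pi)\Pi^\circ\|_{\rm op}=\|(I-\Pi)U_\circ\|_{\rm op}
\]
(the sine of the largest principal angle). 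This gives $\|\Pi-\Pi^\circ\|_{\rm op}\le C\|p_q\|_{L^2}$.

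\emph{Step 3 (perturbations).} For $\|t\|\le R$ and $\|w\|\le 2\eta\le 2\eta_{\rm id}\le 1$, the point $\Gamma_0(t)+w$ lies in a fixed ball. On that ball $\Delta$ is a polynomial with bounded coefficients, hence Lipschitz with a uniform constant. Thus $\|\Delta(\Gamma_0(t)+b+e)-p_q(t)\|\le C\eta$ for $\Lambda_t$-a.e.\ $e$. Minkowski's inequality in $L^2(\Lambda_t(de)\,dt)$ then yields
\[
\|p_q\|_{L^2}\le\mathcal D(q,q^\circ)+C\eta\,\operatorname{vol}(B(0,R))^{1/2}.
\]
Combining with Step 2 proves the claim.

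The main obstacle is Step 2: the linear coefficient must be cleanly isolated and the norm-equivalence constant must be uniform. Isolation relies on the tensor terms vanishing to second order and on $\Gamma_0$ having no constant term. Uniformity follows because the polynomial space has a dimension fixed by $D,d_{\max},s_{\max}$, independent of $q$. Measurability of the kernel integral is routine since $\Delta$ is continuous.
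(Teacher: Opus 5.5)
Your proposal is correct and follows the paper's proof essentially step for step. You isolate the linear coefficient $(\Pi^\circ-\Pi)U_\circ=(I-\Pi)U_\circ$ of $R_q(\Gamma_0(t))-R_{q^\circ}(\Gamma_0(t))$, bound it by the $L^2(B_{\mathbb R^d}(0,R))$ norm through norm equivalence on a fixed finite-dimensional polynomial space, convert to $\|\Pi-\Pi^\circ\|_{\rm op}$ by the principal-angle identity, and absorb $b,e$ with a uniform Lipschitz bound and Minkowski's inequality, exactly as the paper does (extracting the coefficient in Frobenius rather than operator norm is only a cosmetic difference).
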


\begin{proof}
Write
\[
F_0(t):=R_q(\Gamma_0(t))-R_{q^\circ}(\Gamma_0(t)),
\qquad t\in B_{\mathbb R^d}(0,R).
\]
The map \(F_0\) is a \(\mathbb R^D\)-valued polynomial in \(t\). Its degree is
bounded by \(s_{\max}^2\). Hence all such maps \(F_0\) belong to a fixed
finite-dimensional polynomial space.

Coefficient extraction is a bounded linear map on this fixed-dimensional
polynomial space. Consequently,
\begin{equation}\label{eq:projector-id-coeff-bound}
\|D_0F_0\|_{\rm op}
\le
C
\left\{
\int_{B_{\mathbb R^d}(0,R)}\|F_0(t)\|^2\,dt
\right\}^{1/2}.
\end{equation}
Since \(\Gamma_0(t)=U_\circ t+O(\|t\|^2)\), the terms
\(T_\ell\{(\Pi z)^{\otimes\ell}\}\), \(\ell\ge2\), do not contribute to the
linear coefficient. Thus the degree-one coefficient of \(R_q(\Gamma_0(t))\) is
\((I-\Pi)U_\circ\), while that of \(R_{q^\circ}(\Gamma_0(t))\) is
\((I-\Pi^\circ)U_\circ=0\). Therefore
\[
D_0F_0=(I-\Pi)U_\circ=(\Pi^\circ-\Pi)U_\circ.
\]
For rank-\(d\) orthogonal projectors, the principal-angle identity gives
\[
\|(\Pi^\circ-\Pi)U_\circ\|_{\rm op}=\|\Pi-\Pi^\circ\|_{\rm op}.
\]
Combining this with \eqref{eq:projector-id-coeff-bound} yields
\begin{equation}\label{eq:projector-id-unperturbed}
\|\Pi-\Pi^\circ\|_{\rm op}
\le
C
\left\{
\int_{B_{\mathbb R^d}(0,R)}\|F_0(t)\|^2\,dt
\right\}^{1/2}.
\end{equation}

Define
\[
F_{b,e}(t):=
R_q(\Gamma_0(t)+b+e)-R_{q^\circ}(\Gamma_0(t)+b+e).
\]
For all \(q,q^\circ\in\Theta_{d,s}\), all \(\|t\|\le R\), and all
\(\|b\|,\|e\|\le\eta_{\rm id}\),
\begin{equation}\label{eq:projector-id-lipschitz}
\|F_{b,e}(t)-F_0(t)\|
\le C(\|b\|+\|e\|).
\end{equation}
Indeed, the relevant arguments \(\Gamma_0(t)+u\) remain in a fixed Euclidean
ball depending only on \(R,B_\Gamma,s_{\max}\), and the coefficients of
\(q,q^\circ\) range over the compact parameter set \(\Theta_{d,s}\). The
Jacobian in \(z\) of \(R_q(z)-R_{q^\circ}(z)\) is therefore uniformly bounded on
this ball.

Using \eqref{eq:projector-id-lipschitz}, Minkowski's inequality, and the fact
that each \(\Lambda_t\) is a probability measure, we get
\[
\left\{
\int_{B_{\mathbb R^d}(0,R)}\|F_0(t)\|^2dt
\right\}^{1/2}
\le
\left\{
\int_{B_{\mathbb R^d}(0,R)}\!\!\int_{\mathbb R^D}
\|F_{b,e}(t)\|^2d\Lambda_t(e)dt
\right\}^{1/2}
+C\eta .
\]
Substitution into \eqref{eq:projector-id-unperturbed} proves the lemma.
\end{proof}

\subsection{Tangent projector estimation}

In this paper, we suppose $n$ is sufficiently large and use the results in Lemma~\ref{lem:ass3-consequences}: the tangent-projection argument uses the window conditions
\begin{equation}
\label{eq:tangent-window-conditions}
h_x\le1 \wedge  R_{\rm loc},
\quad
h_x+\sigma_{k,n}+\sigma_{\max,n}<\delta_{0,n},
\quad
{\sigma_{k,n}\over h_x}
\le \frac{1}{2\sqrt D}
\le  {1\over2}
\end{equation}
and the grid conditions
\begin{equation}\label{eq:tangent-grid-conditions}
\log N_{{\rm grid},k,n}+a_{{\rm tan},n}\le cn\pi_{k,n}h_x^{d_x},
\quad
\epsilon_{\rm tan}\le c,
\quad
\eta_{{\rm grid},n}\le c\epsilon_{\rm tan}^2,
\quad
\Delta_{{\rm tan},n}\le c\pi_{k,n}h_x^{d_x+2}\epsilon_{\rm tan}^2 .
\end{equation}
Here \(N_{{\rm grid},k,n}\) is defined in \eqref{eq:N-grid},
\(\epsilon_{\rm tan}\) is defined in \eqref{eq:tangent-error-scales}, and
\(c>0\) is a fixed small constant. These conditions are verified uniformly over
components in the proof of the oracle upper bound below.

\begin{lemma}
\label{lem:tlp-local-purity}
For sufficiently large $n$, if \(\|X_i-x\|_1\le h_x\), then
\(Z_i=k\). Moreover, on this event,
\[
    \|X_i^\star-x^\star\|\le h_x+2\sigma_{k,n}\le 2R_{\rm loc}.
\]
\end{lemma}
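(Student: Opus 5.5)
The argument is a direct triangle-inequality computation using the window and separation consequences of Lemma~\ref{lem:ass3-consequences}, recorded in \eqref{eq:tlp-basic-scale-consequences} and \eqref{eq:tangent-window-conditions}. Fix the conditioned query $(k,x^\star,\xi_{\rm qry})$, so $x=x^\star+\xi_{\rm qry}$ with $\|\xi_{\rm qry}\|\le\sigma_{k,n}$. Suppose $\|X_i-x\|_1\le h_x$; then also $\|X_i-x\|\le\|X_i-x\|_1\le h_x$. Write $X_i=X_i^\star+\xi_i$ with $Z_i=\ell$. By the support convention for conditional laws, $\|\xi_i\|\le\sigma_{\ell,n}\le\sigma_{\max,n}$ holds for every realization, not only almost surely.

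\emph{Purity.} Suppose $\ell\neq k$. Then $X_i^\star\in\mathcal M_\ell$ and $x^\star\in\mathcal M_k$, so Assumption~\ref{ass:geometry} gives $\|X_i^\star-x^\star\|\ge\delta_{0,n}$. The triangle inequality, however, gives
\[
\|X_i^\star-x^\star\|\le\|X_i-x\|+\|\xi_i\|+\|\xi_{\rm qry}\|\le h_x+\sigma_{\max,n}+\sigma_{k,n}.
\]
By \eqref{eq:tlp-basic-scale-consequences} we have $h_x+2\sigma_{\max,n}\le r_n+2\sigma_{\max,n}\le\delta_{0,n}/2<\delta_{0,n}$, which contradicts the previous bound. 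Hence $Z_i=k$.

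\emph{Distance bound.} Now that $\ell=k$, the same triangle inequality improves to $\|X_i^\star-x^\star\|\le h_x+2\sigma_{k,n}$. Lemma~\ref{lem:ass3-consequences} gives $2\sigma_{k,n}\le 2a_{\rm sc}h_x\le h_x$ and $h_x\le R_{\rm loc}$. Therefore $h_x+2\sigma_{k,n}\le 2h_x\le 2R_{\rm loc}$.

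The only point needing care is that the perturbation bounds must hold pointwise. This is exactly the role of the version convention stated before Lemma~\ref{lem:poly-feature-facts}. Everything else follows from the displayed scale consequences, valid for all sufficiently large $n$ uniformly in $k$.
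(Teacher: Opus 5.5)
Your proof is correct and follows the paper's argument: for $\ell\ne k$ the triangle inequality gives $\|X_i^\star-x^\star\|\le h_x+\sigma_{\max,n}+\sigma_{k,n}<\delta_{0,n}$, contradicting the separation, and then $h_x+2\sigma_{k,n}\le 2h_x\le 2R_{\rm loc}$ by Lemma~\ref{lem:ass3-consequences}. Your remark that the perturbation bounds hold pointwise, via the supported versions of the conditional laws, makes explicit a point the paper leaves implicit.
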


\begin{proof}
If \(Z_i=\ell\ne k\) and \(\|X_i-x\|_1\le h_x\), then
\[
    d(\mathcal M_\ell,\mathcal M_k)
    \le
    \|X_i^\star-x^\star\|
    \le
    h_x+\|\xi_i\|+\|\xi_{\rm qry}\|
    \le
    h_x+\sigma_{\max,n}+\sigma_{k,n}
    <\delta_{0,n},
\]
contradicting Assumption~\ref{ass:geometry}. Hence \(Z_i=k\). The second claim
then follows from
\[
\|X_i^\star-x^\star\|
\le h_x+\|\xi_i\|+\|\xi_{\rm qry}\|
\le h_x+2\sigma_{k,n}
\le 2h_x
\le 2R_{\rm loc}.
\]
\end{proof}

\begin{lemma}[Tangent estimation event]
\label{lem:tangent-projector-good}
Under \eqref{eq:tangent-window-conditions}--\eqref{eq:tangent-grid-conditions},
there is an event \(\mathcal E_{k,n}^{\rm tan}(x)\) such that
\[
\mathbb P_{k,x^\star,\xi_{\rm qry}}
\{(\mathcal E_{k,n}^{\rm tan}(x))^c\}
\le
C\exp(-c_{\rm Ber}a_{{\rm tan},n}),
\]
and on \(\mathcal E_{k,n}^{\rm tan}(x)\),
\[
    \|\bar P_x-\Pi^\star\|_{\rm op}
    \le
    C\epsilon_{\rm tan}.
\]
Moreover, if the small constant \(c\) in
\eqref{eq:tangent-grid-conditions} is chosen sufficiently small, then on the
same event \(\mathcal E_{k,n}^{\rm tan}(x)\),
\[
\lambda_{d_x}(\bar P_x)-\lambda_{d_x+1}(\bar P_x)\ge {3\over4},
\qquad
\|\widehat\Pi_x-\Pi^\star\|_{\rm op}\le C\epsilon_{\rm tan}.
\]
Consequently,
\[
\mathbb E_{k,x^\star,\xi_{\rm qry}}
\|\widehat\Pi_x-\Pi^\star\|_{\rm op}^2
\le
C\left\{
 h_x^{2s_x}+{\sigma_{k,n}^2\over h_x^2}
 +
 {\log N_{{\rm grid},k,n}+a_{{\rm tan},n}\over n\pi_{k,n}h_x^{d_x}}
 +\exp(-c_{\rm Ber}a_{{\rm tan},n})
\right\}.
\]
\end{lemma}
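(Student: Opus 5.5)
The plan is to compare the empirical loss with its population counterpart uniformly over the finite grid, exhibit a reference candidate with small population loss, and then convert a small loss gap into projector closeness through Lemma~\ref{lem:projector-identifiability}.

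\emph{Step 1: local purity and the reference candidate.} By Lemma~\ref{lem:tlp-local-purity}, every observation with \(\mathsf A_i>0\) has \(Z_i=k\) and \(\|X_i^\star-x^\star\|\le 2R_{\rm loc}\). Inside the chart at \(x^\star\) I write \(X_i^\star=x^\star+U_\star u_i+G_{x^\star}(u_i)\). Put \(t=u/h_x\). Then
\[
 z_i=U_\star t_i+h_x^{-1}G_{x^\star}(h_xt_i)+(\xi_i-\xi_{\rm qry})/h_x .
\]
A Taylor expansion of \(G_{x^\star}\) to order \(s_x\) gives \(z_i=\Gamma_0(t_i)+O(h_x^{s_x})+e_i-b\). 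Here \(\Gamma_0(t)=U_\star t+\sum_{\ell=2}^{s_x}V_\ell t^{\otimes\ell}\), with \(V_\ell=h_x^{\ell-1}D^\ell G/\ell!\), so \(\|V_\ell\|_{\rm op}\le L_{\mathcal M}\). The offsets are \(b=\xi_{\rm qry}/h_x\) and \(e_i=\xi_i/h_x\), both of size at most \(\sigma_{k,n}/h_x\).

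Define \(q^\circ=(a^\circ,\Pi^\star,T_\ell^\circ)\). The offset is \(a^\circ=(I-\Pi^\star)\xi_{\rm qry}/h_x\), and \(T_\ell^\circ\) is the tensor on \(\mathbb R^D\) induced by \(V_\ell\) through \(U_\star^\top\). Then \(R_{q^\circ}(z_i)=O(h_x^{s_x}+\sigma_{k,n}/h_x)\) on the window. This gives \(L(q^\circ;x)\lesssim \pi_{k,n}h_x^{d_x+2}b_{\rm tan}^2\). The factor \(\pi_{k,n}h_x^{d_x}\) is the window mass; it comes from Lemma~\ref{lem:reach-local-facts-main}(i) and the density bounds. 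Let \(q^{\circ\sharp}\) be the grid neighbour of \(q^\circ\). Since \(\eta_{{\rm grid},n}\le c\epsilon_{\rm tan}^2\), replacing \(q^\circ\) by \(q^{\circ\sharp}\) changes the loss by at most \(C\pi_{k,n}h_x^{d_x+2}\epsilon_{\rm tan}^2\).

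\emph{Step 2: uniform concentration.} For each grid point \(q\), I apply Bernstein's inequality to the centred quantity \(L_n(q;x)-L(q;x)\), and also to the difference \(L_n(q;x)-L_n(q^{\circ\sharp};x)\). Each summand is bounded by \(Ch_x^2\) and is supported on the window, which has probability at most \(C\pi_{k,n}h_x^{d_x}\).

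The variance of the difference is bounded by \(C h_x^4\pi_{k,n}h_x^{d_x}\,\mathcal D^2\), where \(\mathcal D^2\) is the window-averaged squared residual difference. Using this, a union bound over \(N_{{\rm grid},k,n}\) points gives an event of probability at least \(1-C\exp(-c_{\rm Ber}a_{{\rm tan},n})\). On that event
\[
 |\{L_n(q)-L_n(q^{\circ\sharp})\}-\{L(q)-L(q^{\circ\sharp})\}|
 \le C h_x^2\pi_{k,n}h_x^{d_x}\bigl(\mathcal D\,\zeta_{\rm tan}+\zeta_{\rm tan}^2\bigr).
\]
The first condition in \eqref{eq:tangent-grid-conditions} keeps the linear Bernstein term dominated by the quadratic one.

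\emph{Step 3: from loss gap to projector error.} Take any \(q\) with \(\omega_q>0\). Then \(L_n(q)\le L_n(q^{\circ\sharp})+\Delta_{{\rm tan},n}\), and \(\Delta_{{\rm tan},n}\) is of lower order by assumption. For the population gap I use
\[
 L(q)-L(q^{\circ\sharp})\ \ge\ c\,h_x^{2}\pi_{k,n}h_x^{d_x}\mathcal D(q,q^{\circ\sharp})^2- C\pi_{k,n}h_x^{d_x+2}b_{\rm tan}\mathcal D .
\]
This follows by expanding the square \(\|R_q\|^2=\|R_{q^\circ}+(R_q-R_{q^\circ})\|^2\) and bounding the cross term by Cauchy--Schwarz.

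To compare the design-weighted average with the Lebesgue integral in Lemma~\ref{lem:projector-identifiability}, I use the lower bound on the density of \(t_i\) on a fixed ball \(B(0,R)\) where \(\mathsf A\equiv1\). This lower bound comes from the Jacobian bound \eqref{eq:Jacobian-bound} and \(\varrho\ge c_\mu\). The noise enters only through the kernel \(\Lambda_t\), which is the conditional law of \(e\).

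Combining these bounds and solving the quadratic inequality gives \(\mathcal D\lesssim\epsilon_{\rm tan}\). Lemma~\ref{lem:projector-identifiability} then yields \(\|\Pi_q-\Pi^\star\|_{\rm op}\le C\epsilon_{\rm tan}\). The \(\eta\) term there is \(\sigma_{k,n}/h_x+h_x^{s_x}\), which is at most \(\epsilon_{\rm tan}\). Because \(\bar P_x\) is a convex average of such \(\Pi_q\), convexity of the norm gives \(\|\bar P_x-\Pi^\star\|_{\rm op}\le C\epsilon_{\rm tan}\).

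\emph{Step 4: rounding and expectation.} By Weyl's inequality, the eigenvalues of \(\bar P_x\) lie within \(C\epsilon_{\rm tan}\le1/8\) of \(\{0,1\}\). This gives the gap \(3/4\). The Davis--Kahan inequality then gives \(\|\widehat\Pi_x-\Pi^\star\|_{\rm op}\le C\epsilon_{\rm tan}\). For the expectation, I split on the event. On the event the squared error is at most \(C\epsilon_{\rm tan}^2\); off the event it is at most \(4\) times the failure probability.

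\emph{Main obstacle.} I expect the hardest step to be the localized variance in Step 2 together with the transfer to the Lebesgue-integral form required by Lemma~\ref{lem:projector-identifiability}. The non-uniform design density and the random perturbation \(e_i\) have to be absorbed into the kernel \(\Lambda_t\), and the argument must hold uniformly over a polynomially large grid. I would first try to bound the weighted integral below by \(c\,\pi_{k,n}h_x^{d_x}\) times the unweighted integral over \(B(0,1/4)\).
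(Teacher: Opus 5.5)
Your proposal is correct and follows the paper's proof essentially step by step: the oracle candidate $q^\circ$ with the normal query offset, a grid-wide Bernstein bound whose variance is controlled by the loss-induced discrepancy, population coercivity, transfer to the kernel/Lebesgue form of Lemma~\ref{lem:projector-identifiability}, and then Weyl and Davis--Kahan. The one deviation is that you center the concentration at the grid neighbour $q^{\circ\sharp}$ instead of at $q^\circ$, which spares you the empirical local-count bound the paper uses to compare $L_n(q^\sharp;x)$ with $L_n(q^\circ;x)$, but the identifiability lemma requires a reference candidate whose projector is exactly $\Pi^\star$, so you must add the triangle-inequality step $\mathcal D(q,q^\circ)\le\mathcal D(q,q^{\circ\sharp})+C\eta_{{\rm grid},n}$ before invoking it.
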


\begin{proof}
By Lemma~\ref{lem:tlp-local-purity}, the tangent window contains only
observations from component \(k\), and all latent points in the window lie in
the tangent-normal chart at \(x^\star\).

For such observations, write
\[
    X_i^\star=x^\star+U_\star u_i+G_{x^\star}(u_i),
    \qquad
    t_i:=u_i/h_x.
\]

On \(\{\mathsf A_i>0\}\), \(\|X_i-x\|_1\le h_x\). By
Lemma~\ref{lem:tlp-local-purity},
\[
    \|X_i^\star-x^\star\|\le h_x+2\sigma_{k,n}\le Ch_x .
\]
Since \(G_{x^\star}(u_i)\in T_{x^\star}\mathcal M_k^\perp\), 
\[
    \|u_i\|
    =
    \|U_\star^\top(X_i^\star-x^\star)\|
    \le
    \|X_i^\star-x^\star\|
    \le Ch_x .
\]

Hence $\|t_i\|\le R_{\rm tan}$
for a constant \(R_{\rm tan}<\infty\) depending only on the fixed model
constants. The constant \(c_{\rm geo}\) in \(R_{\rm loc}\) is chosen at the
outset so that \(h_xR_{\rm tan}\le r_{L_{\mathcal M}}\) whenever \(h_x\le R_{\rm loc}\).
Thus the following Taylor expansion is taken inside the tangent-normal chart
domain.

\medskip

For \(2\le \ell\le s_x\), regard \(D^\ell G_{x^\star}(0)\) as a symmetric
\(\ell\)-linear map from \((\mathbb R^{d_x})^\ell\) to \(\mathbb R^D\), and
define
\[
    V_{\ell,h_x}
    :=
    {h_x^{\ell-1}\over \ell!}\,
    D^\ell G_{x^\star}(0).
\]
Since \(G_{x^\star}(u)=N_{x^\star}(U_\star u)\), \(U_\star\) is an isometry
from \(\mathbb R^{d_x}\) onto \(T_{x^\star}\mathcal M_k\), and the
tangent-normal graph satisfies
\[
    \|D^\ell N_{x^\star}(0)\|_{\rm op}\le L_{\mathcal M},
    \qquad 2\le \ell\le s_x,
\]
we have
$\|D^\ell G_{x^\star}(0)\|_{\rm op}\le L_{\mathcal M}$, and therefore
\[
    \|V_{\ell,h_x}\|_{\rm op}
    \le
    {L_{\mathcal M}\over \ell!}h_x^{\ell-1}
    \le L_{\mathcal M} .
\]
The \(C^{s_x,1}\) bound on \(G_{x^\star}\) gives, uniformly over
\(\|t\|\le R_{\rm tan}\),
\begin{equation}\label{eq:tangent-graph-taylor}
    h_x^{-1}G_{x^\star}(h_xt)
    =
    \sum_{\ell=2}^{s_x}V_{\ell,h_x}\{t^{\otimes\ell}\}
    +\mathcal R_{h_x}(t),
    \qquad
    \|\mathcal R_{h_x}(t)\|
    \le C h_x^{s_x}.
\end{equation}
Set $\Gamma_{h_x}(t)
    :=
    U_\star t+
    \sum_{\ell=2}^{s_x}V_{\ell,h_x}\{t^{\otimes\ell}\}$.
Then
\begin{equation*}
    z_i
    =
    \Gamma_{h_x}(t_i)-\xi_{\rm qry}/h_x
    +e_{h_x}(t_i,\xi_i),
    \qquad
    \|e_{h_x}(t_i,\xi_i)\|\le C\left(h_x^{s_x}+{\sigma_{k,n}\over h_x}\right).
\end{equation*}
Here \(e_{h_x}(t,\xi)\in\mathbb R^D\) denotes the sum of the graph Taylor
remainder and the rescaled sample noise \(\xi/h_x\). Also
\(\|\xi_{\rm qry}/h_x\|\le1/2\) by
\eqref{eq:tangent-window-conditions}.

\medskip
Define multilinear maps \(T_\ell^\circ:(\mathbb R^D)^\ell\to\mathbb R^D\) by
\[
    T_\ell^\circ(w_1,\ldots,w_\ell)
    :=
    V_{\ell,h_x}(U_\star^\top w_1,
    \ldots,U_\star^\top w_\ell).
\]
Since \(\|U_\star^\top\|_{\rm op}=1\),
$\|T_\ell^\circ\|_{\rm op}
    \le
    \|V_{\ell,h_x}\|_{\rm op}
    \le L_{\mathcal M}
    \le 2L_{\mathcal M}$.
Moreover,
$T_\ell^\circ\{(U_\star t)^{\otimes\ell}\}
    =
    V_{\ell,h_x}\{t^{\otimes\ell}\}$.
Put
\begin{equation*}
 q^\circ=(a^\circ,\Pi^\star,T_2^\circ,\ldots,T_{s_x}^\circ),\qquad
 a^\circ=\frac{(I-\Pi^\star)\xi_{\rm qry}}{h_x}.
\end{equation*}
Since \(\|a^\circ\|\le\sigma_{k,n}/h_x\le1/2\), this belongs to
\(\Theta_{d_x,s_x}\). On \(\{\mathsf A_i>0\}\),
\[
 \Pi^\star z_i=U_\star t_i+
              \frac{\Pi^\star(\xi_i-\xi_{\rm qry})}{h_x},\qquad
 \left\|\frac{\Pi^\star(\xi_i-\xi_{\rm qry})}{h_x}\right\|
 \le\frac{2\sigma_{k,n}}{h_x}.
\]
The normal query shift in \((I-\Pi^\star)z_i\) is cancelled by \(a^\circ\).
The sample perturbation contributes at most \(\sigma_{k,n}/h_x\), and the
polynomial arguments differ from \(U_\star t_i\) by at most
\(2\sigma_{k,n}/h_x\). Uniform Lipschitz bounds for the fixed-degree polynomial
terms on the localized window and \eqref{eq:tangent-graph-taylor} yield
\begin{equation}\label{eq:tangent-oracle-residual}
 \mathsf A_i>0\quad\Longrightarrow\quad
 \|R_{q^\circ}(z_i)\|\le C\left(h_x^{s_x}+\frac{\sigma_{k,n}}{h_x}\right)
 =Cb_{\rm tan}.
\end{equation}

Given this \(q^\circ\), define the loss-induced discrepancy
\begin{equation*}
d_{h_x}(q,q^\circ)^2
:=
{1\over \pi_{k,n}h_x^{d_x}}
\mathbb E_{k,x^\star,\xi_{\rm qry}}\!\left[
\mathbb{I}\{Z_i=k\}\mathsf A_i
\|R_q(z_i)-R_{q^\circ}(z_i)\|^2
\right].
\end{equation*}

Fix a small
constant \(r_{\rm id}>0\), depending only on the fixed regularity constants, and
set
\[
    y_h(t):=x^\star+U_\star h_xt+G_{x^\star}(h_xt),
    \qquad \|t\|\le r_{\rm id}.
\]
For \(\|t\|\le r_{\rm id}\) and
\(\|\xi\|,\|\xi_{\rm qry}\|\le\sigma_{k,n}\),
\[
{\|y_h(t)+\xi-(x^\star+\xi_{\rm qry})\|_1\over h_x}
\le
C\left(r_{\rm id}+h_x r_{\rm id}^2+{\sigma_{k,n}\over h_x}\right).
\]
The constants \(r_{\rm id}\) and \(a_{\rm sc}\) are fixed at the outset so that
the right-hand side is at most a number strictly smaller than one. Since
\(\mathsf A\) is bounded below on a fixed compact subinterval of \([0,1)\),
\[
    \mathsf A\!\left(
    {\|y_h(t)+\xi-(x^\star+\xi_{\rm qry})\|_1\over h_x}
    \right)
    \ge c .
\]

Define the pushforward measure \(\nu_h\) on
\(B_{\mathbb R^{d_x}}(0,r_{\rm id})\) by
\[
    \nu_h(A)
    :=
    \mathbb P\{X_i^\star\in y_h(A)\mid Z_i=k\}
\]
for Borel sets \(A\subset B_{\mathbb R^{d_x}}(0,r_{\rm id})\). The
change of variables \(u=h_xt\) in the tangent-normal chart gives
\[
    \nu_h(A)
    =
    \int_A
    \varrho_{k,n}(y_h(t))J_{x^\star}(h_xt)h_x^{d_x}\,dt,
\]
where
\[
    J_{x^\star}(u)
    :=
    \sqrt{\det\{I+DG_{x^\star}(u)^\top DG_{x^\star}(u)\}}.
\]
The density lower and upper bounds and the local Jacobian bounds imply
that there are constants \(0<c<C<\infty\), independent of
\(n,k,x^\star,\xi_{\rm qry}\), such that for every such Borel set \(A\),
\begin{equation}\label{eq:tangent-chart-measure-bounds}
    c h_x^{d_x}|A|
    \le
    \nu_h(A)
    \le
    C h_x^{d_x}|A|.
\end{equation}

Let \(b=-\xi_{\rm qry}/h_x\). Since the underlying spaces are standard Borel,
choose a regular conditional distribution of \(\xi_i\) given
\((X_i^\star,Z_i)\), supported on
\(\{u\in\mathbb R^D:\|u\|\le\sigma_{k,n}\}\) at every \((y,k)\) with
\(y\in\mathcal M_k\). Such a version is obtained by assigning the point mass
at zero on the \(\mu_{k,n}\)-null set where the support condition fails.
For each \(t\in B_{\mathbb R^{d_x}}(0,r_{\rm id})\), let
\(\Lambda_t\) be the pushforward of this conditional distribution at
\((y_h(t),k)\) under the map
$\xi\mapsto e_{h_x}(t,\xi)$.
Choose \(C_1\ge1\) large enough so that
\[
    \|b\|\le C_1b_{\rm tan},
    \qquad
    \Lambda_t\bigl(\overline{B_{\mathbb R^D}(0,C_1b_{\rm tan})}\bigr)=1
    \quad
    \text{for every }t.
\]
Then
$\Lambda:
    B_{\mathbb R^{d_x}}(0,r_{\rm id})
    \times\mathcal B(\mathbb R^D)\to[0,1]$ is a probability kernel. Therefore, using
\(\mathsf A_i\ge c\), \(\mathbb P(Z_i=k)=\pi_{k,n}\), and
\eqref{eq:tangent-chart-measure-bounds},
\begin{equation}\label{eq:tangent-dh-mass-lower}
\begin{aligned}
d_{h_x}(q,q^\circ)^2
&\ge
{c\over h_x^{d_x}}
\int_{B_{\mathbb R^{d_x}}(0,r_{\rm id})}\int_{\mathbb R^D}
\|R_q(\Gamma_{h_x}(t)+b+e)
      -R_{q^\circ}(\Gamma_{h_x}(t)+b+e)\|^2
\,d\Lambda_t(e)\,d\nu_h(t) \\
&\ge
c
\int_{B_{\mathbb R^{d_x}}(0,r_{\rm id})}\int_{\mathbb R^D}
\|R_q(\Gamma_{h_x}(t)+b+e)
      -R_{q^\circ}(\Gamma_{h_x}(t)+b+e)\|^2
\,d\Lambda_t(e)\,dt .
\end{aligned}
\end{equation}
Set \(\eta_0:=C_1b_{\rm tan}\). Since
\(b_{\rm tan}\le\epsilon_{\rm tan}\), the smallness condition in
\eqref{eq:tangent-grid-conditions} is chosen so that
\(\eta_0\le\eta_{\rm id}\). Since \(\|V_{\ell,h_x}\|_{\rm op}\le L_{\mathcal M}\),
Lemma~\ref{lem:projector-identifiability} applies with
\[
    R=r_{\rm id},
    \qquad
    B_\Gamma=L_{\mathcal M},
    \qquad
    \Gamma_0=\Gamma_{h_x},
    \qquad
    \eta=\eta_0 .
\]
Together with \eqref{eq:tangent-dh-mass-lower}, it gives
\begin{equation}\label{eq:tangent-projector-id-dh}
\|\Pi-\Pi^\star\|_{\rm op}
\le C\{d_{h_x}(q,q^\circ)+b_{\rm tan}\}.
\end{equation}

Local purity, the density upper bound, and the local volume bound in
Lemma~\ref{lem:reach-local-facts-main} give
\[
 \mathbb E_{k,x^\star,\xi_{\rm qry}}\mathsf A_i
 \le \mathbb P_{k,x^\star,\xi_{\rm qry}}\{\mathsf A_i>0\}
 \le C\pi_{k,n}h_x^{d_x}.
\]
The population quadratic loss is coercive in the same discrepancy. Expanding
squares and using this mass bound, \eqref{eq:tangent-oracle-residual},
Cauchy's inequality, and Young's inequality,
\begin{equation}\label{eq:tangent-pop-coercivity}
    L(q;x)-L(q^\circ;x)
    \ge
    c\pi_{k,n}h_x^{d_x+2}d_{h_x}(q,q^\circ)^2
    -
    C\pi_{k,n}h_x^{d_x+2}b_{\rm tan}^2 .
\end{equation}

Conditioning on the query, \(q^\circ\) is fixed. For each grid point \(q\), the
summands in \(L_n(q;x)-L_n(q^\circ;x)\) are uniformly bounded by \(Ch_x^2\).
Moreover, on \(\{\mathsf A_i>0\}\), both \(R_q(z_i)\) and
\(R_{q^\circ}(z_i)\) are uniformly bounded, and
\[
\left|
\|R_q(z_i)\|^2-\|R_{q^\circ}(z_i)\|^2
\right|
\le
C\|R_q(z_i)-R_{q^\circ}(z_i)\|.
\]
Together with the definition of \(d_{h_x}\) and
\eqref{eq:tangent-oracle-residual}, this gives the variance bound
\[
    \operatorname{Var}_{k,x^\star,\xi_{\rm qry}}
    \left[
    h_x^2\mathsf A_i
    \{\|R_q(z_i)\|^2-\|R_{q^\circ}(z_i)\|^2\}
    \right]
    \le
    C\pi_{k,n}h_x^{d_x+4}
    \{d_{h_x}(q,q^\circ)+b_{\rm tan}\}^2.
\]
Apply Bernstein's inequality with
\(u=\log N_{{\rm grid},k,n}+a_{{\rm tan},n}\), enlarging the fixed
threshold constant as needed. A union bound over the
\(N_{{\rm grid},k,n}\) candidates uses
\(N_{{\rm grid},k,n}e^{-u}=e^{-a_{{\rm tan},n}}\)
and gives an event on which, uniformly over
\(q\in\mathcal Q_{d_x,s_x,n}\),
\begin{equation}\label{eq:tangent-loss-concentration}
\left|
\{L_n(q;x)-L_n(q^\circ;x)\}
-
\{L(q;x)-L(q^\circ;x)\}
\right|
\le
C\pi_{k,n}h_x^{d_x+2}
\{(d_{h_x}(q,q^\circ)+b_{\rm tan})\zeta_{\rm tan}+\zeta_{\rm tan}^2\}.
\end{equation}
The same Bernstein argument gives
\begin{equation}\label{eq:tangent-local-count-bound}
    {1\over n}\sum_{i\in[n]}\mathsf A_i
    \le
    C\pi_{k,n}h_x^{d_x}.
\end{equation}
Let \(\mathcal E_{k,n}^{\rm tan}(x)\) be the intersection of the events in
\eqref{eq:tangent-loss-concentration} and
\eqref{eq:tangent-local-count-bound}. Under
\eqref{eq:tangent-grid-conditions},
\[
\mathbb P_{k,x^\star,\xi_{\rm qry}}
\{(\mathcal E_{k,n}^{\rm tan}(x))^c\}
\le
C\exp(-c_{\rm Ber}a_{{\rm tan},n}).
\]

Let \(q^\sharp\) be a grid point within \(\eta_{{\rm grid},n}\) of \(q^\circ\).
Since \(q\mapsto R_q(z)\) is Lipschitz uniformly on the fixed tangent window,
\[
    \|R_{q^\sharp}(z_i)-R_{q^\circ}(z_i)\|
    \le C\eta_{{\rm grid},n}
    \qquad\text{whenever }\mathsf A_i>0.
\]
Using \eqref{eq:tangent-oracle-residual} and
\eqref{eq:tangent-local-count-bound},
\[
\begin{aligned}
    L_n(q^\sharp;x)-L_n(q^\circ;x)
    &\le
    C h_x^2
    {1\over n}\sum_{i\in[n]}\mathsf A_i
    \{b_{\rm tan}\eta_{{\rm grid},n}+\eta_{{\rm grid},n}^2\} \\
    &\le
    C\pi_{k,n}h_x^{d_x+2}
    \{b_{\rm tan}\eta_{{\rm grid},n}+\eta_{{\rm grid},n}^2\}
    \le
    C\pi_{k,n}h_x^{d_x+2}\epsilon_{\rm tan}^2,
\end{aligned}
\]
where the last inequality uses
\(\eta_{{\rm grid},n}\le c\epsilon_{\rm tan}^2\) and
\(\epsilon_{\rm tan}\le c\).

If \(q\) is active, \(\omega_q(x)>0\), then
\(L_n(q;x)\le L_{\min}(x)+\Delta_{{\rm tan},n}\). Since
\(L_{\min}(x)\le L_n(q^\sharp;x)\), the preceding display and the condition on
\(\Delta_{{\rm tan},n}\) imply
\begin{equation}\label{eq:tangent-active-empirical-excess}
    L_n(q;x)-L_n(q^\circ;x)
    \le
    C\pi_{k,n}h_x^{d_x+2}\epsilon_{\rm tan}^2 .
\end{equation}

Put \(D_q:=d_{h_x}(q,q^\circ)\) and
\(\mathfrak m_x:=\pi_{k,n}h_x^{d_x+2}\). Combining
\eqref{eq:tangent-active-empirical-excess},
\eqref{eq:tangent-loss-concentration}, and
\eqref{eq:tangent-pop-coercivity} gives
\[
    \mathfrak m_xD_q^2
    \lesssim
    \mathfrak m_x\{b_{\rm tan}^2+\epsilon_{\rm tan}^2
    +(D_q+b_{\rm tan})\zeta_{\rm tan}+\zeta_{\rm tan}^2\}.
\]
Since \(b_{\rm tan}\le\epsilon_{\rm tan}\) and
\(\zeta_{\rm tan}\le\epsilon_{\rm tan}\), Young's inequality yields
\(D_q\le C\epsilon_{\rm tan}\). Hence, by
\eqref{eq:tangent-projector-id-dh},
\[
    \|\Pi_q-\Pi^\star\|_{\rm op}\le C\epsilon_{\rm tan}
\]
for every active grid point \(q\). Averaging over the nonnegative weights gives, on \(\mathcal E_{k,n}^{\rm tan}(x)\), 
\[
    \|\bar P_x-\Pi^\star\|_{\rm op}\le C_0\epsilon_{\rm tan}.
\]

\medskip
Since \(\Pi^\star\) is a rank-\(d_x\) projector, Weyl's inequality gives
\[
\lambda_{d_x}(\bar P_x)\ge1-C_0\epsilon_{\rm tan},
\qquad
\lambda_{d_x+1}(\bar P_x)\le C_0\epsilon_{\rm tan}.
\]
Choosing the small constant in \eqref{eq:tangent-grid-conditions} so that
\(C_0\epsilon_{\rm tan}\le1/8\), we get the spectral gap lower bound
\(3/4\). The Davis--Kahan sin-\(\Theta\) theorem
\citep[Theorem~1]{yu2015useful}, originally due to
\citet{davis1970rotation}, then gives
\[
    \|\widehat\Pi_x-\Pi^\star\|_{\rm op}
    \le
    C\|\bar P_x-\Pi^\star\|_{\rm op}
    \le
    C\epsilon_{\rm tan}.
\]

For the second moment, on \(\mathcal E_{k,n}^{\rm tan}(x)\) the squared error is
at most \(C\epsilon_{\rm tan}^2\), while on the complement it is at most one.
The displayed probability bound therefore gives
\[
\mathbb E_{k,x^\star,\xi_{\rm qry}}
\|\widehat\Pi_x-\Pi^\star\|_{\rm op}^2
\lesssim
\epsilon_{\rm tan}^2+\exp(-c_{\rm Ber}a_{{\rm tan},n}),
\]
which is the asserted bound after expanding \(\epsilon_{\rm tan}\).
\end{proof}

\subsection{Regression stability}

The regression design argument also uses the window condition in
\eqref{eq:tangent-window-conditions}.

\begin{lemma}[Uniform regression stability]
\label{lem:regression-design-stability}
Under the standing assumptions, including Assumption~\ref{ass:feasible-local},
and \eqref{eq:tangent-window-conditions}, there exist \(\delta_\star>0\) and
an event \(\mathcal E_{k,n}^{\rm reg}(x)\) such that
\[
\mathbb P_{k,x^\star,\xi_{\rm qry}}
\{(\mathcal E_{k,n}^{\rm reg}(x))^c\}
\le C\exp(-cn\pi_{k,n}h_x^{d_x}),
\]
and on \(\mathcal E_{k,n}^{\rm reg}(x)\), for every rank-\(d_x\) projector
\(\Pi\) with \(\|\Pi-\Pi^\star\|_{\rm op}\le\delta_\star\),
\[
\lambda_{\min}^+(G^\Pi)\ge c\pi_{k,n},
\qquad
\|G^\Pi\|_{\rm op}\le C\pi_{k,n},
\qquad
\|(G^\Pi)^\dagger\|_{\rm op}\le C\pi_{k,n}^{-1},
\]
\[
 e_0\in\operatorname{Im}(G^\Pi),
\qquad
 e_0^\top(G^\Pi)^\dagger G^\Pi=e_0^\top,
\]
and
\[
{1\over n}\sum_{i\in[n]}W_i^\Pi\le C\pi_{k,n},
\qquad
{1\over n}\sum_{i\in[n]}(W_i^\Pi)^2\le C\pi_{k,n}h_x^{-d_x}.
\]
Moreover, on \(\{\|X_i-x\|_1\le h_x\}\),
\[
\|v_i^\Pi-v_i^\circ\|\le C\|\Pi-\Pi^\star\|_{\rm op},
\qquad
|W_i^\Pi-W_i^\circ|\le Ch_x^{-d_x}\|\Pi-\Pi^\star\|_{\rm op}.
\]
\end{lemma}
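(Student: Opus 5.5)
The plan is to prove everything for the fixed true projector \(\Pi^\star\) by concentration, and then transfer it to all nearby projectors through a deterministic perturbation bound that holds on the same event. This removes any need for a union bound over a continuum of \(\Pi\). I would start with the last two claims, which are purely deterministic. On \(\{\|X_i-x\|_1\le h_x\}\) we have \(\|z_i\|\le\|z_i\|_1\le1\), so \(\|v_i^\Pi-v_i^\circ\|=\|(\Pi-\Pi^\star)z_i\|\le\|\Pi-\Pi^\star\|_{\rm op}\). The kernel \(\mathsf K\) is \(\sqrt D\)-Lipschitz in the Euclidean norm and \(\mathsf A_i\le1\), which gives \(|W_i^\Pi-W_i^\circ|\le Ch_x^{-d_x}\|\Pi-\Pi^\star\|_{\rm op}\). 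Off this window all weights vanish.

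Next I would work in reduced coordinates so that the pseudoinverse statements follow from Lemma~\ref{lem:poly-feature-facts}(i). For \(\|\Pi-\Pi^\star\|_{\rm op}\le\delta_\star\le1/2\), set \(U_\Pi:=\Pi U_\star(U_\star^\top\Pi U_\star)^{-1/2}\). This is an orthonormal frame of \(\operatorname{Im}\Pi\) with \(\|U_\Pi-U_\star\|_{\rm op}\le C\|\Pi-\Pi^\star\|_{\rm op}\) and \(v_i^\Pi=U_\Pi u_i^\Pi\), where \(u_i^\Pi:=U_\Pi^\top z_i\). Define
\[
H^\Pi:=\frac1n\sum_{i\in[n]}W_i^\Pi\Phi_{d_x,p_x}(u_i^\Pi)\Phi_{d_x,p_x}(u_i^\Pi)^\top,
\]
so that \(G^\Pi=B_{U_\Pi,p_x}H^\Pi B_{U_\Pi,p_x}^\top\). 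By Lemma~\ref{lem:poly-feature-facts}(i), it then suffices to show \(c\pi_{k,n}I\preceq H^\Pi\preceq C\pi_{k,n}I\); the bound on \(\|(G^\Pi)^\dagger\|_{\rm op}\) and the two identities involving \(e_0\) follow. On the window the features are bounded and \(C\)-Lipschitz in \(u\), and \(\|u_i^\Pi-u_i^{\Pi^\star}\|\le C\delta_\star\). Combining this with the weight perturbation bound,
\[
\|H^\Pi-H^{\Pi^\star}\|_{\rm op}\le C\delta_\star h_x^{-d_x}\,\frac1n\sum_{i\in[n]}\mathbb I\{\|X_i-x\|_1\le h_x\}.
\]
Hence everything reduces to two random statements at \(\Pi^\star\): (a) a local count bound \(\frac1n\sum_i\mathbb I\{\|X_i-x\|_1\le h_x\}\le C\pi_{k,n}h_x^{d_x}\), and (b) \(c\pi_{k,n}I\preceq H^{\Pi^\star}\preceq C\pi_{k,n}I\). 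Given (a) and (b), choosing \(\delta_\star\) small makes the perturbation at most half the lower constant. The two weight-sum bounds then follow from (a), because \(W_i^\Pi\le h_x^{-d_x}\mathbb I\{\|X_i-x\|_1\le h_x\}\).

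For the population version of (b), I would compute \(\mathbb E[W_i^\circ\psi_i\psi_i^\top]\) exactly as in the proof of Lemma~\ref{lem:tangent-projector-good}, and I expect this to be the main obstacle. Local purity (Lemma~\ref{lem:tlp-local-purity}) puts the window inside the chart at \(x^\star\). Writing \(X_i^\star=y_h(t)\), we get \(u_i^{\Pi^\star}=t+e\) with \(\|e\|\le2\sigma_{k,n}/h_x\le4a_{\rm sc}\), and the normal offset is cancelled in \(\Pi^\star z_i\). For \(\|t\|\le r_{\rm id}\) with \(r_{\rm id}\) small, both \(\mathsf A_i\) and \(\mathsf K(v_i^\circ)\) are bounded below by a constant. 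The chart-measure bounds \eqref{eq:tangent-chart-measure-bounds} and the averaged form of Lemma~\ref{lem:poly-feature-facts}(ii), with \(\eta_{\rm poly}\ge4a_{\rm sc}\) ensured by the choice of \(a_{\rm sc}\) (this is the \(2a_{\rm sc}\) requirement flagged in Appendix~\ref{app:feasible-local-formal}), then give the lower bound \(\mathbb E\succeq c\pi_{k,n}I\). The upper bound comes from the mass bound \(\mathbb P\{\mathsf A_i>0\}\le C\pi_{k,n}h_x^{d_x}\). The delicate point is that the kernel-weighted region must be handled through a kernel on \(t\) that is only approximately centered, so the constants \(r_{\rm id}\) and \(a_{\rm sc}\) have to be fixed compatibly.

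Finally, for concentration, each summand of \(H^{\Pi^\star}\) and of the count in (a) is bounded by \(Ch_x^{-d_x}\) (respectively \(1\)), with variance at most \(C\pi_{k,n}h_x^{-d_x}\) (respectively \(C\pi_{k,n}h_x^{d_x}\)). The matrix has fixed dimension \(q_{d_x,p_x}\). Entrywise Bernstein at deviation \(c\pi_{k,n}\), together with a union over finitely many entries, gives failure probability \(C\exp(-cn\pi_{k,n}h_x^{d_x})\). I would define \(\mathcal E_{k,n}^{\rm reg}(x)\) as the intersection of these events, and then the deterministic transfer step above completes the proof.
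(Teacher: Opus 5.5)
Your proposal is correct and follows essentially the same route as the paper: entrywise Bernstein concentration at $\Pi^\star$ for the reduced true-tangent Gram matrix and the local window count, a deterministic transfer on that same event to all projectors within $\delta_\star$, and Lemma~\ref{lem:poly-feature-facts}(i) for the pseudoinverse and $e_0$ identities. The only difference is cosmetic: your explicit frame $U_\Pi=\Pi U_\star(U_\star^\top\Pi U_\star)^{-1/2}$ satisfies $\|U_\Pi-U_\star\|_{\rm op}\le C\|\Pi-\Pi^\star\|_{\rm op}$ directly, so you do not need the paper's orthogonal alignment $R_\Pi$ or the monomial rotation matrix $S_{R_\Pi,p_x}$.
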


\begin{proof}
Here and below in this proof, \(\mathbb E\) denotes
\(\mathbb E_{k,x^\star,\xi_{\rm qry}}\). By
Lemma~\ref{lem:tlp-local-purity}, the regression window contains only component
\(k\) and is contained in the local chart at \(x^\star\). Fix \(r_{\rm in}>0\)
sufficiently small, depending only on the fixed model constants. With this
choice, let \(\eta_{\rm poly}\) be the constant in
Lemma~\ref{lem:poly-feature-facts}\textnormal{(ii)}. The scale constant
\(a_{\rm sc}\) is chosen at the outset so that \(2a_{\rm sc}\le\eta_{\rm poly}\)
and so that the cutoff and kernel lower bounds below hold.

For \(t\in B_{\mathbb R^{d_x}}(0,r_{\rm in})\), write
\[
    y_h(t):=x^\star+U_\star h_xt+G_{x^\star}(h_xt).
\]
The tangent-normal change of variables gives, for every measurable
\(A\subset B_{\mathbb R^{d_x}}(0,r_{\rm in})\),
\begin{equation*}
\mathbb P\{X_i^\star\in y_h(A)\mid Z_i=k\}
=
\int_A
\varrho_{k,n}(y_h(t))J_{x^\star}(h_xt)h_x^{d_x}\,dt,
\end{equation*}
where
$J_{x^\star}(u)
    :=
    \sqrt{\det\{I+DG_{x^\star}(u)^\top DG_{x^\star}(u)\}}$.
The density lower and upper bounds and the local Jacobian bounds imply
that there are constants \(0<c<C<\infty\), independent of
\(n,k,x^\star,\xi_{\rm qry}\), such that for every Borel
\(A\subset B_{\mathbb R^{d_x}}(0,r_{\rm in})\),
\begin{equation*}
    c h_x^{d_x}|A|
    \le
    \mathbb P\{X_i^\star\in y_h(A)\mid Z_i=k\}
    \le
    C h_x^{d_x}|A|.
\end{equation*}

\medskip
For \(X_i^\star=y_h(t)\), define
$u_i^\circ:=U_\star^\top z_i$. Since \(G_{x^\star}(h_xt)\) is normal to
\(T_{x^\star}\mathcal M_k\),
\begin{equation*}
    u_i^\circ
    =
    t+e_h(t,\xi_i),
    \qquad
    e_h(t,\xi_i)
    :=
    {U_\star^\top(\xi_i-\xi_{\rm qry})\over h_x}.
\end{equation*}
Moreover
\[
    \|e_h(t,\xi_i)\|
    \le {2\sigma_{k,n}\over h_x}
    \le 2a_{\rm sc}h_x^{(\alpha_k\vee1)-1}
    \le 2a_{\rm sc}.
\]

For \(\|t\|\le r_{\rm in}\) and
\(\|\xi\|,\|\xi_{\rm qry}\|\le\sigma_{k,n}\),
\[
{\|y_h(t)+\xi-(x^\star+\xi_{\rm qry})\|_1\over h_x}
\le
C\left(r_{\rm in}+h_x r_{\rm in}^2+{\sigma_{k,n}\over h_x}\right),
\]
and, using \(u_i^\circ=t+U_\star^\top(\xi-\xi_{\rm qry})/h_x\),
\[
    \|U_\star u_i^\circ\|_1
    \le
    C\left(r_{\rm in}+{\sigma_{k,n}\over h_x}\right).
\]
The constants \(r_{\rm in}\) and \(a_{\rm sc}\) are fixed so that both
right-hand sides are bounded by constants on which \(\mathsf A\) and
\(\mathsf K\) are bounded below. Hence
\[
    \mathsf A_i\ge c,
    \qquad
    \mathsf K(v_i^\circ)=\mathsf K(U_\star u_i^\circ)\ge c.
\]

Set \(\psi_i^\circ:=\Phi_{d_x,p_x}(u_i^\circ)\), and define the reduced
true-tangent Gram matrix
\[
H^\circ:=
{1\over n}\sum_{i\in[n]}
\mathbb{I}\{Z_i=k\}
 h_x^{-d_x}\mathsf A_i\mathsf K(U_\star u_i^\circ)
\psi_i^\circ(\psi_i^\circ)^\top .
\]

Use the supported version of the regular conditional distribution of
\(\xi_i\) given \((X_i^\star,Z_i)\) chosen in the tangent-projector proof. For each
\(t\in B_{\mathbb R^{d_x}}(0,r_{\rm in})\), let \(\Lambda_t\) be the pushforward
of this conditional distribution at \((y_h(t),k)\) under the map
$\xi\mapsto {U_\star^\top(\xi-\xi_{\rm qry})\over h_x}$.
Then $\Lambda:
    B_{\mathbb R^{d_x}}(0,r_{\rm in})
    \times\mathcal B(\mathbb R^{d_x})
    \to[0,1]$ is a probability kernel. Moreover,
\[
    \Lambda_t\!\left(
    \overline{B_{\mathbb R^{d_x}}(0,\eta_{\rm poly})}
    \right)=1
    \qquad
    \text{for every }t\in B_{\mathbb R^{d_x}}(0,r_{\rm in}),
\]
because
\[
    \left\|{U_\star^\top(\xi-\xi_{\rm qry})\over h_x}\right\|
    \le {\|\xi\|+\|\xi_{\rm qry}\|\over h_x}
    \le {2\sigma_{k,n}\over h_x}
    \le 2a_{\rm sc}
    \le \eta_{\rm poly}.
\]
Using the lower local mass bound, the lower bounds on \(\mathsf A_i\) and
\(\mathsf K\), and Lemma~\ref{lem:poly-feature-facts}\textnormal{(ii)} applied
to the kernel \(\Lambda\), for every \(a\in\mathbb R^{q_{d_x,p_x}}\),
\[
    a^\top\mathbb EH^\circ a
    \ge
    c\pi_{k,n}
    \int_{B_{\mathbb R^{d_x}}(0,r_{\rm in})}\int_{\mathbb R^{d_x}}
    \{a^\top\Phi_{d_x,p_x}(t+u)\}^2
    d\Lambda_t(u)dt
    \ge c\pi_{k,n}\|a\|^2.
\]

The upper bound is obtained on the full regression window. If
\(\mathsf A_i>0\), then \(\|X_i-x\|_1\le h_x\). By
Lemma~\ref{lem:tlp-local-purity}, \(Z_i=k\) on this window and
\(\|X_i^\star-x^\star\|\le Ch_x\). Hence, in the local chart,
\(\|u_i^\circ\|\le C\), uniformly over all observations contributing to
\(H^\circ\). Since \(\mathsf A\), \(\mathsf K\), and
\(\Phi_{d_x,p_x}\) are uniformly bounded on this fixed coordinate window, and
since the local mass upper bound gives
\[
    \mathbb P_{k,x^\star,\xi_{\rm qry}}\{\mathsf A_i>0,\ Z_i=k\}
    \le C\pi_{k,n} h_x^{d_x},
\]
we have
$a^\top\mathbb EH^\circ a\le C\pi_{k,n}\|a\|^2$. Thus
\begin{equation*}
    c\pi_{k,n}I\preceq \mathbb EH^\circ\preceq C\pi_{k,n}I.
\end{equation*}

\medskip
Define
$C_i^\circ:=\mathbb{I}\{Z_i=k\}h_x^{-d_x}\mathsf A_i$. Let \(\mathcal E_{k,n}^{\rm reg}(x)\) be the event on which
\begin{equation}\label{eq:reg-empirical-true-gram-event}
 c\pi_{k,n}I\preceq H^\circ\preceq C\pi_{k,n}I,
 \qquad
 {1\over n}\sum_{i\in[n]} C_i^\circ\le C\pi_{k,n},
 \qquad
 {1\over n}\sum_{i\in[n]} (C_i^\circ)^2\le C\pi_{k,n}h_x^{-d_x}.
\end{equation}
Since \(q_{d_x,p_x}\) is uniformly bounded, entrywise Bernstein bounds imply
operator-norm concentration and give
\[
\mathbb P_{k,x^\star,\xi_{\rm qry}}
\{(\mathcal E_{k,n}^{\rm reg}(x))^c\}
\le
C\exp(-cn\pi_{k,n}h_x^{d_x}).
\]

On this event we extend the bounds deterministically from \(\Pi^\star\) to all
nearby rank-\(d_x\) projectors. If
\(\|\Pi-\Pi^\star\|_{\rm op}\le\delta_\star\), then on
\(\{\|X_i-x\|_1\le h_x\}\),
\[
    \|v_i^\Pi-v_i^\circ\|
    =
    \|(\Pi-\Pi^\star)z_i\|
    \le
    C\|\Pi-\Pi^\star\|_{\rm op},
\]
and, since \(\mathsf K\) is Lipschitz,
\[
    |W_i^\Pi-W_i^\circ|
    \le
    Ch_x^{-d_x}\|\Pi-\Pi^\star\|_{\rm op}.
\]

Choose an orthonormal basis \(U_\Pi\) of \(\operatorname{Im}(\Pi)\) aligned with
\(U_\star\). There exists an orthogonal matrix \(R_\Pi\in\mathbb R^{d_x\times d_x}\)
such that
\[
    \|U_\Pi-U_\star R_\Pi\|_{\rm op}
    \le
    C\|\Pi-\Pi^\star\|_{\rm op}.
\]
Put \(u_i^\Pi:=U_\Pi^\top z_i\), and define
\[
    H^\Pi
    :=
    {1\over n}\sum_{i\in[n]}
    W_i^\Pi
    \Phi_{d_x,p_x}(u_i^\Pi)
    \Phi_{d_x,p_x}(u_i^\Pi)^\top .
\]
Let \(S_{R_\Pi,p_x}\) be the fixed finite-dimensional rotation matrix satisfying
\[
    \Phi_{d_x,p_x}(R_\Pi^\top u)=S_{R_\Pi,p_x}\Phi_{d_x,p_x}(u).
\]
Its singular values are uniformly bounded above and below over all orthogonal
\(R_\Pi\). Therefore, with
$\widetilde H^\circ_\Pi
    :=
    S_{R_\Pi,p_x}H^\circ S_{R_\Pi,p_x}^\top$,
\eqref{eq:reg-empirical-true-gram-event} implies
$c\pi_{k,n} I
    \preceq
    \widetilde H^\circ_\Pi
    \preceq
    C\pi_{k,n} I$.

\medskip
Set
$\widetilde\psi_i^\circ
    :=
    \Phi_{d_x,p_x}(R_\Pi^\top u_i^\circ)$.
By local purity, \(\mathbb{I}\{Z_i=k\}W_i^\circ=W_i^\circ\). Hence
\[
    \widetilde H^\circ_\Pi
    =
    {1\over n}\sum_{i\in[n]}
    W_i^\circ
    \widetilde\psi_i^\circ(\widetilde\psi_i^\circ)^\top .
\]
On the support of the weights,
\[
    \|u_i^\Pi-R_\Pi^\top u_i^\circ\|
    =
    \|(U_\Pi-U_\star R_\Pi)^\top z_i\|
    \le
    C\|\Pi-\Pi^\star\|_{\rm op}.
\]
Since
\[
W_i^\Pi-W_i^\circ
=
h_x^{-d_x}\mathsf A_i
\{\mathsf K(v_i^\Pi)-\mathsf K(v_i^\circ)\},
\]
the Lipschitz property of \(\mathsf K\), local purity on \(\{\mathsf A_i>0\}\),
and the bound \(\|v_i^\Pi-v_i^\circ\|\le C\|\Pi-\Pi^\star\|_{\rm op}\) imply
\[
    |W_i^\Pi-W_i^\circ|
    \le
    C C_i^\circ\|\Pi-\Pi^\star\|_{\rm op}.
\]
Using the boundedness and Lipschitzness of the polynomial dictionary,
\[
\left\|
W_i^\Pi\Phi_{d_x,p_x}(u_i^\Pi)\Phi_{d_x,p_x}(u_i^\Pi)^\top
-
W_i^\circ\widetilde\psi_i^\circ(\widetilde\psi_i^\circ)^\top
\right\|_{\rm op}\le
C C_i^\circ\|\Pi-\Pi^\star\|_{\rm op}.
\]
Averaging over \(i\) and using
\({n}^{-1}\sum_{i\in[n]} C_i^\circ\le C\pi_{k,n}\), we obtain
\begin{equation}\label{eq:H-Pi-perturbation}
    \|H^\Pi-\widetilde H^\circ_\Pi\|_{\rm op}
    \le
    C\pi_{k,n}\|\Pi-\Pi^\star\|_{\rm op}.
\end{equation}
Taking \(\delta_\star>0\) sufficiently small and using Weyl's inequality yields
\[
    c\pi_{k,n}I\preceq H^\Pi\preceq C\pi_{k,n}I.
\]

\medskip

Since \(v_i^\Pi=U_\Pi u_i^\Pi\) and
$\phi_i^\Pi
    =
    \Phi_{D,p_x}(U_\Pi u_i^\Pi)
    =
    B_{U_\Pi,p_x}\Phi_{d_x,p_x}(u_i^\Pi)$,
Lemma~\ref{lem:poly-feature-facts}\textnormal{(i)} gives the spectral,
pseudoinverse, and image identities for \(G^\Pi\).

\medskip
Finally,
$W_i^\Pi
    \le
    Ch_x^{-d_x}\mathbb{I}\{\|X_i-x\|_1\le h_x\}$.
By local purity, \(\mathsf A_i=\mathbb{I}\{Z_i=k\}\mathsf A_i\) on the support
of the weights. Hence
\[
    W_i^\Pi\le C C_i^\circ,
    \qquad
    (W_i^\Pi)^2\le C h_x^{-d_x}C_i^\circ .
\]
The displayed bounds for \(C_i^\circ\) therefore give
\[
{1\over n}\sum_{i\in[n]} W_i^\Pi\le C\pi_{k,n},
\qquad
{1\over n}\sum_{i\in[n]} (W_i^\Pi)^2\le C\pi_{k,n}h_x^{-d_x}.
\]
\end{proof}

\subsection{Proof of Theorem~\ref{thm:oracle-upper}}
\label{app:tangent-local-poly-upper}

\begin{proof}
The scale consequences
\eqref{eq:tlp-basic-scale-consequences}--\eqref{eq:tlp-rate-consequences}, together with Lemma~\ref{lem:ass3-consequences},
verify the window hypotheses \eqref{eq:tangent-window-conditions}. We next
verify the grid conditions \eqref{eq:tangent-grid-conditions}.
The selected product grids satisfy \eqref{eq:N-grid}. Since
\(d\in[d_{\max}]\) and \(s\in[s_{\max}]\), their exponents and
prefactors are bounded in terms of fixed model parameters, and hence
\[
    \log N_{{\rm grid},k,n}\le C\log(en).
\]
Also \(a_{{\rm tan},n}\lesssim\log(en)\), while
\[
    n\pi_{k,n}h_x^{d_x}\ge c n^{\rho_{\rm tan}}.
\]
Hence
\[
    \log N_{{\rm grid},k,n}+a_{{\rm tan},n}
    \le
    cn\pi_{k,n}h_x^{d_x}
\]
for all sufficiently large \(n\). The condition \(\epsilon_{\rm tan}\le c\)
follows from
\[
    \sigma_{k,n}/h_x\le a_{\rm sc}h_x^{(\alpha_k\vee1)-1},
\]
the small fixed choice of \(a_{\rm sc}\), and the uniform convergence of
\(\zeta_{\rm tan}\) to zero.

Next,
\[
\zeta_{\rm tan}^2
=
{\log N_{{\rm grid},k,n}+a_{{\rm tan},n}\over n\pi_{k,n}h_x^{d_x}}
\ge
c{\log(en)\over n},
\]
because \(n\pi_{k,n}h_x^{d_x}\le n\) and
\(a_{{\rm tan},n}\asymp\log(en)\). Therefore
\[
    \eta_{{\rm grid},n}=n^{-1}\le c\epsilon_{\rm tan}^2
\]
for all sufficiently large \(n\). Moreover,
\[
\pi_{k,n}h_x^{d_x+2}\epsilon_{\rm tan}^2
\ge
\pi_{k,n}h_x^{d_x+2}\zeta_{\rm tan}^2
=
{h_x^2\{\log N_{{\rm grid},k,n}+a_{{\rm tan},n}\}\over n}
\ge
c n^{-1-2\bar\rho_h}\log(en).
\]
Since \(\bar\rho_h<1\),
\[
    \Delta_{{\rm tan},n}=n^{-3}
    \le
    c\pi_{k,n}h_x^{d_x+2}\epsilon_{\rm tan}^2
\]
for all sufficiently large \(n\). Thus
\eqref{eq:tangent-grid-conditions} holds uniformly over \(k\).

Fix \(P,f\), and condition on
\[
    (Z_{n+1},X^\star_{n+1},\xi_{n+1})=(k,x^\star,\xi_{\rm qry}).
\]
Use the local notation from Subsection~\ref{app:tangent-local-poly-estimator},
and set
\[
    \delta_x:=\|\widehat\Pi_x-\Pi^\star\|_{\rm op}.
\]
After reducing the small constant in \eqref{eq:tangent-grid-conditions},
Lemma~\ref{lem:tangent-projector-good} implies
\[
    \mathcal E_{k,n}^{\rm tan}(x)\subset\{\delta_x\le\delta_\star\}.
\]
Define
\[
    \mathcal E_{k,n}(x)
    :=
    \mathcal E_{k,n}^{\rm reg}(x)\cap\{\delta_x\le\delta_\star\}.
\]
The preceding inclusion implies
\(\{\delta_x>\delta_\star\}\subset(\mathcal E_{k,n}^{\rm tan}(x))^c\). Therefore,
by Lemmas~\ref{lem:tangent-projector-good} and
\ref{lem:regression-design-stability},
\[
\mathbb P_{k,x^\star,\xi_{\rm qry}}
\{\mathcal E_{k,n}(x)^c\}
\lesssim
\exp(-cn\pi_{k,n}h_x^{d_x})
+\exp(-c_{\rm Ber}a_{{\rm tan},n}).
\]
For every fixed \(M>0\), with an implicit constant that may depend on \(M\),
\[
    \exp(-cn\pi_{k,n}h_x^{d_x})
    =
    \exp(-ch_x^{-2\alpha_k})
    \lesssim h_x^M .
\]
Taking \(M=2\alpha_{\max}+1\) and using \(h_x\le r_n\to0\) gives
\[
    \exp(-cn\pi_{k,n}h_x^{d_x})
    \le
    C h_x^{2\alpha_k+1}
    =o(h_x^{2\alpha_k}).
\]
Moreover, by the definition of \(a_{{\rm tan},n}\) and
\(h_x\ge n^{-\bar\rho_h}\),
\[
    \exp(-c_{\rm Ber}a_{{\rm tan},n})
    \le
    n^{-2\alpha_{\max}\bar\rho_h-3}
    \le
    n^{-3}h_x^{2\alpha_k}
    =o(h_x^{2\alpha_k}).
\]
Both little-\(o\) bounds are uniform over components, admissible models,
and conditioned query values. Thus
\begin{equation}\label{eq:main-good-event-failure}
    \mathbb P_{k,x^\star,\xi_{\rm qry}}
    \{\mathcal E_{k,n}(x)^c\}
    =o(h_x^{2\alpha_k}).
\end{equation}

On \(\mathcal E_{k,n}(x)\), write
\[
    \widehat v_i:=v_i^{\widehat\Pi_x},
    \qquad
    \widehat W_i:=W_i^{\widehat\Pi_x},
    \qquad
    \widehat\phi_i:=\phi_i^{\widehat\Pi_x},
    \qquad
    \widehat G_x:=G^{\widehat\Pi_x}.
\]
Define
\[
    \widehat b_x
    :=
    {1\over n}\sum_{i\in[n]}\widehat W_i\widehat\phi_iY_i,
    \qquad
    \widehat w_{\rm plug}^\Pi(x)
    :=
    \widehat G_x^\dagger \widehat b_x.
\]
The truncated plug-in estimator is
\[
    \widehat f_{\rm plug}^\Pi(x)
    :=
    (-L_{\mathcal F})\vee
    \{e_0^\top\widehat w_{\rm plug}^\Pi(x)\}
    \wedge L_{\mathcal F}.
\]
Lemma~\ref{lem:regression-design-stability} gives
\[
\lambda_{\min}^+(\widehat G_x)\ge c\pi_{k,n},
\qquad
\|(\widehat G_x)^\dagger\|_{\rm op}\le C\pi_{k,n}^{-1},
\qquad
 e_0^\top(\widehat G_x)^\dagger\widehat G_x=e_0^\top,
\]
as well as the required weight-sum bounds.

Since the truncation map \(t\mapsto (-L_{\mathcal F})\vee t\wedge
L_{\mathcal F}\) is \(1\)-Lipschitz and
\(f(x^\star)\in[-L_{\mathcal F},L_{\mathcal F}]\), it suffices on
\(\mathcal E_{k,n}(x)\) to control the untruncated intercept error.

For observations with \(\widehat W_i>0\), Lemma~\ref{lem:tlp-local-purity}
gives \(Z_i=k\), and we may write
\[
    X_i^\star=x^\star+U_\star u_i+G_{x^\star}(u_i).
\]
On the regression support,
$\|u_i\|\le Ch_x$ and $\|G_{x^\star}(u_i)\|\le Ch_x^2$. The pullback
\[
    m_{x^\star}(u)
    :=
    f(x^\star+U_\star u+G_{x^\star}(u))
\]
is uniformly \(\alpha_k\)-H\"older on a ball of radius \(Ch_x\). By
Lemma~\ref{lem:poly-feature-facts}\textnormal{(iii)}, there is
\(w_x^0\in\mathbb R^{q_{D,p_x}}\) such that
\begin{equation*}
    e_0^\top w_x^0=f(x^\star),
    \qquad
    \left|
    m_{x^\star}(u_i)-\Phi_{D,p_x}(U_\star u_i/h_x)^\top w_x^0
    \right|
    \le
    Ch_x^{\alpha_k}.
\end{equation*}

Moreover,
\begin{equation*}
\widehat v_i-{U_\star u_i\over h_x}
=
{(\widehat\Pi_x-\Pi^\star)(X_i-x)\over h_x}
+{\Pi^\star\xi_i\over h_x}
-{\Pi^\star\xi_{\rm qry}\over h_x}.
\end{equation*}
For arbitrary bounded covariate perturbations, the localized window gives
\begin{equation*}
 \left\|\widehat v_i-\frac{U_\star u_i}{h_x}\right\|
 \le C\left(\delta_x+\frac{\sigma_{k,n}}{h_x}\right).
\end{equation*}
The comparison polynomial in Lemma~\ref{lem:poly-feature-facts}(iii) is
\(Ch_x\)-Lipschitz in the normalized coordinates; when \(p_x=0\) it is
constant. Thus
\begin{equation*}
 f(X_i^\star)=\widehat\phi_i^\top w_x^0+r_i,\qquad
 |r_i|\le C(h_x^{\alpha_k}+h_x\delta_x+\sigma_{k,n}).
\end{equation*}
The small-perturbation condition implies
\(\sigma_{k,n}\lesssim h_x^{\alpha_k\vee1}\le h_x^{\alpha_k}\).

Write
\begin{equation*}
\widehat b_x
=
\widehat G_x w_x^0+R_x+E_x,
\end{equation*}
where
\[
R_x:={1\over n}\sum_{i\in[n]}\widehat W_i\widehat\phi_i r_i,
\qquad
E_x:={1\over n}\sum_{i\in[n]}\widehat W_i\widehat\phi_i\epsilon_i .
\]
The range identity yields
\begin{equation*}
 e_0^\top\widehat w_{\rm plug}^\Pi(x)-f(x^\star)
=
 e_0^\top(\widehat G_x)^\dagger R_x
+
 e_0^\top(\widehat G_x)^\dagger E_x.
\end{equation*}
Using \(\|(\widehat G_x)^\dagger\|_{\rm op}\le C\pi_{k,n}^{-1}\), boundedness of
the features, and \({n}^{-1}\sum_{i\in[n]}\widehat W_i\le C\pi_{k,n}\),
\[
    |e_0^\top(\widehat G_x)^\dagger R_x|
    \le
    C(h_x^{\alpha_k}+h_x\delta_x+\sigma_{k,n}).
\]

For the noise term, condition on the sigma-field generated by
\((Z_i,X_i^\star,\xi_i)_{i\in[n]}\) and the query variables. The event
\(\mathcal E_{k,n}(x)\), the weights, and the features are measurable with
respect to this sigma-field. The response
noises are conditionally mean-zero and independent.

If \(\widehat W_i>0\), then \(\mathsf A_i>0\), and hence
\(\|X_i-x\|_1\le h_x\). Therefore \(\|z_i\|\le1\), and since
\(\widehat\Pi_x\) is an orthogonal projector, $\|\widehat v_i\|\le1$.
Because \(p_x\le p_{\max}\), the polynomial dictionary is bounded on the unit
ball, and therefore
$\|\widehat\phi_i\|\le C$.
Therefore,
\begin{equation*}
\mathbb E_{k,x^\star,\xi_{\rm qry}}
\left[
|e_0^\top(\widehat G_x)^\dagger E_x|^2
\mathbb{I}_{\mathcal E_{k,n}(x)}
\right]
\le
{C\over n\pi_{k,n}h_x^{d_x}}.
\end{equation*}
Indeed, on \(\mathcal E_{k,n}(x)\),
\(\|(\widehat G_x)^\dagger\|_{\rm op}\le C\pi_{k,n}^{-1}\), the features are
uniformly bounded, and
\[
{1\over n}\sum_{i\in[n]}\widehat W_i^2\le C\pi_{k,n}h_x^{-d_x}.
\]

By Lemma~\ref{lem:tangent-projector-good},
\begin{equation*}
\mathbb E_{k,x^\star,\xi_{\rm qry}}\delta_x^2
\le
C\left\{
 h_x^{2s_x}+{\sigma_{k,n}^2\over h_x^2}
 +
 {\log N_{{\rm grid},k,n}+a_{{\rm tan},n}\over n\pi_{k,n}h_x^{d_x}}
 +e^{-c_{\rm Ber}a_{{\rm tan},n}}
\right\}.
\end{equation*}
Using
\[
    \sigma_{k,n}\lesssim h_x^{\alpha_k\vee1},
    \qquad
    (n\pi_{k,n}h_x^{d_x})^{-1}=h_x^{2\alpha_k},
    \qquad
    \log N_{{\rm grid},k,n}+a_{{\rm tan},n}\lesssim \log(en),
\]
and \(h_x\le r_n\le n^{-\rho_h}\), we get
\begin{equation*}
    h_x^2
    \mathbb E_{k,x^\star,\xi_{\rm qry}}\delta_x^2
    \le
    Ch_x^{2\alpha_k}.
\end{equation*}
The bound \(h_x^2\log(en)\lesssim1\) absorbs the stochastic tangent factor,
and the exponential term is \(o(h_x^{2\alpha_k})\) by the choice of
\(a_{{\rm tan},n}\).
Also \(\sigma_{k,n}^2\lesssim h_x^{2\alpha_k}\).

Combining the deterministic and stochastic terms gives
\begin{equation*}
\mathbb E_{k,x^\star,\xi_{\rm qry}}
\left[
\{\widehat f_{\rm plug}^\Pi(x)-f(x^\star)\}^2
\mathbb{I}_{\mathcal E_{k,n}(x)}
\right]
\lesssim
h_x^{2\alpha_k}+{1\over n\pi_{k,n}h_x^{d_x}}.
\end{equation*}
On \(\mathcal E_{k,n}(x)^c\), both
\(\widehat f_{\rm plug}^\Pi(x)\) and \(f(x^\star)\) are bounded by
\(L_{\mathcal F}\), and \eqref{eq:main-good-event-failure} gives an additional
\(o(h_x^{2\alpha_k})\) contribution. Hence
\begin{equation*}
\mathbb E_{k,x^\star,\xi_{\rm qry}}
\{\widehat f_{\rm plug}^\Pi(x)-f(x^\star)\}^2
\lesssim
h_x^{2\alpha_k}+{1\over n\pi_{k,n}h_x^{d_x}}
\lesssim
(n\pi_{k,n})^{-2\alpha_k/(2\alpha_k+d_k)} .
\end{equation*}
Averaging over \(Z_{n+1}\) gives
\begin{equation*}
\sup_{f\in\mathcal H(\boldsymbol\alpha,\mathscr M;L_{\mathcal F})}
\sup_{P\in\mathcal P_\star}
\mathbb E_{P,f}
\left[
\{\widehat f_{\rm plug}^\Pi(X_{n+1})-f(X^\star_{n+1})\}^2
\right]
\lesssim
\mathfrak R_n(\boldsymbol\alpha,\mathbf d,\boldsymbol\pi).
\end{equation*}
The assertion is asymptotic; all constants above are uniform once the
feasible-scale assumptions hold.
\end{proof}

\section{Transformer approximation}
\label{app:transformer_approximation}

We prove Theorem~\ref{thm:transformer-approx} by an explicit compilation of
row-wise ReLU networks and attention-based copying and averaging. The
embedding dimension is allowed to grow polynomially with \(n\). In
particular, the intermediate states of a wide ReLU comparison tree are
stored in polynomially many transformer coordinates.

Throughout this appendix the manifolds, component parameters, selector nets,
and tangent grids are deterministic structural information encoded in the
constructed parameters, independently of the task function \(f\) and the
particular prompt. The only prompt inputs are
\(\mathfrak s=((X_i,Y_i)_{i=1}^n;X_{n+1})\). We abbreviate
\(\mathcal H=\mathcal H(\boldsymbol\alpha,\mathscr M;L_{\mathcal F})\).

\subsection{Architecture-compatible approximation primitives}
\label{app:geo-primitive}

The following elementary construction supplies all arithmetic approximants,
including globally defined versions of the spectral and inverse decoders.
It also gives a Lipschitz bound for the approximating networks themselves.

\begin{lemma}[ReLU approximation on a fixed-dimensional compact set]
\label{lem:scaled-arithmetic}
Fix input and output dimensions \(m,r\). Suppose
\(\mathcal K_n\subset[-n^B,n^B]^m\) is a nonempty compact set and
\(F_n:\mathcal K_n\to[-n^B,n^B]^r\) is \(L_n\)-Lipschitz in maximum norm, with
\(1\le L_n\le n^B\). For any fixed \(E>0\), there is a globally defined
ReLU network \(\widetilde F_n:\mathbb R^m\to[-n^B,n^B]^r\) such that
\[
 \sup_{z\in \mathcal K_n}\|\widetilde F_n(z)-F_n(z)\|_\infty\le n^{-E}.
\]
Its depth is at most \(C(B+E+1)\log(en)\), its width and parameter
magnitudes are at most \(n^{C(B+E+1)}\),
and its global Lipschitz constant in maximum norm is at most \(L_n\).
Here \(C\) depends only on \(m,r\).
\end{lemma}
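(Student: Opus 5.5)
The construction is a continuous piecewise-linear interpolant on a fine grid. We first extend $F_n$ from $\mathcal K_n$ to the whole cube $Q_n:=[-n^B,n^B]^m$ coordinatewise by the McShane formula
\[
 \bar F_{n,j}(z)=\min_{y\in\mathcal K_n}\{F_{n,j}(y)+L_n\|z-y\|_\infty\},
\]
and then clip to $[-n^B,n^B]$. Each coordinate is $L_n$-Lipschitz in max norm and agrees with $F_{n,j}$ on $\mathcal K_n$. Next we take the uniform grid of spacing $\delta:=n^{-E}/(2L_n)$ on $Q_n$. Along each axis there are $N=O(n^{B+E+B})$ grid points. Let $\widehat F$ be the coordinatewise piecewise-linear interpolant of $\bar F_n$ with respect to the Kuhn–Freudenthal (Coxeter–Kuhn) simplicial triangulation of this grid.

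This interpolant has three properties. First, it is continuous and piecewise linear. Second, its sup-error on $Q_n$ is at most $L_n\delta\le n^{-E}$, since every point lies in a simplex of max-diameter $\delta$ and $\widehat F$ there is a convex combination of the vertex values. Third, its max-norm Lipschitz constant is at most $L_n$. The third property requires a check. On a Kuhn simplex the linear interpolant has gradient given by successive differences $\bar F(v_{i})-\bar F(v_{i-1})$ along single coordinate edges, sorted by the permutation. Hence the $\ell_1$ norm of the gradient (the dual of the max norm) is
\[
 \delta^{-1}\sum_i|\bar F(v_i)-\bar F(v_{i-1})|.
\]
This telescopes along a monotone path from $v_0$ to $v_m$ with $\|v_m-v_0\|_\infty=\delta$. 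It can still exceed $L_n$ because the individual differences are not signed consistently. This is the main obstacle. The fix is to use instead the max-norm-adapted interpolation
\[
 \widehat F_j(z)=\min_{v\ \text{grid}}\{\bar F_j(v)+L_n\|z-v\|_\infty\},
\]
restricted to the $2^m$ vertices of the cell containing $z$ together with the neighbouring cells. This is a finite min of $L_n$-Lipschitz functions, so it is $L_n$-Lipschitz. It agrees with $\bar F$ up to $L_n\delta$. It is piecewise linear, built from $\min$, $|\cdot|$ and $\max$.

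To realize this exactly by ReLU we write $\widehat F_j$ as a global min over the whole grid, which is at most $N^m=n^{C(B+E+1)}$ terms. Each term $\bar F_j(v)+L_n\max_\ell|z_\ell-v_\ell|$ costs constant depth and width $O(m)$, using $|t|=\mathrm{ReLU}(t)+\mathrm{ReLU}(-t)$ and $\max(a,b)=b+\mathrm{ReLU}(a-b)$. The min over $N^m$ terms is computed by a binary tournament tree of depth $\lceil\log_2 N^m\rceil\le C(B+E+1)\log(en)$, with width $N^m$. Each pairwise $\min$ is $1$-Lipschitz in max norm, so the Lipschitz constant $L_n$ is preserved exactly through the tree. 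A final clip $(-n^B)\vee\cdot\wedge n^B$ is $1$-Lipschitz and keeps the output in range. Parameter magnitudes are bounded by $\max(n^B,L_n,\text{grid coordinates})\le n^{C}$.

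Restricted to $\mathcal K_n$, the global min over grid points is squeezed between $\bar F_j(z)$, by the Lipschitz property, and $\bar F_j(v^*)+L_n\delta$ with $v^*$ the nearest grid point. This gives error at most $2L_n\delta\le n^{-E}$. The choice of $\delta$ therefore also absorbs the factor $2$. Outside $Q_n$, the network is still defined, is $L_n$-Lipschitz, and maps into the clipped range, which completes the statement.
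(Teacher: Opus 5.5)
Your final construction is correct and is essentially the paper's proof: a clipped finite minimum of max-norm cones $\bar F_j(v)+L_n\|z-v\|_\infty$, evaluated by a balanced ReLU tournament tree. The paper uses the mirror-image maximum of lower cones $F_{n,a}(z_j)-L_n\|z-z_j\|_\infty$, centered at an $\eta$-net of $\mathcal K_n$ itself, so it needs no McShane extension to the cube. The Kuhn-triangulation detour and the remark about restricting to neighbouring cells (a $z$-dependent index set is not automatically Lipschitz) can simply be dropped, since you only use the global minimum.
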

\begin{proof}
Choose an \(\eta\)-net \(\{z_1,\ldots,z_M\}\subset \mathcal K_n\) in maximum norm,
where \(\eta=n^{-E}/(4L_n)\). A maximal separated set can be chosen with
\[
 M\lesssim(1+n^B/\eta)^m\lesssim n^{m(E+2B+1)}.
\]
For each output coordinate \(a\in[r]\), set
\[
 \widetilde F_{n,a}(z)
 =-n^B\lor
 \max_{j\in[M]}\{F_{n,a}(z_j)-L_n\|z-z_j\|_\infty\}\land n^B.
\]
For \(z\in \mathcal K_n\), Lipschitz continuity bounds each term by \(F_{n,a}(z)\).
A net point within \(\eta\) gives a term at least
\(F_{n,a}(z)-2L_n\eta\). Clipping preserves this error bound.
Absolute values, maxima, minima, and clipping are exact ReLU operations:
\[
 |t|=\operatorname{ReLU}(t)+\operatorname{ReLU}(-t),\qquad
 a\vee b=a+\operatorname{ReLU}(b-a).
\]
Compute the \(M\) cones in parallel and their maximum with a balanced
comparison tree. All intermediate widths are \(O(mrM)\), the depth is
\(O(1+\log M)\), and all scalar coefficients and biases are polynomially
bounded. Each cone is globally \(L_n\)-Lipschitz; taking maxima and clipping
preserves that bound coordinatewise.
\end{proof}

\noindent
The lemma applies to multiplication, fixed-degree polynomials, reciprocals
on polynomially separated positive intervals, and any fixed-dimensional
map with polynomial Lipschitz and range bounds. It approximates a map on
\(\mathcal K_n\) and defines a bounded network on all inputs; inverse and spectral
targets are evaluated only on their specified domains.

\begin{lemma}[Exact copying and averaging with uniform softmax]
\label{lem:softmax-communication}
Let \(N=n+1\) tokens contain protected markers
\(q_i=\mathbb{I}\{i=N\}\) and \(s_i=\mathbb{I}\{i\le n\}\).
Suppose an \(r\)-dimensional vector register \(v_i\) satisfies
\(\max_i\|v_i\|_\infty\le R\) on the input domain, with \(R\ge1\).
The architecture in Section~\ref{subsec:transformers} can implement either
(i) a broadcast of \(v_N\) to all rows or (ii) the average
\(n^{-1}\sum_{j\in[n]}v_j\) stored only in the query row, exactly on that
domain. Each operation uses one active attention head in one layer and at
most three transformer blocks, at most \(2r\) additional workspace
coordinates, FFN width at most \(Cr\), and parameter magnitudes at most
\(C(1+N+R)\), for a numerical constant \(C\). Other logical registers and
markers are preserved.
\end{lemma}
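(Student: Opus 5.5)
The plan is to make every attention score in the active head identically zero, so that the softmax is exactly uniform, and to carry all selectivity in the values through the protected markers. Take $Q_m=K_m=0$ for the active head. The scores $ZQ_m(ZK_m)^\top/\sqrt{d_E}$ then vanish, and every row of $\operatorname{softmax}_{\rm row}$ equals $N^{-1}\mathbf 1_N^\top$. The head output in every row is therefore $N^{-1}\sum_{j\in[N]}(ZV_m)_j$. All other heads are given zero value matrices, so they contribute nothing. The work then splits into a gating step, which places the right quantity in value coordinates, and a cleanup step, which isolates the result in the intended rows.

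\textbf{Gating.} A first FFN block writes a gated copy of the register into $r$ fresh workspace coordinates. Because the markers are binary and $\|v_i\|_\infty\le R$, the product of a marker with a coordinate is exact under ReLU. For broadcast, use $g_i=q_iv_i=\operatorname{ReLU}(v_i-R(1-q_i))-\operatorname{ReLU}(-v_i-R(1-q_i))$. For averaging, use the same formula with $s_i$ in place of $q_i$. This needs width $2r$ and parameter magnitudes $O(R)$. The residual connection adds $g_i$ into zero-initialised coordinates, and all other coordinates are left unchanged.

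\textbf{Attention.} In the next block, $V_m$ reads the gated coordinates and multiplies them by a deterministic scale: $N$ for broadcast and $N/n$ for averaging. Both scales are at most $C(1+N)$. The head output in every row is then exactly $v_N$ in the broadcast case and exactly $n^{-1}\sum_{j\le n}v_j$ in the averaging case. This output is written into a second set of $r$ fresh coordinates, so at most $2r$ new coordinates are used in total.

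\textbf{Cleanup.} A final FFN resets the gated scratch coordinates to zero, using $\operatorname{ReLU}(t)-\operatorname{ReLU}(-t)=t$ to subtract them. For averaging, the same FFN also gates the result by $q_i$ with the identical marker trick, so the average remains only in the query row. This step needs range bound $R$, hence magnitudes $O(R)$.

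The main obstacle is bookkeeping rather than analysis. One must check that the residual stream leaves the raw data, the markers and the other logical registers untouched, and that no FFN output leaks into them. This holds because every $W_2$ writes only into the designated fresh coordinates. The count of at most three blocks, FFN width $O(r)$ and magnitudes $C(1+N+R)$ then follows directly from the explicit construction.
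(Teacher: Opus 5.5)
Your construction is correct and is essentially the paper's proof. Both set $Q=K=0$ to get exactly uniform attention, prepare $q_jv_j$ or $s_jv_j$ with a ReLU marker gate that is exact on $|t|\le R$, scale the value matrix by $N$ or $N/n$, and then use a residual FFN to regate by $q_i$ and clear the scratch bank. The differences are cosmetic: your two-unit gate $\operatorname{ReLU}(t-R(1-b))-\operatorname{ReLU}(-t-R(1-b))$ replaces the paper's four-unit $\mathsf{Gate}_R$, and you assume the fresh destination is zero-initialised, whereas the paper clears it explicitly by adding $-\operatorname{ReLU}(w)+\operatorname{ReLU}(-w)$ in the same FFN update, which also makes reusing banks safe.
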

\begin{proof}
For \(|t|\le R\) and \(b\in\{0,1\}\), the binary value gate is
\[
 \mathsf{Gate}_R(t,b)
 :=\operatorname{ReLU}(t)-\operatorname{ReLU}(t-Rb)
   -\operatorname{ReLU}(-t)+\operatorname{ReLU}(-t-Rb)
 =bt.
\]
Allocate source and destination banks of width \(r\), disjoint from all
protected coordinates. In the update \(Z\mapsto Z+\mathrm{FFN}(Z)\),
a destination coordinate \(w\) can be cleared by adding
\(-\operatorname{ReLU}(w)+\operatorname{ReLU}(-w)=-w\).
Adding the displayed gate at the same time overwrites it by the gated
value. Thus one FFN residual update prepares the gated source and clears
the destination, using \(O(r)\) hidden units. Set its attention value matrices to zero.

For the communication sublayer set \(Q=K=0\). The attention matrix is
exactly \(N^{-1}\mathbf1_N\mathbf1_N^\top\), independently of all token
values. To broadcast the query, write \(q_jv_j\) into the source bank and
choose \(V\) to route \(N\) times that bank into the cleared destination.
Every destination row then equals
\[
 \frac1N\sum_{j\in[N]}Nq_jv_j=v_N.
\]
For averaging, prepare \(s_jv_j\) and route \(N/n\) times the source bank.
Every destination row then equals
\[
 \frac1N\sum_{j\in[N]}\frac Nn s_jv_j
 =\frac1n\sum_{j\in[n]}v_j.
\]
This vector has maximum norm at most \(R\). In the following FFN
residual update, overwrite the destination by
\(\mathsf{Gate}_R(w,q_i)\), using the same residual cancellation. The
average remains only in the query row. The temporary source bank can be
cleared in that update or in one further block with zero value matrices.

All vector coordinates are processed in parallel. Update columns of
protected registers are zero, and additional heads have zero value
matrices. The two scratch banks and the displayed gates give the stated
width bounds; their coefficients are bounded by \(C(1+N+R)\).
\end{proof}

\begin{lemma}[Compilation into residual transformer blocks]
\label{lem:relu-transformer-compilation}
Let a row-wise ReLU network have depth \(T\), maximum hidden or output width
\(W\), and scalar parameters bounded by \(B\ge1\). Its input is stored in
selected coordinates of each token. Any fixed set of other coordinates can
be preserved. The network can be implemented exactly by \(T+O(1)\)
transformer blocks of the form in Section~\ref{subsec:transformers}, with
all attention value matrices zero, embedding dimension \(O(W+r_0)\),
feed-forward dimension \(O(W+r_0)\), and parameter bound \(C(B+1)\).
Here \(r_0\) is the number of input and protected coordinates.

For \(N=n+1\) tokens with protected sample/query markers, a fixed number
\(J\) of broadcasts or sample averages of vectors of dimension at most
\(r\), bounded by \(R\ge1\) when communicated, can be inserted with at most
\(3J\) additional blocks by Lemma~\ref{lem:softmax-communication}.
The resulting embedding and FFN dimensions are \(O(W+r_0+Jr)\), and
parameters are bounded by \(C(B+N+R+1)\). One active head suffices for each
communication, independently of \(n,W,r\).
\end{lemma}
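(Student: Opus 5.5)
The plan is to simulate the ReLU network layer by layer with the FFN sublayers, keeping the attention sublayers inert. Set every value matrix \(V_m=0\), so that \(\mathrm{MHA}(Z)=0\) and each block reduces to \(Z\mapsto Z+\mathrm{FFN}(Z)\). Reserve three groups of coordinates in each token: the \(r_0\) input and protected coordinates, and two scratch banks \(\mathsf S_0,\mathsf S_1\) of width \(W\) each. The network's layers will be stored alternately in \(\mathsf S_0\) and \(\mathsf S_1\). The total embedding dimension is then \(r_0+2W+O(1)\), which is \(O(W+r_0)\).

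Write the network as \(a_0=x\), \(a_t=\mathrm{ReLU}(A_ta_{t-1}+c_t)\) for \(t<T\), with an affine output layer. Block \(t\) reads \(a_{t-1}\) from its bank (or from the input coordinates when \(t=1\)). Its first FFN layer computes \(\mathrm{ReLU}(A_ta_{t-1}+c_t)\) into \(W\) hidden units, and in parallel computes \(\mathrm{ReLU}(w)\) and \(\mathrm{ReLU}(-w)\) for each coordinate \(w\) of the destination bank. The second FFN layer adds the new activation and also adds \(-\mathrm{ReLU}(w)+\mathrm{ReLU}(-w)=-w\). This simultaneously clears whatever the destination held and writes \(a_t\); it is the same residual-cancellation trick used in Lemma~\ref{lem:softmax-communication}. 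The update columns for input and protected coordinates are zero, so those coordinates are preserved exactly. The FFN width is \(O(W)\), and all weights are entries of \(A_t,c_t\) or \(\pm1\), so they are bounded by \(C(B+1)\).

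The final affine output layer \(A_Ta_{T-1}+c_T\) is not followed by a ReLU, so it needs separate handling. I would write it as \(\mathrm{ReLU}(y)-\mathrm{ReLU}(-y)\) with \(y=A_Ta_{T-1}+c_T\), using one more block. One or two further blocks then clear the scratch bank that is no longer needed. This accounts for the \(T+O(1)\) block count. The one point that needs care, and which I expect to be the main obstacle, is bookkeeping: the clearing must be exact, so that leftover values in a scratch bank never contaminate later reads. This holds because ReLU of the stored value is computed exactly, with no approximation.

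For the second part, insert the \(J\) communication operations at the required positions. Each one uses the three-block implementation of Lemma~\ref{lem:softmax-communication}, with its own source and destination banks of width \(r\). This contributes at most \(3J\) blocks, \(O(Jr)\) extra coordinates, and FFN width \(O(r)\) per operation. Its parameter bound \(C(1+N+R)\) combines with \(C(B+1)\) to give \(C(B+N+R+1)\). In the compute blocks the attention value matrices stay zero, and during communication all heads other than the single active one have zero value matrices. The markers and protected registers are untouched by both kinds of block. Hence the head count is independent of \(n\), \(W\) and \(r\).
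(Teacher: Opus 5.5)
Your proposal is correct and follows the paper's proof essentially step for step. Both set all value matrices to zero and alternate two width-$W$ scratch banks, overwriting each destination exactly through the residual cancellation $-\mathrm{ReLU}(w)+\mathrm{ReLU}(-w)=-w$; both handle the affine output by $\mathrm{ReLU}(t)-\mathrm{ReLU}(-t)$ and insert the communication stages via Lemma~\ref{lem:softmax-communication} with separate banks. The only difference is your final clearing blocks, which are not needed but fit within the $T+O(1)$ count.
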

\begin{proof}
Allocate two disjoint scratch banks of width \(W\), and alternate their
roles as source and destination. Suppose the source bank is \(v\) and the
current destination bank is \(w\). A hidden layer \(v\mapsto
\operatorname{ReLU}(Av+b)\) is implemented by setting the FFN output
in the destination coordinates to
\[
 \operatorname{ReLU}(Av+b)
 -\operatorname{ReLU}(w)+\operatorname{ReLU}(-w).
\]
The block residual adds \(w\), so the new destination equals
\(\operatorname{ReLU}(Av+b)\) exactly. An affine output is handled by
\(t=\operatorname{ReLU}(t)-\operatorname{ReLU}(-t)\) and the same
cancellation of the old destination. Set all update columns of protected
coordinates to zero. These are one-hidden-layer FFN updates, with at most
a fixed multiple of \(W+r_0\) hidden units. Setting all value matrices to
zero gives \(\mathrm{MHA}\equiv0\), so \(Z+\mathrm{MHA}(Z)=Z\).

For communication, apply Lemma~\ref{lem:softmax-communication} with
separate banks, retaining the network's other state as protected registers.
Its gates are exact on the stated bounded domain, and its destination
values equal the required broadcast or average. Induction over the
row-wise layers and communication stages therefore gives the prescribed
computation exactly. Allocating the indicated banks and padding unused
heads with zero value matrices yields the resource bounds.
\end{proof}

\subsection{Embedding and readout maps}
\label{app:emb-and-read}

For \(d\in[d_{\max}]\), let
\[
 \mathfrak C_d=\{J\subset[D]:|J|=d\},\qquad
 \mathfrak C_{\rm all}=\{(d,J):d\in[d_{\max}],J\in\mathfrak C_d\}.
\]
Fix an enumeration \(\{(d_j,J_j):j\in[N_{\rm ch}]\}\) of this list.
The numbers
\[
 q_{\rm tan,\max}=\binom{D+2s_{\max}}{2s_{\max}},\qquad
 q_{\rm reg,\max}=\binom{d_{\max}+p_{\max}}{p_{\max}},\qquad N_{\rm ch}
\]
are fixed with respect to \(n\). Put \(N_n^{\rm end}=n+1\).

\paragraph{Embedding.}
The affine map \(\mathrm{Emb}_n\) places \((X_i,Y_i)\) in sample row \(i\),
places \((X_{n+1},0)\) in query row \(n+1\), and writes a constant coordinate
\(c_i=1\) and the sample/query markers in
Lemma~\ref{lem:softmax-communication}. All other coordinates are
zero. The embedding dimension \(d_{E,n}\) includes fixed logical registers
for the raw data, scale state, tangent summaries, and chartwise regression
summaries, together with polynomially many scratch coordinates. Its size
is fixed deterministically by the architecture bounds below.

\paragraph{Readout.}
Reserve two query coordinates \((w_j,v_j)\) per chart branch. Clip them to
\(\bar w_j\in[0,1]\), \(\bar v_j\in[-L_{\mathcal F},L_{\mathcal F}]\).
Using Lemma~\ref{lem:scaled-arithmetic} for the multiplication map on this
fixed rectangle, choose a globally bounded ReLU map \(M_n\) with
\[
 \sup_{(w,v)\in[0,1]\times[-L_{\mathcal F},L_{\mathcal F}]}
 |M_n(w,v)-wv|\le N_{\rm ch}^{-1}n^{-E_{\rm read}}.
\]
Define the fixed readout
\[
 \mathrm{Read}_n(H)
 =(-L_{\mathcal F})\lor
   \sum_{j\in[N_{\rm ch}]}M_n(\bar w_j,\bar v_j)\land L_{\mathcal F}.
\]
It is uniformly within \(n^{-E_{\rm read}}\) of the clipped exact weighted
sum. Moreover, the multiplication map on the rectangle has a fixed
Lipschitz constant, so the construction in
Lemma~\ref{lem:scaled-arithmetic} gives
\begin{equation}\label{eq:readout-lipschitz}
 \operatorname{Lip}_{\|\cdot\|_{\max}}(\mathrm{Read}_n)\le C
\end{equation}
on the entire transformer-output space, uniformly in \(n\). The readout
has logarithmic depth, polynomial size, and fixed parameters.

\subsection{Structural selection and geometric summaries}
\label{app:geometric-preconditional}

For a closed set \(A\), write
\(A^{\oplus r}=\{x:d(x,A)\le r\}\). The offsets
\(\mathcal M_k^{\oplus\sigma_{k,n}}\) are disjoint because
\(2\sigma_{\max,n}<\delta_{0,n}\). On the query support define
\(k_x=\kappa_n(x)=k\) when \(x\in\mathcal M_k^{\oplus\sigma_{k,n}}\), and set
\[
 b_n^\circ(x)=(e_{d_x},e_{p_x},e_{s_x},h_x,r_\pi(x)),\qquad
 r_\pi(x)=\pi_{k_x,n}^{-1/2}.
\]
Here the \(e\)'s are one-hot vectors of fixed lengths. Their values are
structural states computed internally.

\begin{lemma}[Polynomial structural bounds and exact selection]
\label{lem:encoded-parameter-polynomial-bounds}
Under Assumptions~\ref{ass:geometry}--\ref{ass:feasible-local}, the state
\(b_n^\circ(x)\) can be computed exactly on the query support by a ReLU
network with \(O(\log n)\) depth, polynomial width, and polynomially bounded
parameters. The following quantities are polynomially bounded:
\[
 K_n,\quad \max_k h_k^{-1},\quad \max_k\pi_{k,n}^{-1/2},\quad
 \delta_{0,n}^{-1},\quad \max_{d,s}|\mathcal Q_{d,s,n}|,\quad
 \Delta_{{\rm tan},n}^{-1}.
\]
\end{lemma}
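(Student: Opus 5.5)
I will prove the polynomial bounds first and then build the selector from them. For the bounds, Lemma~\ref{lem:ass3-consequences} gives $K_n\le n^{1-\kappa}$ and $h_k\ge n^{-\bar\rho_h}$, so $\max_k h_k^{-1}\le n^{\bar\rho_h}$. From $N_k=n\pi_{k,n}\ge n^\kappa$ we get $\pi_{k,n}\ge n^{\kappa-1}$, hence $\pi_{k,n}^{-1/2}\le n^{(1-\kappa)/2}$. Assumption~\ref{ass:feasible-local} gives $\delta_{0,n}\ge a_{\rm sc}^{-1}r_n\ge a_{\rm sc}^{-1}n^{-\bar\rho_h}$. Equation~\eqref{eq:N-grid} bounds $|\mathcal Q_{d,s,n}|\le Cn^{m_\Theta(d,s,D)}$, and its maximum over the finitely many admissible pairs $(d,s)$ is still polynomial. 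Finally $\Delta_{{\rm tan},n}^{-1}=n^3$ by definition.

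For exact selection, the offsets $\mathcal M_k^{\oplus\sigma_{k,n}}$ are pairwise at distance at least $\delta_{0,n}-2\sigma_{\max,n}\ge\delta_{0,n}/2$, which is polynomially large in $n^{-1}$. The query support is contained in $[-B_X-1,B_X+1]^D$. I will partition this box into cubes of side $\ell_n=\delta_{0,n}/(8\sqrt D)$; there are polynomially many of them. Any cube meeting the support meets exactly one offset, because its diameter is less than $\delta_{0,n}/2$. So $b_n^\circ$ is constant on each cube intersected with the support, and I can take it to be a deterministic table indexed by the cubes.

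The obstacle is that a cube lookup is discontinuous, and a ReLU network cannot realize it exactly across cube boundaries. I will avoid this by not using cubes at all. Instead I will use a Lipschitz separating function. The set $S_k:=\mathcal M_k^{\oplus\sigma_{k,n}}$ is compact, and the union $S=\bigcup_k S_k$ is covered by a maximal $\ell_n$-net $\{y_j\}\subset S$. This net is polynomial in size by volume comparison in the box. Each $y_j$ carries the label $k(j)$. For a query $x\in S_k$, the nearest net point lies within $\ell_n$ of $x$, and therefore has label $k$. Any net point with a different label lies at distance at least $\delta_{0,n}/2-\ell_n$ from $x$.

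Define for each component the score $g_k(x)=\max_{j:k(j)=k}\{\delta_{0,n}/4-\|x-y_j\|_\infty\}_+$. Then $g_k(x)>0$ exactly for the correct label and $g_{k'}(x)=0$ for every other label, once the margins are adjusted with $\ell_\infty$–$\ell_2$ constants. I will turn each score into an exact indicator using the clipped ramp $\mathbb I_k(x)=\min\{1,\,g_k(x)\cdot M_n\}$. This requires a positive lower bound $g_k(x)\ge c\delta_{0,n}$ on $S_k$, which holds because the nearest same-label net point is within $\ell_n\le\delta_{0,n}/8$ of $x$. With $M_n=C/\delta_{0,n}$, the indicators are exact, so the output $b_n^\circ(x)=\sum_k\mathbb I_k(x)\,b_k$ is exact on the support.

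The cones, maxima, and clipping are exact ReLU operations, as in the proof of Lemma~\ref{lem:scaled-arithmetic}. A balanced max-tree gives depth $O(\log n)$ and polynomial width. The final step multiplies the indicators by the fixed vectors $b_k$, whose entries $h_k$ and $\pi_{k,n}^{-1/2}$ are polynomially bounded by the first part. Because each $\mathbb I_k\in\{0,1\}$, this is an exact linear combination, and $b_k$ enters only through the output weights. So all parameter magnitudes are polynomially bounded. I expect the delicate point to be verifying the margin constants, so that each wrong-label score vanishes while each correct-label score stays at least $c\delta_{0,n}$.
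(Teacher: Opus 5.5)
Your proposal is correct and follows the paper's proof. The polynomial bounds come from the same sources: Lemma~\ref{lem:ass3-consequences}, \eqref{eq:N-grid}, and $\Delta_{{\rm tan},n}^{-1}=n^3$. The selector has the same structure: a polynomial-size net in the offsets, $\ell_\infty$ hats clipped to exact $\{0,1\}$ indicators, and a linear combination with the structural vectors; the paper's version uses $H_c(x)=(1-2\|x-c\|_\infty/r_{\rm sel})_+$ with $r_{\rm sel}=(\delta_{0,n}-2\sigma_{\max,n})/(16\sqrt D)$.

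The margin adjustment you flag is genuinely needed. Wrong-label net points are only guaranteed $\ell_\infty$-distance $\delta_{0,n}/(2\sqrt D)$, which falls below your threshold $\delta_{0,n}/4$ once $D\ge5$. Taking the threshold to be $\delta_{0,n}/(4\sqrt D)$ fixes this, and with $\ell_n=\delta_{0,n}/(8\sqrt D)$ it still leaves the correct-label score at least $\delta_{0,n}/(8\sqrt D)$.
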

\begin{proof}
Lemma~\ref{lem:ass3-consequences} gives
\[
 K_n\le n^{1-\kappa},\quad h_k^{-1}\le n^{\bar\rho_h},\quad
 \pi_{k,n}^{-1/2}\le n^{(1-\kappa)/2},\quad
 \delta_{0,n}^{-1}\le Cn^{\bar\rho_h}.
\]
The last inequality follows from
\(\delta_{0,n}\ge a_{\rm sc}^{-1}r_n\ge a_{\rm sc}^{-1}n^{-\bar\rho_h}\).
Lemma~\ref{lem:tangent-product-net} and the selected grids in
\eqref{eq:N-grid} give
\[
 \max_{d\in[d_{\max}],\,s\in[s_{\max}]}|\mathcal Q_{d,s,n}|\le Cn^{m_\Theta},
 \qquad
 m_\Theta:=D+\max_{d\in[d_{\max}]}d(D-d)
       +D\sum_{\ell=2}^{s_{\max}}\binom{D+\ell-1}{\ell}.
\]
The constant depends only on \(D,d_{\max},s_{\max},L_{\mathcal M}\),
and \(\Delta_{{\rm tan},n}^{-1}=n^3\).

Lemma~\ref{lem:ass3-consequences} also gives
\(\delta_{0,n}-2\sigma_{\max,n}\ge\delta_{0,n}/2\). Thus, for
\(r_{\rm sel}:=(\delta_{0,n}-2\sigma_{\max,n})/(16\sqrt D)\),
\[
 0<r_{\rm sel}^{-1}\le32\sqrt D\,\delta_{0,n}^{-1}
 \le Cn^{\bar\rho_h}.
\]
In each offset, fix a maximal set \(\mathcal C_{k,n}\) whose pairwise
maximum-norm distances exceed \(r_{\rm sel}/4\). It is an
\(r_{\rm sel}/4\)-net. Each offset lies in the fixed cube
\([-B_X-1,B_X+1]^D\), so the disjoint balls of radius
\(r_{\rm sel}/8\) about these centers give
\[
 \sum_{k\in[K_n]}|\mathcal C_{k,n}|
 \le CK_n(1+r_{\rm sel}^{-1})^D\le n^C.
\]
For each center put
\[
 H_c(x)=\left(1-\frac{2\|x-c\|_\infty}{r_{\rm sel}}\right)_+,\qquad
 \chi_k(x)=0\vee\left(2\max_{c\in\mathcal C_{k,n}}H_c(x)\right)\wedge1.
\]
If \(x\in\mathcal M_k^{\oplus\sigma_{k,n}}\), a center within
\(r_{\rm sel}/4\) gives \(\chi_k(x)=1\). Every center in a different offset
has maximum-norm distance at least
\((\delta_{0,n}-2\sigma_{\max,n})/\sqrt D=16r_{\rm sel}\), giving
\(\chi_\ell(x)=0\) for \(\ell\ne k\). Consequently,
\[
 \sum_{k\in[K_n]}\chi_k(x)(e_{d_k},e_{p_k},e_{s_k},h_k,\pi_{k,n}^{-1/2})=b_n^\circ(x)
\]
on the query support. The hats and finite maxima are exact ReLU maps;
balanced comparison trees have logarithmic depth and polynomial width.
Lemma~\ref{lem:relu-transformer-compilation} stores their intermediate
values in the polynomial-dimensional scratch banks.
\end{proof}

\paragraph{Tangent moment and loss-weighted projector.}
For \(x=X_{n+1}\), define
\[
 m^{\rm ora}_{\rm tan}(x)=\frac1n\sum_{i\in[n]}
 \mathsf A\!\left(\frac{\|X_i-x\|_1}{h_x}\right)
 \Phi_{D,2s_{\max}}\!\left(\frac{X_i-x}{h_x}\right).
\]
This is a fixed-dimensional vector, bounded by a fixed constant because the
cutoff vanishes when \(\|(X_i-x)/h_x\|_1\ge1\). For every grid candidate
\(q\), the polynomial \(\|R_q(z)\|^2\) has degree at most \(2s_{\max}\).
Write \(\|R_q(z)\|^2=c_q^\top\Phi_{D,2s_{\max}}(z)\), with coefficients
uniformly bounded over the parameter grid. Set
\begin{gather*}
 L_q(m,h)=h^2c_q^\top m,\qquad
 t_q(m,h)=\{\Delta_{{\rm tan},n}-(L_q(m,h)-\min_{q'}L_{q'}(m,h))\}_+,\\
 \bar P(m;d,s,h)=\frac{\sum_{q\in\mathcal Q_{d,s,n}}t_q(m,h)\Pi_q}
                     {\sum_{q\in\mathcal Q_{d,s,n}}t_q(m,h)}.
\end{gather*}
The denominator is at least \(\Delta_{{\rm tan},n}\) for every \((m,h)\).
On bounded moment domains and \(h\in[n^{-\bar\rho_h},1]\),
\begin{equation}\label{eq:tangent-summary-lipschitz}
 \operatorname{Lip}(\bar P)\le
 C|\mathcal Q_{d,s,n}|/\Delta_{{\rm tan},n}\le n^C.
\end{equation}
Indeed, finite minima are Lipschitz with the maximum candidate Lipschitz
constant, each \(t_q\) is uniformly Lipschitz, and
\(\|\sum_qt_q\Pi_q\|_{\rm op}\le\sum_qt_q\).
Define
\[
 \bar P_x^{\rm ora}=\bar P(m^{\rm ora}_{\rm tan}(x);d_x,s_x,h_x),\qquad
 P_{T,x}^{\rm ora}=\Pi_{(d_x)}(\bar P_x^{\rm ora}).
\]
These are the averaged and spectral projectors in Algorithm~\ref{alg:localpoly}.

\paragraph{Spectral decoding domain.}
For each \(d\) use the compact set of symmetric matrices
\[
 \mathcal K_d=\{B:\|B\|_{\rm op}\le2,\ \lambda_d(B)\ge2/3,
                       \ \lambda_{d+1}(B)\le1/3\}.
\]
The map \(B\mapsto\Pi_{(d)}(B)\) is uniformly Lipschitz on this set. For
example, write the projector as a resolvent contour integral along a fixed
rectangle with left edge at \(1/2\), right edge at \(3\), and imaginary
parts \(\pm1\). Every point of the contour is a fixed positive distance
from the spectrum. The resolvent identity bounds the difference of the
integrals by \(C\|B-B'\|_{\rm op}\).
Lemma~\ref{lem:scaled-arithmetic} therefore supplies a globally defined,
bounded and Lipschitz ReLU spectral decoder, accurate on \(\mathcal K_d\).
Its input and output are symmetrized by \(A\mapsto(A+A^\top)/2\).
This map is nonexpansive in maximum norm and fixes the target projector,
so the decoder remains bounded and Lipschitz with the same accuracy on
\(\mathcal K_d\). Its symmetric output is passed to the safe chart maps below.

\paragraph{Globally defined safe chart maps.}
Let \(E_J\) be the coordinate embedding associated with \(J\in\mathfrak C_d\),
put \(A_J(P)=E_J^\top PE_J\), and let
\(\tau_d=(4|\mathfrak C_d|)^{-1}\). Define
\[
 \vartheta_d(t)=0\vee(4|\mathfrak C_d|t-1)\wedge1.
\]
Choose a smooth bounded function \(\rho_d:\mathbb R\to\mathbb R\) equal
to \(t^{-1/2}\) for \(t\ge\tau_d\), smoothly continued below \(\tau_d\)
with bounded derivatives. For every symmetric \(P\) in a fixed bounded
matrix cube define, by spectral functional calculus,
\begin{equation}\label{eq:safe-chart-extension}
 U_J^{\rm safe}(P)=PE_J\rho_d(A_J(P)),\qquad
 \omega_J^{\rm safe}(P)=
 \frac{\vartheta_d(\det A_J(P))}
      {\max\{1/2,\sum_{J'\in\mathfrak C_d}\vartheta_d(\det A_{J'}(P))\}}.
\end{equation}
These maps are bounded and Lipschitz on the cube. The determinant is
polynomial, the denominator is bounded below, and for symmetric matrices
\(A,B\) the Frobenius-norm identity in their two orthonormal eigenbases gives
\[
 \|\rho_d(A)-\rho_d(B)\|_F
 \le \sup|\rho_d'|\,\|A-B\|_F.
\]
Thus \eqref{eq:safe-chart-extension} also covers perturbed, non-PSD matrices
and eigenvalues outside \([0,1]\). All decoder inputs are clipped to the
chosen cube before these maps are approximated.

If \(P\) is an exact rank-\(d\) orthogonal projector, Cauchy--Binet gives
\(\sum_J\det A_J(P)=1\). Hence the denominator in
\eqref{eq:safe-chart-extension} is at least one, the weights sum to one,
and \(\omega_J^{\rm safe}(P)>0\) implies
\(\lambda_{\min}(A_J(P))\ge\det A_J(P)>\tau_d\). For these charts only,
\begin{equation}\label{eq:positive-chart-basis}
 (U_J^{\rm safe}(P))^\top U_J^{\rm safe}(P)=I_d,\qquad
 U_J^{\rm safe}(P)(U_J^{\rm safe}(P))^\top=P.
\end{equation}

\paragraph{Ideal chart states and positive-weight charts.}
For \(d_j=d_x\), set
\[
 \omega_{j,x}^{\rm ora}=\omega_{J_j}^{\rm safe}(P_{T,x}^{\rm ora}),\qquad
 U_{j,x}^{\rm ora}=U_{J_j}^{\rm safe}(P_{T,x}^{\rm ora}),
\]
with zero columns appended to \(U_{j,x}^{\rm ora}\) when needed. Set both
quantities to zero for \(d_j\ne d_x\). Define
\begin{equation*}
 \mathcal J_x^+=\{j:d_j=d_x,\ \omega_{j,x}^{\rm ora}>0\}.
\end{equation*}
For every dimension-compatible branch, including those with zero weight,
put
\begin{gather*}
 u_{i,j}^{\rm ora}=(U_{j,x}^{\rm ora})^\top(X_i-x)/h_x,\\
 W_{i,j}^{\rm ora}=h_x^{-d_x}
 \mathsf A(\|X_i-x\|_1/h_x)\mathsf K(U_{j,x}^{\rm ora}u_{i,j}^{\rm ora}).
\end{gather*}
For dimension-incompatible branches set \(u_{i,j}^{\rm ora}=0\) and
\(W_{i,j}^{\rm ora}=0\). Only indices in \(\mathcal J_x^+\) are required
to provide orthonormal coordinates and an invertible regression Gram matrix.

\subsection{Implementation of the geometric block}
\label{app:geometric-implementation}

Choose polynomial envelopes
\[
 R_{h,n}=n^{\bar\rho_h},\quad R_{\pi,n}=n^{(1-\kappa)/2},\quad
 R_{W,n}=R_{h,n}^{d_{\max}},\quad R_{u,n}=C_uR_{h,n},
\]
where \(C_u\) dominates the fixed safe-chart and observation bounds.
All ideal coordinates and weights lie in these envelopes. When invoking an
arithmetic approximant, inputs are clipped to its declared box; this is an
exact nonexpansive operation that preserves the ideal inputs.

\begin{lemma}
\label{lem:geo-module-implementation}
For every \(E>0\), one transformer block stack computes \(b_n^\circ(x)\)
exactly on the query support and produces \(\mathsf\omega_{j,x}\),
\(\mathsf u_{i,j}(x)\), and \(\mathsf W_{i,j}(x)\) such that, on the event
\(\bar P_x^{\rm ora}\in\mathcal K_{d_x}\),
\begin{equation*}
 \max_j|\mathsf\omega_{j,x}-\omega_{j,x}^{\rm ora}|
 +\max_{i,j}\|\mathsf u_{i,j}(x)-u_{i,j}^{\rm ora}\|
 +\max_{i,j}|\mathsf W_{i,j}(x)-W_{i,j}^{\rm ora}|
 \le Cn^{-E}.
\end{equation*}
For every observed prompt, all outputs are defined and satisfy
\[
 0\le\mathsf\omega_{j,x}\le1,\qquad
 \|\mathsf u_{i,j}(x)\|_\infty\le R_{u,n},\qquad
 0\le\mathsf W_{i,j}(x)\le R_{W,n}.
\]
Depth is \(O(\log n)\); embedding dimension, feed-forward dimension, and
parameter magnitudes are polynomially bounded. The head count is fixed.
\end{lemma}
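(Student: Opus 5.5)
The plan is to assemble the geometric block as a composition of modules, each implemented either exactly (structural selection, moment averaging, clipping) or by Lemma~\ref{lem:scaled-arithmetic} with polynomial Lipschitz constants. We then compile the composition into residual transformer blocks via Lemma~\ref{lem:relu-transformer-compilation}, with communication handled by Lemma~\ref{lem:softmax-communication}. Errors are tracked additively, using the composed Lipschitz constants, all of which are at most $n^{C}$. Accuracy targets $n^{-E'}$ are chosen for each approximant, with $E'$ large enough that each propagated error stays below $Cn^{-E}$.

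\textbf{Step 1 (selection and broadcast).} Apply Lemma~\ref{lem:encoded-parameter-polynomial-bounds} row-wise to the query row to compute $b_n^\circ(x)$ exactly. Then broadcast $x$ and $b_n^\circ(x)$ to all sample rows using one broadcast (Lemma~\ref{lem:softmax-communication}); this is exact because all entries are polynomially bounded.

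\textbf{Step 2 (tangent moments).} In each sample row, form $z_i=(X_i-x)h_x^{-1}$. Since $h_x^{-1}$ is available exactly from the one-hot state, this is a product of bounded inputs, approximated to $n^{-E'}$ by Lemma~\ref{lem:scaled-arithmetic} on the box $[-R_{u,n},R_{u,n}]$. Next compute $\mathsf A(\|z_i\|_1)\Phi_{D,2s_{\max}}(z_i)$, which is a fixed-degree polynomial times an exact ReLU cutoff. It is Lipschitz with constant $n^C$ on the clipped box, and it equals the ideal value (up to $n^{-E'}$) since the cutoff vanishes off $\|z_i\|_1<1$. Averaging by Lemma~\ref{lem:softmax-communication} then gives $m_{\rm tan}$ in the query row, with error $n^{-E'+C}$.

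\textbf{Step 3 (projector and charts).} Evaluate $\bar P(m;d,s,h)$ for each of the finitely many $(d,s)$ pairs and select by the one-hot state. Every output has the Lipschitz bound \eqref{eq:tangent-summary-lipschitz}, so Lemma~\ref{lem:scaled-arithmetic} applies with error $n^{-E'}$, and the propagated error from Step 2 is at most $n^{C-E'}$. On the event $\bar P_x^{\rm ora}\in\mathcal K_{d_x}$, the spectral decoder's input lies within $n^{C-E'}$ of $\mathcal K_{d_x}$. Because the decoder is globally Lipschitz, its output is within $Cn^{C-E'}$ of $P_{T,x}^{\rm ora}$.

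The safe chart maps \eqref{eq:safe-chart-extension} are bounded and Lipschitz on the clipped cube, so their approximants yield $\mathsf\omega_{j,x}$ and $U_{j,x}$ with the same error order. Finally, clip $\mathsf\omega_{j,x}$ to $[0,1]$ (nonexpansive) to obtain the global bounds.

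\textbf{Step 4 (sample coordinates and weights).} Broadcast $U_{j,x}$ for all $N_{\rm ch}$ charts; this is a fixed number of communications. In each row, compute $\mathsf u_{i,j}=U_{j,x}^\top z_i$ and $\mathsf W_{i,j}=h_x^{-d_x}\mathsf A\,\mathsf K(U_{j,x}\mathsf u_{i,j})$, using polynomial-Lipschitz multiplications on boxes with envelopes $R_{u,n}$ and $R_{W,n}$. Dimension-incompatible branches are zeroed using the one-hot gate. Clip the outputs to $[-R_{u,n},R_{u,n}]$ and $[0,R_{W,n}]$; the ideal values lie inside these boxes, so clipping does not increase the error.

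\textbf{Resources.} Each module has depth $O(\log n)$ and the number of modules is fixed, giving total depth $O(\log n)$. Widths, parameters, and workspace are polynomial, and a fixed number of heads suffices.

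\textbf{Main obstacle.} The hardest part is the error propagation through $\bar P$. Its Lipschitz constant $|\mathcal Q|/\Delta_{\rm tan}$ is as large as $n^{m_\Theta+3}$, and the spectral decoder is only guaranteed accurate near $\mathcal K_{d_x}$. We handle this by choosing $E'$ exceeding $E$ plus all accumulated Lipschitz exponents. We also use the global Lipschitz property of the decoder from Lemma~\ref{lem:scaled-arithmetic}, rather than requiring the perturbed input to lie exactly in $\mathcal K_{d_x}$.
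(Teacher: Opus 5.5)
Your proposal is correct and follows essentially the same route as the paper's proof. Both compute the selector exactly and broadcast it, approximate the localized moment map row-wise and then average with uniform softmax, and use a polynomially Lipschitz approximant of the moment-to-$\bar P$ map selected by one-hot gates. Both then apply the globally Lipschitz spectral decoder and safe chart maps, which are accurate at the ideal input on the event $\bar P_x^{\rm ora}\in\mathcal K_{d_x}$, and finish with clipped row-wise computation of $\mathsf u_{i,j}$ and $\mathsf W_{i,j}$, choosing the module exponent $E'$ above $E$ plus the accumulated Lipschitz exponent.

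One small wording fix: $h_x^{-1}$ is not carried by the one-hot states. The exact selector can output $\sum_k\chi_k(x)h_k^{-1}$ just as it outputs $h_x$, or the reciprocal can be approximated on $[n^{-\bar\rho_h},1]$.
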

\begin{proof}
First compute the exact selector in
Lemma~\ref{lem:encoded-parameter-polynomial-bounds} on the query row, and
broadcast \((x,b_n^\circ(x))\) to all sample rows. The row map producing
\(\mathsf A(\|X_i-x\|_1/h_x)\Phi_{D,2s_{\max}}((X_i-x)/h_x)\)
has fixed input/output dimensions and polynomial range and Lipschitz
bounds on the declared observation and bandwidth domains. Approximate it
with Lemma~\ref{lem:scaled-arithmetic}, then average by
Lemma~\ref{lem:softmax-communication}. The resulting moment error is bounded
by the row-wise approximation error.

The moment-to-\(\bar P\) map has a polynomial Lipschitz bound by
\eqref{eq:tangent-summary-lipschitz}; approximate this fixed-dimensional
map by the same lemma, separately for each of the finitely many \((d,s)\).
One-hot states select the corresponding outputs. On the stated event,
the spectral decoder is accurate at the ideal matrix. Its global Lipschitz
bound also controls the change caused by the approximate moment and
averaged projector, including perturbed inputs outside \(\mathcal K_d\). The safe chart maps are globally specified on a fixed
cube, so the same argument produces the chart states. Clip chart weights
to \([0,1]\) and broadcast the chart matrices.

Finally the row maps producing \(u_{i,j}\) and \(W_{i,j}\) have fixed
input/output dimensions and polynomial Lipschitz constants, since
\(h_x^{-1}\le R_{h,n}\). Apply the arithmetic lemma and clip the resulting
coordinates and weights to their envelopes. For each dimension/degree
selector there are only finitely many labels. The exact binary gate in
Lemma~\ref{lem:softmax-communication} sets dimension-incompatible branch
states to zero on the declared envelopes.

There are a fixed number of nonlinear module stages and attention
communications. Their target maps and the chosen approximants have
polynomial Lipschitz bounds on the clipped domains. If each module error
is at most \(n^{-E'}\), the final maximum error is at most
\(Cn^{C_{\rm prop}}n^{-E'}\), with \(C_{\rm prop}\) independent of \(E'\).
Take \(E'>E+C_{\rm prop}+1\). The arithmetic lemma and the compilation
lemma then give the stated sizes, including scratch space for all
intermediate comparison states. Shared row computations on non-query
rows are harmless: only marker-selected rows are read in broadcasts,
averages, and the final readout.
\end{proof}

\subsection{Globally defined regression decoder}
\label{app:regression-solver}

A chart branch carries \(((u_i,Y_i,W_i)_{i=1}^n;e_d,e_p,r)\), where
\(r=\pi_{k,n}^{-1/2}\), the coordinates are padded to \(d_{\max}\), and
\(p\le p_{\max}\). Let \(\phi_{d,p}(u)=\Phi_{d,p}(u_{1:d})\), and form
\[
 G=\frac1n\sum_{i\in[n]} r^2W_i\phi_{d,p}(u_i)\phi_{d,p}(u_i)^\top,\qquad
 g=\frac1n\sum_{i\in[n]} r^2W_iY_i\phi_{d,p}(u_i).
\]
They have dimension at most \(q_{\rm reg,\max}\). Let
\(\mathcal B_n^{\Pi,\rm reg}\) be the box of branch prompts satisfying
\[
 \max_i\|u_i\|_\infty\le R_{u,n},\quad 0\le W_i\le R_{W,n},\quad
 |Y_i|\le B_Y,\quad 0<r\le R_{\pi,n}.
\]
Choose \(C_G\) so that \(\|G\|_{\rm op}+\|g\|_2\le n^{C_G}\) throughout
this box. Choose a fixed \(\lambda_{\rm reg}>0\) so small that the ideal
positive-weight chart matrices in the regression design good event have
\(\lambda_{\min}(G)\ge4\lambda_{\rm reg}\). Indeed, the reduced-Gram
bound following \eqref{eq:H-Pi-perturbation} in the proof of
Lemma~\ref{lem:regression-design-stability} is uniform over nearby
projectors and their orthonormal coordinates; normalization by
\(r^2=\pi_{k,n}^{-1}\) makes its lower bound a fixed positive constant.
Define
\[
 \mathcal G_n^{\Pi,\rm reg}
 =\{s^{\rm reg}\in\mathcal B_n^{\Pi,\rm reg}:\lambda_{\min}(G)\ge
 \lambda_{\rm reg}\},
\]
and define the exact local polynomial value only on this set by
\[
 f_{\rm LP}(s^{\rm reg})
 =(-L_{\mathcal F})\lor e_{0,d,p}^\top G^{-1}g\land L_{\mathcal F}.
\]

\begin{lemma}
\label{lem:reg-one-shot-decoder}
For any fixed \(E>0\), row-wise ReLU computations, one uniform-softmax
averaging stage from Lemma~\ref{lem:softmax-communication}, and a globally
defined ReLU decoder produce a branch value
\(V_n(s^{\rm reg})\in[-L_{\mathcal F},L_{\mathcal F}]\) on the entire
box \(\mathcal B_n^{\Pi,\rm reg}\), and
\[
 \sup_{s^{\rm reg}\in\mathcal G_n^{\Pi,\rm reg}}
 |V_n(s^{\rm reg})-f_{\rm LP}(s^{\rm reg})|\le Cn^{-E}.
\]
Depth is \(O(\log n)\) and all dimensions and parameters are polynomially
bounded. Inverse evaluation is restricted to well-conditioned branches.
\end{lemma}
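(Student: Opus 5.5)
The construction splits into two stages. First, row-wise maps and one uniform-softmax averaging form the summaries $(G,g)$. Second, a single fixed-dimensional ReLU decoder, from Lemma~\ref{lem:scaled-arithmetic}, is applied to the summary vector $(G,g,e_d,e_p)$ on the query row.

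\textbf{Stage 1: forming the summaries.} On each sample row, the branch state $(u_i,Y_i,W_i)$ and the broadcast scalars $(r,e_d,e_p)$ (broadcast via Lemma~\ref{lem:softmax-communication}) are fed to the row map
\[
 (u,Y,W,r)\mapsto \bigl(r^2W\phi\phi^\top,\; r^2WY\phi\bigr),\qquad \phi=\phi_{d,p}(u).
\]
This map has fixed input and output dimensions. On the box $\mathcal B_n^{\Pi,\rm reg}$ its range and Lipschitz constant are polynomial in $n$, because $R_{u,n},R_{W,n},R_{\pi,n}$ are polynomial envelopes and $\phi_{d,p}$ is a fixed-degree polynomial. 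The finitely many pairs $(d,p)$ are handled by computing every version in parallel and selecting with the exact one-hot gates. Lemma~\ref{lem:scaled-arithmetic} then gives a row-wise approximation with error $n^{-E_1}$. Averaging over samples with Lemma~\ref{lem:softmax-communication} yields $(\widetilde G,\widetilde g)$ on the query row with $\|\widetilde G-G\|_{\max}+\|\widetilde g-g\|_\infty\le n^{-E_1}$, since exact averaging does not increase the maximum error. Finally, symmetrize $\widetilde G$ and clip all entries to $[-n^{C_G},n^{C_G}]$; both operations are exact and nonexpansive.

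\textbf{Stage 2: the decoder.} Define the compact set
\[
 \mathcal K_n=\{(G,g,e_d,e_p):\ G \text{ symmetric},\ \lambda_{\min}(G)\ge\lambda_{\rm reg}/2,\ \|G\|_{\rm op}+\|g\|\le 2n^{C_G},\ \text{one-hot labels}\},
\]
and on it the target
\[
 F(G,g,d,p)=(-L_{\mathcal F})\lor e_{0,d,p}^\top G^{-1}g\land L_{\mathcal F}.
\]
Using $G^{-1}-G'^{-1}=G^{-1}(G'-G)G'^{-1}$ and $\|G^{-1}\|_{\rm op}\le 2/\lambda_{\rm reg}$, $F$ is Lipschitz with constant $\lesssim \lambda_{\rm reg}^{-2}n^{C_G}$. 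This is polynomial, and the range bound is immediate. Lemma~\ref{lem:scaled-arithmetic} gives a globally defined ReLU net $\widetilde F$ with sup error $n^{-E}$ on $\mathcal K_n$ and global Lipschitz constant at most that polynomial. Clip its output to $[-L_{\mathcal F},L_{\mathcal F}]$ and set $V_n=\widetilde F(\widetilde G,\widetilde g,\cdot)$. Boundedness on all of $\mathcal B_n^{\Pi,\rm reg}$ is automatic.

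\textbf{Error on $\mathcal G_n^{\Pi,\rm reg}$.} There $\lambda_{\min}(G)\ge\lambda_{\rm reg}$. Weyl's inequality gives $\lambda_{\min}(\widetilde G)\ge\lambda_{\rm reg}-q_{\rm reg,\max}n^{-E_1}\ge\lambda_{\rm reg}/2$ for large $n$, so both $(G,g)$ and the perturbed summary lie in $\mathcal K_n$. Then
\[
 |V_n-f_{\rm LP}|\le |\widetilde F(\widetilde G,\widetilde g)-F(\widetilde G,\widetilde g)|+|F(\widetilde G,\widetilde g)-F(G,g)|\le n^{-E}+n^{C}n^{-E_1},
\]
and choosing $E_1=E+C+1$ gives the claim. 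Depth and size follow from Lemma~\ref{lem:scaled-arithmetic} applied a fixed number of times, together with the compilation Lemma~\ref{lem:relu-transformer-compilation}.

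\textbf{Main obstacle.} The delicate point is that the decoder must be accurate at the \emph{perturbed} summary rather than at $G$ itself. This is why $\mathcal K_n$ uses the weaker margin $\lambda_{\rm reg}/2$, so that perturbed summaries stay in the well-conditioned domain where the Lipschitz bound holds. One also has to check that the Lipschitz and range exponents are independent of $E$, so that $E_1$ can be chosen after fixing $E$.
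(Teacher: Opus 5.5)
Your proposal is correct and follows the paper's construction: row-wise Lemma~\ref{lem:scaled-arithmetic} approximants of $r^2W_i\phi\phi^\top$ and $r^2W_iY_i\phi$, one uniform-softmax average, and a globally defined Lemma~\ref{lem:scaled-arithmetic} decoder for the clipped intercept on a well-conditioned compact set, with exact one-hot selection over the finitely many $(d,p)$. The only difference is the error-propagation step: the paper keeps the margin $\lambda_{\rm reg}$, measures decoder accuracy at the ideal summary, and transfers the summary error through the decoder's own global Lipschitz bound (which Lemma~\ref{lem:scaled-arithmetic} supplies), so the ``main obstacle'' you flag does not actually arise; your halved margin plus Weyl's inequality instead reaches the same bound through the target map's Lipschitz constant, and would also work for an approximant that lacks a global Lipschitz guarantee.
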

\begin{proof}
Approximate the sample-wise entries of
\(r^2W_i\phi_{d,p}(u_i)\phi_{d,p}(u_i)^\top\) and
\(r^2W_iY_i\phi_{d,p}(u_i)\) to error \(n^{-E'}\) by
Lemma~\ref{lem:scaled-arithmetic}, and aggregate their averages by
Lemma~\ref{lem:softmax-communication}. These are fixed-degree maps in a fixed
number of coordinates on polynomial boxes.

On the compact set
\[
 \mathcal K_{d,p,n}^{\rm inv}
 =\{(G,g):G=G^\top,\ G\succeq\lambda_{\rm reg}I,
                       \ \|G\|_{\rm op}+\|g\|_2\le n^{C_G}\},
\]
the map \((G,g)\mapsto(-L_{\mathcal F})\lor e_0^\top G^{-1}g\land L_{\mathcal F}\) has
polynomial Lipschitz constant. Indeed,
\[
 G^{-1}-G'^{-1}=G^{-1}(G'-G)G'^{-1},\qquad
 \|G^{-1}\|_{\rm op},\|(G')^{-1}\|_{\rm op}\le\lambda_{\rm reg}^{-1}.
\]
Apply Lemma~\ref{lem:scaled-arithmetic} to this compact-set map and clip its
output to \([-L_{\mathcal F},L_{\mathcal F}]\). The resulting network is
defined on all inputs, including singular matrices. Its error at the
ideal summary is small, and its global polynomial Lipschitz bound controls
the approximate-summary error. Taking \(E'\) sufficiently large gives the
claim. Finitely many dimension/degree cases are selected using exact
one-hot gating. Compile these ReLU networks by
Lemma~\ref{lem:relu-transformer-compilation}.
\end{proof}

\begin{lemma}
\label{lem:one-shot-solver-stability}
Consider two branch prompts in \(\mathcal G_n^{\Pi,\rm reg}\) with the same
\((d,p,r,Y_i)\) but coordinates and weights \((u_i,W_i)\) and
\((u'_i,W'_i)\). Then
\[
 |f_{\rm LP}(s)-f_{\rm LP}(s')|
 \le Cn^C\left(\max_i\|u_i-u'_i\|+\max_i|W_i-W'_i|\right).
\]
\end{lemma}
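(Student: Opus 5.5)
The plan is to write $f_{\rm LP}$ on $\mathcal G_n^{\Pi,\rm reg}$ as a composition of two maps. The first is the summary map $s\mapsto(G,g)$. The second is the clipped intercept map $(G,g)\mapsto(-L_{\mathcal F})\lor e_{0,d,p}^\top G^{-1}g\land L_{\mathcal F}$. We bound the Lipschitz constant of each piece separately. Both prompts lie in $\mathcal G_n^{\Pi,\rm reg}$, so their Gram matrices satisfy $G,G'\succeq\lambda_{\rm reg}I$, and the box bounds give $\|G\|_{\rm op}+\|g\|_2\le n^{C_G}$. Both summaries therefore lie in the compact set $\mathcal K^{\rm inv}_{d,p,n}$ from the proof of Lemma~\ref{lem:reg-one-shot-decoder}. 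On that set the clipped intercept map has a polynomial Lipschitz constant. We verify this directly. The clip is $1$-Lipschitz. We then write
\[
e_0^\top G^{-1}g-e_0^\top G'^{-1}g'=e_0^\top G^{-1}(G'-G)G'^{-1}g+e_0^\top G'^{-1}(g-g'),
\]
and bound the result by $\lambda_{\rm reg}^{-2}n^{C_G}\|G-G'\|_{\rm op}+\lambda_{\rm reg}^{-1}\|g-g'\|_2$.

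Next we show that the summary differences are controlled by the coordinate and weight differences. Write $\phi_i=\phi_{d,p}(u_i)$ and decompose
\[
W_i\phi_i\phi_i^\top-W'_i\phi'_i\phi_i'^\top=(W_i-W'_i)\phi_i\phi_i^\top+W'_i(\phi_i\phi_i^\top-\phi'_i\phi_i'^\top).
\]
The entries of $\phi_{d,p}$ are monomials of degree at most $p_{\max}$. On the box $\|u\|_\infty\le R_{u,n}$ they are therefore bounded by $CR_{u,n}^{p_{\max}}$ and are Lipschitz with constant $CR_{u,n}^{p_{\max}-1}$. Together with $W'_i\le R_{W,n}$, this gives a per-sample bound of order $n^{C}(|W_i-W'_i|+\|u_i-u'_i\|)$. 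We then average over $i$, multiply by $r^2\le R_{\pi,n}^2$, and convert entrywise bounds to operator and Euclidean norms at the cost of the fixed dimension $q_{\rm reg,\max}$. This yields
\[
\|G-G'\|_{\rm op}+\|g-g'\|_2\le Cn^{C}\Big(\max_i\|u_i-u'_i\|+\max_i|W_i-W'_i|\Big).
\]
The $g$ term is handled the same way, using $|Y_i|\le B_Y$ and the fact that $Y_i$ is common to both prompts.

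Chaining the two estimates gives the claim, and all envelopes $R_{u,n},R_{W,n},R_{\pi,n}$ and $n^{C_G}$ are polynomial in $n$. We do not expect a real obstacle. The one point needing care is that the inverse bound must be taken at both endpoints, and not along a segment between them. The resolvent identity above uses only $G$ and $G'$, each of which is well conditioned by membership in $\mathcal G_n^{\Pi,\rm reg}$. No convexity of the well-conditioned set is needed.
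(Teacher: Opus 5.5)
Your proposal is correct and follows the paper's proof. The paper argues that the summary maps are polynomially Lipschitz on the branch boxes, that averaging preserves the error bound, and then applies the inverse identity $G^{-1}-G'^{-1}=G^{-1}(G'-G)G'^{-1}$ with the $\lambda_{\rm reg}^{-1}$ bounds at both endpoints, followed by nonexpansive clipping; you have simply written out the polynomial envelopes $R_{u,n},R_{W,n},R_{\pi,n},n^{C_G}$ and the two-term intercept decomposition that the paper leaves implicit.
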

\begin{proof}
The sample-wise polynomial summary maps are polynomially Lipschitz on the
branch boxes. Averaging preserves their maximum-error bound. Apply the
inverse identity in the preceding proof and the nonexpansiveness of
clipping.
\end{proof}

\paragraph{Compact localization through value coordinates.}
The cutoff $\mathsf A$ and local weights enter FFN-computed value coordinates
before uniform-softmax averaging. The ideal localized value is zero outside
the cutoff, although every attention probability is positive. Compact
localization thus acts through values, with approximate contributions
controlled by the module error bounds.

\begin{remark}[Common normalization of local regression weights]
\label{rem:regression-normalization}
Let $W_i\ge0$, $Z_W:=\sum_{i\in[n]}W_i>0$, and let $\phi_i$ be the local
polynomial feature vectors. Define
\[
 G=\sum_{i\in[n]}W_i\phi_i\phi_i^\top,\qquad b=\sum_{i\in[n]}W_i\phi_iY_i,
 \qquad a_i=\frac{W_i}{Z_W}.
\]
The normalized statistics satisfy
\[
 \widetilde G=\sum_{i\in[n]} a_i\phi_i\phi_i^\top=Z_W^{-1}G,\qquad
 \widetilde b=\sum_{i\in[n]} a_i\phi_iY_i=Z_W^{-1}b,
 \qquad \widetilde G^\dagger\widetilde b=G^\dagger b.
\]
Indeed, scaling a positive semidefinite matrix by $c>0$ scales its nonzero
eigenvalues by $c$, so $(cG)^\dagger=c^{-1}G^\dagger$. Thus common positive
normalization preserves the exact weighted least-squares coefficient and
its clipped intercept, including when $G$ is singular. In our construction,
common factors such as $n^{-1}r_\pi^2$ enter both summaries; their averages
are computed directly by Lemma~\ref{lem:softmax-communication}.
\end{remark}

\subsection{Model size and error propagation}
\label{app:model-size}

All selectors and approximants above are finite networks, chosen
independently of the particular prompt. Their row-wise inputs and outputs
have fixed dimensions, but their intermediate widths can grow. The exact
selector has polynomially many centers; the tangent grids and all
normalization factors have polynomial bounds by
Lemma~\ref{lem:encoded-parameter-polynomial-bounds}. The arithmetic
approximants operate on polynomial boxes, with fixed input/output
dimensions and polynomial Lipschitz constants.

Communication acts only on logical registers: raw observations, structural
states, clipped geometric/chart states, and moment or regression summaries.
The observation bounds, exact selector bounds, and declared module boxes
give a deterministic envelope \(R_n\le n^C\) for all these registers on
every admissible observed prompt, including outside the statistical good
event. This exponent is independent of the requested approximation
accuracy: the arithmetic lemma clips outputs to the fixed target range.
The intermediate states of wide comparison networks remain in local scratch
banks. Lemma~\ref{lem:softmax-communication} therefore applies at every
communication stage with this polynomial envelope, preserving exact
markers and all other logical registers.

More explicitly, for a prescribed final exponent \(A_0>0\), choose common
module accuracy \(n^{-E_*}\) with
\(E_*>A_0+2C_{\rm prop}+10\), increasing it to cover the regression
summary stability constant if necessary. There are a fixed number of
module stages, so their propagated scalar error is \(O(n^{-A_0/2})\).
Here \(C_{\rm prop}\) is computed from the target maps and the
Lipschitz-preserving approximants. It is independent of the requested
accuracy exponent. Table~\ref{tab:module-bounds} summarizes the domains.

\begin{table}[ht]
\centering\small
\renewcommand{\arraystretch}{1.10}
\caption{Domains and stability bounds for the constructed modules.}
\label{tab:module-bounds}
\begin{tabular}{>{\raggedright\arraybackslash}p{0.25\linewidth}>{\raggedright\arraybackslash}p{0.33\linewidth}>{\raggedright\arraybackslash}p{0.30\linewidth}}
\toprule
Module & Input/range control & Stability used \\
\midrule
Structural selector & Fixed observation cube; polynomially many centers & Exact ReLU construction \\
Tangent moment & \(h^{-1}\le n^{\bar\rho_h}\); fixed moment dimension & Polynomial Lipschitz bound \\
Moment-to-projector & Bounded moments; denominator \(\ge n^{-3}\) & \eqref{eq:tangent-summary-lipschitz} \\
Spectral/chart maps & Fixed compact spectral domain and chart cube & Fixed Lipschitz bounds; global decoders \\
Regression summaries & Polynomial coordinate and weight boxes & Fixed-degree polynomial maps \\
Inverse decoder & \(G\succeq\lambda_{\rm reg}I\) on comparison domain & Polynomial Lipschitz bound \\
Readout & Weights in \([0,1]\), values in \([-L_{\mathcal F},L_{\mathcal F}]\) & Uniform global Lipschitz bound \\
\bottomrule
\end{tabular}
\end{table}

Let \(W_n\) be the maximum total simultaneous width of the ReLU modules,
including all parallel chart branches and selector comparison states.
Lemma~\ref{lem:scaled-arithmetic} gives \(W_n\le n^{C_*}\) for a fixed
\(C_*\) depending on \(A_0\) and fixed model constants. Choose
\[
 d_{E,n}=C_0(W_n+r_0),\qquad d_{{\rm FFN},n}=C_1(W_n+r_0),
\]
where \(r_0\) is the fixed logical-register count and the constants provide
two ReLU scratch banks, protected coordinates, and the communication banks
in Lemmas~\ref{lem:softmax-communication} and
\ref{lem:relu-transformer-compilation}.

The communicated dimensions and the number of stages are fixed, so these additional banks have fixed total
width. Communication parameters are bounded by \(C(1+n+R_n)\).
The widths can be padded to deterministic polynomial envelopes common to
the whole model class.

Concatenating the fixed number of compiled modules and communication blocks gives
\begin{equation}\label{eq:revised-architecture-bounds}
 N_n^{\rm end}=n+1,\quad M_0=O(1),\quad
 L_n\le C_{\rm depth}\log(en),\quad
 d_{E,n}+d_{{\rm FFN},n}+B_n\le n^{C_{\rm size}}.
\end{equation}
The internal regression dictionary has size \(q_{\rm reg,\max}\), whereas
geometric registers and polynomial size exponents may depend on \(D\).
This is a constructive existence bound with polynomial workspace and
resource exponents that may depend on the ambient dimension.

\subsection{End-to-end approximation}
\label{app:approximation-error}

Let \(s_{x,j}^{\rm ora}\) and \(s_{x,j}^{\rm gen}\) denote the ideal and
generated branch prompts, respectively. For \(j\in\mathcal J_x^+\) on
the regression good event, define
\[
 f_{\rm chart}^{\rm ora}(x)
 =\sum_{j\in\mathcal J_x^+}\omega_{j,x}^{\rm ora}
                          f_{\rm LP}(s_{x,j}^{\rm ora}).
\]
This sum evaluates inverses only on positive-weight charts.

\begin{lemma}[Chart-intercept equivalence]
\label{lem:chart-equivalence}
If the ideal positive-weight branch prompts lie in
\(\mathcal G_n^{\Pi,\rm reg}\), then
\(f_{\rm chart}^{\rm ora}(x)=\widehat f_{\rm plug}^\Pi(x)\).
\end{lemma}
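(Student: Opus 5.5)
The plan is to show that every positive-weight branch returns exactly $\widehat f_{\rm plug}^\Pi(x)$, and that the weights $\omega_{j,x}^{\rm ora}$, $j\in\mathcal J_x^+$, sum to one. The weighted average $f_{\rm chart}^{\rm ora}(x)$ then equals the common value. The weight statement comes first. By construction $P_{T,x}^{\rm ora}=\Pi_{(d_x)}(\bar P_x^{\rm ora})$ is the exact rank-$d_x$ orthogonal projector $\widehat\Pi_x$ of Algorithm~\ref{alg:localpoly}. Cauchy--Binet gives $\sum_J\det A_J(\widehat\Pi_x)=1$, so the denominator in \eqref{eq:safe-chart-extension} is at least one. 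The weights on the dimension-compatible charts therefore sum to one, and zero-weight charts drop out of the sum. For $j\in\mathcal J_x^+$, \eqref{eq:positive-chart-basis} gives $U:=U_{j,x}^{\rm ora}$ with $U^\top U=I_{d_x}$ and $UU^\top=\widehat\Pi_x$.

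Next I identify the branch data with the ambient data. Put $u_i:=u_{i,j}^{\rm ora}=U^\top z_i$. Then $Uu_i=\widehat\Pi_xz_i=\widehat v_i$, so $W_{i,j}^{\rm ora}=h_x^{-d_x}\mathsf A_i\mathsf K(\widehat v_i)=\widehat W_i$. Lemma~\ref{lem:poly-feature-facts}(i) gives $\widehat\phi_i=\Phi_{D,p_x}(Uu_i)=B\psi_i$ with $B:=B_{U,p_x}$ of full column rank and $\psi_i:=\Phi_{d_x,p_x}(u_i)$. Write $G_j,g_j$ for the branch summaries, which carry the common factor $r^2=\pi_{k_x,n}^{-1}$. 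Then
\[
\widehat G_x^\Pi=\pi_{k_x,n}BG_jB^\top,\qquad \widehat b_x^\Pi=\pi_{k_x,n}Bg_j .
\]
By Remark~\ref{rem:regression-normalization}, the common positive factor $\pi_{k_x,n}$ does not change $(\widehat G_x^\Pi)^\dagger\widehat b_x^\Pi$, so I may drop it. Since the ideal branch lies in $\mathcal G_n^{\Pi,\rm reg}$, $G_j\succeq\lambda_{\rm reg}I$ is invertible.

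The core step is the pseudoinverse intercept identity $e_{0,D}^\top(BG_jB^\top)^\dagger Bg_j=e_{0,d}^\top G_j^{-1}g_j$. I would avoid spectral bounds and argue through the normal equations. The matrix $B^\top$ has full row rank, so $B^\top(B^\top)^\dagger=I$. Hence $w_0:=(B^\top)^\dagger G_j^{-1}g_j$ satisfies
\[
BG_jB^\top w_0=Bg_j .
\]
The minimum-norm solution $(BG_jB^\top)^\dagger Bg_j$ solves the same consistent system, so it differs from $w_0$ by an element of $\ker(BG_jB^\top)=\ker B^\top$. Now $e_{0,D}=Be_{0,d}\in\operatorname{Im}B=(\ker B^\top)^\perp$, so the two solutions have the same intercept. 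That intercept is $e_{0,d}^\top B^\top(B^\top)^\dagger G_j^{-1}g_j=e_{0,d}^\top G_j^{-1}g_j$. Both $f_{\rm LP}(s_{x,j}^{\rm ora})$ and \eqref{eq:tangent-local} then apply the same clipping to $[-L_{\mathcal F},L_{\mathcal F}]$, so each positive-weight branch equals $\widehat f_{\rm plug}^\Pi(x)$.

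I expect the main obstacle to be bookkeeping rather than analysis. First, the branch coordinates are padded to $d_{\max}$ while the dictionary $\phi_{d,p}(u)=\Phi_{d,p}(u_{1:d})$ uses only the first $d_x$ entries; I must check that the appended zero columns of $U_{j,x}^{\rm ora}$ leave $u_{1:d_x}=U^\top z_i$ and the kernel argument unchanged. Second, the normalization must be the same common scalar in $G_j$ and $g_j$. Third, I must check that the kernel ordering between the reduced and ambient dictionaries matches the $B_{U,p}$ convention. Once these identifications are in place, the range argument above is exact and requires no good-event probability bounds beyond the stated hypothesis.
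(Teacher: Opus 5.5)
Your proposal is correct and follows the paper's proof. Both identify the positive-weight branch weights and features with the ambient ones via $U_{j,x}^{\rm ora}(U_{j,x}^{\rm ora})^\top=P_{T,x}^{\rm ora}$ and $\Phi_{D,p}(Uu)=B_{U,p}\Phi_{d,p}(u)$, show that the ambient pseudoinverse intercept equals the reduced inverse intercept using $e_{0,D,p}=B_{U,p}e_{0,d,p}$, and conclude because the weights sum to one. The only cosmetic difference is that the paper left-cancels the injective $B$ in $BHB^\top w=Bc$ to get $B^\top w=H^{-1}c$ directly, whereas you compare $w$ with an explicit particular solution modulo $\ker B^\top$, which is orthogonal to $e_{0,D,p}$; this is the same linear-algebra fact.
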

\begin{proof}
For \(j\in\mathcal J_x^+\), \eqref{eq:positive-chart-basis} implies
\(U_{j,x}^{\rm ora}(U_{j,x}^{\rm ora})^\top=P_{T,x}^{\rm ora}\), so the
ambient projected weights are exactly the ideal branch weights. Let
\(B=B_{U,p}\) from Lemma~\ref{lem:poly-feature-facts}(i). Write the
unnormalized reduced Gram and response vector as \(H,c\). Then the
ambient quantities are \(G_{\rm amb}=BHB^\top\) and \(b_{\rm amb}=Bc\),
where \(H\succ0\) and \(B\) has full column rank. For
\(w=G_{\rm amb}^\dagger b_{\rm amb}\), the fact
\(b_{\rm amb}\in\operatorname{Im}(G_{\rm amb})\) gives
\[
 BHB^\top w=Bc\quad\Longrightarrow\quad B^\top w=H^{-1}c.
\]
Since \(e_{0,D,p}=Be_{0,d,p}\),
\[
 e_{0,D,p}^\top w=e_{0,d,p}^\top H^{-1}c.
\]
The normalization \(r_\pi^2\) cancels between the branch Gram and response
vector. Thus every positive-weight branch has the same clipped intercept.
Their weights sum to one, proving the assertion.
\end{proof}

\begin{lemma}
\label{lem:end-to-end-good-event}
For every fixed \(A_0>0\), the geometric approximations can be chosen so
that there is an event \(\mathcal A_n\) with
\(\mathbb P(\mathcal A_n^c)\le Cn^{-A_0}\), uniformly over \(P,f\), on
which \(\bar P_x^{\rm ora}\in\mathcal K_{d_x}\), the regression design good
event holds, and
\[
 s_{x,j}^{\rm ora},\ s_{x,j}^{\rm gen}\in\mathcal G_n^{\Pi,\rm reg}
 \quad\text{for every }j\in\mathcal J_x^+.
\]
All generated branch prompts, including zero-weight charts, lie in
\(\mathcal B_n^{\Pi,\rm reg}\); the eigenvalue lower bound applies only
to branches in \(\mathcal J_x^+\).
\end{lemma}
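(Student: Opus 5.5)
The event $\mathcal A_n$ will be the intersection of the two statistical good events already constructed in Appendix~\ref{app:tangent-local-poly}, conditioned on the query and then averaged over it. Concretely, conditioning on $(Z_{n+1},X^\star_{n+1},\xi_{n+1})=(k,x^\star,\xi_{\rm qry})$, I set
\[
 \mathcal A_n:=\mathcal E_{k,n}^{\rm tan}(x)\cap\mathcal E_{k,n}^{\rm reg}(x)\cap\{\delta_x\le\delta_\star/2\}.
\]
The probability bound comes from Lemmas~\ref{lem:tangent-projector-good} and~\ref{lem:regression-design-stability}. The failure probability is at most $C\exp(-c_{\rm Ber}a_{{\rm tan},n})+C\exp(-cn\pi_{k,n}h_x^{d_x})$. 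The second term is at most $\exp(-cn^{\rho_{\rm tan}})\le n^{-A_0}$ for large $n$. For the first term, the tail parameter $a_{{\rm tan},n}$ enters only the concentration analysis, so I would enlarge it to $a_{{\rm tan},n}=\max\{a^{\rm or}_{{\rm tan},n},(A_0/c_{\rm Ber})\log(en)\}$. This changes $\epsilon_{\rm tan}$ only by a constant factor, so the grid conditions \eqref{eq:tangent-grid-conditions} and the inclusion $\mathcal E^{\rm tan}\subset\{\delta_x\le\delta_\star/2\}$ continue to hold. All these bounds are uniform over $P,f$ and the query, so averaging over the query gives $\mathbb P(\mathcal A_n^c)\le Cn^{-A_0}$.

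On $\mathcal A_n$, Lemma~\ref{lem:tangent-projector-good} gives the spectral gap $\ge3/4$ and $\|\bar P_x-\Pi^\star\|_{\rm op}\le C\epsilon_{\rm tan}\le1/8$. It follows that $\lambda_{d_x}(\bar P^{\rm ora}_x)\ge7/8\ge2/3$, $\lambda_{d_x+1}\le1/8\le1/3$, and $\|\bar P^{\rm ora}_x\|_{\rm op}\le1$, so $\bar P^{\rm ora}_x\in\mathcal K_{d_x}$. The ideal branch prompts are handled next. For $j\in\mathcal J_x^+$, \eqref{eq:positive-chart-basis} shows that $U^{\rm ora}_{j,x}$ is an orthonormal frame of $P^{\rm ora}_{T,x}=\widehat\Pi_x$, and $\|\widehat\Pi_x-\Pi^\star\|\le\delta_\star$. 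The reduced-Gram bound following \eqref{eq:H-Pi-perturbation} holds uniformly over nearby projectors and over any orthonormal frame, because a change of frame acts on the reduced dictionary through the uniformly conditioned matrix $S_{R,p}$. Multiplying by $r^2=\pi_{k,n}^{-1}$ then gives $\lambda_{\min}(G)\ge4\lambda_{\rm reg}$. Membership in the box $\mathcal B_n^{\Pi,\rm reg}$ is immediate from the envelopes: the weights satisfy $h_x^{-d_x}\le R_{W,n}$, and the coordinates are bounded by $R_{u,n}$.

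For the generated prompts, Lemma~\ref{lem:geo-module-implementation} applies because $\bar P^{\rm ora}_x\in\mathcal K_{d_x}$. It gives $\max_{i}\|\mathsf u_{i,j}-u^{\rm ora}_{i,j}\|+\max_i|\mathsf W_{i,j}-W^{\rm ora}_{i,j}|\le Cn^{-E}$, and it guarantees box membership for every branch, including zero-weight ones, through clipping. The Gram summary is a fixed-degree polynomial of $(u_i,W_i)$ on polynomial boxes, multiplied by $r^2\le n^{1-\kappa}$, so the Gram perturbation is at most $n^{C}n^{-E}$. Choosing $E$ large makes this at most $\lambda_{\rm reg}$, and Weyl's inequality then gives $\lambda_{\min}(G^{\rm gen})\ge3\lambda_{\rm reg}$. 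Hence $s^{\rm gen}_{x,j}\in\mathcal G_n^{\Pi,\rm reg}$ for every $j\in\mathcal J_x^+$.

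The main obstacle is the uniform-in-frame reduced Gram lower bound for the ideal positive charts. Lemma~\ref{lem:regression-design-stability} is stated for an aligned basis $U_\Pi$, whereas the safe-chart frame $U_J^{\rm safe}$ is an arbitrary orthonormal frame of the same subspace. I would resolve this by writing $U_J^{\rm safe}=U_\Pi R$ with $R$ orthogonal and transporting the bound through $S_{R,p_x}$. A secondary subtlety is that $\mathcal J_x^+$ is defined through the ideal weights, while generated weights of zero-weight charts may be slightly positive. The statement deliberately excludes those branches from the eigenvalue claim, so box membership is all that needs to be checked for them.
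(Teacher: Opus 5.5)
Your proposal is correct and follows the paper's proof essentially step for step. You enlarge the proof-level tail parameter to order $(A_0+1)\log(en)$, intersect the tangent and regression good events, obtain $\bar P_x^{\rm ora}\in\mathcal K_{d_x}$ from the spectral bounds, use the uniform reduced-Gram bound (normalized by $r^2=\pi_{k,n}^{-1}$) on positive-weight charts, and propagate the module errors polynomially, with box membership coming from clipping. Your explicit frame change $U_J^{\rm safe}=U_\Pi R$ through $S_{R,p_x}$ simply spells out the uniformity over orthonormal coordinates that the paper asserts in one sentence.
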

\begin{proof}
Use the proof-level tangent concentration parameter
\[
 a_{{\rm tan},n}^{\rm app}
 =\frac{A_0+2\alpha_{\max}\bar\rho_h+3}{c_{\rm Ber}}\log(en).
\]
The estimator's grid and tolerance remain \(n^{-1}\) and \(n^{-3}\);
the displayed tail parameter is used only in the concentration proof. The window and
grid conditions in Appendix~\ref{app:tangent-local-poly-upper} remain
valid. Lemma~\ref{lem:tangent-projector-good}, with the fixed small scale
constant chosen there, gives
\[
 \lambda_{d_x}(\bar P_x^{\rm ora})\ge7/8,\qquad
 \lambda_{d_x+1}(\bar P_x^{\rm ora})\le1/8,
\]
and a projector error at most the regression-stability radius, except on
an event of probability \(Cn^{-A_0}\). The regression failure probability
is at most \(C\exp(-cN_kh_k^{d_k})\le Cn^{-A_0}\), uniformly over \(k\).

On the intersection, \eqref{eq:positive-chart-basis} makes each
\(j\in\mathcal J_x^+\) an orthonormal coordinate representation of the
stable projected design. The reduced-Gram bound following
\eqref{eq:H-Pi-perturbation} in
Lemma~\ref{lem:regression-design-stability}, together with the normalization
\(r^2=\pi_{k,n}^{-1}\), gives minimum eigenvalue at least
\(4\lambda_{\rm reg}\) uniformly over these charts.
The coordinate and weight error in
Lemma~\ref{lem:geo-module-implementation}, propagated through the
polynomial summary map, makes the generated Gram error at most
\(\lambda_{\rm reg}\) when the approximation exponent is large enough.
Hence both branch prompts belong to the required good set. This argument
is uniform as the positive weight approaches zero: positivity still
implies the fixed determinant threshold \(\det A_J>\tau_d\).
The box assertions hold for every prompt by clipping. There are finitely
many chart branches, and averaging the uniform conditional failure bounds
over the query component yields the assertion.
\end{proof}

\subsection{Proof of Theorem~\ref{thm:transformer-approx}}
\label{app:transformer-approx-main-proof}
\begin{proof}
Apply Lemmas~\ref{lem:geo-module-implementation} and
\ref{lem:reg-one-shot-decoder} with sufficiently large accuracy exponents,
and compose their transformer blocks as in
\eqref{eq:revised-architecture-bounds}. Their communication stages belong
directly to \(\mathcal T_{M_0}\) by Lemma~\ref{lem:softmax-communication};
all remaining stages use zero attention value matrices and ReLU FFNs
with block residual connections. Before the readout arithmetic
approximation, the predictor is the clipped version of
\[
 S_{\rm gen}=\sum_{j\in[N_{\rm ch}]}\mathsf\omega_{j,x}
                                     V_n(s_{x,j}^{\rm gen}).
\]
On \(\mathcal A_n\), Lemma~\ref{lem:chart-equivalence} and an exact
add-and-subtract decomposition give
\begin{align*}
 |S_{\rm gen}-\widehat f_{\rm plug}^\Pi(x)|
 &\le \sum_{j\in\mathcal J_x^+}\omega_{j,x}^{\rm ora}
       |V_n(s_{x,j}^{\rm gen})-f_{\rm LP}(s_{x,j}^{\rm gen})|\\
 &\quad+\sum_{j\in\mathcal J_x^+}\omega_{j,x}^{\rm ora}
       |f_{\rm LP}(s_{x,j}^{\rm gen})-f_{\rm LP}(s_{x,j}^{\rm ora})|\\
 &\quad+L_{\mathcal F}\sum_{j\in[N_{\rm ch}]}
                   |\mathsf\omega_{j,x}-\omega_{j,x}^{\rm ora}|.
\end{align*}
The first two terms use only positive-weight charts and are controlled by
Lemmas~\ref{lem:reg-one-shot-decoder} and
\ref{lem:one-shot-solver-stability}. The final sum controls all branches,
including generated leakage onto an oracle-zero-weight chart; its decoder
is bounded even when its Gram matrix is singular.

Choosing the module exponents and \(E_{\rm read}\) sufficiently large
makes the squared error on \(\mathcal A_n\) at most \(Cn^{-A_0}\).
On its complement both final predictions are clipped to
\([-L_{\mathcal F},L_{\mathcal F}]\), giving at most
\(4L_{\mathcal F}^2\mathbb P(\mathcal A_n^c)\le Cn^{-A_0}\).
Consequently,
\[
 \sup_{P\in\mathcal P_\star}\sup_{f\in\mathcal H}
 \mathbb E_{P,f}
 \{\mathrm{Read}_n\circ\mathrm{TF}_{\widehat\theta_n}\circ
   \mathrm{Emb}_n(\mathfrak s)-\widehat f_{\rm plug}^\Pi(X_{n+1})\}^2
 \le Cn^{-A_0}.
\]
The parameter vector depends only on the fixed structural model and \(n\),
independently of \(P\)'s allowed within-component density or perturbation law,
the prompt, and \(f\). The structural selector and uniform bounds
ensure the same constructed comparator works for all such \(P,f\).
\end{proof}

\subsection{Explicit dependence on the approximation exponent}
\label{app:accuracy-resource}
\begin{corollary}[Accuracy--resource bound]\label{cor:accuracy-resource}
The construction of Theorem~\ref{thm:transformer-approx} can be chosen so that,
for fixed positive coefficients $c_{L,0},c_{L,1},c_{S,0},c_{S,1}$ and every
fixed $A_0>0$, all sufficiently large $n$ satisfy
\[
 L_n\le(c_{L,0}+c_{L,1}A_0)\log(en),\qquad
 d_{E,n}+d_{{\rm FFN},n}+B_n\le n^{c_{S,0}+c_{S,1}A_0}.
\]
The coefficients depend only on the fixed model bounds and the finite module
pattern, independently of $A_0,n,K_n,P,f$. The large-$n$ threshold may depend on $A_0$.
The head count is fixed and $N_n^{\rm end}=n+1$.
\end{corollary}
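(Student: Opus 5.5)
The plan is to re-run the resource accounting of Appendix~\ref{app:model-size} while keeping track of how the requested accuracy exponent $A_0$ enters each module. Every arithmetic approximant is produced by Lemma~\ref{lem:scaled-arithmetic}. Its depth is at most $C(B+E+1)\log(en)$ and its width and parameter magnitudes are at most $n^{C(B+E+1)}$, where $C$ depends only on the fixed input/output dimensions, $B$ is the polynomial range/Lipschitz exponent of the target map, and $E$ is the requested accuracy exponent. I would first list the finitely many target maps in Table~\ref{tab:module-bounds}: the row-wise tangent moment map, the moment-to-$\bar P$ map, the spectral and safe-chart decoders, the regression coordinate and weight maps, the summary maps, the inverse decoder, and the readout multiplication. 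For each, I would record the exponent $B$, using Lemma~\ref{lem:encoded-parameter-polynomial-bounds} and \eqref{eq:tangent-summary-lipschitz}. These exponents depend only on $D,d_{\max},s_{\max},\alpha$-bounds, $\kappa$ and $m_\Theta$, and not on $A_0$. The structural selector is exact, so its size $n^{C}$ has no $A_0$ dependence at all.

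Next, following the proof of Theorem~\ref{thm:transformer-approx} and Lemma~\ref{lem:geo-module-implementation}, I would fix a common module accuracy $E_*=A_0+2C_{\rm prop}+10$, enlarged by a constant if needed for the regression stability constant of Lemma~\ref{lem:one-shot-solver-stability}. The key point is that $C_{\rm prop}$ comes from the Lipschitz constants of the targets and of the Lipschitz-preserving approximants. Those constants are bounded by the target Lipschitz constants and hence are independent of $E$. So $E_*=A_0+c_0$ is affine in $A_0$. I would also set $E_{\rm read}=A_0+1$. Lemma~\ref{lem:end-to-end-good-event} changes only the proof-level tail parameter $a^{\rm app}_{{\rm tan},n}$. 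It does not touch the architecture, so it contributes only to the large-$n$ threshold.

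Substituting these choices, each module has depth at most $C(B_{\max}+A_0+c_0+1)\log(en)$. The number of module stages and communication stages is a fixed $O(1)$ pattern, and Lemma~\ref{lem:relu-transformer-compilation} adds $T+O(1)$ blocks per module plus $3J$ communication blocks. Summing gives $L_n\le(c_{L,0}+c_{L,1}A_0)\log(en)$. Similarly, the total simultaneous width satisfies $W_n\le n^{C(B_{\max}+A_0+c_0+1)}$ up to fixed multiplicity. Then $d_{E,n},d_{{\rm FFN},n}=O(W_n+r_0)$. The parameter bound is $B_n\le C(\max B+n+R_n+1)$, where $R_n\le n^C$ independently of $A_0$, since outputs are clipped to fixed target ranges. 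Together these give the polynomial bound $n^{c_{S,0}+c_{S,1}A_0}$ after absorbing constant factors for large $n$.

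The main obstacle is checking that $C_{\rm prop}$ and the register envelope $R_n$ really are independent of $A_0$. If approximant Lipschitz constants grew with accuracy, the required $E_*$ would become superlinear in $A_0$. I would handle this by citing the global Lipschitz bound $L_n$ in Lemma~\ref{lem:scaled-arithmetic}, which equals the target's constant, and the clipping of every output to its declared box. Then no exponent depends on $E$ except through the explicit $(B+E+1)$ factor.
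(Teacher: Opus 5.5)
Your proposal is correct and follows essentially the same route as the paper's proof. The shared steps are: an accuracy-independent exponent bounding the domains, ranges, and Lipschitz constants of the finitely many module families; Lipschitz-preserving approximants, so the propagation exponent does not depend on the internal accuracy; an internal exponent $E_*$ affine in $A_0$ (the paper takes $E_*=A_0/2+C_*+c_*$, which is just as affine as your choice); the per-module depth and size bounds of Lemma~\ref{lem:scaled-arithmetic} summed over a fixed module and communication pattern; and a tail parameter that affects only the large-$n$ threshold.

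One notational point: in $B_n\le C(\max B+n+R_n+1)$, the quantity $\max B$ must be the ReLU parameter magnitude $n^{C(B+E_*+1)}$ supplied by Lemma~\ref{lem:scaled-arithmetic}, not the range exponent you earlier called $B$. So $B_n$ does grow with $A_0$, which is harmless because it stays within the stated $n^{c_{S,0}+c_{S,1}A_0}$ envelope.
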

\begin{proof}
There are finitely many nonlinear module families. Their input and output
dimensions, say $m_j,r_j$, depend only on $D,d_{\max},\alpha_{\max}$.
Choose one exponent $B_*>0$ bounding their target domains, ranges, and
Lipschitz constants by powers of $n$. This exponent is independent of the
requested accuracy: all bandwidth inverses, component-mass normalizers,
grid cardinalities, and the inverse near-minimizer tolerance are fixed by the
structural scale conditions. The compact spectral domain and positive-chart
conditioning constants are also independent of $A_0$.

The approximants in Lemma~\ref{lem:scaled-arithmetic} have the same
Lipschitz bounds as their target maps. Therefore composition over the fixed
module pattern amplifies a common absolute module error $\eta$ by at most
$Cn^{C_*}\eta$, where $C_*$ is independent of the internal accuracy.
This includes the polynomial stability of the regression summaries and
decoder. Exact one-hot gating, coordinate clipping, and averaging have
polynomial bounds independent of the approximation exponent as well.
Choose the internal exponent as $E_*=A_0/2+C_*+c_*$, with a fixed $c_*$
large enough to cover fixed factors and all output coordinates. On the good
event this makes the squared error $O(n^{-A_0})$. Choosing the proof-level
tail parameter proportional to $(A_0+1)\log(en)$ makes the failure probability
$O(n^{-A_0})$ while preserving the estimator, its grid, and its network modules.

For module $j$, the net in Lemma~\ref{lem:scaled-arithmetic} has at most
$C_j n^{m_j(E_*+2B_*+1)}$ centers. Its depth is bounded by
$C_j(B_*+E_*+1)\log(en)$, and its width and scalar parameters by
$n^{C_j(B_*+E_*+1)}$, after increasing fixed coefficients.
The exact structural selector has a polynomial number of centers with an
exponent independent of $A_0$. The number of chart branches and nonlinear
module stages is fixed. Consequently their maximum simultaneous width,
combined depth, and parameter magnitudes have exponents or depth constants
affine in $A_0$, after taking maxima and sums over a fixed list. The explicit
compilation in Lemma~\ref{lem:relu-transformer-compilation} allocates two
scratch banks of that width. Lemma~\ref{lem:softmax-communication} adds at
most three blocks per communication and fixed-width logical-register
banks. Its coefficients are bounded by \(C(1+n+R_n)\), with exponent
independent of \(A_0\). Absorbing fixed multiplicative factors for sufficiently large $n$ proves the
stated bounds.
\end{proof}
The conclusion concerns inverse-polynomial accuracy for each fixed $A_0$
and all sufficiently large $n$, with logarithmic depth sufficient.

\section{In-context generalization}
\label{app:generalization-error}

This appendix proves the in-context generalization result in
Theorem~\ref{thm:icl-erm}. We first bound the covering number of the full
bounded-parameter softmax transformer class and then apply a
bounded square-loss oracle inequality to its near empirical-risk minimizers.
The resulting entropy term is polynomial in \(n\) and logarithmic in the
number of training tasks \(\Gamma\). Covering-number arguments for attention
and transformer classes also appear in
\cite{edelman2022inductive,trauger2024sequence,havrilla2024understanding}.

\subsection{Covering number and entropy bound}
\label{app:transformer-entropy}

We use the softmax blocks of Section~\ref{subsec:transformers}, with a
residual connection around each attention and ReLU FFN map.
Every entry of the query, key, value, and FFN matrices and biases varies
freely in \([-B,B]\). For \(B,R\ge1\), put
\[
 \mathcal Z_N(R)
 :=\{H\in\mathbb R^{N\times d_E}:\|H\|_{\max}\le R\}.
\]
Let \(\mathrm{Read}\) have a Lipschitz bound \(L_{\mathrm{Read}}\ge1\),
with respect to maximum norm, on the transformer-output domain. Define
\[
\begin{gathered}
 \mathcal F_{\mathrm{Read}}
 :=\left\{H\mapsto\mathrm{Read}(\mathrm{TF}_\theta(H)):
 \mathrm{TF}_\theta\in\mathcal T_M(N,d_E,d_{\rm FFN},L,B)\right\},\\
 \|g\|_{\infty,R}:=\sup_{H\in\mathcal Z_N(R)}|g(H)|.
\end{gathered}
\]
We construct a parameter-sensitivity bound \(\mathfrak L_L(R)\) such that
\[
 \sup_{H\in\mathcal Z_N(R)}
 \|\mathrm{TF}_\theta(H)-\mathrm{TF}_{\theta'}(H)\|_{\max}
 \le\mathfrak L_L(R)\|\theta-\theta'\|_\infty
\]
for all parameter vectors in the full cube.

\begin{lemma}[Entropy of bounded softmax transformers]
\label{lem:transformer-entropy}
For every \(0<\delta\le1\),
\[
 \log\mathcal N(\mathcal F_{\mathrm{Read}},\delta,\|\cdot\|_{\infty,R})
 \le P_{\rm TF}
 \log\!\left(1+\frac{C B L_{\mathrm{Read}}\mathfrak L_L(R)}{\delta}\right),
\]
where the number of trainable scalar parameters satisfies
\[
 P_{\rm TF}=L\{3Md_E^2+2d_Ed_{\rm FFN}+d_{\rm FFN}+d_E\}
 \le C L(M+1)(d_E+d_{\rm FFN})^2.
\]
If \(M=O(1)\), \(L\le C_L\log(en)\),
\(B+R+d_E+d_{\rm FFN}+L_{\mathrm{Read}}\le n^{C_{\rm arch}}\),
and \(N\le C_Nn\), then the output radius and parameter-sensitivity bound
can be chosen so that
\[
\begin{gathered}
 \sup_{\theta\in[-B,B]^{P_{\rm TF}}}\sup_{H\in\mathcal Z_N(R)}
 \|\mathrm{TF}_\theta(H)\|_{\max}\le R_L,\\
 \log(1+R_L)\le C\log^2(en),\qquad
 \log\{1+\mathfrak L_L(R)\}\le C\log^3(en).
\end{gathered}
\]
Consequently, for constants \(C,c_{\rm cov}>0\) depending only on
\(C_L,C_N,C_{\rm arch}\), the fixed head-count bound, and fixed architecture
choices,
\[
 \log\mathcal N(\mathcal F_{\mathrm{Read}},\delta,\|\cdot\|_{\infty,R})
 \le C n^{c_{\rm cov}}\{1+\log(1/\delta)\}.
\]
All bounds hold uniformly over the entire parameter cube.
\end{lemma}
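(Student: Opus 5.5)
The plan is to compose three ingredients: a Lipschitz-in-parameters bound for the whole transformer on the bounded input cube $\mathcal Z_N(R)$, the Lipschitz property of $\mathrm{Read}$, and a volumetric covering of the parameter cube $[-B,B]^{P_{\rm TF}}$ in $\|\cdot\|_\infty$.

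\textbf{Step 1: output radius.} I will control hidden-state growth block by block. Write $R_0=R$ and let $R_\ell$ bound $\|H^{(\ell)}\|_{\max}$. Each softmax row is a probability vector, so $\|\operatorname{softmax}(\cdot)ZV_m\|_{\max}\le d_EB\|Z\|_{\max}$. The attention sublayer therefore gives $\|H\|_{\max}\le(1+Md_EB)R_{\ell-1}$. The FFN adds at most $d_{\rm FFN}B(d_EB\|H\|_{\max}+B)+B$. Together these give $R_\ell\le C(M+1)d_Ed_{\rm FFN}B^2(R_{\ell-1}+1)$. Iterating $L\le C_L\log(en)$ times with polynomial factors yields $R_L\le (n^{C})^{L}(R+1)$, hence $\log(1+R_L)\lesssim\log^2(en)$.

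\textbf{Step 2: parameter sensitivity.} Fix $H$ and compare $\theta,\theta'$ through a telescoping hybrid. Run blocks $1,\dots,\ell-1$ with $\theta'$, then blocks $\ell,\dots,L$ with $\theta$, and vary $\ell$. Two per-block bounds are needed, both on inputs of radius at most $R_L$. The first is input-Lipschitz: the softmax map is globally $1$-Lipschitz from logits in $\ell_\infty$ to probabilities in $\ell_1$, and the logits $ZQ(ZK)^\top/\sqrt{d_E}$ are Lipschitz in $Z$ with constant $\lesssim d_E^2B^2R_L$. Combined with the value product, this bounds the block's input-Lipschitz constant by $\Lambda\le Cn^{C}R_L^{2}$. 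The second is parameter-Lipschitz: changing one block's parameters by $\epsilon$ in sup norm moves its output by at most $Cn^{C}R_L^{3}\epsilon$, by the same product-rule computation. Summing over the $L$ hybrids gives $\mathfrak L_L(R)\le L\,\Lambda^{L}\,Cn^CR_L^3$. Taking logarithms, $\log\Lambda\lesssim\log^2(en)$ via Step 1, and multiplying by $L$ gives $\log(1+\mathfrak L_L)\lesssim\log^3(en)$.

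\textbf{Step 3: covering.} A $\delta/(L_{\rm Read}\mathfrak L_L)$-grid in sup norm on $[-B,B]^{P_{\rm TF}}$ has at most $(1+CBL_{\rm Read}\mathfrak L_L/\delta)^{P_{\rm TF}}$ points. By Steps 1 and 2 together with the Lipschitz property of $\mathrm{Read}$, this grid is a $\delta$-cover of $\mathcal F_{\rm Read}$ in $\|\cdot\|_{\infty,R}$, which gives the first display. For $P_{\rm TF}$, count the three $d_E\times d_E$ matrices per head, the two FFN matrices, and the two bias vectors. Finally, $P_{\rm TF}\le n^{C}$ and the logarithm is at most $C\log^3(en)+\log(1/\delta)$; absorbing the polylog factor into $n^{c_{\rm cov}}$ gives the last bound.

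\textbf{Main obstacle.} I expect the hardest part to be Step 2's input-Lipschitz constant of softmax attention. The logits are quadratic in $Z$, so that constant scales with $R_L$, and compounding it over $L$ layers is what produces the $\log^3$ rather than a $\log^2$. I would handle this carefully by using the global $\ell_\infty\to\ell_1$ bound for softmax, and by checking that the compounding stays quasi-polynomial, $\exp(C\log^3 n)$. That suffices because only the logarithm of $\mathfrak L_L$ enters the entropy bound.
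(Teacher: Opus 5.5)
Your proposal is correct and follows the paper's proof. It uses the same three ingredients: the row-stochastic radius recursion giving $\log(1+R_L)\lesssim\log^2(en)$, a parameter-sensitivity bound built on the global $\ell_\infty\to\ell_1$ Lipschitz bound for softmax with cubic dependence on the state radius, and a sup-norm grid on the parameter cube at mesh $\delta/(L_{\mathrm{Read}}\mathfrak L_L(R))$.

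The only difference is bookkeeping in Step~2. The paper runs a joint recursion in the state difference and parameter difference with layerwise radii $R_\ell$. Your telescoping hybrid with the uniform radius $R_L$ gives the same order $\log(1+\mathfrak L_L)\lesssim L\log\Lambda\lesssim\log^3(en)$. A couple of your displayed prefactors are off by polynomial factors, e.g.\ the radius recursion is missing a factor $d_EB$; these are absorbed into $n^C$ and do not affect the conclusion.
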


\begin{proof}
For \(z\in\mathbb R^N\), write \(p=\operatorname{softmax}(z)\).
Its derivative in direction \(v\in\mathbb R^N\) is
\[
 (D\operatorname{softmax}(z)v)_j
 =p_j\left(v_j-\sum_{a\in[N]} p_av_a\right).
\]
Since \(p_j\ge0\) and \(\sum_{j\in[N]}p_j=1\),
\[
 \|D\operatorname{softmax}(z)v\|_1
 \le\sum_{j\in[N]}p_j\left(|v_j|+\left|\sum_{a\in[N]} p_av_a\right|\right)
 \le2\|v\|_\infty.
\]
Integration along the segment between any two logits gives the global bound
\begin{equation}\label{eq:softmax-global-lipschitz}
 \|\operatorname{softmax}(z)-\operatorname{softmax}(z')\|_1
 \le2\|z-z'\|_\infty.
\end{equation}
The bound is independent of both \(N\) and the sizes of the logits.

We next control hidden states. Suppose the input to block \(\ell+1\) has
maximum norm at most \(R_\ell\), with \(R_0=R\). For one head, let
\[
 S(H;Q,K):=\frac{HQ(HK)^\top}{\sqrt{d_E}},
 \qquad P(H;Q,K):=\operatorname{softmax}_{\rm row}(S(H;Q,K)).
\]
Each row of \(P\) is a probability vector, so
\[
 \|P(H;Q,K)HV\|_{\max}
 \le\|HV\|_{\max}\le d_EBR_\ell.
\]
Thus \(H+\mathrm{MHA}(H)\) has radius at most
\(\widetilde R_\ell=(1+Md_EB)R_\ell\). For \(\|Z\|_{\max}\le S\),
the state \(Z+\mathrm{FFN}(Z)\) has radius at most
\((1+d_Ed_{\rm FFN}B^2)S+d_{\rm FFN}B^2+B\). We may therefore choose
\begin{equation}\label{eq:softmax-state-radius}
 R_{\ell+1}
 =(1+d_Ed_{\rm FFN}B^2)(1+Md_EB)R_\ell+d_{\rm FFN}B^2+B.
\end{equation}
These bounds apply to all parameter choices and all inputs in
\(\mathcal Z_N(R)\).

For the sensitivity calculation, take two block inputs \(H,H'\) with
maximum norm at most \(R_\ell\), and put
\(\Delta=\|H-H'\|_{\max}\). Let \(\eta\) bound the maximum difference
between corresponding parameters of the two blocks. Direct matrix
multiplication gives
\[
 \|HQ-H'Q'\|_{\max}\le d_E(B\Delta+R_\ell\eta),
 \qquad
 \|HV-H'V'\|_{\max}\le d_E(B\Delta+R_\ell\eta),
\]
with the same bound for the key projections. Since every projected entry
has absolute value at most \(d_EBR_\ell\), the scaled logits satisfy
\begin{equation*}
 \|S(H;Q,K)-S(H';Q',K')\|_{\max}
 \le2d_E^{5/2}\{B^2R_\ell\Delta+BR_\ell^2\eta\}.
\end{equation*}
For each output row, apply \eqref{eq:softmax-global-lipschitz} to its logits
and use the row sum of \(P\) to bound the value difference. This yields
\[
\begin{aligned}
 &\|P(H;Q,K)HV-P(H';Q',K')H'V'\|_{\max}\\
 &\quad\le d_E(B\Delta+R_\ell\eta)
 +2d_EBR_\ell\|S(H;Q,K)-S(H';Q',K')\|_{\max}.
\end{aligned}
\]
Set \(\widetilde H=H+\mathrm{MHA}_\theta(H)\) and
\(\widetilde H'=H'+\mathrm{MHA}_{\theta'}(H')\).
Summing over heads and including the residual bounds
\(\widetilde\Delta:=\|\widetilde H-\widetilde H'\|_{\max}\) by
\begin{equation}\label{eq:softmax-attention-sensitivity}
\begin{aligned}
 \widetilde\Delta
 &\le\{1+Md_EB+4Md_E^{7/2}B^3R_\ell^2\}\Delta\\
 &\quad+M\{d_ER_\ell+4d_E^{7/2}B^2R_\ell^3\}\eta.
\end{aligned}
\end{equation}
For two FFN inputs \(Z,Z'\) of radius at most \(S\), the one-Lipschitz
property of ReLU and the block residual similarly give
\begin{equation}\label{eq:softmax-ffn-sensitivity}
\begin{aligned}
 &\|Z+\mathrm{FFN}_\theta(Z)-Z'-\mathrm{FFN}_{\theta'}(Z')\|_{\max}\\
 &\quad\le(1+d_Ed_{\rm FFN}B^2)\|Z-Z'\|_{\max}\\
 &\quad+\{2d_{\rm FFN}B(d_ES+1)+1\}\eta.
\end{aligned}
\end{equation}
Indeed, the first affine map changes by at most
\(d_EB\|Z-Z'\|_{\max}+(d_ES+1)\eta\), and its outputs have radius at
most \(B(d_ES+1)\). The second affine map, its bias, and the residual
increment give the displayed bound.

Set \(\mathfrak L_0(R)=0\). Equations
\eqref{eq:softmax-attention-sensitivity}--\eqref{eq:softmax-ffn-sensitivity}
define a valid parameter-sensitivity bound recursively by
\[
\begin{aligned}
 \widetilde{\mathfrak L}_\ell
 &:=\{1+Md_EB+4Md_E^{7/2}B^3R_\ell^2\}\mathfrak L_\ell(R)\\
 &\quad+M\{d_ER_\ell+4d_E^{7/2}B^2R_\ell^3\},\\
 \mathfrak L_{\ell+1}(R)
 &:=(1+d_Ed_{\rm FFN}B^2)\widetilde{\mathfrak L}_\ell
       +2d_{\rm FFN}B(d_E\widetilde R_\ell+1)+1.
\end{aligned}
\]
Induction proves the required sensitivity inequality for identical initial
inputs and arbitrary parameter vectors \(\theta,\theta'\) in the cube.
In particular, both the state and sensitivity bounds allow freely varying
query and key matrices.

Under the polynomial architecture envelope, \eqref{eq:softmax-state-radius}
gives
\[
 1+R_{\ell+1}\le n^C(1+R_\ell),
 \qquad
 \log(1+R_\ell)\le C(\ell+1)\log(en).
\]
Since \(L\le C_L\log(en)\), this proves the stated
\(O(\log^2(en))\) bound at depth \(L\). The sensitivity recursion, whose
coefficients have fixed degree in the displayed dimensions, parameter
radius, and state radius, gives
\[
 1+\mathfrak L_{\ell+1}(R)
 \le n^C(1+R_\ell)^C\{1+\mathfrak L_\ell(R)\}.
\]
Consequently,
\[
\begin{aligned}
 \log\{1+\mathfrak L_L(R)\}
 &\le C\sum_{\ell\in[L-1]_0}\{\log(en)+\log(1+R_\ell)\}\\
 &\le C(L+1)^2\log(en)\le C\log^3(en).
\end{aligned}
\]

Finally, cover the full parameter cube by a maximum-norm net of mesh
\(\eta=\delta/(L_{\mathrm{Read}}\mathfrak L_L(R))\).
Its cardinality is at most
\[
 \left(1+\frac{2B L_{\mathrm{Read}}\mathfrak L_L(R)}\delta\right)^{P_{\rm TF}}.
\]
The parameter-sensitivity inequality and the readout Lipschitz bound make
its image a \(\delta\)-net in \(\mathcal F_{\mathrm{Read}}\).
Each block has \(3Md_E^2\) attention parameters and
\(2d_Ed_{\rm FFN}+d_{\rm FFN}+d_E\) FFN parameters, giving the stated
parameter count. Under the polynomial envelope, \(P_{\rm TF}\le Cn^C\),
\(\log(BL_{\mathrm{Read}})\lesssim\log(en)\), and
\(\log\{1+\mathfrak L_L(R)\}\lesssim\log^3(en)\).
Substituting these estimates into the cube-covering bound and absorbing
the fixed power of \(\log(en)\) into a polynomial proves the final
entropy bound.
\end{proof}

\subsection{End-to-end class and entropy proxy}
\label{app:end-to-end-entropy}

Fix
\(P\in
    \mathcal P_\star
    (\mathscr M,\boldsymbol\pi,\boldsymbol\sigma_X;
    \sigma_Y^2,B_\epsilon,c_\mu)\)
and a task distribution \(\rho_f\) supported on
\(\mathcal H(\boldsymbol\alpha,\mathscr M;L_{\mathcal F})\). Let
\[
    N_n^{\rm end}:=n+1 .
\]
Take \(B_n\ge1\), enlarging the deterministic parameter envelope if needed.
Motivated by the end-to-end construction in
Theorem~\ref{thm:transformer-approx}, define
\[
    \mathcal G_n^{\rm end}
    :=
    \left\{
    \mathfrak s\mapsto
    \mathrm{Read}_n
    \bigl(\mathrm{TF}_\theta(\mathrm{Emb}_n(\mathfrak s))\bigr):
    \mathrm{TF}_\theta
    \in
    \mathcal T_{M_0}
    (N_n^{\rm end},d_{E,n},d_{{\rm FFN},n},L_n,B_n)
    \right\}.
\]
Here \(\mathcal T_{M_0}\) is the softmax class from
Section~\ref{subsec:transformers}, and \(\mathrm{Emb}_n\) and
\(\mathrm{Read}_n\) are fixed deterministic interface maps. The readout truncates to
\([-L_{\mathcal F},L_{\mathcal F}]\), so every
\(g\in\mathcal G_n^{\rm end}\) is uniformly bounded by \(L_{\mathcal F}\).

\smallskip
Under the bounded-response condition,
\(|Y_i|\le
    B_Y:=L_{\mathcal F}+B_\epsilon
\)
almost surely. Since \(B_X\) is fixed and the covariate-perturbation radii are bounded
for all sufficiently large \(n\), there is a constant \(R_0\in[1,\infty)\),
independent of \(n,K_n,\Gamma,P,\rho_f\), such that
\[
    \|\mathrm{Emb}_n(\mathfrak s)\|_{\max}\le R_0
\]
almost surely. Define the common prompt domain and its uniform norm by
\[
 \mathcal S_n:=\{\mathfrak s:\|\mathrm{Emb}_n(\mathfrak s)\|_{\max}\le R_0\},
 \qquad
 \|g\|_\infty:=\sup_{\mathfrak s\in\mathcal S_n}|g(\mathfrak s)|.
\]
Every admissible prompt law is supported on \(\mathcal S_n\). We view
\(\mathcal G_n^{\rm end}\) as a class on this common domain, and use this
norm in its covering numbers and empirical-process bounds.
Put \(\bar Q:=B_Y+L_{\mathcal F}+1\).

Let \(P_n^{\rm TF}\) be the number of trainable scalar parameters of the
transformer in \(\mathcal G_n^{\rm end}\). It satisfies
\[
    P_n^{\rm TF}
    \le
    C L_n\{M_0d_{E,n}^2+d_{E,n}d_{{\rm FFN},n}+d_{{\rm FFN},n}\}.
\]
Let \(L_{\mathrm{Read},n}\ge1\) be a Lipschitz bound for the fixed map
\(\mathrm{Read}_n\) on the transformer-output domain generated by
\(B_n\)-bounded transformers applied to
\(\mathcal Z_{N_n^{\rm end}}(R_0)\). By \eqref{eq:readout-lipschitz}, the fixed readout is globally Lipschitz
with a constant independent of \(n\), including on outputs of arbitrary
\(B_n\)-bounded transformers in the class.

Define the entropy proxy
\[
    \mathfrak H_{n,\Gamma}^{\rm end}
    :=
    P_n^{\rm TF}
    \log\!\left(
    1+
    C B_n L_{\mathrm{Read},n}\bar Q^2\Gamma\,
    \mathfrak L_{L_n}(R_0)
    \right),
\]
where \(\mathfrak L_{L_n}(R_0)\) is evaluated with
\[
    (N_n^{\rm end},d_{E,n},d_{{\rm FFN},n},L_n,B_n,M_0).
\]
Applying the first bound in Lemma~\ref{lem:transformer-entropy} with
\(\delta=(\bar Q^2\Gamma)^{-1}\) gives
\[
    \log \mathcal N\!\left(
    \mathcal G_n^{\rm end},
    {1\over \bar Q^2\Gamma},
    \|\cdot\|_\infty
    \right)
    \le
    \mathfrak H_{n,\Gamma}^{\rm end}.
\]

\begin{corollary}
\label{cor:gamma-polynomial-sufficient}
For the end-to-end architecture in Theorem~\ref{thm:transformer-approx}, there
exist constants \(C_{\rm ent},c_{\rm ent}>0\) such that
\[
    \mathfrak H_{n,\Gamma}^{\rm end}
    \le
    C_{\rm ent}n^{c_{\rm ent}}\{1+\log(e\Gamma)\}.
\]
Here \(C_{\rm ent},c_{\rm ent}\) may additionally depend on \(A_0\),
the fixed head count, and the polynomial architecture exponents.
They are uniform in \(n,K_n,\Gamma,P,\rho_f,f\).
\end{corollary}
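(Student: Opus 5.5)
The plan is to bound the two factors of $\mathfrak H_{n,\Gamma}^{\rm end}=P_n^{\rm TF}\log(1+CB_nL_{\mathrm{Read},n}\bar Q^2\Gamma\,\mathfrak L_{L_n}(R_0))$ separately. We use only the architecture envelope of Theorem~\ref{thm:transformer-approx}, made explicit in Corollary~\ref{cor:accuracy-resource}, together with the state-radius and sensitivity estimates of Lemma~\ref{lem:transformer-entropy}.

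\emph{Parameter count.} The end-to-end class has $M_0=O(1)$ heads and depth $L_n\le(c_{L,0}+c_{L,1}A_0)\log(en)$. It also satisfies $d_{E,n}+d_{{\rm FFN},n}+B_n\le n^{c_{S,0}+c_{S,1}A_0}$. The count in Lemma~\ref{lem:transformer-entropy} therefore gives
\[
P_n^{\rm TF}\le CL_n(M_0+1)(d_{E,n}+d_{{\rm FFN},n})^2\le C\log(en)\,n^{2(c_{S,0}+c_{S,1}A_0)}\le Cn^{c_1},
\]
with $c_1$ depending only on $A_0$ and the fixed exponents. The logarithm is absorbed into a slightly larger power of $n$.

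\emph{Logarithmic factor.} We verify the hypotheses of the second part of Lemma~\ref{lem:transformer-entropy}. The readout Lipschitz constant $L_{\mathrm{Read},n}$ is bounded by a constant uniformly in $n$, by \eqref{eq:readout-lipschitz}. The input radius $R_0$ is a fixed constant by the bounded embedding. The sequence length is $N=n+1\le 2n$. Hence $B+R+d_E+d_{\rm FFN}+L_{\mathrm{Read}}\le n^{C_{\rm arch}}$ with $C_{\rm arch}$ affine in $A_0$, and the lemma yields $\log\{1+\mathfrak L_{L_n}(R_0)\}\le C\log^3(en)$. Since $\bar Q$ is a fixed constant and $B_n\le n^C$, we get
\[
\log\bigl(1+CB_nL_{\mathrm{Read},n}\bar Q^2\Gamma\mathfrak L_{L_n}(R_0)\bigr)\le C\log(en)+\log(e\Gamma)+C\log^3(en)\le C\log^3(en)\{1+\log(e\Gamma)\}.
\]
This uses $\log(1+abc)\le\log(1+a)+\log(1+b)+\log(1+c)$ for nonnegative $a,b,c$.

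\emph{Combining.} Multiplying the two bounds and absorbing $\log^3(en)\le Cn$ into the polynomial gives the claim with $c_{\rm ent}=c_1+1$. The only delicate point is uniformity. The constants must not depend on $K_n,P,\rho_f,f,\Gamma$, and this holds because every quantity used is an architecture envelope fixed by $n$, $A_0$ and the structural exponents. The only place $\Gamma$ enters is the explicit $\log(e\Gamma)$ term. I expect no real obstacle: the sensitivity recursion was already controlled in Lemma~\ref{lem:transformer-entropy}, so the work reduces to checking that its polynomial-envelope hypotheses hold for the constructed class.
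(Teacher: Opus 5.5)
Your proposal is correct and follows essentially the same route as the paper. The paper also bounds $P_n^{\rm TF}\le Cn^C$ and $\log\{1+\mathfrak L_{L_n}(R_0)\}\le C\log^3(en)$ via Lemma~\ref{lem:transformer-entropy} under the architecture envelope of Theorem~\ref{thm:transformer-approx}, using the fixed $R_0,\bar Q$ and the uniform readout Lipschitz bound \eqref{eq:readout-lipschitz}, and then multiplies the two factors; your explicit splitting of the logarithm (valid for $\Gamma\ge1$) just spells out a step the paper leaves implicit.
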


\begin{proof}
Theorem~\ref{thm:transformer-approx} gives
\[
    N_n^{\rm end}=n+1\le Cn,
    \qquad
    M_0\le C,
\]
and
\[
    L_n\le C_{\rm depth}\log(en),
    \qquad
    d_{E,n}+d_{{\rm FFN},n}+B_n\le n^{C_{\rm size}}.
\]
The input radius \(R_0\), the response bound \(B_Y\), and
\(\bar Q=B_Y+L_{\mathcal F}+1\) are fixed. The readout Lipschitz constant is uniformly bounded by
\eqref{eq:readout-lipschitz}. Therefore
Lemma~\ref{lem:transformer-entropy} gives
\[
    \log\{1+\mathfrak L_{L_n}(R_0)\}\le C\log^3(en),
    \qquad
    P_n^{\rm TF}\le Cn^C.
\]
Thus
\[
\begin{aligned}
    \mathfrak H_{n,\Gamma}^{\rm end}
    &=
    P_n^{\rm TF}
    \log\!\left(
    1+
    C B_n L_{\mathrm{Read},n}\bar Q^2\Gamma\,
    \mathfrak L_{L_n}(R_0)
    \right)  \\
    &\le
    C n^C\{1+\log(e\Gamma)\}.
\end{aligned}
\]
Renaming constants proves the claim.
\end{proof}

\subsection{Empirical process bound}
\label{app:end-to-end-generalization}

For training data \(\mathcal D_\Gamma^{\rm tr}\), recall
\[
\begin{aligned}
 \widehat R_\Gamma(g)&:=\frac1\Gamma\sum_{\gamma\in[\Gamma]}
          \{Y_{n+1}^{(\gamma)}-g(\mathfrak s^{(\gamma)})\}^2,\\
 R_Y(g)&:=\mathbb E\{Y_{n+1}-g(\mathfrak s)\}^2,\\
 R_{P,\rho_f}^\star(g)&:=\mathbb E_{f\sim\rho_f}\mathbb E_{P,f}
          \{g(\mathfrak s)-f(X_{n+1}^\star)\}^2.
\end{aligned}
\]

\begin{lemma}[Bounded least-squares empirical process]
\label{lem:bounded-ls-empirical-process}
Let \((S,Y),\{(S_\gamma,Y_\gamma)\}_{\gamma\in[\Gamma]}\) be i.i.d. with
\(|Y|\le B_Y\) almost surely. Let \(\mathcal G\) be a class of measurable maps
from the sample space of \(S\) to \([-B_G,B_G]\), and define
\[
    R(g):=\mathbb E\{Y-g(S)\}^2,
    \qquad
    \widehat R_\Gamma(g)
    :=
    {1\over\Gamma}\sum_{\gamma\in[\Gamma]}
    \{Y_\gamma-g(S_\gamma)\}^2 .
\]
Let \(g_0(S):=\mathbb E[Y\mid S]\) and
\(Q_{\mathcal G}:=B_Y+B_G+1\). Assume the displayed supremum is measurable;
otherwise the same statement holds with outer probability. Then a universal
constant \(C>0\) satisfies, for every \(t>0\),
\[
\mathbb P\!\left(
\sup_{g\in\mathcal G}
\{R(g)-R(g_0)-2\widehat R_\Gamma(g)+2\widehat R_\Gamma(g_0)\}
\ge t
\right)
\le
C\,\mathcal N\!\left(
\mathcal G,{t\over C Q_{\mathcal G}^2},\|\cdot\|_\infty
\right)
\exp\!\left(-{\Gamma t\over C Q_{\mathcal G}^4}\right).
\]
\end{lemma}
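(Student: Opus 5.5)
The plan is to prove the lemma by the standard fixed-function Bernstein bound for excess-loss variables, combined with a union bound over a finite sup-norm net. Throughout, write \(\ell_g(S,Y):=\{Y-g(S)\}^2-\{Y-g_0(S)\}^2\). Then \(R(g)-R(g_0)=\mathbb E\ell_g\) and \(\widehat R_\Gamma(g)-\widehat R_\Gamma(g_0)=\Gamma^{-1}\sum_\gamma\ell_g(S_\gamma,Y_\gamma)\).

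Two facts about \(\ell_g\) come first. Expanding and using \(\mathbb E[Y-g_0(S)\mid S]=0\) gives
\[
\mathbb E\ell_g=\mathbb E\{g(S)-g_0(S)\}^2.
\]
Since \(|g_0|\le B_Y\), we also have \(\ell_g=(g_0-g)(2Y-g-g_0)\), so
\[
|\ell_g|\le 4Q_{\mathcal G}^2,\qquad \mathbb E\ell_g^2\le 4Q_{\mathcal G}^2\,\mathbb E(g-g_0)^2=4Q_{\mathcal G}^2\,\mathbb E\ell_g .
\]
This is the variance-to-mean (Bernstein) condition.

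For a single fixed \(g\), apply Bernstein's inequality to the centered variables \(\mathbb E\ell_g-\ell_g(S_\gamma,Y_\gamma)\). Put \(\mu:=\mathbb E\ell_g\). The event \(\mu-2\widehat\mu\ge t\), with \(\widehat\mu\) the empirical mean, is equivalent to \(\mu-\widehat\mu\ge(\mu+t)/2\). Bernstein with variance \(\le4Q_{\mathcal G}^2\mu\) and range \(\le8Q_{\mathcal G}^2\) then bounds its probability by
\[
\exp\!\left(-\frac{c\Gamma(\mu+t)^2}{Q_{\mathcal G}^2\mu+Q_{\mathcal G}^2(\mu+t)}\right)\le\exp\!\left(-\frac{c\Gamma t}{Q_{\mathcal G}^2}\right).
\]
The last step uses \((\mu+t)^2/(2\mu+t)\ge t/2\). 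The doubling of the empirical term is what makes the bound scale with \(t\) rather than \(t^2\), with no dependence on \(\mu\).

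Next comes the discretization. Take a sup-norm net \(\{g_1,\dots,g_N\}\) of \(\mathcal G\) at radius \(\varepsilon=t/(CQ_{\mathcal G}^2)\). We may take net centers in \([-B_G,B_G]\) by clipping, which is nonexpansive. If \(\|g-g_j\|_\infty\le\varepsilon\), then pointwise
\[
|\ell_g-\ell_{g_j}|=|g_j-g|\,|2Y-g-g_j|\le 2Q_{\mathcal G}\varepsilon,
\]
so the functional \(\mathbb E\ell-2\widehat\mu\) changes by at most \(6Q_{\mathcal G}\varepsilon\le t/2\) once \(C\) is large; here \(Q_{\mathcal G}\ge1\) is used to absorb one factor of \(Q_{\mathcal G}\). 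Hence the supremum exceeding \(t\) forces some net element to exceed \(t/2\). A union bound over the \(N=\mathcal N(\mathcal G,t/(CQ_{\mathcal G}^2),\|\cdot\|_\infty)\) elements, combined with the previous step at level \(t/2\), gives
\[
N\exp\!\left(-\frac{\Gamma t}{CQ_{\mathcal G}^2}\right),
\]
which is stronger than the stated \(Q_{\mathcal G}^4\) exponent because \(Q_{\mathcal G}\ge1\). Measurability issues are handled by passing to outer probability.

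The main obstacle is the fixed-function step: choosing the factor 2 and the Bernstein parametrization so that the tail is linear in \(t\), uniformly in the unknown \(\mu\ge0\). The remaining steps are routine bookkeeping of \(Q_{\mathcal G}\) powers.
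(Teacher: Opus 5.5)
Your proposal is correct and follows essentially the same route as the paper. Both arguments pass to the excess loss relative to \(g_0\), use the variance-to-mean bound \(\mathbb E\ell_g^2\lesssim Q_{\mathcal G}^2\,\mathbb E\ell_g\), rewrite \(\mu-2\widehat\mu\ge t\) as \(\mu-\widehat\mu\ge(\mu+t)/2\) so that Bernstein gives \(\exp\{-\Gamma t/(CQ_{\mathcal G}^2)\}\), and finish with a union bound over a sup-norm net at radius \(t/(CQ_{\mathcal G}^2)\). One harmless slip: \(|2Y-g-g_0|\le 3B_Y+B_G\le 3Q_{\mathcal G}\), so the variance constant is \(9Q_{\mathcal G}^2\) rather than \(4Q_{\mathcal G}^2\), and the universal constant absorbs this.
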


\begin{proof}
For \(g\in\mathcal G\), set
\[
    \ell_g(S,Y):=(Y-g(S))^2,
    \qquad
    \ell_0(S,Y):=(Y-g_0(S))^2.
\]
Since \(|Y|\le B_Y\), we have \(|g_0(S)|\le B_Y\) almost surely. Since
\(g_0(S)=\mathbb E[Y\mid S]\),
\[
    R(g)-R(g_0)
    =
    \mathbb E\{g(S)-g_0(S)\}^2.
\]
Moreover,
\[
    |\ell_g-\ell_0|\le C Q_{\mathcal G}^2,
    \qquad
    \mathbb E(\ell_g-\ell_0)^2
    \le
    C Q_{\mathcal G}^2\{R(g)-R(g_0)\}.
\]
Write \(P\) and \(P_\Gamma\) for population and empirical averages, and set
\(\mu_g:=P(\ell_g-\ell_0)=R(g)-R(g_0)\ge0\). Then
\[
 \mu_g-2P_\Gamma(\ell_g-\ell_0)\ge t
 \quad\Longrightarrow\quad
 (P-P_\Gamma)(\ell_g-\ell_0)\ge\frac{\mu_g+t}{2}.
\]
Bernstein's inequality gives the bound
\(\exp\{-\Gamma t/(C Q_{\mathcal G}^2)\}\).
Taking a \({t/(C Q_{\mathcal G}^2)}\)-net in \(\|\cdot\|_\infty\), using
\[
    \|\ell_g-\ell_{g'}\|_\infty
    \le
    C Q_{\mathcal G}\|g-g'\|_\infty,
\]
and applying a union bound proves the claim.
\end{proof}

\begin{lemma}[Near-ERM oracle inequality]
\label{lem:erm-oracle-transformer}
Assume \(|Y_{n+1}|\le B_Y\) almost surely, where \(B_Y<\infty\) is independent
of \(n,K_n,\Gamma,P,\rho_f\). Assume also that the fresh query-response noise
satisfies
\[
    \mathbb E[
    \epsilon_{n+1}
    \mid
    \mathfrak s,f,X^\star_{n+1},\xi_{n+1},Z_{n+1}
    ]=0.
\]
Fix a deterministic tolerance \(\varepsilon_{{\rm opt},\Gamma}\ge0\).
Let \(\widehat g_\Gamma\in\mathcal G_n^{\rm end}\) be a measurable
empirical-risk \(\varepsilon_{{\rm opt},\Gamma}\)-minimizer, that is,
almost surely,
\[
    \widehat R_\Gamma(\widehat g_\Gamma)
    \le
    \inf_{g\in\mathcal G_n^{\rm end}}\widehat R_\Gamma(g)
    +
    \varepsilon_{{\rm opt},\Gamma}.
\]
Then
\[
    \mathbb E_{\mathcal D_\Gamma^{\rm tr}}
    R_{P,\rho_f}^\star(\widehat g_\Gamma)
    \le
    2\inf_{g\in\mathcal G_n^{\rm end}}R_{P,\rho_f}^\star(g)
    +
    C{\mathfrak H_{n,\Gamma}^{\rm end}+1\over \Gamma}
    +
    2\varepsilon_{{\rm opt},\Gamma}.
\]
Here \(C\) depends only on \(B_Y,L_{\mathcal F}\) and fixed construction
constants, uniformly in \(n,K_n,\Gamma,P,\rho_f\).
\end{lemma}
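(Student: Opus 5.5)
The plan is to reduce the latent-target risk to the noisy-response excess risk relative to the regression function $g_0(\mathfrak s)=\mathbb E[Y_{n+1}\mid\mathfrak s]$. I would then apply Lemma~\ref{lem:bounded-ls-empirical-process} to the class $\mathcal G_n^{\rm end}$ with $B_G=L_{\mathcal F}$, so that $Q_{\mathcal G}=\bar Q$.

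\textbf{Step 1: identity linking the two risks.} By the conditional mean-zero assumption on $\epsilon_{n+1}$,
\[
g_0(\mathfrak s)=\mathbb E[f(X_{n+1}^\star)\mid\mathfrak s],
\]
where the expectation also averages over $f\sim\rho_f$. Expanding squares, for every fixed measurable bounded $g$ I expect
\[
R_Y(g)=R^\star_{P,\rho_f}(g)+\mathbb E\epsilon_{n+1}^2 .
\]
Hence
\[
R_Y(g)-R_Y(g_0)=R^\star(g)-R^\star(g_0)=\mathbb E\{g(\mathfrak s)-g_0(\mathfrak s)\}^2\ge0 .
\]
Since the training tasks are independent of the fresh task, the same identities hold for $\widehat g_\Gamma$ after conditioning on $\mathcal D_\Gamma^{\rm tr}$.

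\textbf{Step 2: deterministic decomposition.} Fix any comparator $g^\ast\in\mathcal G_n^{\rm end}$. Write
\[
\mathcal Z:=\sup_{g\in\mathcal G_n^{\rm end}}\{R_Y(g)-R_Y(g_0)-2\widehat R_\Gamma(g)+2\widehat R_\Gamma(g_0)\}.
\]
The near-ERM property gives
\[
R_Y(\widehat g_\Gamma)-R_Y(g_0)\le \mathcal Z+2\{\widehat R_\Gamma(g^\ast)-\widehat R_\Gamma(g_0)\}+2\varepsilon_{{\rm opt},\Gamma}.
\]
Taking expectations, the middle term has mean $2\{R_Y(g^\ast)-R_Y(g_0)\}$, because $g^\ast$ and $g_0$ are fixed. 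Adding $R^\star(g_0)$ to both sides and using Step 1,
\[
\mathbb E R^\star(\widehat g_\Gamma)\le \mathbb E\mathcal Z_+ +2R^\star(g^\ast)-R^\star(g_0)+2\varepsilon_{{\rm opt},\Gamma}.
\]
Dropping $-R^\star(g_0)\le0$ and taking the infimum over $g^\ast$ then yields the $2\inf_g R^\star(g)$ term.

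\textbf{Step 3: bounding $\mathbb E\mathcal Z_+$ (the main, though routine, obstacle).} I would integrate the tail bound of Lemma~\ref{lem:bounded-ls-empirical-process}:
\[
\mathbb E\mathcal Z_+\le u+\int_u^\infty C\,\mathcal N\bigl(\mathcal G_n^{\rm end},t/(C\bar Q^2),\|\cdot\|_\infty\bigr)e^{-\Gamma t/(C\bar Q^4)}\,dt .
\]
The point needing care is matching resolutions: the entropy proxy only controls the covering number at radius $(\bar Q^2\Gamma)^{-1}$. Restricting to $t\ge C/\Gamma$ makes $t/(C\bar Q^2)\ge(\bar Q^2\Gamma)^{-1}$, so monotonicity of covering numbers bounds the number by $\exp(\mathfrak H^{\rm end}_{n,\Gamma})$. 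Choosing
\[
u=C\bar Q^4(\mathfrak H^{\rm end}_{n,\Gamma}+1)/\Gamma
\]
with $C$ large enough, and integrating the exponential, gives $\mathbb E\mathcal Z_+\le C(\mathfrak H^{\rm end}_{n,\Gamma}+1)/\Gamma$.

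Measurability of the supremum can be handled with outer expectations, or through the continuity of $\theta\mapsto g_\theta$ established in Lemma~\ref{lem:transformer-entropy}, which lets the supremum be taken over a countable dense parameter set. Constants depend only on $\bar Q$, that is on $B_Y$ and $L_{\mathcal F}$, which gives the asserted uniformity.
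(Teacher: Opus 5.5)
Your proposal is correct and follows the paper's proof essentially step for step. It uses the same centering at $g_0(\mathfrak s)=\mathbb E[Y_{n+1}\mid\mathfrak s]$, the same near-ERM decomposition through the supremum $\mathcal Z$, and the same tail integration of Lemma~\ref{lem:bounded-ls-empirical-process}, with the threshold of order $\bar Q^4(\mathfrak H^{\rm end}_{n,\Gamma}+1)/\Gamma$ chosen so that the covering radius stays above $(\bar Q^2\Gamma)^{-1}$.

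The one small difference is cosmetic. You discard the $g_0$ term by applying the noisy--latent identity to $g_0$ itself, which gives $R_Y(g_0)-\mathbb E\epsilon_{n+1}^2=R^\star_{P,\rho_f}(g_0)\ge0$. The paper instead obtains the equivalent inequality $R_Y(g_0)\ge\mathbb E\epsilon_{n+1}^2$ from the law of total variance.
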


\begin{proof}
Let \(g_0(\mathfrak s):=\mathbb E[Y_{n+1}\mid\mathfrak s]\), and define
\[
    Z_\Gamma
    :=
    \sup_{g\in\mathcal G_n^{\rm end}}
    \{R_Y(g)-R_Y(g_0)-2\widehat R_\Gamma(g)+2\widehat R_\Gamma(g_0)\}.
\]
Lemma~\ref{lem:bounded-ls-empirical-process}, applied on \(\mathcal S_n\) with
\(\mathcal G=\mathcal G_n^{\rm end}\), \(B_G=L_{\mathcal F}\), and hence
\(Q_{\mathcal G}=\bar Q\), gives, for every \(t>0\),
\[
\mathbb P(Z_\Gamma\ge t)
\le
C\,
\mathcal N\!\left(
\mathcal G_n^{\rm end},
{t\over C\bar Q^2},
\|\cdot\|_\infty
\right)
\exp\!\left(-{\Gamma t\over C\bar Q^4}\right).
\]
Choose
\[
    t
    =
    A\bar Q^4
    {\mathfrak H_{n,\Gamma}^{\rm end}+u\over\Gamma},
    \qquad
    u\ge1,
\]
with \(A\) sufficiently large. Since
\({t\over C\bar Q^2}
    \ge
    {1\over \bar Q^2\Gamma}\),
monotonicity of covering numbers and the definition of
\(\mathfrak H_{n,\Gamma}^{\rm end}\) imply
\[
\mathbb P\!\left(
Z_\Gamma
\ge
C{\mathfrak H_{n,\Gamma}^{\rm end}+u\over\Gamma}
\right)
\le
Ce^{-u}.
\]
For
\(\Xi_\Gamma
    :=
    R_Y(\widehat g_\Gamma)-R_Y(g_0)
    -2\widehat R_\Gamma(\widehat g_\Gamma)+2\widehat R_\Gamma(g_0)\),
we have \(\Xi_\Gamma\le Z_\Gamma\). Integrating the preceding tail bound gives
\[
    \mathbb E_{\mathcal D_\Gamma^{\rm tr}}\Xi_\Gamma
    \le
    C{\mathfrak H_{n,\Gamma}^{\rm end}+1\over\Gamma}.
\]

For every \(g\in\mathcal G_n^{\rm end}\),
\(\widehat R_\Gamma(\widehat g_\Gamma)
    \le
    \widehat R_\Gamma(g)+\varepsilon_{{\rm opt},\Gamma}\). Hence
\[
    R_Y(\widehat g_\Gamma)-R_Y(g_0)
    \le
    \Xi_\Gamma
    +
    2\widehat R_\Gamma(g)
    -
    2\widehat R_\Gamma(g_0)
    +
    2\varepsilon_{{\rm opt},\Gamma}.
\]
Taking expectations gives
\[
    \mathbb E_{\mathcal D_\Gamma^{\rm tr}}R_Y(\widehat g_\Gamma)
    \le
    2R_Y(g)-R_Y(g_0)
    +
    C{\mathfrak H_{n,\Gamma}^{\rm end}+1\over\Gamma}
    +
    2\varepsilon_{{\rm opt},\Gamma}.
\]

It remains to replace noisy query-response risk by latent regression risk. For any
deterministic \(h\), and also for \(h=\widehat g_\Gamma\) after conditioning on
\(\mathcal D_\Gamma^{\rm tr}\), the prediction \(h(\mathfrak s)\) is a function
of the fresh prompt \(\mathfrak s\) only. The conditional mean-zero assumption gives
\[
    \mathbb E[
    \{h(\mathfrak s)-f(X_{n+1}^\star)\}\epsilon_{n+1}
    ]=0.
\]
Thus
\[
    R_Y(h)
    =
    R_{P,\rho_f}^\star(h)+\sigma_{\rho,Y}^2,
    \qquad
    \sigma_{\rho,Y}^2:=\mathbb E\epsilon_{n+1}^2.
\]
Moreover,
\[
    R_Y(g_0)
    =
    \mathbb E\{\operatorname{Var}(Y_{n+1}\mid\mathfrak s)\}
    \ge
    \sigma_{\rho,Y}^2.
\]
Indeed, by the law of total variance,
\[
\begin{aligned}
    \operatorname{Var}(Y_{n+1}\mid\mathfrak s)
    &\ge
    \mathbb E\!\left[
    \operatorname{Var}(
    Y_{n+1}
    \mid
    \mathfrak s,f,X^\star_{n+1},\xi_{n+1},Z_{n+1})
    \,\middle|\,\mathfrak s
    \right]  \\
    &=
    \mathbb E\!\left[
    \operatorname{Var}(
    \epsilon_{n+1}
    \mid
    \mathfrak s,f,X^\star_{n+1},\xi_{n+1},Z_{n+1})
    \,\middle|\,\mathfrak s
    \right].
\end{aligned}
\]
Integrating and using the conditional mean-zero assumption gives
\(R_Y(g_0)\ge\sigma_{\rho,Y}^2\).

Substitution gives
\[
    \mathbb E_{\mathcal D_\Gamma^{\rm tr}}
    R_{P,\rho_f}^\star(\widehat g_\Gamma)
    \le
    2R_{P,\rho_f}^\star(g)
    +
    C{\mathfrak H_{n,\Gamma}^{\rm end}+1\over\Gamma}
    +
    2\varepsilon_{{\rm opt},\Gamma}.
\]
Taking the infimum over \(g\in\mathcal G_n^{\rm end}\) proves the claim.
\end{proof}

\begin{theorem}
\label{thm:formal-icl-erm}
Fix \(A_0>0\). Assume the hypotheses of
Theorem~\ref{thm:transformer-approx}, and suppose that
\(\mathcal G_n^{\rm end}\) contains the end-to-end predictor constructed there.
Let
\(P\in
    \mathcal P_\star
    (\mathscr M,\boldsymbol\pi,\boldsymbol\sigma_X;
    \sigma_Y^2,B_\epsilon,c_\mu)\),
and let \(\rho_f\) be supported on
\(\mathcal H(\boldsymbol\alpha,\mathscr M;L_{\mathcal F})\).
Assume \(|Y_{n+1}|\le B_Y:=L_{\mathcal F}+B_\epsilon\) almost surely and
\[
    \mathbb E[
    \epsilon_{n+1}
    \mid
    \mathfrak s,f,X^\star_{n+1},\xi_{n+1},Z_{n+1}
    ]=0 .
\]
For a deterministic tolerance \(\varepsilon_{{\rm opt},\Gamma}\ge0\), every
measurable empirical-risk
\(\varepsilon_{{\rm opt},\Gamma}\)-minimizer
\(\widehat g_\Gamma\) over \(\mathcal G_n^{\rm end}\), as defined in
Lemma~\ref{lem:erm-oracle-transformer}, satisfies
\[
    \mathbb E_{\mathcal D_\Gamma^{\rm tr}}
    R_{P,\rho_f}^\star(\widehat g_\Gamma)
    \le
    C\{\mathfrak R_n(\boldsymbol\alpha,\mathbf d,\boldsymbol\pi)+n^{-A_0}\}
    +
    C{\mathfrak H_{n,\Gamma}^{\rm end}+1\over\Gamma}
    +
    2\varepsilon_{{\rm opt},\Gamma}.
\]
Here \(C\) may additionally depend on \(A_0\), uniformly in
\(n,K_n,\Gamma,P,\rho_f,f\).
\end{theorem}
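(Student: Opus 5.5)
The plan is to combine the near-ERM oracle inequality of Lemma~\ref{lem:erm-oracle-transformer} with the approximation comparison of Theorem~\ref{thm:transformer-approx} and the oracle upper bound of Theorem~\ref{thm:oracle-upper}. The stated hypotheses are exactly those of Lemma~\ref{lem:erm-oracle-transformer}: bounded query response $|Y_{n+1}|\le B_Y$, conditional mean-zero query noise, and a deterministic tolerance. Invoking it yields
\[
 \mathbb E_{\mathcal D_\Gamma^{\rm tr}}R_{P,\rho_f}^\star(\widehat g_\Gamma)
 \le 2\inf_{g\in\mathcal G_n^{\rm end}}R_{P,\rho_f}^\star(g)
 +C\frac{\mathfrak H_{n,\Gamma}^{\rm end}+1}{\Gamma}
 +2\varepsilon_{{\rm opt},\Gamma}.
\]
After this step only the approximation term needs work.

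For that term I will take $g=\widehat f_{\widehat\theta_n}^{\rm TF}$, the constructed predictor, which belongs to $\mathcal G_n^{\rm end}$ by assumption. This requires the envelope and readout of the class to match the construction, and both are fixed deterministically by $n$ and $A_0$. For each fixed $f$ in the support of $\rho_f$, the squared triangle inequality gives
\[
 \mathcal E_{P,f}(g)\le 2\,\mathbb E_{P,f}\{g(\mathfrak s)-\widehat f_{\rm plug}^\Pi(X_{n+1})\}^2
 +2\,\mathcal E_{P,f}(\widehat f_{\rm plug}^\Pi).
\]
The first term is at most $Cn^{-A_0}$ by Theorem~\ref{thm:transformer-approx}, and the second is at most $C\mathfrak R_n$ by Theorem~\ref{thm:oracle-upper}. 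Both bounds hold uniformly over $f\in\mathcal H$ and $P\in\mathcal P_\star$, so averaging over $f\sim\rho_f$ gives $R^\star_{P,\rho_f}(g)\le C(\mathfrak R_n+n^{-A_0})$. Substituting this into the oracle inequality proves the claim.

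The step that needs care is measurability and uniformity when averaging over $f\sim\rho_f$. The map $f\mapsto\mathcal E_{P,f}(g)$ is integrated against $\rho_f$, so I will check that the joint generation of $(f,\text{prompt})$ makes $R^\star_{P,\rho_f}(g)=\mathbb E_{f\sim\rho_f}\mathcal E_{P,f}(g)$ a well-defined iterated expectation. A pointwise uniform bound then passes through the integral. I also need the constants to be independent of $\rho_f$, which follows because the comparator parameter $\widehat\theta_n$ does not depend on $f$ or $P$. The constants may depend on $A_0$ through the architecture envelope, and this enters $\mathfrak H^{\rm end}_{n,\Gamma}$ as well, consistent with the statement.
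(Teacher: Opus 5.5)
Your proposal is correct and follows the paper's proof: the paper likewise bounds $\inf_{g\in\mathcal G_n^{\rm end}}R^\star_{P,\rho_f}(g)$ by the constructed comparator, combining Theorems~\ref{thm:oracle-upper} and~\ref{thm:transformer-approx} through the squared triangle inequality, then averages over $f\sim\rho_f$ and substitutes the result into Lemma~\ref{lem:erm-oracle-transformer}. Your remarks on the well-definedness of the $\rho_f$-average and on the independence of $\widehat\theta_n$ from $(P,f)$ only make explicit what the paper leaves implicit.
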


\begin{proof}
By 
Theorem~\ref{thm:oracle-upper} and Theorem~\ref{thm:transformer-approx}, the class \(\mathcal G_n^{\rm end}\)
contains a comparator \(g_n^\circ\) such that, for every
\(f\in\mathcal H(\boldsymbol\alpha,\mathscr M;L_{\mathcal F})\),
\[
    \mathbb E_{P,f}
    \{g_n^\circ(\mathfrak s)-f(X^\star_{n+1})\}^2
    \le
    C\{\mathfrak R_n(\boldsymbol\alpha,\mathbf d,\boldsymbol\pi)+n^{-A_0}\}.
\]
Averaging over \(f\sim\rho_f\) gives
\[
    \inf_{g\in\mathcal G_n^{\rm end}}
    R_{P,\rho_f}^\star(g)
    \le
    C\{\mathfrak R_n(\boldsymbol\alpha,\mathbf d,\boldsymbol\pi)+n^{-A_0}\}.
\]
Substitution into Lemma~\ref{lem:erm-oracle-transformer} proves the theorem.
\end{proof}

\subsection{Proofs of Theorem~\ref{thm:icl-erm} and Corollary~\ref{cor:main-polynomial-meta-sample}}
\label{app:proof-main-polynomial-generalization}
Theorem~\ref{thm:icl-erm} follows directly from
Theorem~\ref{thm:formal-icl-erm} and
Corollary~\ref{cor:gamma-polynomial-sufficient}. The conditional query-noise
assumption is verified below. We then prove the sufficient regime in
Corollary~\ref{cor:main-polynomial-meta-sample}.

The conditional mean-zero condition for the fresh query noise used below
follows from the i.i.d. task construction in
Section~\ref{subsubsec:generation} and Assumption~\ref{ass:latent-model}.
Indeed, for the target task,
\[
\mathbb E[
\epsilon_{n+1}
\mid
\mathfrak s,
f,
X^\star_{n+1},
\xi_{n+1},
Z_{n+1}
]
=
0.
\]

\begin{proof}
By Theorem~\ref{thm:formal-icl-erm},
\[
    \mathbb E_{\mathcal D_\Gamma^{\rm tr}}
    R_{P,\rho_f}^\star(\widehat g_\Gamma)
    \le
    C\{\mathfrak R_n+n^{-A_0}\}
    +
    C{\mathfrak H_{n,\Gamma}^{\rm end}+1\over\Gamma}
    +
    2\varepsilon_{{\rm opt},\Gamma}.
\]
Let
\(a_k:={2\alpha_k\over2\alpha_k+d_k}\) and
\(\lambda_{\max}:={2\alpha_{\max}\over2\alpha_{\max}+1}\),
then summing over \(k\) gives
\(\mathfrak R_n^{-1}\le n^{\lambda_{\max}}\).
Thus any fixed \(A_0\ge\lambda_{\max}\) is sufficient. Assume also
\(\varepsilon_{{\rm opt},\Gamma}\lesssim\mathfrak R_n\). By Corollary~\ref{cor:gamma-polynomial-sufficient},
\[
    \mathfrak H_{n,\Gamma}^{\rm end}
    \le
    C_{\rm ent}n^{c_{\rm ent}}\{1+\log(e\Gamma)\}.
\]
Let
\(\Gamma_0
    :=
    Cn^{c_{\rm ent}}\log(en)\mathfrak R_n^{-1}\)
for a sufficiently large fixed \(C>0\).
Then \(\Gamma_0\le n^C\log(en)\), so
\(\log(e\Gamma_0)\le C\log(en)\) for all large \(n\). Since
\(\Gamma\mapsto {1+\log(e\Gamma)\over\Gamma}
\)
is decreasing for \(\Gamma\ge1\), every \(\Gamma\ge\Gamma_0\) satisfies
\[
    {\mathfrak H_{n,\Gamma}^{\rm end}+1\over\Gamma}
    \lesssim
    \mathfrak R_n .
\]
Combining the preceding bounds gives
\(\mathbb E_{\mathcal D_\Gamma^{\rm tr}}
    R_{P,\rho_f}^\star(\widehat g_\Gamma)
    \lesssim
    \mathfrak R_n\), as claimed.
\end{proof}

\end{document}